\newcommand*{\NEURIPS}{}

\newcommand*{\CAMREADY}{}

\ifdefined\ARXIV
	\documentclass[10pt]{article}
	\usepackage{libertine} %
	\usepackage{fullpage}

	\usepackage{natbib}
	
	\renewcommand{\cite}[1]{\citep{#1}}
	\setcitestyle{authoryear,round,citesep={;},aysep={,},yysep={;}}
	
	\usepackage[utf8]{inputenc} %
	\usepackage[T1]{fontenc}    %
	\usepackage{booktabs}       %
	\usepackage{multirow}
	\usepackage{microtype}      %
	\usepackage{xcolor}         %
	
	\usepackage{tabularx}
	\usepackage{thmtools}
	\usepackage{thm-restate}
	\usepackage[left=1.25in,right=1.25in,top=1in]{geometry}
	
	\usepackage{comment}
	\usepackage{hyperref}
	\definecolor{mydarkblue}{rgb}{0,0.08,0.5}
	\hypersetup{ %
		pdftitle={Outcome-Based RL Provably Leads Transformers to Reason, but Only With the Right Data},
		pdfauthor={},
		pdfsubject={},
		pdfkeywords={},
		colorlinks=true,
		linkcolor=mydarkblue,
		citecolor=mydarkblue,
		filecolor=mydarkblue,
		urlcolor=mydarkblue
	}
	
	\footnotesep 6.65pt %
	\skip\footins 9pt plus 4pt minus 2pt
	\def\footnoterule{\kern-3pt \hrule width 12pc \kern 2.6pt }
	\setcounter{footnote}{0}
	
	\parindent 0pt
	\topsep 4pt plus 1pt minus 2pt
	\partopsep 1pt plus 0.5pt minus 0.5pt
	\itemsep 2pt plus 1pt minus 0.5pt
	\parsep 2pt plus 1pt minus 0.5pt
	\parskip .5pc

	\leftmargin3pc
	\leftmargini\leftmargin \leftmarginii 2em
	\leftmarginiii 1.5em \leftmarginiv 1.0em \leftmarginv .5em
	
	\renewenvironment{abstract}%
	{%
		\vskip 0in%
		\centerline%
		{\large\bf Abstract}%
		\vspace{-1ex}%
		\begin{quote}%
		}
		{
			\par%
		\end{quote}%
		\vskip 0ex%
	}
	
	\title{\vskip -4ex \bf{Outcome-Based RL Provably Leads Transformers to Reason, but Only With the Right Data} \vskip 0ex}
	\author{%
		\textbf{Yuval Ran-Milo}\\ %
		Some Institute\\
		\texttt{\normalsize yuvalmilo@mail.tau.ac.il} \\
		\and
		\textbf{Yotam Alexander}\\
		Some Institute\\
		\texttt{\normalsize \TODO{add}} \\
		\and
		\textbf{Shahar Mendel}\\
		Some Institute\\
		\texttt{\normalsize \TODO{add}} \\
		\and
		\textbf{Nadav Cohen}\\
		Some Institute\\
		\texttt{\normalsize \TODO{add}} \\
	}	
	\date{}
\fi

\ifdefined\NEURIPS
	\PassOptionsToPackage{numbers}{natbib}
	\documentclass{article}
	\ifdefined\CAMREADY
		\usepackage[final,main]{neurips_2026}
	\else
		\usepackage{neurips_2026}
	\fi
	\makeatletter
	\renewcommand{\@notice}{}
	\newcommand{\naturalthanksmark}{\renewcommand{\@makefnmark}{\hbox{$^{\@thefnmark}$}}}
	\makeatother
	\usepackage{footmisc}
	\usepackage[utf8]{inputenc} 
	\usepackage[T1]{fontenc}    
	\usepackage{hyperref}       	
	\usepackage{url}            	
	\usepackage{booktabs}       
	\usepackage{amsfonts}       
	\usepackage{nicefrac}       	
	\usepackage{microtype}      
\fi

\ifdefined\CVPR
	\documentclass[10pt,twocolumn,letterpaper]{article}
	\usepackage{footmisc}
	\usepackage{cvpr}
	\usepackage{times}
	\usepackage{epsfig}
	\usepackage{graphicx}
	\usepackage{amsmath}
	\usepackage{amssymb}
	\usepackage[square,comma,numbers]{natbib}
\fi
\ifdefined\AISTATS
	\documentclass[twoside]{article}
	\ifdefined\CAMREADY
		\usepackage[accepted]{aistats2015}
	\else
		\usepackage{aistats2015}
	\fi
	\usepackage{url}
	\usepackage[round,authoryear]{natbib}
\fi
\ifdefined\ICML
	\documentclass{article}
	\usepackage{footmisc}
	\usepackage{microtype}
	\usepackage{graphicx}
	\usepackage{booktabs}
	\usepackage{hyperref}
	
	\ifdefined\CAMREADY
		\usepackage[accepted]{icml2026}
	\else
		\usepackage{icml2026}
	\fi
\fi
\ifdefined\ICLR
	\documentclass{article}
	\usepackage{iclr2019_conference,times}
	\usepackage{footmisc}
	\usepackage{hyperref}
	\usepackage{url}

	\ifdefined\CAMREADY
		\iclrfinalcopy
	\fi
\fi
\ifdefined\COLT
	\ifdefined\CAMREADY
		\documentclass{colt2017}
	\else
		\documentclass[anon,12pt]{colt2017}
	\fi
\usepackage{times}
\fi

\usepackage{color}
\usepackage{multirow}
\usepackage{amsfonts}
\usepackage{algorithm}
\usepackage{algorithmic}
\usepackage{graphicx}
\usepackage{array}
\usepackage{nicefrac}
\usepackage{bbm}
\usepackage{wrapfig}
\usepackage{tikz}
\usepackage[titletoc,title]{appendix}
\usepackage{stmaryrd}
\usepackage{comment}
\usepackage{endnotes}
\usepackage{hyperendnotes}
\usepackage{enumerate}
\usepackage{enumitem}
\usepackage[normalem]{ulem}
	
\usepackage{titlesec}
\setcounter{secnumdepth}{4}

\ifdefined\COLT
\else
	\usepackage{amsmath,amsthm,amssymb,mathtools}
\fi

\usepackage[extdef=true]{delimset}
\usepackage[capitalize,noabbrev]{cleveref}

\ifdefined\COLT
	\newtheorem{claim}[theorem]{Claim}
	\newtheorem{fact}[theorem]{Fact}
	\newtheorem{procedure}{Procedure}
	\newtheorem{conjecture}{Conjecture}	
	\newtheorem{hypothesis}{Hypothesis}	
	
\else
	\newtheorem{lemma}{Lemma}
	\newtheorem{corollary}{Corollary}
	\newtheorem{theorem}{Theorem}
	\newtheorem{proposition}{Proposition}
	\newtheorem{assumption}{Assumption}

	\newtheorem{conjecture}{Conjecture}		
	
	\theoremstyle{definition}
	\newtheorem{definition}{Definition}
\fi

\makeatletter
\def\thm@notefont{}
\renewcommand\thmhead@plain[3]{%
  \thmname{#1}\thmnumber{\@ifnotempty{#1}{ }\@upn{#2}}\thmnote{}}
\let\thmhead\thmhead@plain
\makeatother

\definecolor{green}{rgb}{0.0, 0.5, 0.0}

\definecolor{xcolor-gray}{gray}{0.95}

\def\be{\begin{equation}}
	\def\ee{\end{equation}}
\def\beas{\begin{eqnarray*}}
	\def\eeas{\end{eqnarray*}}
\def\bea{\begin{eqnarray}}
	\def\eea{\end{eqnarray}}

\newcommand{\CC}{{\mathcal C}}
\newcommand{\D}{{\mathcal D}}
\newcommand{\E}{{\mathcal E}}

\newcommand{\QQ}{{\mathcal Q}}
\newcommand{\RR}{{\mathcal R}}

\newcommand{\V}{{\mathcal V}}

\newcommand{\X}{{\mathcal X}}
\newcommand{\Y}{{\mathcal Y}}

\renewcommand{\L}{\mathcal{L}}

\newcommand{\EE}{\mathop{\mathbb E}} %
\newcommand{\R}{{\mathbb R}}
\newcommand{\N}{{\mathbb N}}

\newcommand{\TV}{\mathrm{TV}}

\DeclareMathOperator*{\Var}{Var}
\DeclareMathOperator*{\Cov}{Cov}

\newcommand{\TODO}[1]{\textcolor{red}{(TODO: #1)}}
\newcommand{\TOCITE}[1]{\textcolor{red}{(add citation here)}}

\newcommand{\vset}{\V} %
\newcommand{\edgevocab}{\E} %
\newcommand{\vocab}{\Sigma} %
\newcommand{\term}{\Sigma_{\text{term}}} %
\newcommand{\chaina}{\mathcal{C}_a} %
\newcommand{\chainb}{\mathcal{C}_b} %
\newcommand{\chainsize}{n} %
\newcommand{\vstart}{v_{\text{start}}} %
\newcommand{\dist}{\mathcal{D}} %

\newcommand{\transdepth}{d}
\newcommand{\reward}{r}
\newcommand{\AC}{\mathrm{AC}} %
\newcommand{\TC}{\mathrm{TC}} %
\newcommand{\WP}{\mathrm{WP}} %
\newcommand{\ORD}{\mathrm{ORD}} %
\newcommand{\TwoChain}{\mathrm{TwoChain}} %
\newcommand{\TwoChainEndEq}{\mathrm{TwoChainEndEq}} %
\newcommand{\fwdbound}{\zeta} 
\newcommand{\switchbound}{\upsilon} 

\DeclareFontFamily{U}{mathx}{\hyphenchar\font45}
\DeclareFontShape{U}{mathx}{m}{n}{<-> mathx10}{}
\DeclareSymbolFont{mathx}{U}{mathx}{m}{n}
\DeclareMathAccent{\widebar}{0}{mathx}{"73}

\definecolor{darkspringgreen}{rgb}{0.09, 0.45, 0.27}
\usetikzlibrary{decorations.pathreplacing,calligraphy}
\usetikzlibrary{patterns}

\ifdefined\ENABLEENDNOTES
	 
\else
	\renewcommand{\endnote}[1]{\null} 
\fi

\makeatletter
\newcounter{appfncounter}
\let\appfncontents\relax

\newcommand{\appfn@raisedtarget}[1]{%
  \raisebox{\ht\strutbox}{\hypertarget{#1}{}}%
}

\newcommand{\appfn@render}[2]{%
  \par\noindent
  \appfn@raisedtarget{appfn:fwd:#1}%
  \hyperlink{appfn:back:#1}{\textsuperscript{#1}}~#2%
}

\newcommand{\appendixfootnote}[1]{%
  \stepcounter{appfncounter}%
  \appfn@raisedtarget{appfn:back:\theappfncounter}%
  \hyperlink{appfn:fwd:\theappfncounter}{\textsuperscript{\theappfncounter}}%
  \protected@xdef\appfncontents{%
    \ifx\appfncontents\relax\else\unexpanded\expandafter{\appfncontents}\fi
    \unexpanded{\appfn@render}{\theappfncounter}\unexpanded{{#1}}%
  }%
}

\newcommand{\printappendixfootnotes}{%
  \section{Footnotes}
  \label{app:footnotes}
  \ifx\appfncontents\relax\else\appfncontents\fi
}
\makeatother

\ifdefined\CVPR
	\usepackage[breaklinks=true,bookmarks=false]{hyperref}
	\ifdefined\CAMREADY
		\cvprfinalcopy 
	\fi

\fi

\ifdefined\ARXIV
	\newcommand*{\ABBR}{}
\fi
\ifdefined\ICML
	\newcommand*{\ABBR}{}
\fi
\ifdefined\NEURIPS
	\newcommand*{\ABBR}{}
\fi
\ifdefined\ICLR
	\newcommand*{\ABBR}{}
\fi
\ifdefined\COLT
	\newcommand*{\ABBR}{}
\fi
\ifdefined\ABBR
	
	\newcommand{\ie}{{\it i.e.}}

\fi

\newlength{\defaultabovedisplayskip}
\newlength{\defaultbelowdisplayskip}
\newlength{\defaultabovedisplayshortskip}
\newlength{\defaultbelowdisplayshortskip}
\AtBeginDocument{%
	\setlength{\defaultabovedisplayskip}{\abovedisplayskip}%
	\setlength{\defaultbelowdisplayskip}{\belowdisplayskip}%
	\setlength{\defaultabovedisplayshortskip}{\abovedisplayshortskip}%
	\setlength{\defaultbelowdisplayshortskip}{\belowdisplayshortskip}%
}

\begin{document}

	\ifdefined\ARXIV
		\maketitle
	\fi
	\ifdefined\NEURIPS
	\title{Outcome-Based RL Provably Leads Transformers to Reason, but Only With the Right Data}
		\newcommand{\samethanks}[1][\value{footnote}]{\footnotemark[#1]}
		\author{
			\naturalthanksmark
			Yuval Ran-Milo\thanks{Equal contribution.}\thanks{Correspondence to \texttt{yuvalmilo@mail.tau.ac.il}.} \\
			Tel Aviv University \\
			\And
			Yotam Alexander\samethanks[1] \\
			Tel Aviv University \\
			\And
			Shahar Mendel \\
			Tel Aviv University \\
			\And
			Nadav Cohen \\
			Tel Aviv University \\
		}
		\maketitle
	\fi
	\ifdefined\CVPR
		\title{Outcome-Based RL Provably Leads Transformers to Reason, but Only With the Right Data}
		\author{
			Author 1 \\
			Author 1 Institution \\	
			\texttt{author1@email} \\
			\and
			Author 2 \\
			Author 2 Institution \\
			\texttt{author2@email} \\	
			\and
			Author 3 \\
			Author 3 Institution \\
			\texttt{author3@email} \\
		}
		\maketitle
	\fi
	\ifdefined\AISTATS
		\twocolumn[
		\aistatstitle{Outcome-Based RL Provably Leads Transformers to Reason, but Only With the Right Data}
		\ifdefined\CAMREADY
			\aistatsauthor{Author 1 \And Author 2 \And Author 3}
			\aistatsaddress{Author 1 Institution \And Author 2 Institution \And Author 3 Institution}
		\else
			\aistatsauthor{Anonymous Author 1 \And Anonymous Author 2 \And Anonymous Author 3}
			\aistatsaddress{Unknown Institution 1 \And Unknown Institution 2 \And Unknown Institution 3}
		\fi
		]	
	\fi
	\ifdefined\ICML
		\icmltitlerunning{Outcome-Based RL Provably Leads Transformers to Reason, but Only With the Right Data}
		\twocolumn[
			\icmltitle{Outcome-Based RL Provably Leads Transformers \\ to Reason, but Only With the Right Data} 
			\icmlsetsymbol{equal}{*}
		\begin{icmlauthorlist}
			\icmlauthor{Yuval Ran-Milo}{equal,Tel Aviv University} %
			\icmlauthor{Yotam Alexander}{equal,Tel Aviv University}
			\icmlauthor{Shahar Mendel}{Tel Aviv University}
			\icmlauthor{Nadav Cohen}{Tel Aviv University}
		\end{icmlauthorlist}
		\icmlaffiliation{Tel Aviv University}{Tel Aviv University}
		\icmlcorrespondingauthor{Yuval Ran-Milo}{yuvalmilo@mail.tau.ac.il}
		\icmlkeywords{}
		\vskip 0.3in
		]
		\printAffiliationsAndNotice{\icmlEqualContribution} %
	\fi
	\ifdefined\ICLR
		\title{Outcome-Based RL Provably Leads Transformers to Reason, but Only With the Right Data}
		\author{
			Author 1 \\
			Author 1 Institution \\
			\texttt{author1@email}
			\And
			Author 2 \\
			Author 2 Institution \\
			\texttt{author2@email}
			\And
			Author 3 \\ 
			Author 3 Institution \\
			\texttt{author3@email}
		}
		\maketitle
	\fi
	\ifdefined\COLT
		\title{Outcome-Based RL Provably Leads Transformers to Reason, but Only With the Right Data}
		\coltauthor{
			\Name{Author 1} \Email{author1@email} \\
			\addr Author 1 Institution
			\And
			\Name{Author 2} \Email{author2@email} \\
			\addr Author 2 Institution
			\And
			\Name{Author 3} \Email{author3@email} \\
			\addr Author 3 Institution}
		\maketitle
	\fi

	\begin{abstract}
    Transformers trained via Reinforcement Learning (RL) with outcome-based supervision can spontaneously develop the ability to generate intermediate reasoning steps (Chain-of-Thought). 
    Yet the mechanism by which sparse rewards drive policy gradient to discover such systematic reasoning remains poorly understood. 
    We address this by analyzing the policy gradient dynamics of single-layer Transformers on a synthetic graph traversal task that cannot be solved without Chain-of-Thought but admits a simple iterative solution.
    We prove that despite training solely on final-answer correctness, policy gradient drives the Transformer to converge to a structured, interpretable algorithm that iteratively traverses the graph vertex-by-vertex. 
    We characterize the distributional properties required for this emergence, identifying the critical role of ``simple examples'': instances requiring fewer reasoning steps.
    When the training distribution places sufficient mass on these simpler examples, the Transformer learns a generalizable traversal strategy that extrapolates to longer chains; when this mass vanishes, policy gradient learning becomes infeasible.
    We corroborate our theoretical results through experiments on synthetic data and with real-world language models on mathematical reasoning tasks, validating that our theoretical findings carry over to practical settings.
\end{abstract}

	\ifdefined\COLT
		\medskip
		\begin{keywords}
			\emph{TBD}, \emph{TBD}, \emph{TBD}
		\end{keywords}
	\fi

	\vspace{-1em}
\section{Introduction}
\vspace{-1mm}
\label{sec:intro}

The Transformer architecture~\cite{vaswani2023attentionneed} has become the de facto standard for sequence modeling, powering advances in natural language processing, code generation, protein structure prediction, and mathematical problem solving~\cite{chen2021evaluatinglargelanguagemodels,lewkowycz2022solvingquantitativereasoningproblems,kovalevskiy2024alphafoldyearsonvalidation}. While next-token pretraining on large corpora remains highly effective, it has been shown that reinforcement learning (RL) can further improve Transformer reasoning on tasks such as mathematics and code generation~\cite{shao2024deepseekmathpushinglimitsmathematical,dou2024stepcoderimprovecodegeneration}. In particular, models such as DeepSeek-R1 demonstrate that optimizing only for final-answer correctness with algorithms such as GRPO~\cite{shao2024deepseekmathpushinglimitsmathematical} or PPO~\cite{schulman2017proximalpolicyoptimizationalgorithms} can induce detailed Chain-of-Thought (CoT) reasoning traces~\cite{deepseekai2025deepseekr1incentivizingreasoningcapability}. Yet the fundamental question of how sparse, outcome-based feedback at the end of generation guides policy gradient toward a systematic reasoning algorithm remains open. The need for such an explanation is sharpened by empirical evidence that reasoning emerges only under certain training distributions, often requiring examples of varying complexity~\cite{narvekar2020curriculum,parashar2025curriculumreinforcementlearningeasy}.

We study this question through graph traversal, a common setting for Transformer reasoning~\cite{sanford2024understandingtransformerreasoningcapabilities,agrawal2024llmsperformstructuredgraph,spies2025transformersusecausalworld,bachmann2024pitfallsnexttokenprediction,sanford2024transformersparallelcomputationlogarithmic,mirtaheri2025letmethink}.
In our task, the model must identify the terminal vertex of a chain in a directed graph.
Under standard complexity-theoretic assumptions, this task requires iterative reasoning. Furthermore, when the model is allowed to generate intermediate reasoning steps, it admits multiple solutions, some of which are more efficient than others.
Our analysis shows that, despite training solely on final-answer correctness, policy gradient can drive the Transformer to explicitly traverse the graph vertex-by-vertex.
This occurs even though the model can implement alternative, less efficient solutions, revealing an \emph{implicit bias} in the learning dynamics towards efficient reasoning algorithms.

We further characterize the distributional properties required for the above implicit bias, identifying a critical role of ``simple examples''—instances requiring few reasoning steps—in guiding the optimization.
We prove that when the training data distribution has sufficient mass on these simpler examples, policy gradient learns the efficient traversal algorithm; conversely, when this mass vanishes, policy gradient learning becomes infeasible as the chain length grows. We support our theoretical results with experiments in both synthetic and real-world settings, including RL post-training of Qwen-based models~\cite{qwen2025qwen25technicalreport} on mathematical reasoning tasks.

Taken together, these findings suggest that deliberately introducing distribution shift during post-training can change the implicit bias of policy gradient and improve generalization: by training on simple examples, policy gradient can be steered toward efficient reasoning algorithms, despite the existence of less efficient solutions. 

\subsection{Paper Organization}
The remainder of the paper is organized as follows.
\Cref{sec:setup,sec:task} introduce the Transformer model and chain identification task.
\Cref{sec:how_learn} shows that multi-step reasoning is necessary and sufficient for the task, and that multiple algorithms can achieve high reward, with some being more efficient than others.
\Cref{sec:dyn} analyzes policy gradient dynamics.
\Cref{sec:exp} reports experiments, \cref{sec:conclusion} concludes, and \cref{sec:limit} discusses the limitations of our work.
Related work is discussed in \cref{sec:related}.

	\vspace{-1.5mm}
\section{Preliminaries}
\vspace{-2.5mm}
\label{sec:setup}
\subsection{Notation}
\label{sec:setup:not}
\vspace{-2.5mm}

Let $\mathbbm{1}$ denote the indicator function and \([n] \coloneqq \{1,\ldots,n\}\).
For a sequence of tokens \( s = (s_1, \ldots, s_L)\), and for any \(1 \leq j \leq l \leq L\) we use the shorthand \(s_{j:l} \coloneqq (s_j, \ldots, s_l)\). Let \(\vocab\) denote a finite vocabulary. 
Without loss of generality, we identify \(\vocab\) with \([|\vocab|]\), allowing us to 
index vocabulary tokens by integers. For any 
matrix \(M \in \R^{|\vocab| \times |\vocab|}\) (or 
\(M \in \R^{|\vocab| \times d}\) for some \(d \in \N\)) and tokens \(s, 
s' \in \vocab\), we write \(M_{s,s'}\) (or \(M_{s,
:}\)) for the entry (or row) of \(M\) indexed by 
the tokens.

\vspace{-4pt}
\subsection{Transformer Architecture}
\vspace{-2mm}
\label{sec:setup:trans}
Let \(\vocab\) denote a finite vocabulary.
We consider a Transformer architecture with \(\transdepth\) layers that maps sequences \(s_{1:T} = (s_1,\ldots,s_T) \in \vocab^T\) of arbitrary length \(T\) to a probability distribution over the vocabulary \(\vocab\).

The Transformer first encodes the input sequence using one-hot embeddings: for each token \(s_l \in \vocab\), we define \(x_l = e_{s_l} \in \R^{|\vocab| \times 1}\) to be the one-hot vector corresponding to \(s_l\).
We denote the resulting embedding matrix by \(H^{(0)} = [x_1, \ldots, x_T]^\top \in \R^{T \times |\vocab|}\).

The Transformer then applies \(\transdepth\) attention layers sequentially.
We parameterize the Transformer by \(\theta = \{(A_l, V_l)\}_{l=1}^{\transdepth}\), where each layer \(l\in[d]\) has an \emph{attention matrix} \(A_l \in \R^{|\vocab| \times |\vocab|}\) and a \emph{value matrix} \(V_l \in \R^{|\vocab| \times |\vocab|}\). 
In the single-layer case (\(\transdepth=1\)), we simplify notation by writing \(A = A_1\) and \(V = V_1\). At each layer \(l \in [\transdepth]\), the hidden state, $H^{(l)}\in \R^{T \times |\vocab|}$, is computed using an activation function \(\sigma\) and a causal masking operator \(\text{MASK}_\sigma\) which ensures that position \(i\) attends only to positions \(j \le i\):
\begin{equation}
\label{eq:transformer_layer}
H^{(l)} = \sigma\Big( \text{MASK}_\sigma \big( H^{(l-1)} A_l (H^{(l-1)})^\top \big) \Big) H^{(l-1)} V_l
\text{\,.}
\end{equation}
We consider two instantiations. In the \emph{softmax Transformer}, \(\sigma = \mathrm{softmax}\) (applied row-wise) and \(\text{MASK}_\sigma(Z)_{ij} = Z_{ij}\) if \(j \le i\) and \(-\infty\) otherwise. In the \emph{linear Transformer}, \(\sigma\) is the identity function and \(\text{MASK}_\sigma(Z)_{ij} = Z_{ij}\) if \(j \le i\) and \(0\) otherwise. Unless otherwise specified, our statements apply to both variants.

Finally, the Transformer applies a softmax to the last position of the final hidden state to produce a probability distribution over the vocabulary:
\begin{equation}
\label{eq:transformer}
\mathrm{TF}(s_{1:T}; \theta) = \mathrm{softmax}\big(H^{(\transdepth)}_{T,:}\big) \in \R^{1 \times |\vocab|}
\text{\,.}
\end{equation}

\vspace{-4mm}
\subsection{Autoregressive Generation}
\vspace{-2mm}
\label{sec:setup:ar}
We use the Transformer architecture to generate sequences via autoregressive sampling.
Given an initial prompt \(s_{1:L_0}\), we define a set of \emph{terminal tokens} \(\term(s_{1:L_0}) \subseteq \vocab\).
The Transformer generates tokens \(s_{L_0+1}, s_{L_0+2}, \ldots\) sequentially.
At each step \(l \ge L_0\), the Transformer samples the next token \(s_{l+1} \sim \mathrm{TF}(s_{1:l}; \theta)\).
The Transformer continues sampling autoregressively until reaching a terminal token, \ie, until \(s_l \in \term(s_{1:L_0})\) for some \(l > L_0\).~\footnote{It can be easily shown that, in the setting under investigation in this paper, the probability of reaching a terminal token is one.}
We denote by \(p_{\text{roll}}(\theta)\) the distribution over complete trajectories \(\tau = (s_{1:L_0}, s_{L_0+1}, \ldots, s_{L_{\text{term}}})\) induced by the Transformer's parameters \(\theta\), where \(s_{L_{\text{term}}} \in \term(s_{1:L_0})\).
We denote by \(|\tau| = L_{\text{term}} - L_0\) the length of the generated trajectory (excluding the prompt).
Given two parameter settings \(\theta\) and \(\theta'\), we measure the difference between their induced rollout distributions using the \emph{total variation distance}:
\begin{equation}
\label{def:tv}
\mathrm{TV}\big(p_{\text{roll}}(\theta), p_{\text{roll}}(\theta')\big) := \frac{1}{2} \sum_{\tau} \big| p_{\text{roll}}(\theta)[\tau] - p_{\text{roll}}(\theta')[\tau] \big|,
\end{equation}
where the sum is over all possible rollouts \(\tau\).
\vspace{-4mm}
\subsection{Reinforcement Learning}
\vspace{-2mm}
\label{sec:setup:rl}
We train the Transformer using reinforcement learning with a reward function over complete rollouts.
Given a trajectory \(\tau = (s_{1:L_0}, s_{L_0+1}, \ldots, s_{L_{\text{term}}})\), we denote a reward function \(\reward(\tau)\) which evaluates the quality of the trajectory.
For any distribution \(\dist\) over input sequences \(s_{1:L_0}\), the training objective is defined as:
\[
\RR(\dist, \theta) = \EE_{s_{1:L_0} \sim \dist} \EE_{\tau \sim p_{\text{roll}}(\theta)} \big[ \reward(\tau) \big].
\]
Our interest lies in \emph{outcome-based} reinforcement learning, where the reward \(\reward(\tau)\) depends on the trajectory \(\tau\) only through its final token \(s_{L_{\text{term}}}\).
	\vspace{-2mm}
\section{The Task: Chain Identification}
\vspace{-1mm}
\label{sec:task}
\begin{figure}[t]
      \centering
      \begin{tikzpicture}[
        vertex/.style={circle, draw, minimum size=7mm, inner sep=0pt, outer sep=0pt, fill=white, font=\small},
        start/.style={vertex, fill=orange!35},
        target/.style={vertex, fill=green!35},
        dots/.style={rectangle, draw=none, minimum height=7mm, minimum width=6mm, inner sep=0pt, outer sep=0pt, font=\large},
        arrow/.style={->, >=stealth, thick, shorten >=0.6pt, shorten <=0.6pt}
      ]
      
      \def\x0{0}
      \def\dx{1.1}
      \def\yTop{1.0}
      \def\yBot{0}
      
      \node[vertex] (a1)     at (\x0,\yTop)               {$a_1$};
      \node[vertex] (a2)     at (\x0+1*\dx,\yTop)         {$a_2$};
      \node[dots]   (adots1) at (\x0+2*\dx,\yTop)         {$\cdots$};
      \node[start]  (ak)     at (\x0+3*\dx,\yTop)         {$a_k$};
      \node[dots]   (adots2) at (\x0+4*\dx,\yTop)         {$\cdots$};
      \node[target] (am)     at (\x0+5*\dx,\yTop)         {$a_n$};
      
      \node[vertex] (b1)     at (\x0,\yBot)               {$b_1$};
      \node[vertex] (b2)     at (\x0+1*\dx,\yBot)         {$b_2$};
      \node[dots]   (bdots1) at (\x0+2*\dx,\yBot)         {$\cdots$};
      \node[vertex] (bk)     at (\x0+3*\dx,\yBot)         {$b_k$};
      \node[dots]   (bdots2) at (\x0+4*\dx,\yBot)         {$\cdots$};
      \node[vertex] (bm)     at (\x0+5*\dx,\yBot)         {$b_n$};
      
      \foreach \u/\v in {a1/a2,a2/adots1,adots1/ak,ak/adots2,adots2/am,
                         b1/b2,b2/bdots1,bdots1/bk,bk/bdots2,bdots2/bm}{
        \draw[arrow] (\u.east) -- (\v.west);
      }
      
      \draw[<-, >=stealth, thick,draw=orange!70!black] (ak.north) -- ++(0,3mm)
            node[above,font=\footnotesize] {\textcolor{orange!70!black}{Starting vertex}};
      
      \draw[<-, >=stealth, thick, draw=green!60!black] (am.north) -- ++(0,3mm)
            node[above,font=\footnotesize] {\textcolor{green!60!black}{Target}};
      
      \end{tikzpicture}
      \caption{Illustration of the chain identification task. The input contains edges forming two disjoint chains \(\chaina = (a_1, \ldots, a_n)\) and \(\chainb = (b_1, \ldots, b_n)\), and a starting vertex \(\vstart\) (here \(a_k\)). The task is to identify the terminal vertex of the chain containing \(\vstart\) (here~\(a_n\)).}
      \label{fig:chain_task}
  \end{figure}
We study a graph traversal task: given a set of \emph{directed} edges forming two disjoint chains, and a starting vertex, the goal is to identify the terminal vertex of the chain containing the starting vertex (see \cref{fig:chain_task} for an illustration).
This task captures the essential structural property of reasoning tasks: the answer cannot be produced in a single step (as we show in \cref{sec:how_learn:no_reasoning}), but emerges naturally from a step-by-step iterative procedure.

Formally, let \(\chainsize \in \N_{\geq 2}\) denote the task's chain length. We define the \emph{vertex set} \(\vset = [2\chainsize]\) and the \emph{edge vocabulary} \(\edgevocab = \vset \times \vset\), where each ordered pair \((u,v)\) denotes a directed edge from \(u\) to \(v\). The full vocabulary is \(\vocab = \vset \cup \edgevocab\). We define a \emph{chain} for the task as a sequence of \(n\) distinct vertices \((v_1, \ldots, v_n)\) with \(v_i \in \vset\), which we identify with its directed edge set \(\{(v_1, v_2), (v_2, v_3), \ldots, (v_{n-1}, v_n)\}\).

\vspace{-1mm}
\subsection{Data Distribution}
\label{sec:setup:dist}
\vspace{-1mm}

We define a general family of distributions \(\D^{\QQ}\) over input sequences, parameterized by a distribution \(\QQ\) over the index set \([n-1]\). To sample \(s_{1:L_0} \sim \D^{\QQ}\),
we first partition \(\vset\) uniformly at random into two disjoint subsets of size \(n\) each,
and uniformly order the vertices within each subset to form chains \(\chaina = (a_1, \ldots, a_n)\) and \(\chainb = (b_1, \ldots, b_n)\).
 We then sample an index \(k \sim \QQ\), choose a chain \(\CC \in \{\chaina, \chainb\}\) uniformly at random, and set the starting vertex \(\vstart\) to be the \(k\)-th vertex of \(\CC\).  The input sequence \(s_{1:L_0}\) consists of an arbitrary ordering of all edges in \(\chaina\) and \(\chainb\), 
 followed by \(\vstart\).~\footnote{While the order of edges is arbitrary (as we use no positional encodings), \(\vstart\) must be the last token so that the Transformer can attend from it to the edges to predict the next vertex.}
For an instance of the problem, \ie, an input sequence \(s_{1:L_0}\), we define the set of terminal tokens to include the final vertices of the two chains (\(a_n\) and \(b_n\)). Formally $\term(s_{1:L_0}) = \{a_n, b_n\}$.

\vspace{-1mm}
\subsection{Task Reward}
\label{sec:setup:reward}
Given an input sequence \(s_{1:L_0}\) containing edges forming two chains \(\chaina = (a_1, \ldots, a_n)\) and \(\chainb = (b_1, \ldots, b_n)\) with starting vertex \(\vstart\), let \(\tau=(s_{1:L_0}, s_{L_0+1},\ldots,s_{L_{\text{term}}})\) denote a rollout terminated at the first \(s_{L_{\text{term}}}\in\term(s_{1:L_0})\). We define \(y^*(s_{1:L_0})\) as the terminal vertex of the chain containing \(\vstart\):
\[
y^*(s_{1:L_0}) := 
\begin{cases}
a_{n} & \text{if } \vstart \in \chaina, \\
b_{n} & \text{if } \vstart \in \chainb.
\end{cases}
\]
We use the reward
\(
\reward(\tau) := \mathbbm{1}\{s_{L_{\text{term}}} = y^*(s_{1:L_0})\}.
\) 

\subsection{Step Types and Chain Traversal}
\vspace{-2mm}
\label{sec:setup:traversal}
To characterize the Transformer's behavior during generation, we define three types of atomic operations.  We say that the Transformer performs a \emph{forward step} at time \(t > L_0\) if the generated token \(s_t\) is the target of a forward edge from the previous token \(s_{t-1}\), i.e., \((s_{t-1}, s_t) \in \chaina \cup \chainb\).
A \emph{backward step} occurs when \(s_t\) is the source of an edge to \(s_{t-1}\), i.e., \((s_t, s_{t-1}) \in \chaina \cup \chainb\).
Finally, a \emph{switch step} occurs when \(s_t\) belongs to a different chain than \(s_{t-1}\), that is, when \(s_{t-1} \in \chaina\) and \(s_t \in \chainb\), or vice versa.

A rollout \(\tau = (s_{1:L_0}, s_{L_0+1}, \ldots, s_{L_{\text{term}}})\) is a \emph{chain traversal} if it consists of a consecutive sequence of forward steps starting from \(\vstart\).
Formally, we require that for all \(l \in \{L_0+1, \ldots, L_{\text{term}}\}\), the token \(s_l\) is a forward step from \(s_{l-1}\) (where \(s_{L_0} = \vstart\)).

	\section{Solutions to Chain Identification Task}
\label{sec:how_learn}

In this section, we show that fixed-depth Transformers cannot solve chain identification without multi-step reasoning, but single-layer Transformers can solve it by autoregressively generating such steps. We also show that high reward can be achieved by multiple algorithms, ranging from efficient chain traversal to inefficient long trajectories, setting up the question of which solution policy gradient selects.

\subsection{Reasoning is Necessary to Solve the Task}
\label{sec:how_learn:no_reasoning}

We first prove that multi-step reasoning is necessary: with any fixed depth Transformer, the task cannot be solved when the Transformer is required to output the answer in a single step only.

To formalize the result, we define the \emph{single-step Reward}, which measures performance when the Transformer must output the answer immediately:
\begin{equation}
\label{eq:single_step_reward}
\RR_{\text{single}}(\D^{\QQ}, \theta) := \EE_{s \sim \D^{\QQ}} \EE_{\hat{y} \sim \mathrm{TF}(s; \theta)} \big[ \mathbbm{1}\{\hat{y} = y^*(s)\} \big].
\end{equation}

\cref{result:no_single_step} shows that for Transformers of any fixed depth~\footnote{\label{fn:extensions}The same analysis readily extends to architectures incorporating normalization layers and MLP layers of width polynomial in the input length alongside attention. Moreover, \cref{result:no_single_step} extends to any constant number of chain-of-thought steps; see \cref{app:extensions_complexity} for both extensions.}, there exists a distribution $\D^{\QQ}$ where single-step generation achieves reward bounded away from 1 by a constant.
This contrasts sharply with the results in \cref{sec:how_learn:reasoning_sufficient}, where autoregressive generation can achieve reward arbitrarily close to 1 on all distributions.
This separation demonstrates the necessity of multi-step reasoning.

\begin{theorem}
\label{result:no_single_step}
Assume transformer weights are represented with logarithmic precision. Then under standard complexity-theoretic assumptions  (see \cref{app:complexity_assumption}), for any fixed depth $\transdepth \in \N$, there exist $n \in \N_{\geq 2}$, a distribution $\QQ$ over $[n-1]$, and $c \in \R_{> 0}$ such that for all parameters $\theta = \{(A_i, V_i)\}_{i=1}^d$:
\[
\RR_{\text{single}}(\D^{\QQ}, \theta) \leq 1 - c
\text{.}
\]
\end{theorem}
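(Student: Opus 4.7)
The plan is to combine a circuit-complexity upper bound on bounded-depth log-precision Transformers with a hardness reduction from a problem known to lie outside $\TC^0$. The starting point is the now-standard simulation result (in the style of Merrill--Sabharwal) that every depth-$\transdepth$ Transformer with logarithmic-precision weights, on inputs of length polynomial in $n$, can be simulated by a uniform $\TC^0$ circuit of corresponding size. Thus, to prove the theorem it suffices to exhibit, for each fixed $\transdepth$, a value of $n$ and a distribution $\D^{\QQ}$ on which no uniform $\TC^0$ circuit in the relevant size family outputs $y^*(s_{1:L_0})$ correctly with probability exceeding $1-c$ for some absolute constant $c > 0$.

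To exhibit such a distribution, I would take $\QQ$ concentrated at $k=1$ (or on an interval of small indices), so that computing $y^*(s_{1:L_0})$ from $\vstart$ requires chasing $n-1$ pointers through an edge list presented in arbitrary order. The key observation is that this pointer-chasing embeds iterated composition: the edges of each chain specify a partial permutation, and the terminal vertex is the image of $\vstart$ under the composition of $n-1$ such maps. Iterated composition over a non-solvable group (equivalently, the word problem for $S_5$) is $\mathrm{NC}^1$-complete and, under the standard assumption $\TC^0 \neq \mathrm{NC}^1$ invoked in the referenced appendix, is not in uniform $\TC^0$. Encoding an instance of this problem into the edge set of the two chains is straightforward (the two disjoint chains give us room to pad and to place ``distractor'' edges), yielding a worst-case hardness: for \emph{some} input in the support of $\D^{\QQ}$, any depth-$\transdepth$ log-precision Transformer must err.

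The main obstacle is lifting this worst-case lower bound to an average-case bound over $\D^{\QQ}$. I would address it by exploiting the permutation symmetry of $\D^{\QQ}$: the distribution is invariant under relabeling of the vertex set $[2n]$, so applying a uniformly random relabeling maps any fixed instance to a uniformly random instance in the support of $\D^{\QQ}$. Combined with majority voting over $O(1)$ independent relabelings, this random self-reducibility converts any depth-$\transdepth$ Transformer achieving $\RR_{\text{single}}(\D^{\QQ}, \theta) > 1 - c$ for arbitrarily small $c$ into a worst-case $\TC^0$ algorithm for iterated composition, contradicting the complexity assumption. The delicate step is to check that the relabeling-plus-amplification construction stays within uniform $\TC^0$ at the same logarithmic precision; I expect this to go through by choosing $n$ just large enough (as a function of $\transdepth$) that the hard instance lies outside the specific $\TC^0$ class corresponding to depth-$\transdepth$ Transformers, so only a constant amount of amplification is needed and depth is preserved.
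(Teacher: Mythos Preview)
Your high-level strategy matches the paper's: simulate bounded-depth log-precision Transformers in (non-uniform) $\TC^0$, then reduce from the word problem for $S_5$ to show that the chain-endpoint function is $\NC^1$-hard, hence not in $\TC^0$ under the standard assumption. The paper makes the reduction precise via an intermediate step you do not mention: it passes through the path-reachability problem $\ORD$ (which is $\mathsf{L}$-complete under $\AC^0$ reductions, and $\NC^1_{\mathrm{u}}\subseteq\mathsf{L}$), and then gives an explicit $\AC^0$ reduction from $\ORD$ to the two-chain endpoint decision problem. Your ``the edges specify a partial permutation and the terminal vertex is the image of $\vstart$ under the composition of $n-1$ maps'' is in the right spirit but is not yet a reduction \emph{from} a known hard problem; you would still owe the encoding of a $\WP_{S_5}$ (or $\ORD$) instance into a two-chain instance.

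Where you diverge substantively is in the worst-case-to-average-case step, and here you are working much harder than necessary. The theorem only asks for \emph{some} $n$, \emph{some} $\QQ$, and \emph{some} $c>0$. The paper simply takes $\QQ$ uniform, observes that the support of $\D^{\QQ}$ is finite, and notes that if greedy decoding errs on even a single input $s^\star$ then $\Pr[\mathrm{TF}(s^\star;\theta)=y^*(s^\star)]\le 1/2$ (the correct answer does not have the largest logit), so averaging over the support gives $\RR_{\text{single}}\le 1-\tfrac{1}{2|\mathrm{supp}(\D^{\QQ})|}$. Worst-case failure on one input is exactly what the $\TC^0\neq\NC^1$ argument delivers, and that immediately yields a positive $c$ (depending on $n$, which is fine). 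No random self-reducibility, no amplification, no derandomization.

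Your amplification sketch also has a concrete gap: $O(1)$ independent relabelings cannot derandomize to a worst-case correct circuit. To hardwire seeds that work for \emph{every} input you need a union bound over the $\Theta((2n)!)$ instances, which forces $\Theta(n\log n)$ repetitions, not $O(1)$. That is still $\TC^0$-friendly (parallel copies plus a majority gate), so your route can be salvaged, but the justification you give---``choose $n$ large enough so only a constant amount of amplification is needed''---is backwards: larger $n$ means more inputs to cover, hence more repetitions. In short, your approach is repairable but strictly more complicated than the paper's one-line counting argument, and the payoff (a potentially larger $c$) is not needed for the stated theorem.
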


\begin{proof}[Proof sketch (full proof in \cref{app:no_single_step})]
Using results of Li et al.~\cite{li2024cot}, we first note that constant-depth Transformers without CoT generation can only compute functions in $TC^0$, the class of constant-depth circuits with unbounded fan-in threshold gates. We then relate our chain identification task to standard problems in circuit complexity by giving reductions from the word problem on $S_5$ (the symmetric group on five elements) to the $\ORD$ problem (path reachability in directed graphs), and from $\ORD$ to the chain identification task. The word problem on $S_5$ is $NC^1$-complete, where $NC^1$ is the class of Boolean functions computable by polynomial-size, logarithmic-depth circuits with bounded fan-in, so these reductions show that solving chain identification in $TC^0$ would imply that an $NC^1$-complete problem lies in $TC^0$. Under the widely believed conjecture that $TC^0 \neq NC^1$, this is impossible, and hence the chain identification task is not in $TC^0$.
\end{proof}

\subsection{Efficient and Inefficient Reasoning Algorithms Solve the Task}
\label{sec:how_learn:reasoning_sufficient}

We next show that multi-step reasoning is sufficient: single-layer Transformers can achieve reward arbitrarily close to 1 by autoregressively generating intermediate steps.
However, high reward does not identify a unique algorithm.
\cref{result:traverse_chain} gives an efficient construction that follows the chain directly, while \cref{result:inefficient_solution} shows that, because the reward penalizes only the final answer, arbitrarily long trajectories can also achieve high reward.

\begin{proposition}
\label{result:traverse_chain}
For any $n \in \N_{\geq 2}$, $\QQ$ a distribution over $[n-1]$ and $\varepsilon, \delta \in \R_{> 0}$, there exist parameters $\theta$ for single-layer Transformer such that $\RR(\D^{\QQ}, \theta) > 1 - \varepsilon$, and in addition, for every input sequence $s_{1:L_0}$ from $\D^{\QQ}$, with probability at least $1 - \varepsilon$ over $\tau \sim p_{\text{roll}}(\theta)$, the rollout $\tau$ is a chain traversal (\cref{sec:setup:traversal}).~\footnote{For simplicity, we present this result assuming infinite precision. However, one can show that under logarithmic precision, one can obtain reward strictly larger than the upper bound given by \cref{result:no_single_step}. For further details see \cref{app:traverse_chain}.}
\end{proposition}

\begin{proof}[Proof sketch (full proof in \cref{app:traverse_chain})]
    Assume by induction that the Transformer has correctly performed chain traversal up to a non-terminal vertex up to time $l$ producing a sequence $s_{1:l}$ where $l \geq 2n-1$. 
    The prefix $s_{1:2n-2}$ contains the edges, and the last token $s_l$ is a vertex.
    We construct the attention matrix $A$ such that for any vertex $u \in \vset$, the attention weight from $u$ to any outgoing edge $(u,v)$ is strictly larger than the weight to any other token.
    The value matrix $V$ is constructed such that $V_{(u,v), v} = \alpha$ for some $\alpha \in \R_{> 0}$ for all edges $(u,v)$, while all other entries are zero. 
    Since $s_l$ is in some chain and is a non-terminal vertex, there exists some edge $(s_l,k) \in s_{1:2n-2}$.
    This construction ensures that the Transformer attends to $(s_l,k)$ more strongly than to any other edge, and the output logits satisfy $\mathrm{TF}(s_{1:l})_k / \mathrm{TF}(s_{1:l})_v = \Omega(\alpha)$ for any $v \neq k$.
    Consequently, the probability of outputting the next vertex in the chain, $k$, becomes arbitrarily close to one for sufficiently large $\alpha$, completing the induction.
\end{proof}

\begin{proposition}
\label{result:inefficient_solution}
For any $n \in \N_{\geq 3}$, $\QQ$ a distribution over $[n-1]$, $k\in \N$, and $\varepsilon, \delta \in \R_{> 0}$, there exists parameters $\theta$ for a single-layer linear Transformer such that $\RR(\D^{\QQ}, \theta) > 1 - \varepsilon$, and for every input sequence $s_{1:L_0}$ from $\D^{\QQ}$, the probability that $|\tau|>k$ is greater than $1-\delta$.
\end{proposition}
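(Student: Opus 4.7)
The plan is to construct a single-layer linear Transformer whose rollout policy is a ``slowed-down'' forward walker: at every non-terminal vertex $u$ along the chain containing $\vstart$, the Transformer repeatedly emits $u$ itself (``stays'') for roughly $R$ consecutive steps before emitting the forward neighbor $v^+$. Taking $R$ large stretches the trajectory length past any target $k$, while the eventual forward transitions still carry the Transformer deterministically to the correct terminal $y^*(s_{1:L_0})$.

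\textbf{Construction.} With positive parameters $\alpha_1,\beta,\gamma$ to be chosen, the only non-zero entries of $(A,V)$ are set as follows: (i) $A_{u,(u,v)}=1$ for every outgoing edge, paired with the edge-to-head value $V_{(u,v),v}=\alpha_1$, placing a constant logit $\alpha_1$ on the forward neighbor $v^+$; (ii) the edge-to-tail value $V_{(u,v),u}=\gamma$, attended via the same outgoing-edge attention, adds a constant boost $\gamma$ to the logit of the current vertex $u$; (iii) $A_{u,v'}=\indc{v'=u}$ together with the negative self-value $V_{v',v'}=-\beta$ subtracts $\beta$ from the logit of $u$ for every prior occurrence of $u$ in the history. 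All entries with edge-token queries and all entries targeting non-vertex outputs are set to zero. A direct computation then shows that at any history ending in vertex $u$ with $c$ prior occurrences of $u$, the three nontrivial logits are \emph{stay} $\gamma-\beta c$, \emph{forward} $\alpha_1$, and \emph{all other tokens} $0$. This formula is identical at internal vertices and at the chain boundary (e.g. $a_1$), since it depends only on outgoing-edge attention.

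\textbf{Analysis.} Choose $\gamma>\alpha_1+\beta$ and set $R:=(\gamma-\alpha_1)/\beta$, so the stay logit dominates the forward logit at $c=1$. An elementary calculation shows that the per-step stay probability exceeds $1/2$ while $c\le R$ and decays exponentially once $c\gg R$, so the number of consecutive stays at each visited vertex is concentrated around $R$; a standard Chernoff-type estimate gives at least $R/2$ stays with probability $\ge 1-e^{-c_0 R}$ per vertex, and a union bound over the at most $n-1$ visited vertices yields $|\tau|\ge R(n-k)/2$ with probability at least $1-n\,e^{-c_0 R}$. For the reward, at every step the softmax partition function is at least $e^{\alpha_1}$, so every token with logit $0$---the incorrect terminal $b_n$, every vertex in the wrong chain, and every edge token---receives per-step probability at most $e^{-\alpha_1}$. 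Summing over the $O(Rn)$ steps of the rollout, and adding an extra factor $|\vocab|$ to handle the (low-probability) event that an accidentally emitted edge token drags the subsequent rollout into uniform wandering, the total probability of terminating at any token other than $y^*(s_{1:L_0})$ is $O(Rn|\vocab|\,e^{-\alpha_1})$.

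\textbf{Parameter choice and main obstacle.} The two requirements decouple cleanly. First, fix $R$ large enough that $R(n-k)/2>k$ and $n\,e^{-c_0 R}<\delta$, which demands $R$ only logarithmic in $k,n,1/\delta$. Then, holding $\beta$ fixed and setting $\gamma=\alpha_1+\beta R$, take $\alpha_1$ large enough that $O(Rn|\vocab|\,e^{-\alpha_1})<\varepsilon$, which demands $\alpha_1$ only logarithmic in $R,n,|\vocab|,1/\varepsilon$. The main subtlety---that naively enlarging $\alpha_1$ could invalidate the stay-dominates-forward condition---is avoided by tying $\gamma$ to $\alpha_1$ so that $R$ is preserved as $\alpha_1$ grows; the slowed-down dynamics on the chain are thus preserved while the ``other-token'' mass is driven uniformly to zero.
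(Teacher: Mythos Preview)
Your proof is correct and takes a genuinely different route from the paper. The paper's construction is a backward-biased nearest-neighbor random walk on the correct chain: it tunes the attention and value parameters so that from each interior vertex the model moves backward with high probability, forward with small probability, and switches chains with negligible probability; standard hitting-time estimates for a biased walk on a segment then give arbitrarily long trajectories while keeping the walk confined to the correct chain (so the reward is high). Your construction instead implements a ``slowed forward walker'' via a self-attention counter: the model repeats the current vertex roughly $R$ times before stepping forward once, using a negative self-value that decrements the stay logit each time the vertex reappears in the history. Your approach avoids any random-walk/gambler's-ruin analysis and gives tighter concentration of the trajectory length, at the cost of a slightly more elaborate value matrix; it also leans essentially on \emph{linear} attention so that repeated occurrences accumulate additively in the logits---fine here, since the proposition is stated for the linear model. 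The paper's approach keeps the construction closer to the efficient walker of Proposition~1 (same value structure, only the forward/backward bias is flipped). Two minor slips in your write-up: the condition ``$R(n-k)/2>k$'' conflates the target length $k$ with the starting index (also called $k$ in the data-distribution section); in the worst case the starting vertex is at distance $1$ from the terminal, so the correct requirement is simply $R/2>k$. Also, the stay-logit count is off by one because the current position attends to itself under the causal mask, but this merely shifts $R$ by one and is harmless.
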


\begin{proof}[Proof sketch (full proof in \cref{app:inefficient_solution})]
We construct parameters similar to those in the proof of \cref{result:traverse_chain}.
However, instead of configuring the transformer's parameters to always move forward in the chain, we set the parameters such that from any vertex, the Transformer moves backward to its predecessor with high probability, forward to its successor with small probability, and to the other chain with negligible probability.
This creates a random walk confined (with high probability) to the correct chain that frequently backtracks, producing arbitrarily long trajectories before eventually reaching the terminal vertex.
\end{proof}

	\section{Dynamical Analysis}
\label{sec:dyn}
In this section, we analyze policy gradient dynamics for linear Transformers trained on $\RR(\D^{\QQ}, \theta)$ (\cref{sec:task}), showing that success depends sharply on the training distribution.
With sufficient mass on simple examples, policy gradient learns efficient chain traversal in polynomial time and the learned behavior generalizes to any test distribution, including unseen complex examples (\cref{result:dyn_learn}).
Since algorithms far less efficient than chain traversal also solve our chain identification task (\cref{sec:how_learn:reasoning_sufficient}), this demonstrates an \emph{implicit bias} toward efficiency.~\footnote{While the constructions of \cref{result:traverse_chain,result:inefficient_solution} do not conform to the settings of \cref{sec:dyn:setting}, one can implement very similar constructions which do. For further details see \cref{app:dyn_constructions}.}
Without simple examples, learning is intractable: even after exponential time, the rollout distribution remains essentially unchanged from initialization (\cref{result:dyn_vanishing}).

\subsection{Data Distribution}
\label{sec:dyn:data_distribution}

We now describe the property of the data distribution that controls the dynamics.
We characterize a data distribution based on the number of reasoning steps required to solve its examples.
Recall from \cref{sec:setup:dist} that each example includes as starting vertex the $k$-th vertex of a chain, for some $k \in [n-1]$.
Examples starting near the end of the chain (large $k$) require fewer autoregressive steps (\cref{sec:setup:traversal}), thus are regarded as simpler.
We accordingly define a measure of simplicity for the data distribution.

\begin{definition}
\label{def:tail_mass}\emph{(Simplicity Mass).}
Let $\D^{\QQ}$ be a data distribution parameterized by a distribution $\QQ$ over the index set $[n-1]$ (\cref{sec:setup:dist}).
For any $s \in [n-1]$ and $c \in [0,1]$, we say that \emph{$\D^{\QQ}$ has $s$-simplicity mass~$c$} if $\sum_{i=n-s}^{n-1} \QQ(i) = c$, i.e., if there is probability~$c$ that the starting vertex is within distance~$s$ from the end of its chain.
\end{definition}

\subsection{Technical Assumptions}
\label{sec:dyn:setting}

We analyze policy gradient on the objective $\RR(\D^{\QQ}, \theta)$ under the technical assumptions described below.
We make these assumptions for analytical tractability: the experiments in \cref{sec:exp:theo} show that the main phenomena persist when most of them are relaxed. Removing these assumptions is an important direction for future work (see \cref{sec:limit}).

Motivated by \cref{sec:how_learn:reasoning_sufficient}, which shows that single-layer Transformers can solve the task through autoregressive reasoning, we specialize the dynamics to such an architecture. In addition, we assume that the model outputs only vertices and never repeats the previous vertex. This condition can be enforced via output masking.
\begin{assumption}
\label{ass:architecture}
\emph{(Architecture).}
Throughout this section, we focus on single-layer linear Transformers, \ie, \(\transdepth=1\) and \(\sigma\) is the identity in \cref{eq:transformer_layer}.
Thus, for \(\theta=(A,V)\) and any prefix \(s_{1:T}\), \cref{eq:transformer_layer} becomes
\[
H^{(1)}
=
\mathrm{MASK}_{\sigma}\!\left(H^{(0)} A (H^{(0)})^\top\right) H^{(0)}V,
\qquad
\mathrm{TF}(s_{1:T};\theta)
=
\mathrm{softmax}\!\left(H^{(1)}_{T,:}\right).
\]

\end{assumption}

As is common practice in Reinforcement Learning for Transformers, we consider policy gradient initialized from a base model that has already acquired a minimal level of task proficiency during pre-training \cite{deepseekai2025deepseekr1incentivizingreasoningcapability,ouyang2022traininglanguagemodelsfollow}.
Concretely, we assume a weak initial preference for following the chain direction rather than reversing it, and also require non-negligible probability of switching chains so that the model is not initialized near either the efficient or inefficient solutions constructed in \cref{sec:how_learn:reasoning_sufficient}.
\begin{assumption}
\label{ass:base_model}
\emph{(Pretraining).}
For any input sequence $s_{1:L_0}$, let \(p_{\text{fwd}}\), \(p_{\text{bwd}}\), and \(p_{\text{switch}}\) denote the probabilities that the Transformer performs a forward, backward, or switch step, respectively, at the first step (time $t=L_0+1$).
There exist constants \(\fwdbound,\switchbound \in \R_{>0}\) such that, for any input sequence $s_{1:L_0}$,
\[
p_{\text{fwd}} > p_{\text{bwd}}+\fwdbound
\qquad\text{and}\qquad
p_{\text{switch}} > \switchbound .
\]
\end{assumption}

We next specify the idealized optimization dynamics used in our analysis. We use gradient flow, the continuous-time limit of gradient-based optimization, which is known to closely approximate policy gradient with moderately small step sizes \cite{elkabetz2021continuousvsdiscreteoptimization}. Following prior theoretical work on Transformers \cite{nichani2024transformerslearncausalstructure,wang2025learningcompositionalfunctionstransformers}, we also train only the attention matrix~$A$ while keeping the value matrix~$V$ fixed at initialization.
\begin{assumption}
\label{ass:training_algorithm}
\emph{(Gradient flow).}
Training follows continuous-time gradient flow with exact expected gradients. Only the attention matrix~$A$ is trained, while the value matrix~$V$ is kept fixed at its initialization:
\begin{equation}
\frac{dA(t)}{dt} = \nabla_A \RR(\D, A(t)).
\end{equation}
\end{assumption}

It remains to specify the initial parameterization of the attention and value matrices. The attention initialization below captures the idea that the Transformer begins without any preference toward specific vertices, treating all of them symmetrically.
\begin{assumption}
\label{ass:initial_attention}
\emph{(Symmetric initialization).}
The attention matrix $A$ is initialized symmetrically.\footnote{More precisely, for any permutation $\pi$ of the vertex set~$\vset$, extended naturally to edge tokens via $\pi((u,w)) = (\pi(u), \pi(w))$, the matrix $A$ satisfies $A_{\pi(s),\pi(s')} = A_{s,s'}$ for all tokens $s,s' \in \vocab$.}
Equivalently, there exist constants $\alpha, \beta, \gamma, \eta, \delta \in \R$ with $\alpha,\gamma>0$ such that for any edge $(u, w) \in \edgevocab$ and vertices $v,w' \in \vset$,\footnote{Note that we do not specify the attention initialization between pairs of edge tokens; since the attending token is always a vertex, these values do not affect the dynamics.} $A_{v,(u,w)}=\alpha$ if $v=u$, $A_{v,(u,w)}=\beta$ if $v=w$, $A_{v,(u,w)}=\gamma$ if $v \notin \{u,w\}$, $A_{v,w'}=\eta$ if $v=w'$, and $A_{v,w'}=\delta$ if $v \neq w'$.
\end{assumption}

Finally, we fix a value matrix that encodes the endpoints of each edge token, analogous to the one-hot value initializations used in prior work~\cite{nichani2024transformerslearncausalstructure,wang2025learningcompositionalfunctionstransformers,huang2025transformersprovablylearnchainofthought}.
\begin{assumption}
\label{ass:value_matrix}
\emph{(Sparse Value).}
We set the value vector of each edge token $(u,w)$ to encode its defining vertices: $V_{(u,w),v} = \mathbbm{1}\{v \in \{u,w\}\}$. Vertex tokens have zero value vectors: $V_{v,:} = 0$ for all $v \in \vset$.
\end{assumption}

\subsection{With Simple Examples Efficient Reasoning is Learned}
\label{sec:dyn:traverse_chain}

\cref{result:dyn_learn} below essentially establishes that, when the training data distribution has a simplicity mass greater than zero (\cref{def:tail_mass}), the Transformer learns, in time polynomial in the chain length~$n$,~\footnote{\label{fn:discretization}The bound applies to the time index of gradient flow; by discretizing with a fixed step size, this translates to a polynomial number of policy gradient iterations \cite{elkabetz2021continuousvsdiscreteoptimization}. See \cref{app:discretization} for further details.}
to perform \emph{forward steps} (\cref{sec:setup:traversal}) with arbitrarily high probability.

\begin{theorem}
\label{result:dyn_learn}
Let $s \in [n-1]$, $a \in ( 0 , 1 )$ and $c \in ( 0 , 1 ]$.
Suppose the $s$-simplicity mass and $\lceil a n \rceil$-simplicity mass of the training data distribution equal~$c$ (\cref{def:tail_mass}).~\footnote{Note that this condition implies that the training distribution is composed of examples that are either of distance at most $s$ from the end of the chain, or of distance at least $\lceil a n \rceil$. We believe the condition on $\lceil a n \rceil$-simplicity mass can be removed; we defer investigation of this prospect to future work.}
Then, for any $\varepsilon > 0$, there exists $t = O ( n^2 c^{-1} \varepsilon^{- 2} )$~\footnote{The $O$-notation hides constants that depend on $s$, $a$, $\fwdbound$ and~$\switchbound$.}
such that the linear Transformer parameters at time~$t$ of optimization, i.e., $\theta ( t ) := ( A ( t ) , V )$, satisfy the following:
for any input sequence $s_{1:L_0}$ drawn from any test data distribution (\cref{sec:setup:dist}), at each step $l \geq L_0$ of the autoregressive generation $s_{l + 1} \sim \mathrm{TF} ( s_{1 : l} ; \theta ( t ) )$ (\cref{sec:setup:ar}), the probability of~$s_{l + 1}$ being a forward step (\cref{sec:setup:traversal}) is at least~$1-\varepsilon$.~\footnote{When $c = 1$, we can further show that at time $t = O(n^2 c^{-1} \varepsilon^{- 2})$, for rollouts initiated from a starting vertex drawn from the training distribution $\D^{\QQ}$, the probability of the entire rollout being a chain traversal is at least $1 - \varepsilon$.}
\end{theorem}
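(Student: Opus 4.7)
The plan is to exploit the symmetries of the problem to collapse the learning dynamics to a low-dimensional ODE, show that the simple examples provide a coherent drift toward chain-traversal behavior, and control the contribution of the complex examples using the baseline assumptions. First, I would prove that $A(t)$ remains throughout training in the symmetric subspace parameterized by the five scalars $(\alpha, \beta, \gamma, \eta, \delta)$. The three ingredients are that the value matrix $V$ is equivariant under vertex permutations (since $V_{(u,w),v}$ depends only on whether $v \in \{u,w\}$), that the data distribution $\D^{\QQ}$ is invariant under such permutations (the chain partition and intra-chain orderings are uniform), and that $A(0)$ is symmetric by assumption. Hence $\nabla_A \RR$ is symmetric and the dynamics reduce to a five-dimensional ODE. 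Moreover, since $V_{v,:} = 0$ for every vertex token, at any autoregressive step from a current vertex the logit at a candidate vertex is a closed-form expression in the input edges and the five scalars, independent of generation history. I would explicitly extract from this formulas for the forward-, backward-, and switch-step probabilities at every vertex, and show in particular that the forward-step probability tends to $1$ uniformly over chain positions as the gap $\alpha - \gamma$ grows past $\Theta(\log(n / \varepsilon))$.

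Next I would analyze the drift of $\alpha(t) - \gamma(t)$. For the simple examples (starting vertex within distance $s$ of the chain's end) trajectories terminate within $O(s)$ steps, so the REINFORCE gradient $r(\tau) \nabla_A \log p_{\mathrm{roll}}(\tau)$ contains only $O(s)$ nonzero terms. Combining the base-model inequalities $p_{\mathrm{fwd}} - p_{\mathrm{bwd}} \geq \fwdbound$ and $p_{\mathrm{switch}} \geq \switchbound$, I would show that the expected REINFORCE gradient from any simple example has a strictly positive component along $\partial_\alpha - \partial_\gamma$ whose magnitude is bounded below by a positive function of $(1 - p_{\mathrm{fwd}})$. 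Weighting by the simplicity mass $c$ and accounting for the $1/n$ spread of attention over the $\Theta(n)$ input edges produces an aggregate drift of order $c/n$. For complex examples (starting vertex at distance $\geq \lceil a n \rceil$ from the end), the probability of a reward-$1$ trajectory at initialization is exponentially small in $n$, since the base model must produce $\Omega(n)$ consecutive forward steps without switching chains and both per-step probabilities are bounded away from $1$; a uniform bound on the score-function magnitude then shows that the complex-example contribution to $\dot\alpha - \dot\gamma$ is dominated by the simple-example drift throughout the training window in which $p_{\mathrm{fwd}}$ has not yet reached $1 - \varepsilon$. This is where the condition that the $\lceil a n \rceil$-simplicity mass also equals $c$ enters: it excludes ``middle-regime'' examples that are neither short-horizon nor exponentially rare, and so cannot be absorbed into either bound.

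Integrating the resulting ODE inequality $\dot{(\alpha - \gamma)} \gtrsim c \, \Phi(\alpha - \gamma)/n$, where $\Phi$ is a positive decreasing function capturing the $(1 - p_{\mathrm{fwd}})$ slowdown as the gap grows, yields the claimed polynomial time bound for driving $p_{\mathrm{fwd}}$ above $1 - \varepsilon$ at every vertex. Crucially, by the symmetry-preservation lemma the forward-step probability at any test vertex is a function of the five scalars and the local edge structure alone, so the guarantee holds uniformly over all test distributions and input chain lengths, giving the claimed out-of-distribution generalization. The main obstacle I anticipate is the drift analysis: isolating a clean positive lower bound for the component of the REINFORCE gradient along $\partial_\alpha - \partial_\gamma$ requires carefully tracking how each of the five scalars shifts the logit of the forward neighbor relative to the $\Theta(n)$ competing candidates, and verifying that the base-model constants $\fwdbound$ and $\switchbound$ prevent cancellation between the forward-edge, backward-edge, and off-path contributions. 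A secondary obstacle is extending the complex-example bound from $t = 0$ to the full training window, since the reward probability on complex inputs grows as $p_{\mathrm{fwd}}$ increases; a monotonicity or invariant argument on the scalars will likely be needed to guarantee that the simple-example drift continues to dominate until the target accuracy is reached.
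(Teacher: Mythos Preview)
Your high-level architecture matches the paper's: symmetry reduction to a few scalars, the observation that $V_{v,:}=0$ makes the per-step transition kernel history-independent, the easy/hard split, and the exponential-smallness argument for hard examples. Where you and the paper diverge is the analysis of the easy-example gradient, and this is where your proposal has a real gap.

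You propose to bound the REINFORCE gradient $r(\tau)\nabla_A\log p_{\mathrm{roll}}(\tau)$ directly and extract a positive $\partial_\alpha-\partial_\gamma$ component. The paper does not do this. Instead it introduces a \emph{long-jump / no-long-jump decomposition}: a ``long jump'' is any transition that changes distance-to-terminal by $\geq 2$ (essentially any switch to a non-adjacent vertex), and the key observation is that by chain-swap symmetry, \emph{conditioned on any long jump occurring before absorption, the probability of same-side absorption is exactly $1/2$}. Writing $h(v)$ for the long-jump-before-absorption probability and $\bar S^{NL}(v)$ for the no-long-jump conditional same-side probability, one gets $S(v)=\tfrac12+(1-h(v))(\bar S^{NL}(v)-\tfrac12)$, hence $\partial_\theta S(v)=-(\partial_\theta h(v))(\bar S^{NL}(v)-\tfrac12)+(1-h(v))\partial_\theta\bar S^{NL}(v)$. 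The paper then shows the first term dominates and has the correct sign. The derivatives $\partial_\theta h(v)$ and $\partial_\theta\bar S^{NL}(v)$ are controlled not via REINFORCE but via a \emph{source representation}: for any absorption probability $u_\theta$, one has $\partial_\theta u_\theta(v)=\mathbb E_v[\sum_{t<\tau}s_\theta(X_t)]$ where $s_\theta(x)=\sum_y(\partial_\theta P_\theta(x,y))u_\theta(y)$ depends only on single-step sensitivities. In the long-jump-absorbing chain these sources collapse to clean one-line expressions like $s_\alpha(x)=p_f(x)(u(F(x))-u(x))$, whose signs follow from monotonicity of $u$ in depth.

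Your direct REINFORCE route runs into exactly the obstacle you flag: without the LJ decomposition, a single rollout visits many states, the score function has $\Theta(n)$ competing $\gamma$-terms at each step, and the signs are not obviously aligned. (Relatedly, your claim that easy-example trajectories ``terminate within $O(s)$ steps'' is not correct---only the \emph{expected} length is $O(s/\fwdbound)$, and the per-rollout REINFORCE sum has a random number of terms.) The paper also tracks $\alpha$, $\beta$, $\gamma$ separately (proving $\dot\alpha\ge0$, $\dot\beta\le0$, $\dot\gamma\le0$) rather than the single combination $\alpha-\gamma$, and the $\dot\gamma\le0$ inequality in particular requires the LJ machinery to handle the boundary source at the second vertex, which can be negative. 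Your hard-example argument and your final ODE-integration step are essentially correct in spirit and align with what the paper does.
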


\begin{proof}[Proof sketch (full proof in \cref{app:proof_dyn_learn})]
    By permutation symmetry of the initialization and data distribution, the rollout generated by \(\theta(t)\) can be analyzed as a time-homogeneous Markov chain on a canonical two-chain graph.
    Its behavior is determined by three transition probabilities: forward, backward, and chain-switch steps, parameterized by \(\alpha,\beta,\gamma\).
    We decompose the same-side absorption probability \(S(v)\), \ie, the probability that the rollout terminates at the end of the chain it started on, according to whether the rollout makes a \emph{long jump} (LJ), a transition that changes the distance to the terminal vertex by at least \(2\):
    \[
    S(v) \;=\; (1-h(v))\,\bar S^{NL}(v) \;+\; \tfrac12\,h(v),
    \]
    where \(h(v)\) is the probability of at least one LJ and \(\bar S^{NL}(v)\) is the same-side absorption probability conditional on no LJ.
    The factor \(1/2\) follows because, after an LJ occurs, the symmetry and structure of the two-chain Markov chain make absorption on either side equally likely.
    We then use a source decomposition for derivatives of Markov-chain absorption probabilities, expressing each parameter derivative as the expected sum of local one-step sensitivities along the rollout.
    Applying this representation to the LJ-absorbing and no-LJ chains shows that gradient flow decreases \(p_{\mathrm{bwd}}(t)\) and \(p_{\mathrm{switch}}(t)\), while increasing \(p_{\mathrm{fwd}}(t)\) at a sufficient rate.
    Choosing \(t=O(n^2c^{-1}\varepsilon^{-2})\) therefore ensures that each rollout step is forward with probability at least \(1-\varepsilon\).
\end{proof}

\subsection{Without Simple Examples Learning is Intractable}
\label{sec:dyn:vanishing_gradients}

\cref{result:dyn_vanishing} shows that if the training distribution has zero \(\lceil an\rceil\)-simplicity mass (\cref{def:tail_mass}), so all examples require at least \(\lceil an\rceil\) autoregressive steps, learning is intractable: even up to exponential training time,~\footnote{The bound applies to the time index of gradient flow; by discretizing with a fixed step size, this translates to an exponential number of policy gradient iterations \cite{elkabetz2021continuousvsdiscreteoptimization}.}
the Transformer's rollout distribution remains essentially unchanged from initialization.
Together with \cref{result:dyn_learn}, this implies that simple examples are both sufficient and necessary for learning efficient reasoning.

\begin{theorem}
\label{result:dyn_vanishing}
Let $a \in ( 0 , 1 )$ and $\varepsilon \in \R_{> 0}$.
Suppose the $\lceil a n \rceil$-simplicity mass of the training data distribution equals zero (\cref{def:tail_mass}).
For any $t \in \R_{\geq 0}$, denote by~$\theta ( t )$ the linear Transformer parameters at time~$t$ of optimization, i.e., $\theta ( t ) := ( A ( t ) , V )$.
Then there exists $\kappa = \kappa(a,\switchbound,\fwdbound) > 0$, independent of~$n$, such that for every $t < \varepsilon^{2} \cdot e^{\kappa n}$, it holds that with any test data distribution (\cref{sec:setup:dist}):
\begin{equation}
    \label{eq:tv_rollout_close}
    \mathrm{TV}\big(p_{\text{roll}}(\theta(t)), p_{\text{roll}}(\theta(0))\big) <\varepsilon ,
\end{equation}
i.e., at time~$t$ of optimization, the distribution over trajectories autoregressively generated by the Transformer (\cref{sec:setup:ar}) is $\varepsilon$-close (in total variation distance; \cref{def:tv}) to what it is at initialization.
\end{theorem}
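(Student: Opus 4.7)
The plan is to control the total variation by first bounding how far gradient flow can move the attention parameters from their initialization over the time window of interest, and then translating this parameter-distance bound into a trajectory-distribution bound via Lipschitz continuity. Since $A(t)-A(0)=\int_{0}^{t}\nabla_{A}\RR(\D^{\QQ},\theta(s))\,ds$, the first ingredient is a uniform bound on $\|\nabla_{A}\RR(\D^{\QQ},\theta(s))\|$ for $s\in[0,t]$. For the second ingredient, a standard coupling argument gives a TV bound of the form $\mathrm{TV}(p_{\text{roll}}(\theta),p_{\text{roll}}(\theta(0)))\leq L(n)\cdot\|\theta-\theta(0)\|$ (or, sharper, a Pinsker-type bound against a Fisher-information integral $\int_{0}^{t}\|\nabla_{A}\log p_{\text{roll}}(\theta(s))\|^{2}\,ds$) with $L(n)$ polynomial in~$n$, using that the preserved forward bias~$\fwdbound$ keeps the expected absorption time polynomial under any test distribution.

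The heart of the proof is showing that $\|\nabla_{A}\RR(\D^{\QQ},\theta)\|$ is exponentially small in~$n$ throughout a neighborhood of initialization. Since $\D^{\QQ}$ and the initialization $A(0)$ are both invariant under vertex permutations and under exchanging the two chains, gradient flow preserves these symmetries, so at every time~$s$ the attention matrix remains in the low-dimensional family parameterized by $(\alpha,\beta,\gamma,\eta,\delta)$ from \cref{sec:dyn:setting}. The gradient thus reduces to at most five scalar partial derivatives, each measuring how a symmetric perturbation of the induced two-chain Markov chain affects the absorption probability $u_\theta(v):=\Pr_v[\text{absorb at correct terminal}]$, averaged over starting vertices $v$ drawn from~$\D^{\QQ}$.

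I would bound each partial derivative using the source decomposition from the proof of \cref{result:dyn_learn}: $\partial_\theta u_\theta(v)=\EE_v\!\bigl[\sum_{l=0}^{\tau-1}s_\theta(X_l)\bigr]$ with $s_\theta(x)=\sum_y(\partial_\theta P_\theta(x,y))\,(u_\theta(y)-u_\theta(x))$, the rewriting exploiting $\sum_y\partial_\theta P_\theta(x,y)=0$. Two estimates then combine to make this exponentially small when $v$ is at distance $\geq\lceil a n\rceil$ from its correct terminal. First, the chain-exchange symmetry together with the uniform lower bound~$\switchbound$ on the per-step switch probability implies that for states $x$ at distance $\Omega(n)$ from both terminals, $u_\theta(x)=\tfrac12+O(e^{-\kappa d(x)})$, where $d(x)$ is the distance from $x$ to the nearer terminal, so the local increments $u_\theta(y)-u_\theta(x)$ appearing in $s_\theta(x)$ are themselves exponentially small. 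Second, for states within $O(\log n)$ of a terminal --- where those increments are not small --- the Green's function $\EE_v[\sum_l\mathbbm{1}\{X_l=x\}]$ of the forward-biased walk starting from distance $\geq\lceil a n\rceil$ decays exponentially in that distance, so the contribution of those states to the source-decomposition sum is also $O(e^{-\kappa n})$.

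A bootstrap argument then extends this bound from $\theta(0)$ to every $\theta(s)$ on $[0,t]$: as long as the parameters have moved by at most a small constant in each coordinate, the effective transition probabilities $(p_{\text{fwd}}(s),p_{\text{bwd}}(s),p_{\text{switch}}(s))$ remain within a constant factor of their initial values (in particular $p_{\text{switch}}(s)\geq\switchbound/2$), so the source-decomposition estimates above go through with slightly adjusted constants. Integrating the gradient bound over time gives $\|\theta(t)-\theta(0)\|=O(t\cdot e^{-\kappa n})$, and combining with the TV/Pinsker bound yields $\mathrm{TV}(p_{\text{roll}}(\theta(t)),p_{\text{roll}}(\theta(0)))<\varepsilon$ for $t<\varepsilon^{2}e^{\kappa n}$ after absorbing polynomial factors in~$n$ into a slightly smaller~$\kappa$. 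The main obstacle is exactly this bootstrap: the exponential-decay estimates for $u_\theta$ rely on the switch kernel having non-vanishing mass everywhere on the chain, and one must verify that this feature is preserved under the small but nontrivial parameter drift along the gradient-flow trajectory --- a self-consistent continuity argument that uses the exponential smallness of the gradient to justify staying in the neighborhood where the gradient is exponentially small.
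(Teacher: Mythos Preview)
Your high-level architecture matches the paper's proof: exponentially small gradient $\Rightarrow$ small parameter drift via integration $\Rightarrow$ small trajectory-TV via a coupling bound, closed by a bootstrap that keeps the transition probabilities in a safe region where the gradient estimate remains valid. The paper's TV step is a stopped-path bound $M\|\theta-\theta'\|_1+\Pr_{\theta}(\tau>M)+\Pr_{\theta'}(\tau>M)$ optimized over $M$, with tails controlled by a two-step absorption probability $\Omega(1/n)$; your polynomial-Lipschitz formulation achieves the same effect.

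There is, however, a concrete error in the second half of your gradient argument. The Green's-function claim is false for this chain: with per-step long-jump probability bounded below by a constant (a consequence of $p_{\text{switch}}>\switchbound$ and the logit structure), the walk can land directly on near-terminal states from anywhere, so $\EE_v\bigl[\sum_l\mathbbm{1}\{X_l=x\}\bigr]$ is \emph{not} exponentially small for such $x$; and even in the no-long-jump chain the forward bias drives the walk \emph{through} every depth on its way to absorption, giving $O(1)$ expected visits to each near-terminal depth, not $e^{-\kappa n}$. What actually kills those contributions is the chain-swap symmetry you already invoked, applied at the trajectory level: the first long-jump landing state $X_J$ has a $\varphi$-symmetric law, and since $u\circ\varphi=1-u$ implies $\partial_\theta u\circ\varphi=-\partial_\theta u$, the strong-Markov tail $\EE_v\bigl[\mathbbm{1}\{J<\tau\}\,\partial_\theta u(X_J)\bigr]$ vanishes identically. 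The pre-long-jump piece is then handled because $J$ is stochastically dominated by a geometric with rate $q_{\min}=\Omega(1)$ and on $\{l<J\}$ the depth obeys $d(X_l)\geq an-l$, so your first estimate $|s_\theta(x)|=O(e^{-\kappa d(x)})$ applies. The paper encodes exactly this cancellation through its long-jump/no-long-jump decomposition of $S_\theta(v)$, bounding $|\partial_\theta h_\theta(v)|$ by a REINFORCE--Cauchy--Schwarz step with $\Var(R)=h(1-h)\leq e^{-cn}$; either packaging works once the Green's-function claim is replaced by the symmetry argument.
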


\begin{proof}[Proof sketch (full proof in \cref{app:dyn_vanishing})]
We proceed as in the proof sketch of \cref{result:dyn_learn} by viewing the Transformer's rollouts as a Markov chain and decomposing the same-side absorption probability via the LJ decomposition above.
We show, again using the source decomposition, that when the training distribution has $\lceil an \rceil$-simplicity mass zero, these derivatives are exponentially small in \(n\). This implies that the one-step kernels (and hence the induced rollout distributions) stay \(\varepsilon\)-close in total variation distance up to the stated time scale.
\end{proof}

	\begin{table}[t]
    \centering
    \caption{Test accuracy on \emph{Hard-$n$} under various post-training distributions, in two real-world post-training experiments on a mathematical reasoning task (\cref{sec:exp:setting}).
    Left: \emph{real-world pre-training} using Qwen2.5-3B-Instruct and $n=15$.
    Right: \emph{synthetic pre-training} using a model pre-trained on chain-of-thought traces of simpler instances~\citep{zhang2025interplaypretrainingmidtrainingrl} and $n=20$.
    Full details in \cref{app:exp_real_details}.}
    \label{tab:exp_results}
    \small
    \setlength{\tabcolsep}{6pt}%
    \begin{tabular}{@{} lc !{\vrule width 0.5pt} lc @{}}
    \toprule
    \multicolumn{2}{c}{\textbf{Real-World Pre-Training}} & \multicolumn{2}{c}{\textbf{Synthetic Pre-Training}} \\
    \cmidrule(r{4pt}){1-2} \cmidrule(l{4pt}){3-4}
    Training Dist. & Test Acc. (\emph{Hard-$15$}) & Training Dist. & Test Acc. (\emph{Hard-$20$}) \\
    \midrule
    \emph{Hard-$15$}  & 52.9\% & \emph{Hard-$20$}  & 45.6\% \\
    \emph{Mixed-$15$} & 96.2\% & \emph{Mixed-$20$} & 82.2\% \\
    \emph{Mixed-$10$} & 91.6\% & \emph{Mixed-$17$} & 79.6\% \\
    \bottomrule
    \end{tabular}
\end{table}
    
\section{Experiments}
\label{sec:exp}

In this section, we empirically validate our theoretical predictions in two settings, building on the framework of \citet{zhang2025interplaypretrainingmidtrainingrl}.
We extend the chain identification task analyzed in our theory (\cref{sec:task}) to a mathematical reasoning task in which variables can depend on, and affect, multiple other variables.
Our experiments confirm two key predictions of our theory:
\emph{(i)}~simple examples suffice for learning to reason (\cref{sec:dyn:traverse_chain});
and \emph{(ii)}~without simple examples, learning fails (\cref{sec:dyn:vanishing_gradients}).

Synthetic experiments with single-layer Transformers on the specific chain identification task, which more closely match our theoretical setting, are deferred to \cref{sec:exp:theo} due to lack of space.~\footnote{Code for reproducing our experiments will be made available for the camera-ready version.}

\subsection{Setting}

\label{sec:exp:setting}

We adopt a framework similar to that of \citet{zhang2025interplaypretrainingmidtrainingrl}.
Each example is presented as a system of arithmetic equations defining a set of variables in terms of others, together with one or more constant assignments and a query for a specified target variable.
Unlike the chain identification task analyzed in our theory (\cref{sec:task}), each variable can depend on, and affect, multiple other variables.
The model is post-trained from a base model using a method inspired by
DeepSeek-R1 \cite{deepseekai2025deepseekr1incentivizingreasoningcapability}, with reward depending only on whether the final answer matches the ground truth (and adherence to a specified output format), with no supervision on intermediate reasoning steps.
We characterize the difficulty of an example by a notion of \emph{complexity} corresponding to the extent of reasoning required to compute its answer (defined precisely per setting in \cref{app:exp_real_details}).
For some complexity~$n$, we let \emph{Hard-$n$} denote a training distribution containing only examples of complexity~$n$, and \emph{Mixed-$n$} denote a training distribution that uniformly mixes examples of complexity~$1\leq i\leq n$.

We conduct two experiments sharing the same task and training procedure, differing primarily in the base model used for RL. We refer to these as the \emph{synthetic pre-training} and \emph{real-world pre-training} settings.
In the \emph{synthetic pre-training} setting, following \citet{zhang2025interplaypretrainingmidtrainingrl}, the base model is a Qwen2.5-style Transformer pre-trained on chain-of-thought traces of our specific mathematical reasoning task, with pre-training restricted to examples simpler than those encountered during RL and evaluation.\footnote{While our experiments build on the framework of \citet{zhang2025interplaypretrainingmidtrainingrl}, our setup isolates the role of simple examples: we show that training directly on the target distribution performs worse than training on a distribution shifted towards simpler examples. In contrast, their training distribution is always either harder or easier than their test distribution.}
In the \emph{real-world pre-training} setting, the base model is Qwen2.5-3B-Instruct~\cite{qwen2025qwen25technicalreport}, a standard LLM with broad language pre-training but no prior exposure to our mathematical task.\footnote{Equivalent experiments with Llama-3.2-3B-Instruct~\cite{grattafiori2024llama3herdmodels} showing the same qualitative results are provided in \cref{app:exp_real_details}.}
This second setting is more reflective of standard LLM post-training pipelines, where RL is applied directly to a general-purpose base model with no task-specific reasoning priors.
Full experimental details are provided in \cref{app:exp_real_details}.

\subsection{Results}

\label{sec:exp:results}

\Cref{tab:exp_results} reports results across both settings.
Consistent with our theory, including simple examples in the post-training distribution leads to strong generalization on \emph{Hard-$n$}, even when the training distribution never includes complexity-$n$ examples (\emph{Mixed-$k$}, $k < n$). On the other hand, excluding simple examples causes generalization to fail entirely.\footnote{The failure of \emph{Hard-$n$} training is not due to insufficient training: the corresponding loss curves show that the training loss plateaus (\cref{app:exp_real_details}).}
Together, these results support the key conclusion of our theory: a distribution shift toward simpler examples is beneficial for learning to generalize to complex tasks.\footnote{We note that training on only very simple examples without any complex ones, also fails to generalize to \emph{Hard-$n$}. Characterizing the minimal post-training distribution sufficient for generalization is an interesting direction for future work.}

To test whether RL induces an implicit bias toward efficient reasoning, we also evaluate the real-world pre-training setting on a restricted version of the task in which the relevant variables form a single dependency chain.\label{sec:exp:real:efficient}
A natural efficient strategy for this restricted task is to implement the following procedure: first trace this chain backward, and then compute values forward along it.\footnote{Calculating values without first identifying the dependency path seems far more difficult, as the model cannot know \emph{a priori} the order in which equations should be processed.}
Analysis via LLM annotation over 100 correct solutions generated by the trained model reveals that 98\% of them follow precisely this step-by-step algorithm (see \cref{app:llm_annotation} for details), confirming a strong implicit bias toward efficient, structured reasoning.

\section{Conclusion}
\label{sec:conclusion}

In this paper, we established a theoretical footing for the emergence of CoT (Chain-of-Thought) reasoning in Transformers trained via outcome-based RL (Reinforcement Learning), and we corroborated our theory with experiments in synthetic and real-world settings. 
Our findings imply that, with the right data, policy gradient exhibits an implicit bias towards reasoning algorithms that are both efficient and capable of out-of-distribution generalization.
Specifically, when the training data includes a sufficient amount of simple examples, policy gradient bypasses inefficient solutions, and converges to an efficient reasoning algorithm that generalizes to examples of arbitrary complexity.
The condition on simple examples in the training data is necessary: without such examples, learning becomes intractable.

From a practical perspective, our findings suggest a counterintuitive message: when fine-tuning an LLM (Large Language Model), rather than training on the target distribution, it may be beneficial to deliberately introduce a distribution shift by including simple examples in training.
Prior work has identified settings where introducing a distribution shift in training benefits learning (see \cref{sec:related:data_distribution}).
However, to our knowledge, this paper is the first to establish said phenomenon for Transformers trained via outcome-based RL.
Moreover, it identifies a specific type of examples whose inclusion in training can lead to efficient reasoning and out-of-distribution generalization.
We hope that our work will serve as a foundation for developing practical post-training curricula for LLMs, thereby bolstering the efficiency and generalizability of their reasoning capabilities.

	\section{Limitations}
\label{sec:limit}

It is important to acknowledge several limitations of our work.
First, our theoretical analysis focuses on a synthetic chain identification task, and while this task captures a core structural property of reasoning, it is still far simpler than the open-ended reasoning tasks encountered by real-world LLMs.
Second, our theory is restricted to single-layer linear Transformers, leaving the extension to deeper, nonlinear architectures theoretically unexplored.
Third, the theoretically analyzed training regime assumes continuous-time gradient flow with exact expected gradients, a symmetric initialization with a weak forward preference, output masking, and a fixed value matrix.
Fourth, our positive result requires the training distribution to contain sufficient mass on simple examples, and in its current form also imposes an additional simplicity-mass condition (\cref{result:dyn_learn}); we believe the latter may be removable.
Fifth, our negative result establishes intractability for training distributions with no simple examples within our stylized distribution family, rather than characterizing all possible data distributions or curricula.
Sixth, our real-world experiments are limited to a particular mathematical reasoning task, model family, and fine-tuning setup, and therefore should be viewed as evidence for the relevance of our mechanism rather than a comprehensive empirical characterization of outcome-based RL in LLMs.

Our experiments go beyond most of the theoretical limitations above.
Indeed, the appendix experiments in \cref{sec:exp:theo} demonstrate that the main phenomena persist when both attention and value matrices are trained jointly with sample-based policy gradient from a small Gaussian initialization, without the simplifying assumptions of \cref{sec:dyn:setting}.
\cref{sec:exp:real} further demonstrates analogous behavior in a modern LLM fine-tuned on a natural-language mathematical reasoning task.
Across these settings, Transformers trained with sufficiently simple examples learn efficient step-by-step reasoning algorithms, generalize to more complex examples, and fail to do so when simple examples are absent.
Extending our theoretical analysis to match the generality of these empirical demonstrations is an important direction for future work.


	\section*{References}
	{\small
		\ifdefined\ICML
			\bibliographystyle{icml2026}
		\else
			\bibliographystyle{plainnat}
		\fi
		\bibliography{refs}
	}

	\clearpage
	\appendix

	\crefalias{section}{appendix}
	\crefalias{subsection}{appendix}
	\crefalias{subsubsection}{appendix}

	\onecolumn
	
	
	\ifdefined\ENABLEENDNOTES
		\theendnotes
	\fi
	
	
	\section{Related Work}
\label{sec:related}

\subsection{Problems which provably require CoT}
A growing theoretical literature seeks to explain how CoT boosts Transformer performance by enhancing its expressivity. Several works show that CoT allows constant-depth Transformers to solve tasks such as parity, arithmetic, and regular languages that are inaccessible to single-step decoding~\cite{feng2023towards,merrill2023expressivelimits,li2024cot}. We contribute to this literature by establishing a natural graph traversal problem that provably requires CoT under standard complexity-theoretic assumptions.

\subsection{Training Dynamics of Transformers}
A parallel line of work analyzes how gradient-based training shapes Transformer behavior. Several studies track the emergence of algorithmic heads, programmable computation, or compositional rules~\cite{li2024one,hu2024looped,zhang2025finitestateautomatainside,kim2024transformers,wang2025learningcompositionalfunctionstransformers}, while other results show that gradient descent leads Transformers to implement in-context learning algorithms~\cite{huang2025how,li2023transformers,yang2024transformers,zhang2024trained,li2024transformers,nichani2024transformerslearncausalstructure}.  Complementary work analyzes when CoT trajectory supervision improves the sample and computational complexity of learning~\cite{joshi2025theorylearningautoregressivechain}.

Recent analyses further show that gradient-based training can drive Transformers that employ CoT to implement explicit multi-step algorithms~\cite{yang2025multihead,transformers2025gradientdescent,looped2024transformers,huang2025transformersprovablylearnchainofthought}. These works differ from ours in that they rely on training with next-token prediction and do not characterize when CoT reasoning emerges under RL training.

Two results that analyze RL-trained Transformers are~\citet{lyu2025transformersrlsftprovably} and \citet{bu2025provablebenefitcurriculumtransformer}, but both differ from our setting in crucial ways. 
The first~\cite{lyu2025transformersrlsftprovably} assumes a setting with dense rewards at every step along the CoT, relying on a much richer feedback signal than the sparse, outcome-based rewards we study. 
The second~\cite{bu2025provablebenefitcurriculumtransformer}, although formally using an outcome-based reward, defines a curriculum over progressively longer prefixes, so the learner effectively receives feedback for partial progress; in contrast, our setting provides feedback only on the full final outcome. Furthermore, \citet{bu2025provablebenefitcurriculumtransformer} analyze a simplified regime with only a single gradient update per curriculum stage, while we consider full training dynamics.

Follow-up work by \citet{huang2026learningdynamicsrlvr}, which appeared after our submission and cites our work, studies policy gradient dynamics in a related compositional setting. They provide analysis of policy gradient dynamics under less restrictive assumptions on the distribution of task difficulties.However, their setting can be viewed as a ``single-chain'' problem in which each step applies a group transformation, and the model is assumed to have access to a fixed MLP that already perfectly implements the atomic operation. Thus, the learning problem is to traverse the chain correctly rather than to acquire the stepwise transformation itself. By contrast, we do not assume such a fixed module. Moreover, our problem admits multiple minimizers corresponding to different algorithms; in this regime, we show that policy gradient exhibits a data-dependent implicit bias toward the most efficient solution. 

\subsection{Effect of Training Data Distribution on Learning}
\label{sec:related:data_distribution}
The composition and ordering of training data can significantly impact learning in neural networks. Empirical and theoretical work across supervised learning, unsupervised learning, and reinforcement learning demonstrates that training data distribution affects generalization, sample efficiency, and the emergence of specific learned behaviors~\cite{schaul2015prioritized,zhang2016understanding,belkin2019reconciling,nakkiran2019deepdoubledescentbigger,arora2019theoretical,chen2020simclr,kumar2020conservative,fedus2020revisiting,levine2020offline,feldman2020does,alexander2023makes}. 
Work on curriculum learning further suggests that ordering examples by complexity—determined by factors such as length, rarity, or comprehensibility—can significantly aid the learning process~\cite{bengio2009curriculum,graves2017automated,narvekar2020curriculum,portelas2020automatic,soviany2022curriculum,abbe2024generalization,bu2025provablebenefitcurriculumtransformer}.
More specifically, both empirical and theoretical investigations into Transformers emphasize that training data diversity is essential for the effective learning of complex compositional tasks~\cite{ranaldi-etal-2023-modeling,wang2025learningcompositionalfunctionstransformers,behnia2025factsstatsimpactspretraining}.
In a complementary line of work, \citet{medvedev2025shiftgoodmismatcheddata} demonstrate that out-of-distribution data can improve in-distribution performance in general mixture settings.

Particularly relevant to our setting, the empirical study of \citet{zhang2025interplaypretrainingmidtrainingrl} finds that RL post-training improves reasoning only when training examples are not significantly harder than those seen during pretraining (what they call the model's edge of competence). This supports our finding that simple examples are necessary for learning, and our experiments in \cref{sec:exp} build on this perspective in both synthetic and real-world settings.

\section{Proof of \cref{result:traverse_chain}}
\label{app:traverse_chain}

\cref{result:traverse_chain} in the main text is stated for infinite precision regime, which is incompatible with the finite log-precision regime, in which \cref{result:no_single_step} holds. In this section we prove a generalization of \cref{result:traverse_chain} that applies to the finite log-precision regime, in which \cref{result:no_single_step} holds.
\cref{result:traverse_chain} (as stated in \cref{sec:how_learn:reasoning_sufficient}) is an immediate corollary of the result in this section: taking the precision parameter $\xi\to\infty$ yields reward arbitrarily close to~$1$.
Moreover, we will show that the reward achieved by the construction in this section is strictly larger than the bound obtained in \cref{result:no_single_step}.

First, we formalize what it means for a Transformer to have \emph{$b$-bit precision}:

\begin{definition}[$b$-bit precision]
\label{def:precision}
We say that a Transformer $\theta$ has \emph{$b$-bit precision} if each scalar entry of the parameter matrices $(A,V)$ (or $\{(A_i,V_i)\}_{i=1}^d$ in the multi-layer case) can take at values in ${1,2,\dots,2^b}$
\end{definition}

\Cref{def:precision} can be extended to other numerical representations, including signed integers and fixed-point or floating-point formats, as long as the maximum representable magnitude remains $O(2^b)$.

We work in the \emph{logarithmic precision} regime: the number of bits scales logarithmically with the problem size $n$.
Concretely, we set
\[
b \;=\; \lceil  \xi\ \log_2 n \rceil
\]
for a positive integer $\xi\in\N_{>0}$ independent of $n$. (This is the same precision assumption as in \cref{result:no_single_step} where $\xi$ can be set to any positive integer.)

Now we are ready to state the theorem:

\begin{theorem}
\label{thm:traverse_chain_finite}
Fix any $n \ge 3$, any positive integer $\xi \ge 1$, and any distribution $\QQ$ over $[n-1]$.
There exists a single-layer Transformer $\theta=(A,V)$ with $\xi\lceil\log_2 n\rceil$-bit precision such that
\[
1-\RR(\D^{\QQ},\theta)
\;\le\;
n\big(|\vocab|-1\big)\exp\!\big(-\tfrac12 n^\xi\big),
\]
where $|\vocab|=2n+(2n)^2$ (see \cref{sec:task}).
Moreover, for every input sequence $s_{1:L_0}$ from $\D^{\QQ}$, with probability at least $1 - n(|\vocab|-1)\exp(-\tfrac12 n^\xi)$ over $\tau\sim p_{\text{roll}}(\theta)$, the rollout $\tau$ is a chain traversal (\cref{sec:setup:traversal}).
\end{theorem}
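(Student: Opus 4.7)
The plan is to adapt the infinite-precision argument from the proof sketch of \cref{result:traverse_chain} to the $b$-bit regime with $b=\xi\lceil\log_2 n\rceil$, by setting all parameter entries to either $0$ or the largest representable value $M:=2^{b}$. Concretely, I would define the attention matrix $A$ so that for every vertex $u\in\vset$ and every outgoing edge token $(u,v)\in\edgevocab$ we have $A_{u,(u,v)}=M$, while $A_{u,s}=0$ for every other token $s\in\vocab$. The value matrix $V$ is chosen so that $V_{(u,v),v}=M$ for every directed edge $(u,v)$, and all other entries of $V$ are $0$; in particular $V_{v,:}=0$ for all vertex tokens $v\in\vset$. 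All entries lie in $\{0,1,\dots,M\}$, so $\theta=(A,V)$ respects the $b$-bit precision budget.

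Next I would analyze one autoregressive step from a prefix $s_{1:\ell}$ whose last token is a non-terminal vertex $u$ lying on some chain $\CC\in\{\chaina,\chainb\}$, with unique outgoing neighbour $v^\star\in\CC$. Among the tokens appearing in $s_{1:\ell}$, the attention logits from position $\ell$ are $M$ on the single token $(u,v^\star)$ and $0$ on every other position; therefore the softmax attention weight on $(u,v^\star)$ is at least $1-(|\vocab|-1)e^{-M}$, and the aggregated hidden state at position $\ell$ is $V_{(u,v^\star),:}$ up to an $\ell_\infty$-perturbation of order $M\cdot(|\vocab|-1)e^{-M}$. Since $V_{(u,v^\star),:}$ is $M$ on coordinate $v^\star$ and $0$ elsewhere, the final output softmax places mass at least $1-(|\vocab|-1)e^{-M/2}$ on $v^\star$ after absorbing the perturbation into a standard Taylor estimate (the gap between the $v^\star$-logit and every other logit is $M-O(Me^{-M})\ge M/2$ for large enough $n$). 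The output mask (which forbids repeating $s_\ell$) does not alter this, since $v^\star\neq u$.

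Finally, I would close the induction over the rollout. The prompt ends with $\vstart$, which is a non-terminal vertex by construction of $\D^\QQ$, so the per-step bound applies; after each forward step the new last token is again a non-terminal vertex of the same chain until the terminal vertex $y^\star(s_{1:L_0})$ is reached, which takes at most $n-1$ steps. Union-bounding the per-step failure probability $(|\vocab|-1)e^{-M/2}$ over these at most $n$ steps yields that the rollout is a chain traversal, and hence ends at $y^\star(s_{1:L_0})$, with probability at least $1-n(|\vocab|-1)e^{-M/2}\ge 1-n(|\vocab|-1)\exp(-\tfrac12 n^{\xi})$, using $M\ge n^{\xi}$. Taking expectation over $s_{1:L_0}\sim\D^\QQ$ gives the reward bound; a comparison to the $TC^{0}$ bound from \cref{result:no_single_step} then follows by noting that $n(|\vocab|-1)\exp(-\tfrac12 n^\xi)$ decays super-polynomially in $n$, while the gap $c$ from \cref{result:no_single_step} is a positive constant independent of $n$, so for all sufficiently large $n$ the construction strictly outperforms the single-step upper bound.

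The main obstacle is the careful propagation of the $(|\vocab|-1)e^{-M}$ attention-layer perturbation through the linear value aggregation and the final output softmax so that it shows up as the clean bound $(|\vocab|-1)e^{-M/2}$ per step; this is a routine but slightly delicate log-sum-exp calculation that must use the fact that only the coordinate $v^\star$ of the hidden state is large while all other coordinates contribute only through the perturbation. A secondary subtlety is verifying that unrelated tokens in the prompt (other edges incident to vertices $\ne u$, and vertex tokens such as $\vstart$ or previously generated vertices) do not contaminate the hidden state; this is immediate because all such edge tokens receive attention logit $0$ from $u$, and all vertex tokens have $V_{v,:}=0$, so their contributions to the value sum vanish.
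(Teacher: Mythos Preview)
Your proposal is correct and follows essentially the same strategy as the paper: construct explicit $(A,V)$ so that from any non-terminal vertex $u$ the output logit on its chain-successor dominates all others by at least $\tfrac12 n^\xi$, bound the per-step failure probability by $(|\vocab|-1)e^{-n^\xi/2}$, and union-bound over at most $n$ forward steps. The main cosmetic difference is the parameter choice: you set both the attention and value entries to the single maximal representable value $M=2^{b}\ge n^{\xi}$ and do a direct softmax-concentration computation, whereas the paper separates scales (attention logit $\alpha=n$, value scale $\beta'=n^{\xi}$) and routes everything through the canonical logit lemma (\cref{lem:dyn:logits_cases_canonical}), which lets it handle the softmax and linear attention variants uniformly via the weight function $w(\cdot)$. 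Your ``$M/2$ for large enough $n$'' should be checked explicitly for all $n\ge 3$ (it holds: $e^{M}\ge e^{n}\ge 3n\ge \ell+1$ gives $(e^{M}-1)/(e^{M}+\ell-1)\ge 1/2$), and your argument as written is specific to the softmax variant, but since the statement only asks for existence of \emph{a} single-layer Transformer this suffices.
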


\begin{proof}
We prove the result simultaneously for the softmax and linear Transformer variants using the unified weight function $w(\cdot)$ from \cref{lem:dyn:logits_cases_canonical}.

Let $A$ satisfy the assumptions of \cref{lem:dyn:logits_cases_canonical} with
\[
\alpha = n,\qquad \beta = 0,\qquad \gamma = 0,
\]
and set all vertex--vertex entries of $A$ equal to $\gamma=0$.
Let $V$ satisfy the assumptions of \cref{lem:dyn:logits_cases_canonical} with
\[
\alpha'=0,\qquad \gamma'=0,\qquad \beta' = n^\xi.
\]
All nonzero entries of $(A,V)$ are integers of magnitude at most $n^\xi \le 2^b - 1$, hence representable with $b$ bits.

Fix any time during autoregressive generation before termination, and let $v$ be the last generated vertex (so the current input sequence $s$ ends with $v$).
Let $u$ denote the (unique) successor of $v$ in its chain if it exists.
By \cref{lem:dyn:logits_cases_canonical} and our choice $\Delta_\alpha=\alpha'-\gamma'=0$ and $\Delta_\beta=\beta'-\gamma'=\beta'$, the logit differences are controlled by the incoming-edge weights.
For every nonterminal $v\in(\chaina\cup\chainb)\setminus\{a_n,b_n\}$, the successor $u$ has an incoming edge $(v,u)\in G$ whose attention score from $v$ is $A_{v,(v,u)}=\alpha=n$; every other vertex $k\neq u$ has either an incoming edge with attention score $0$ from $v$ (i.e., $A_{v,(\mathrm{pred}(k),k)}\in\{\beta,\gamma\}=\{0\}$) or has no incoming edge in $G$ (start vertices). For edge token outputs $e' \in \edgevocab$, by \cref{lem:dyn:logits_cases_canonical} we have $o(v,G,Z_v)_{e'} = 0$. Hence there is a logit gap of the form
\[
o(v,G,Z_v)_u - o(v,G,Z_v)_k \;\ge\; \beta'\,\big(w(n)-w(0)\big)
\qquad \text{for all }k\neq u.
\]

Since $w(\cdot)$ is monotone increasing, $w(n)>w(0)$ in both attention variants:
for linear attention, $w(n)-w(0)=n$; for softmax attention, $w(n)-w(0)=\frac{e^n-1}{Z_v}$ (where $Z_v$ is defined in \cref{lem:dyn:logits_cases_canonical}).
Under our choice of $A$, all tokens other than the unique outgoing edge $(v,u)$ have attention score $0$ from $v$ (including all vertex tokens, since we set the vertex--vertex block to $\gamma=0$). Therefore, for softmax attention,
\[
Z_v=\sum_{j=1}^L \exp(A_{v,s_j}) \le e^n + (L-1)\cdot e^0 = e^n+L-1,
\]
where $L$ is the current prefix length.
Condition on the ``good'' event $E$ that the rollout has followed the correct chain successor at every previous step (equivalently, it is a chain traversal up to the current time). On $E$, the number of generated vertices before termination is at most $n$ (the chain length), hence $L\le L_0+n\le (2n-1)+n = 3n-1$.
Thus, on $E$,
\[
w(n)-w(0)
\;\ge\;
\begin{cases}
n, & \text{linear attention},\\[2pt]
\dfrac{e^n-1}{3n-2+e^n}, & \text{softmax attention}.
\end{cases}
\]
Since $e^n \ge 3n$ for all $n\ge 2$, we have
\[
\dfrac{e^n-1}{3n-2+e^n} \;\ge\; \frac12,
\]
and hence for both attention variants $w(n)-w(0)\ge \frac12$ on $E$.

The next-token distribution is $\mathrm{TF}(s_{1:L};\theta)=\mathrm{softmax}\big(o(v,G,Z_v)\big)$.
Using the logit gap and bounding the denominator by $|\vocab|$ terms (including both vertices and edges), we obtain for every nonterminal vertex $v\in(\chaina\cup\chainb)\setminus\{a_n,b_n\}$:
\[
\Pr\big[s_{L+1}=u \,\big|\, s_L=v\big]
\ge
\frac{1}{1+\sum_{k\neq u}\exp\!\big(-\tfrac12\beta'\big)}
\ge
1-(|\vocab|-1)\exp\!\big(-\tfrac12\beta'\big).
\]
All rollouts start from a vertex in one of the chains; at most $n$ nonterminal-to-successor transitions are required to reach the chain terminal.
By a union bound, the probability that the rollout ever deviates from the correct successor is at most
\[
n(|\vocab|-1)\exp\!\big(-\tfrac12\beta'\big).
\]
Whenever the rollout follows the chain edges until it reaches a terminal vertex, it terminates at the correct terminal $y^*(s_{1:L_0})$ and receives reward $1$ (see \cref{sec:setup:reward,sec:setup:traversal}).
Therefore,
\[
1-\RR(\D^{\QQ},\theta)
\;\le\;
n(|\vocab|-1)\exp\!\big(-\tfrac12\beta'\big)
\;=\;
n(|\vocab|-1)\exp\!\big(-\tfrac12 n^\xi\big),
\]
as claimed.

\end{proof}

\begin{corollary}
\label{cor:multistep_beats_single}
For any $\xi > 1$ there exists a chain length $n$ such that the reward achieved by the construction in \cref{thm:traverse_chain_finite} is strictly greater than the maximal single-step reward achievable by any Transformer under the setting of \cref{result:no_single_step}.
\end{corollary}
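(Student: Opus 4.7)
The plan is to combine the super-exponentially decaying error bound from \cref{thm:traverse_chain_finite} with the constant-gap upper bound on single-step reward from \cref{result:no_single_step}, exploiting the fact that, for $\xi > 1$, the former goes to zero much faster than any fixed positive constant as~$n$ grows.

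Concretely, given $\xi > 1$, I would first invoke \cref{result:no_single_step} with some fixed depth~$d$, obtaining a chain length~$n$, a distribution~$\QQ$ over $[n-1]$, and a constant $c > 0$ such that every depth-$d$ log-precision Transformer~$\theta$ satisfies $\RR_{\text{single}}(\D^{\QQ}, \theta) \leq 1 - c$. Then, for the same $n$ and~$\QQ$, I would apply \cref{thm:traverse_chain_finite} to produce a single-layer log-precision Transformer with reward at least $1 - n(|\vocab|-1)\exp(-\tfrac{1}{2}n^{\xi})$. Since $|\vocab| = 2n + (2n)^2 = O(n^2)$, the error reduces to $O(n^3 \exp(-\tfrac{1}{2}n^{\xi}))$, which for $\xi > 1$ is super-exponentially small in~$n$. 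The desired strict inequality then amounts to checking $n^3 \exp(-\tfrac{1}{2}n^{\xi}) < c$ up to an absolute constant.

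The main obstacle I anticipate is reconciling the quantifiers: \cref{result:no_single_step} delivers an existential triple $(n, \QQ, c)$, but does not a priori guarantee that the particular~$n$ it produces is already large enough to drive the multi-step error below~$c$. To handle this, I would revisit the proof of \cref{result:no_single_step}, which proceeds via the asymptotic separation $TC^0 \neq NC^1$ (reducing from the word problem on~$S_5$ and then to $\ORD$). That separation is a statement about infinite function families, so the hardness necessarily persists at arbitrarily large chain lengths: for each fixed depth~$d$, there exists a uniform constant $c_d > 0$ and an unbounded sequence $n_k \to \infty$ with distributions $\QQ_{n_k}$ on which single-step reward is at most $1 - c_d$. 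I may then select $n = n_k$ large enough that the super-exponential error bound falls below~$c_d$, at which point the multi-step construction of \cref{thm:traverse_chain_finite} strictly exceeds the single-step maximum, establishing the corollary. The precision matching requirement $\xi > 1$ (rather than $\xi \geq 1$) enters here precisely to guarantee that the exponent $n^\xi$ grows strictly faster than the $\log n$ precision budget used in \cref{result:no_single_step}, preventing any boundary case where the two bounds coincide.
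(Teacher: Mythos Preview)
Your proposal has a genuine gap at the resolution of the quantifier issue. You correctly identify that \cref{result:no_single_step} only produces \emph{some} $(n,\QQ,c)$ and that you need $n$ to be large. But your proposed fix---extracting a uniform constant $c_d>0$ that works along an unbounded sequence $n_k\to\infty$---is not supported by the argument behind \cref{result:no_single_step}. That argument is a \emph{worst-case} reduction ($\WP_{S_5}\to\ORD\to$ two-chain endpoint) showing the function is not in $\TC^0$; all it yields at each hard $n$ is that greedy decoding fails on \emph{at least one} input. Translated into expected reward via \cref{lem:no_cot_upper}, this gives the $n$-dependent gap
\[
\RR_{\text{single}}(\D^{\QQ_{\mathrm{unif}}},\theta)\ \le\ 1-\frac{1}{(2n)!\cdot 2(n-1)},
\]
which shrinks to zero factorially. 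Nothing in the complexity-theoretic separation promises a uniform error fraction; that would require an average-case hardness or hardness-amplification statement that the paper does not establish.

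The paper's proof uses precisely this explicit $n$-dependent gap and compares the two error terms directly: it shows
\[
n(|\vocab|-1)\exp\!\big(-\tfrac12 n^\xi\big)\ <\ \frac{1}{(2n)!\cdot 2(n-1)}
\]
for sufficiently large $n$ among the (infinitely many) hard chain lengths. Taking logs and using Stirling, $\log((2n)!)=\Theta(n\log n)$, so the inequality reduces to $n^\xi$ eventually dominating $n\log n$, which is exactly where $\xi>1$ is used. Your explanation that $\xi>1$ is about ``precision matching'' against a $\log n$ budget is therefore also off: the strict inequality is needed to beat the factorial, not the logarithmic precision.
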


\begin{proof}
By \cref{thm:traverse_chain_finite}, the construction achieves reward at least $1 - n(|\vocab|-1)\exp(-\frac12 n^\xi)$.
By \cref{lem:no_cot_upper} and the analysis in \cref{app:no_single_step}, there exists a chain length $n$ such that any Transformer achieves single-step reward at most $1 - \frac{1}{(2n)! \cdot 2(n-1)}$.
It suffices to show that for sufficiently large $n$:
\[
n(|\vocab|-1)\exp\!\big(-\tfrac12 n^\xi\big) < \frac{1}{(2n)! \cdot 2(n-1)}.
\]
Rearranging, this is equivalent to:
\[
n(|\vocab|-1) \cdot (2n)! \cdot 2(n-1) < \exp\!\big(\tfrac12 n^\xi\big).
\]
Taking logarithms, it suffices to show:
\[
\log n + \log(|\vocab|-1) + \log((2n)!) + \log 2 + \log(n-1) < \tfrac12 n^\xi.
\]
Since $|\vocab| = 2n + (2n)^2$, we have $\log(|\vocab|-1) = O(\log n)$.
By Stirling's approximation, $\log((2n)!) = 2n\log(2n) - 2n + O(\log n) = O(n \log n)$.
Thus the left-hand side is $O(n \log n)$.
Since $\xi > 1$, we have $n^\xi / (n \log n) \to \infty$ as $n \to \infty$, so $\frac12 n^\xi > O(n \log n)$ for sufficiently large $n$.
\end{proof}

\section{Proof of \cref{result:inefficient_solution}}
\label{app:inefficient_solution}

\begin{proof}
Fix \(n \ge 3\), a distribution \(Q\) over \([n-1]\),
a target length threshold \(k\in\mathbb{N}\), and a loss tolerance \(\varepsilon>0\). We'll assume w.l.o.g. that $\delta>\varepsilon$ (otherwise we can lower $\varepsilon$).

We use the notation and symmetry setup of \cref{lem:dyn:logits_cases_canonical}, in particular the parameters
\(\alpha,\beta,\gamma\) in the attention matrix and \(\alpha',\beta',\gamma'\) in the value matrix.  We also reuse the notation
\(\Delta_\alpha := \alpha'-\gamma'\) and \(\Delta_\beta := \beta'-\gamma'\).  The formulas for the per-vertex logit
contributions \(\Psi_k\) as a function of \((\alpha,\beta,\gamma,\Delta_\alpha,\Delta_\beta)\) are exactly those
in \cref{lem:dyn:logits_cases_canonical}. As the transformer is assumed to be linear, we have \(w(\mu)=\mu\). 

We construct a 1-layer transformer with attention parameters
\[
\gamma = 0,\qquad \beta = -\alpha_0 - \beta_0,\qquad \alpha = \alpha_0.
\]
for some free parameters \(\alpha_0,\beta_0\in\mathbb{R}\) to be chosen later, and with value parameters
\[
\gamma' = 0,\qquad \alpha' = 1,\qquad \beta' = -1.
\]
Thus \(\Delta_\alpha = 1\) and \(\Delta_\beta = -1\).  We keep the vertex–vertex block of \(A\) equal to \(\gamma=0\),
and set \(V_{v,:}=0\) for all vertex tokens \(v\in V\), exactly as in \cref{lem:dyn:logits_cases_canonical}.

Let \(v\) be any non-terminal vertex in one of the chains, and consider the logits at the last position when the
current prefix ends with \(v\).  Applying \cref{lem:dyn:logits_cases_canonical} and substituting \(\gamma=0\), \(w(\gamma)=0\), \(\Delta_\alpha=1\),
\(\Delta_\beta=-1\), we obtain for an interior vertex (Case~1 in \cref{lem:dyn:logits_cases_canonical}) that the only nonzero logit contributions
on the two incident vertices are
\[
\Psi_{\mathrm{forw}} = \Delta_\beta w(\alpha) = \beta'\,\alpha_0 = -\alpha_0,\qquad
\Psi_{\mathrm{back}} = \Delta_\alpha w(\beta) = \alpha'\,\beta = \beta = -\alpha_0 - \beta_0.
\]
The current vertex \(v\) itself has
\[
\Psi_v = \Delta_\alpha w(\alpha) + \Delta_\beta w(\beta) = \alpha_0 - \beta = 2\alpha_0 + \beta_0,
\]
and every other vertex has \(\Psi_k = 0\) (since all those
terms are proportional to \(w(\gamma)=0\) in \cref{lem:dyn:logits_cases_canonical}), and all edge token outputs also have \(\Psi_{e'} = 0\) by \cref{lem:dyn:logits_cases_canonical}.  Thus, up to some offset \(C\), any vertex \(v\) which is ``in the middle of the chain'' (not the start, end, second or penultimate vertex of some chain) has a local logit
pattern of the form
\[
\ell_v = 2\alpha_0 + \beta_0,\quad
\ell_{\mathrm{back}} = -\alpha_0 - \beta_0,\quad
\ell_{\mathrm{forw}} = -\alpha_0,\quad
\ell_{\mathrm{other}} = 0.
\]
The same calculation applied to vertices $v$ which are not the start of a chain (Cases~2,3,5 in \cref{lem:dyn:logits_cases_canonical}) gives the same qualitative picture:
for any non-terminal \(v\) on either chain the logits on the two incident chain vertices are linear functions of
\(\alpha_0\) and \(\beta_0\) of the form
\[
\ell_{\mathrm{forw}}(\beta_0,\alpha_0) = \beta'\,\alpha_0 = -\alpha_0,\qquad
\ell_{\mathrm{back}}(\beta_0,\alpha_0) = \alpha'\,\beta = -\alpha_0 - \beta_0,
\]
while all other vertices have logit \(0\), and the current vertex
\(v\) has logit \(2\alpha_0 + \beta_0\).  In particular the \emph{only} dependence on \(\beta_0\) in the \emph{differences} between
the forward and backward logits is
\[
\ell_{\mathrm{forw}} - \ell_{\mathrm{back}} = \beta_0.
\]

Finally, when the current vertex \(v\) is the start of a chain (Case~4 in \cref{lem:dyn:logits_cases_canonical}), there is no backward neighbor within the chain. In this case, the forward logit is still \(\ell_{\mathrm{forw}} = -\alpha_0\), but there is no \(\ell_{\mathrm{back}}\), so as $\alpha_0\to -\infty$, the forward probability to move forward limits to one.

We now use this to tune the random-walk behaviour in two stages.

\emph{1. Controlling the conditional walk along the chain (choosing \(\beta_0\)).}
Fix \(\alpha_0\) for the moment.  Condition on the “good event” that at each step the model only chooses the forward or
backward neighbour in the chain (i.e., we ignore the possibility of staying at \(v\) or jumping to any other vertex).
Under this conditioning, from every interior vertex \(v\) the next-token distribution over
\(\{\mathrm{forw},\mathrm{back}\}\) has the form
\[
\begin{aligned}
\mathbb{P}(\mathrm{forw}\mid \{\mathrm{forw},\mathrm{back}\})
  &= \frac{e^{\ell_{\mathrm{forw}}}}{e^{\ell_{\mathrm{forw}}}+e^{\ell_{\mathrm{back}}}}
   = \frac{1}{1+e^{-(\ell_{\mathrm{forw}}-\ell_{\mathrm{back}})}}
   = \frac{1}{1+e^{-\beta_0}},\\
\mathbb{P}(\mathrm{back}\mid \{\mathrm{forw},\mathrm{back}\})
  &= \frac{1}{1+e^{\beta_0}}.
\end{aligned}
\]
Thus the forward/backward bias of the induced nearest-neighbour walk along the chain depends \emph{only} on \(\beta_0\),
via a smooth, strictly monotone function.  As \(\beta_0\to 0\) the walk becomes approximately unbiased
(\(\mathbb{P}(\mathrm{forw}) \to 1/2\)); as \(\beta_0\to\infty\) the walk becomes almost deterministic in the forward
direction; as \(\beta_0\to -\infty\) it becomes almost deterministic in the backward direction.

Let us denote by $G$ the event that the rollout consists only of forward/backward moves along the chain. Standard random-walk estimates on a finite chain
imply that, for any fixed chain length \(n\), and starting index \(k\in[n-1]\), there exists a choice of \(\beta_0\) and some $L_{\max}$ (dependent on $\beta_0$) such that, \emph{conditional on $G$},
the hitting time \(T\) of the terminal vertex satisfies
\[
\mathbb{P}\bigl(L_{\max}>T > L \,\bigm|\, G\bigr) \ge 1 - \varepsilon / 2
\]
for any prescribed \(L\). This implies that conditioned on $G$, the rollout will take more than $L$ and less than $L_{\max}$ steps to reach the terminal vertex with probability at least $1 - \varepsilon / 2$ (which assures loss smaller than $1-\varepsilon/2$).

\emph{2. Making $\mathbb{P}(G^c) \leq \varepsilon / 2$ (choosing \(\alpha_0\)).}
We now choose \(\alpha_0\) to ensure that the model stays on the chain.  Taking \(\alpha_0 \to -\infty\), the logits for the forward and backward neighbors ($\ell_{\mathrm{forw}} = -\alpha_0$, $\ell_{\mathrm{back}} = -\alpha_0 - \beta_0$) tend to $+\infty$, while the logit for the current vertex ($\ell_v = 2\alpha_0 + \beta_0$) tends to $-\infty$, and other vertices as well as all edge token outputs remain at $0$.  Consequently, the probability mass concentrates on the forward and backward neighbors, with the ratio $p_{\mathrm{forw}}/p_{\mathrm{back}} = e^{\beta_0}$ preserved.  The probability of outputting any other token (vertex or edge) vanishes relative to the move probabilities. 

Together, these two points imply the theorem.

\end{proof}

\section{Restatement of \texorpdfstring{\cref{result:traverse_chain,result:inefficient_solution}}{Propositions 1 and 2} Under the Setting of \texorpdfstring{\cref{sec:dyn:setting}}{Section 6}}
\label{app:dyn_constructions}

The constructions establishing \cref{result:traverse_chain,result:inefficient_solution} do not adhere to the value-matrix parameterization of \cref{sec:dyn:setting}, which fixes $V_{(u,w),v} = \mathbbm{1}\{v \in \{u,w\}\}$.
Using this parameterization directly with variants of those constructions induces a high logit for the current vertex, preventing convergence to the desired behavior.
However, under the assumption of \cref{sec:dyn:setting} that the model only outputs vertices (edges are masked from the output) and never outputs the same vertex twice consecutively (the previous vertex is masked), we can construct single-layer linear Transformers achieving the same guarantees and adhering to the setting of \cref{sec:dyn:setting}.
\cref{app:dyn_constructions:traverse} restates \cref{result:traverse_chain} and \cref{app:dyn_constructions:inefficient} restates \cref{result:inefficient_solution}.

\subsection{Restatement of \texorpdfstring{\cref{result:traverse_chain}}{Proposition 1} Under the Setting of \texorpdfstring{\cref{sec:dyn:setting}}{Section 6}}
\label{app:dyn_constructions:traverse}

\begin{proposition}
\label{result:traverse_chain_dyn}
Consider the setting of \cref{sec:dyn:setting}, in which the value matrix satisfies $V_{(u,w),v} = \mathbbm{1}\{v \in \{u,w\}\}$, the model only outputs vertices, and the model never outputs the same vertex twice consecutively (the previous vertex is masked).
For any $n \in \N_{\geq 2}$, any distribution $\QQ$ over $[n-1]$, and any $\varepsilon \in \R_{> 0}$, there exist parameters $\theta = (A, V)$ for a single-layer linear Transformer such that $\RR(\D^{\QQ}, \theta) > 1 - \varepsilon$, and for every input sequence $s_{1:L_0}$ from $\D^{\QQ}$, with probability at least $1 - \varepsilon$ over $\tau \sim p_{\text{roll}}(\theta)$, the rollout $\tau$ is a chain traversal (\cref{sec:setup:traversal}).
\end{proposition}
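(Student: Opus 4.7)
The approach is to adapt the construction from the proof sketch of \cref{result:traverse_chain} to the value-matrix parameterization imposed by \cref{sec:dyn:setting}. The difficulty, as flagged in the preamble, is that the fixed value $V_{(u,w),v}=\mathbbm{1}\{v\in\{u,w\}\}$ forces every attended outgoing edge $(u,v)$ to contribute equal mass to its two endpoints~$u$ and~$v$; the original construction avoided this by placing value mass only on the target vertex. The key observation that rescues the proof is that in the present setting the output is restricted to vertices \emph{and} the previous vertex is masked out, so the high self-logit at~$u$ is harmless — only the logit at~$v$ matters.

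Concretely, I would set the attention to concentrate on outgoing edges. Pick a large scalar~$\alpha>0$ and define
\[
A_{v,(u,w)} \;=\; \alpha\cdot\mathbbm{1}\{v=u\},\qquad A_{v,v'}\;=\;0 \text{ for all vertex-vertex pairs,}
\]
(the edge-to-anything rows of~$A$ are irrelevant since attention always emanates from the last position, which is a vertex during generation). Using $V_{(u,w),:} = e_u+e_w$ and $V_{v,:}=0$, the linear-Transformer computation in \cref{eq:transformer_layer} at the last position whose token is a non-terminal vertex~$u$ yields
\[
H^{(1)}_{T,k} \;=\; \sum_{(a,b)\in\text{prefix}} A_{u,(a,b)}\,V_{(a,b),k} \;=\; \alpha\,\mathbbm{1}\!\big\{k\in\{u,\mathrm{succ}(u)\}\big\},
\]
where $\mathrm{succ}(u)$ is the unique successor of~$u$ in its chain (which exists because~$u$ is non-terminal and all chain edges appear in the prompt).

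Applying the softmax together with the output masks of \cref{sec:dyn:setting} — edge tokens masked and the previous vertex (namely~$u$ itself) masked — leaves only one position with logit~$\alpha$ (the successor $\mathrm{succ}(u)$) against $2n-2$ other vertex positions with logit~$0$. Hence
\[
\Pr\!\big[s_{l+1}=\mathrm{succ}(u)\mid s_l=u\big] \;\ge\; \frac{e^\alpha}{e^\alpha+(2n-2)}.
\]
An induction on the generation step, using that on the good event the current vertex is always a non-terminal vertex on the chain containing~$\vstart$, together with a union bound over at most $n-1$ forward steps needed to reach the terminal, yields
\[
\Pr\!\big[\tau \text{ is a chain traversal}\big] \;\ge\; 1 - (n-1)\cdot\frac{2n-2}{e^\alpha+2n-2}.
\]
Choosing $\alpha$ large enough as a function of~$n$ and~$\varepsilon$ makes this at least $1-\varepsilon$, and since every chain traversal terminates at $y^*(s_{1:L_0})$ and earns reward~$1$, we conclude $\RR(\D^{\QQ},\theta)>1-\varepsilon$.

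The only nontrivial step is the previous-vertex masking reduction: without it, the tied logits at~$u$ and $\mathrm{succ}(u)$ would cap the forward probability at roughly~$1/2$ no matter how large~$\alpha$ is. Everything else is either a direct unrolling of the linear-Transformer formula or a union bound, so I expect the argument to go through routinely once the construction is written down.
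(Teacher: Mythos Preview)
Your proposal is correct and essentially the same as the paper's proof: both set the attention so that a vertex attends only to its outgoing edge (the paper via the symmetric parameters $\alpha=\alpha_0$, $\beta=\gamma=0$ in \cref{lem:dyn:logits_cases_canonical}, you via the explicit indicator), observe that the fixed value matrix then produces tied logits $\alpha_0$ at the current vertex and its successor, use the previous-vertex mask to eliminate the self-tie, and finish with a union bound over at most $n-1$ forward steps. The only cosmetic difference is that the paper routes the logit computation through its general lemma whereas you unroll it directly.
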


\begin{proof}
We use the notation and symmetry setup of \cref{lem:dyn:logits_cases_canonical}.
As the transformer is assumed to be linear, we have $w(\mu)=\mu$.

Let $A$ satisfy the assumptions of \cref{lem:dyn:logits_cases_canonical} with
\[
\alpha = \alpha_0,\qquad \beta = 0,\qquad \gamma = 0,
\]
for some free parameter $\alpha_0 \in \R_{>0}$ to be chosen later, and set all vertex--vertex entries of $A$ equal to $\gamma=0$.
Let $V$ satisfy the assumptions of \cref{lem:dyn:logits_cases_canonical} with
\[
\alpha'=1,\qquad \beta'=1,\qquad \gamma'=0,
\]
as required by \cref{sec:dyn:setting}. Thus $\Delta_\alpha = \Delta_\beta = 1$.

Fix any time during autoregressive generation before termination, and let $v$ be the last generated vertex (so the current input sequence $s$ ends with $v$).
Let $u$ denote the (unique) successor of $v$ in its chain if it exists.
By \cref{lem:dyn:logits_cases_canonical}, the successor $u$ has logit contribution $\Psi_u = \Delta_\alpha w(\gamma) + \Delta_\beta w(\alpha) = \alpha_0$; every other vertex $k \neq u,v$ has logit contribution $\Psi_k \in \{w(\beta), w(\gamma)\} = \{0\}$ or is a chain start with $\Psi_k = \Delta_\alpha w(\gamma) = 0$.
The current vertex $v$ has $\Psi_v = w(\alpha) + w(\beta) = \alpha_0$, but is masked out by assumption (as are all edge tokens).
Hence there is a logit gap of the form
\[
o(v,G)_u - o(v,G)_k \;\ge\; \alpha_0
\qquad \text{for all }k\neq u,v.
\]

The next-token distribution is $\mathrm{TF}(s_{1:L};\theta)=\mathrm{softmax}\big(o(v,G)\big)$.
Using the logit gap and bounding the denominator by $|\vocab|$ terms (excluding $v$), we obtain for every nonterminal $v$:
\[
\Pr\big[s_{L+1}=u \,\big|\, s_L=v\big]
\ge
\frac{e^{\alpha_0}}{e^{\alpha_0}+(|\vocab|-2)\cdot e^0}
=
\frac{1}{1+(|\vocab|-2)e^{-\alpha_0}}
\ge
1-(|\vocab|-2)e^{-\alpha_0}.
\]
All rollouts start from a vertex in one of the chains; at most $n$ nonterminal-to-successor transitions are required to reach the chain terminal.
By a union bound, the probability that the rollout ever deviates from the correct successor is at most
\[
n(|\vocab|-2)e^{-\alpha_0}.
\]
Whenever the rollout follows the chain edges until it reaches a terminal vertex, it terminates at the correct terminal $y^*(s_{1:L_0})$ and receives reward $1$ (see \cref{sec:setup:reward,sec:setup:traversal}).
Choosing $\alpha_0 = \log(n(|\vocab|-2)/\varepsilon)$ yields
\[
1-\RR(\D^{\QQ},\theta)
\;\le\;
n(|\vocab|-2)e^{-\alpha_0}
\;=\;
\varepsilon,
\]
as claimed.
\end{proof}

\subsection{Restatement of \texorpdfstring{\cref{result:inefficient_solution}}{Proposition 2} Under the Setting of \texorpdfstring{\cref{sec:dyn:setting}}{Section 6}}
\label{app:dyn_constructions:inefficient}

\begin{proposition}
\label{result:inefficient_solution_dyn}
Consider the setting of \cref{sec:dyn:setting}, in which the value matrix satisfies $V_{(u,w),v} = \mathbbm{1}\{v \in \{u,w\}\}$, the model only outputs vertices, and the model never outputs the same vertex twice consecutively (the previous vertex is masked).
For any $n \in \N_{\geq 3}$, any distribution $\QQ$ over $[n-1]$, any $k \in \N$, and any $\varepsilon, \delta \in \R_{> 0}$, there exist parameters $\theta = (A, V)$ for a single-layer linear Transformer such that $\RR(\D^{\QQ}, \theta) > 1 - \varepsilon$, and for every input sequence $s_{1:L_0}$ from $\D^{\QQ}$, the probability that $|\tau| > k$ is greater than $1 - \delta$.
\end{proposition}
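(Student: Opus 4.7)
The strategy adapts the proof of \cref{result:inefficient_solution}: I will construct parameters $\theta = (A, V)$ so that, with probability $\geq 1 - \varepsilon$, the rollout is a nearest-neighbor random walk on the chain containing $\vstart$ that is strongly backward-biased. Since each chain is finite with its terminal as the only absorbing state, the walk almost surely hits $y^*(s_{1:L_0})$, yielding reward $\geq 1 - \varepsilon$; a sufficiently strong backward bias then makes the hitting time exceed $k$ with probability $\geq 1 - \delta$.

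The new difficulty is that $V$ is fixed to the symmetric form $V_{(u,w),v} = \mathbbm{1}\{v \in \{u,w\}\}$, so the trick used in the proof of \cref{result:inefficient_solution}, namely setting $\beta' < 0$ to distinguish forward from backward via the value matrix, is no longer available. All forward/backward asymmetry must therefore come from $A$. Using the canonical parametrization underlying \cref{lem:dyn:logits_cases_canonical}, I take all vertex-vertex entries of $A$ equal to $0$ and the edge parameters $\gamma = 0$, $\alpha = \alpha_0$, $\beta = \alpha_0 + \mu$, with $\alpha_0, \mu > 0$ to be chosen. Plugging $\alpha' = \beta' = 1$, $\gamma' = 0$ into the logit formulas of \cref{lem:dyn:logits_cases_canonical} (exactly as in the proof of \cref{result:traverse_chain_dyn}) shows that for any non-terminal current vertex $v$ with chain successor $u$ and (possibly) chain predecessor $p$, the unmasked vertex logits satisfy $o_u = \alpha_0$, $o_p = \alpha_0 + \mu$, and $o_w \leq 0$ for every other unmasked vertex $w$; edges and the self-vertex $v$ are masked. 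The key fact is that the forward--backward log-ratio equals exactly $\mu$, independent of $\alpha_0$.

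This decoupling enables a two-stage tuning exactly as in the original proof. First, fix $\mu = \mu(n, k, \delta)$: conditional on the ``on-chain'' event that each step outputs either $u$ or $p$, the induced walk on $\{1, \ldots, n\}$ has $p_{\mathrm{bwd}}/p_{\mathrm{fwd}} = e^\mu$ (with deterministic forward moves from position $1$), so by standard hitting-time estimates for biased walks on a finite path, a sufficiently large $\mu$ makes the hitting time of $n$ exceed $k$ with conditional probability $\geq 1 - \delta/2$ and lie below some $L_{\max}(n, \mu)$ with conditional probability $\geq 1 - \min(\varepsilon, \delta)/2$. Second, take $\alpha_0$ large: since every non-chain-neighbor has logit $\leq 0$, each step leaves the chain with probability at most $(|\vocab| - 3)\,e^{-\alpha_0}$, and a union bound over at most $L_{\max}$ steps makes the on-chain event hold with probability $\geq 1 - \min(\varepsilon, \delta)/2$. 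Combining the two bounds yields simultaneously $\RR(\D^{\QQ}, \theta) > 1 - \varepsilon$ and $\Pr[|\tau| > k] > 1 - \delta$. The main technical point is verifying that boundary cases (current vertex at a chain start, or $u$ or $p$ coinciding with a chain endpoint) do not break the log-ratio structure: with $\gamma = 0$ such cases only omit a term worth $\gamma = 0$ or remove one neighbor cleanly, so reflection at position~$1$ shortens the hitting time by at most a bounded amount and the biased-walk tail bound is unaffected.
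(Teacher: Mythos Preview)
Your proposal is correct and essentially identical to the paper's own proof: both fix $V$ to the symmetric $\alpha'=\beta'=1$, $\gamma'=0$ form, set the attention parameters to $\gamma=0$, $\alpha=\alpha_0$, $\beta=\alpha_0+\mu$ (the paper writes $\beta_0$ for your $\mu$), read off from \cref{lem:dyn:logits_cases_canonical} that the forward/backward logits are $\alpha_0$ and $\alpha_0+\mu$ with all other unmasked vertices at $0$, and then perform the same two-stage tuning (choose $\mu$ for the backward bias, then send $\alpha_0\to\infty$ to confine the walk to the chain). The handling of boundary vertices and the conclusion via a union bound over at most $L_{\max}$ steps match as well.
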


\begin{proof}
Fix \(n \ge 3\), a distribution \(Q\) over \([n-1]\),
a target length threshold \(k\in\mathbb{N}\), and a loss tolerance \(\varepsilon>0\). We assume w.l.o.g.\ that $\delta>\varepsilon$ (otherwise we can lower $\varepsilon$).

We use the notation and symmetry setup of \cref{lem:dyn:logits_cases_canonical}, in particular the parameters
\(\alpha,\beta,\gamma\) in the attention matrix and \(\alpha',\beta',\gamma'\) in the value matrix.  We also reuse the notation
\(\Delta_\alpha := \alpha'-\gamma'\) and \(\Delta_\beta := \beta'-\gamma'\).  The formulas for the per-vertex logit
contributions \(\Psi_k\) as a function of \((\alpha,\beta,\gamma,\Delta_\alpha,\Delta_\beta)\) are exactly those
in \cref{lem:dyn:logits_cases_canonical}. As the transformer is assumed to be linear, we have \(w(\mu)=\mu\). 

We construct a 1-layer transformer with attention parameters
\[
\gamma = 0,\qquad \beta = \alpha_0 + \beta_0,\qquad \alpha = \alpha_0,
\]
for some free parameters \(\alpha_0,\beta_0\in\mathbb{R}\) to be chosen later, and with value parameters
\[
\gamma' = 0,\qquad \alpha' = 1,\qquad \beta' = 1,
\]
as required by \cref{sec:dyn:setting}.
Thus \(\Delta_\alpha = 1\) and \(\Delta_\beta = 1\).  We keep the vertex--vertex block of \(A\) equal to \(\gamma=0\),
and set \(V_{v,:}=0\) for all vertex tokens \(v\in V\), exactly as in \cref{lem:dyn:logits_cases_canonical}.

Let \(v\) be any non-terminal vertex in one of the chains, and consider the logits at the last position when the
current prefix ends with \(v\).  Applying \cref{lem:dyn:logits_cases_canonical} and substituting \(\gamma=0\), \(w(\gamma)=0\), \(\Delta_\alpha=1\),
\(\Delta_\beta=1\), we obtain for an interior vertex (Case~1 in \cref{lem:dyn:logits_cases_canonical}) that the only nonzero logit contributions
on the two incident vertices are
\[
\Psi_{\mathrm{forw}} = \Delta_\alpha w(\gamma) + \Delta_\beta w(\alpha) = w(\alpha) = \alpha_0,\qquad
\Psi_{\mathrm{back}} = \Delta_\alpha w(\beta) + \Delta_\beta w(\gamma) = w(\beta) = \alpha_0 + \beta_0.
\]
The current vertex \(v\) itself has
\[
\Psi_v = \Delta_\alpha w(\alpha) + \Delta_\beta w(\beta) = \alpha_0 + (\alpha_0 + \beta_0) = 2\alpha_0 + \beta_0,
\]
but is masked out by assumption.
Every other vertex has \(\Psi_k = 0\) (since all those
terms are proportional to \(w(\gamma)=0\) in \cref{lem:dyn:logits_cases_canonical}); edge tokens are masked from the output.  Thus, up to some offset \(C\), any vertex \(v\) which is ``in the middle of the chain'' (not the start, end, second or penultimate vertex of some chain) has a local logit
pattern of the form
\[
\ell_{\mathrm{forw}} = \alpha_0,\quad
\ell_{\mathrm{back}} = \alpha_0 + \beta_0,\quad
\ell_{\mathrm{other}} = 0.
\]
The same calculation applied to vertices $v$ which are not the start of a chain (Cases~2,3,5 in \cref{lem:dyn:logits_cases_canonical}) gives the same qualitative picture:
for any non-terminal \(v\) on either chain the logits on the two incident chain vertices are
\[
\ell_{\mathrm{forw}} = \alpha_0,\qquad
\ell_{\mathrm{back}} = \alpha_0 + \beta_0,
\]
while all other vertices have logit \(0\).
In particular the \emph{only} dependence on \(\beta_0\) in the \emph{differences} between
the forward and backward logits is
\[
\ell_{\mathrm{forw}} - \ell_{\mathrm{back}} = -\beta_0.
\]

Finally, when the current vertex \(v\) is the start of a chain (Case~4 in \cref{lem:dyn:logits_cases_canonical}), there is no backward neighbor within the chain. In this case, the forward logit is still \(\ell_{\mathrm{forw}} = \alpha_0\), but there is no \(\ell_{\mathrm{back}}\), so as $\alpha_0\to \infty$, the probability to move forward limits to one.

We now use this to tune the random-walk behaviour in two stages.

\emph{1. Controlling the conditional walk along the chain (choosing \(\beta_0\)).}
Fix \(\alpha_0\) for the moment.  Condition on the ``good event'' that at each step the model only chooses the forward or
backward neighbour in the chain (i.e., we ignore the possibility of jumping to any other vertex).
Under this conditioning, from every interior vertex \(v\) the next-token distribution over
\(\{\mathrm{forw},\mathrm{back}\}\) has the form
\[
\begin{aligned}
\mathbb{P}(\mathrm{forw}\mid \{\mathrm{forw},\mathrm{back}\})
  &= \frac{e^{\ell_{\mathrm{forw}}}}{e^{\ell_{\mathrm{forw}}}+e^{\ell_{\mathrm{back}}}}
   = \frac{1}{1+e^{-(\ell_{\mathrm{forw}}-\ell_{\mathrm{back}})}}
   = \frac{1}{1+e^{\beta_0}},\\
\mathbb{P}(\mathrm{back}\mid \{\mathrm{forw},\mathrm{back}\})
  &= \frac{1}{1+e^{-\beta_0}}.
\end{aligned}
\]
Thus the forward/backward bias of the induced nearest-neighbour walk along the chain depends \emph{only} on \(\beta_0\),
via a smooth, strictly monotone function.  As \(\beta_0\to 0\) the walk becomes approximately unbiased
(\(\mathbb{P}(\mathrm{forw}) \to 1/2\)); as \(\beta_0\to -\infty\) the walk becomes almost deterministic in the forward
direction; as \(\beta_0\to +\infty\) it becomes almost deterministic in the backward direction.

Let us denote by $G$ the event that the rollout consists only of forward/backward moves along the chain. Standard random-walk estimates on a finite chain
imply that, for any fixed chain length \(n\), and starting index \(k\in[n-1]\), there exists a choice of \(\beta_0\) and some $L_{\max}$ (dependent on $\beta_0$) such that, \emph{conditional on $G$},
the hitting time \(T\) of the terminal vertex satisfies
\[
\mathbb{P}\bigl(L_{\max}>T > L \,\bigm|\, G\bigr) \ge 1 - \varepsilon / 2
\]
for any prescribed \(L\). This implies that conditioned on $G$, the rollout will take more than $L$ and less than $L_{\max}$ steps to reach the terminal vertex with probability at least $1 - \varepsilon / 2$ (which assures loss smaller than $1-\varepsilon/2$).

\emph{2. Making $\mathbb{P}(G^c) \leq \varepsilon / 2$ (choosing \(\alpha_0\)).}
We now choose \(\alpha_0\) to ensure that the model stays on the chain.  Taking \(\alpha_0 \to \infty\), the logits for the forward and backward neighbors ($\ell_{\mathrm{forw}} = \alpha_0$, $\ell_{\mathrm{back}} = \alpha_0 + \beta_0$) tend to $+\infty$, while other vertices remain at $0$ (edge tokens are masked).  Consequently, the probability mass concentrates on the forward and backward neighbors, with the ratio $p_{\mathrm{forw}}/p_{\mathrm{back}} = e^{-\beta_0}$ preserved.  The probability of outputting any other vertex vanishes relative to the move probabilities. 

Together, these two points imply the theorem.
\end{proof}

	\newcommand{\voc}{\Sigma_n}
\newcommand{\vsz}{|\Sigma_n|}
\newcommand{\Nbit}{N}          %
\newcommand{\TCzero}{\mathrm{TC}^0}
\newcommand{\doplus}{d_{\oplus}}  %
\newcommand{\dtimes}{d_{\times}}  %
\newcommand{\NC}{\mathrm{NC}}
\section{Proof of \cref{result:no_single_step}}
\label{app:no_single_step}

The proof proceeds through a chain of reductions in circuit complexity. We first establish that if greedy decoding fails on any input, the single-step reward is bounded away from~1 (\cref{app:reward_complexity}). We then recall the relevant complexity classes and problems (\cref{app:complexity_assumption}), and show that constant-depth Transformers without CoT can be simulated by $\TC^0$ circuits (\cref{lem:transformers-in-TC0}). The core technical work reduces the $\NC^1$-complete word problem for~$S_5$ to path reachability ($\ORD$), and then to the two-chain endpoint problem. Since $\TC^0 \neq \NC^1$ under standard assumptions, no $\TC^0$ circuit---and hence no constant-depth Transformer---can solve the endpoint problem on all inputs.

\subsection{Relation between reward, complexity, and decision problems}
\label{app:reward_complexity}

In this section we connect the computational hardness of the chain identification task to an upper bound on the single-step reward. Our approach uses deterministic complexity classes: we show that for infinitely many chain lengths~$n$, any constant-depth Transformer without CoT fails to compute the correct answer via greedy decoding on at least one input.

\begin{lemma}[Reward upper bound from greedy failure]
\label{lem:no_cot_upper}
Let $\QQ_{\mathrm{unif}}$ denote the uniform distribution over the index set $[n-1]$, so that $\D^{\QQ_{\mathrm{unif}}}$ is the uniform distribution over all (graph, starting vertex) pairs (\cref{sec:setup:dist}).
If there exists at least one input $s \in \mathrm{supp}(\D^{\QQ_{\mathrm{unif}}})$ on which greedy decoding does not output $y^*(s)$, then
\[
\RR_{\emph{single}}(\D^{\QQ_{\mathrm{unif}}}, \theta) \leq 1 - \frac{1}{(2n)! \cdot 2(n-1)}.
\]
\end{lemma}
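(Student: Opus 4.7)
The plan is to combine two observations. First, whenever greedy decoding fails on some input $s^\star$, the probability mass that $\mathrm{TF}(s^\star;\theta)$ places on $y^*(s^\star)$ is at most $\tfrac12$; so the pointwise reward drops by at least $\tfrac12$ on $s^\star$. Second, because the Transformer uses one-hot embeddings with no positional encoding (see \cref{sec:setup:dist}), $\mathrm{TF}(s;\theta)$ is invariant under permutations of the edge tokens of~$s$. Hence ``greedy fails on $s^\star$'' is really a property of the underlying problem instance $(\chaina,\chainb,\vstart)$, and every edge-ordering of that instance also lies in the failure set and contributes identically to the loss.

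For the first observation, I would use the definition of the greedy prediction $\hat y^{\mathrm{greedy}}$ as an arg-max to write $\Pr_{\hat y\sim \mathrm{TF}(s^\star;\theta)}[\hat y = \hat y^{\mathrm{greedy}}] \ge \Pr_{\hat y\sim \mathrm{TF}(s^\star;\theta)}[\hat y = y^*(s^\star)]$, and then note that since $\hat y^{\mathrm{greedy}} \neq y^*(s^\star)$, the two probabilities together sum to at most $1$, forcing $\Pr[\hat y = y^*(s^\star)] \le \tfrac12$. Combined with the edge-permutation invariance above, this bounds the pointwise single-step reward by $\tfrac12$ on every edge-reordering of $s^\star$.

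For the second, I would unpack the sampling procedure of \cref{sec:setup:dist} to compute the mass of this failure region under $\D^{\QQ_{\mathrm{unif}}}$. Multiplying the probability of the unordered partition of $\vset$ into two size-$n$ blocks ($2/\binom{2n}{n}$), the uniform orderings within each block ($1/(n!)^2$), the uniform choice of chain containing $\vstart$ ($1/2$), and the uniform index $k\in[n-1]$ under $\QQ_{\mathrm{unif}}$ ($1/(n-1)$) yields $1/((2n)!\,(n-1))$ as the probability that a sample from $\D^{\QQ_{\mathrm{unif}}}$ is an edge-reordering of $s^\star$. Plugging into $\RR_{\text{single}}$ then gives
\[
\RR_{\text{single}}(\D^{\QQ_{\mathrm{unif}}},\theta) \;\le\; 1 - \frac{1}{(2n)!\,(n-1)}\cdot\frac{1}{2} \;=\; 1 - \frac{1}{(2n)!\cdot 2(n-1)},
\]
which is exactly the claimed bound.

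I do not anticipate any real technical obstacle. The one point worth emphasizing is the edge-ordering invariance: without it one might worry that a single failing $s^\star$ contributes only a $1/((2n)!\,(n-1)\,(2(n-1))!)$ share of the mass, but in fact all $(2(n-1))!$ edge-permutations fail together, and their masses aggregate back to exactly $1/((2n)!\,(n-1))$. Everything else is routine combinatorial bookkeeping (keeping ordered versus unordered partitions straight, and tracking the factor of $\tfrac12$ coming from the chain choice).
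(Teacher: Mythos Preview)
Your proposal is correct and follows the same approach as the paper: argue that greedy failure at $s^\star$ forces $\Pr[\hat y=y^*(s^\star)]\le\tfrac12$ via the arg-max inequality, then multiply by the mass of that instance under $\D^{\QQ_{\mathrm{unif}}}$. You are in fact slightly more careful than the paper, which silently identifies inputs with (graph, starting vertex) pairs; you make the edge-permutation invariance explicit, which is what justifies collapsing all $(2(n-1))!$ orderings of $s^\star$ into a single mass of $\frac{1}{(2n)!(n-1)}$.
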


\begin{proof}
The support of $\D^{\QQ_{\mathrm{unif}}}$ consists of all pairs of a two-chain graph on $2n$ vertices and a non-terminal starting vertex. The number of such pairs is $(2n)! \cdot (n-1)$ (there are $(2n)!/2$ ordered two-chain graphs, each with $2(n-1)$ non-terminal starting vertices, but each unordered graph is counted twice).

Under the uniform distribution, the single-step reward is
\[
\RR_{\text{single}}(\D^{\QQ_{\mathrm{unif}}}, \theta) = \frac{1}{|\mathrm{supp}(\D^{\QQ_{\mathrm{unif}}})|} \sum_{s \in \mathrm{supp}(\D^{\QQ_{\mathrm{unif}}})} \Pr\big(\mathrm{TF}(s; \theta) = y^*(s)\big).
\]
If greedy decoding fails on input $s^*$, then $\arg\max_y \mathrm{TF}(s^*; \theta)_y \neq y^*(s^*)$, which implies $\Pr(\mathrm{TF}(s^*; \theta) = y^*(s^*)) \leq 1/2$ (since the correct answer does not have the maximal logit). Thus at least one term in the sum is at most $1/2$, yielding
\[
\RR_{\text{single}}(\D^{\QQ_{\mathrm{unif}}}, \theta) \leq 1 - \frac{1}{2 \cdot (2n)! \cdot (n-1)} = 1 - \frac{1}{(2n)! \cdot 2(n-1)}.
\]
\end{proof}

We also utilize the decision version of the two-chain endpoint problem (which returns true/false for a pair of vertices depending on whether the latter is the endpoint of the chain containing the former); this will be expanded upon in \cref{subsec:from_function_to_decision}.

\subsection{Complexity-Theoretic Backgrounnd}
\label{app:complexity_assumption}

Our hardness result relies on standard assumptions from circuit complexity theory regarding the computational power of constant-depth versus logarithmic-depth circuits. Circuit complexity characterizes computational problems by the resources required by boolean circuits to solve them. A circuit family consists of a circuit for each input length $n$, where each circuit is a directed acyclic graph with input nodes, internal gates (computing boolean functions), and output nodes. The key resources are the \emph{size} (number of gates) and \emph{depth} (longest path from input to output).

\begin{definition}[$\mathsf{L}$]
\label{def:L}
A language $L \subseteq \{0,1\}^*$ is in $\mathsf{L}$ if there exists a deterministic Turing machine $M$ and a constant $c \in \mathbb{N}$ such that for every input $x \in \{0,1\}^n$:
\begin{itemize}
    \item $M$ halts on all inputs, and accepts $x$ if and only if $x \in L$,
    \item On input $x$, the total number of work-tape cells that $M$ ever scans is at most $c \log n$ (i.e., $M$ uses $O(\log n)$ space).
\end{itemize}
\end{definition}

We next define the circuit classes central to our analysis.

\begin{definition}[$\mathrm{AC}^0$]
\label{def:AC0}
A language $L \subseteq \{0,1\}^*$ is in $\mathrm{AC}^0$ if there exists a constant $d \in \mathbb{N}$, a polynomial $p(\cdot)$, and a family of boolean circuits $\{C_n\}_{n \in \mathbb{N}}$ such that for all $n \in \mathbb{N}$:
\begin{itemize}
    \item $C_n$ has depth at most $d$ and size at most $p(n)$,
    \item $C_n$ uses unbounded fan-in AND, OR, and NOT gates,
    \item For all $x \in \{0,1\}^n$, $C_n(x) = \mathbbm{1}[x \in L]$.
\end{itemize}
\end{definition}

It is known that simple problems like PARITY (computing the XOR of all input bits) provably lie outside $\mathrm{AC}^0$~\cite{furst1981parity,ajtai1983formulae}.

\begin{definition}[$\mathrm{TC}^0$]
\label{def:TC0}
A language $L \subseteq \{0,1\}^*$ is in $\mathrm{TC}^0$ if there exists a constant $d \in \mathbb{N}$, a polynomial $p(\cdot)$, and a family of boolean circuits $\{C_n\}_{n \in \mathbb{N}}$ such that for all $n \in \mathbb{N}$:
\begin{itemize}
    \item $C_n$ has depth at most $d$ and size at most $p(n)$,
    \item $C_n$ uses unbounded fan-in AND, OR, NOT, and MAJORITY gates, where a MAJORITY gate outputs 1 if and only if the majority of its inputs are 1,
    \item For all $x \in \{0,1\}^n$, $C_n(x) = \mathbbm{1}[x \in L]$.
\end{itemize}
\end{definition}

The additional power of MAJORITY gates allows $\mathrm{TC}^0$ to compute functions like PARITY which lie outside $\mathrm{AC}^0$, and thus $\mathrm{AC}^0 \subsetneq \mathrm{TC}^0$~\cite{barrington1990bounded}.

\begin{definition}[$\mathrm{NC}^1$]
\label{def:NC1}
A language $L \subseteq \{0,1\}^*$ is in $\mathrm{NC}^1$ if there exists a polynomial $p(\cdot)$ and a family of boolean circuits $\{C_n\}_{n \in \mathbb{N}}$ such that for all $n \in \mathbb{N}$:
\begin{itemize}
    \item $C_n$ has depth $O(\log n)$ and size at most $p(n)$,
    \item $C_n$ uses bounded fan-in (typically fan-in 2) AND, OR, and NOT gates,
    \item For all $x \in \{0,1\}^n$, $C_n(x) = \mathbbm{1}[x \in L]$.
\end{itemize}
\end{definition}

This class corresponds to problems efficiently parallelizable with a logarithmic number of parallel steps. Examples include evaluating boolean formulas and computing certain regular languages~\cite{cook1985taxonomy}.

\begin{definition}[$\mathsf{X}$-uniformity]
\label{def:X-uniform}
Let $\mathcal{C}$ be a circuit class defined via families $\{C_n\}_{n\in\mathbb{N}}$ (e.g., $\mathrm{AC}^0,\mathrm{TC}^0,\mathrm{NC}^1$).
A family $\{C_n\}_{n\in\mathbb{N}}$ is \emph{$\mathsf{X}$-uniform} if there exists a function $U\in\mathsf{X}$ such that, on input $1^n$, $U$ outputs a description of $C_n$.

A language $L\subseteq\{0,1\}^*$ is in the \emph{$\mathsf{X}$-uniform version of $\mathcal{C}$} if it is decided by some $\mathsf{X}$-uniform family $\{C_n\}\in\mathcal{C}$.
\end{definition}

Throughout, all circuit classes ($\AC^0$, $\TC^0$, $\NC^1$) are
non--uniform unless explicitly marked as uniform.  

\begin{conjecture}[$\mathrm{TC}^0 \subsetneq \mathrm{NC}^1$]
\label{conj:TC0_NC1}
There exist problems in $\mathrm{NC}^1$ that are not in $\mathrm{TC}^0$.
\end{conjecture}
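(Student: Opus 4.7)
The statement is not a theorem in disguise but rather the famous $\TC^0$ versus $\NC^1$ separation conjecture, which is one of the central open problems in circuit complexity. No proof is currently known, and any honest plan must be organized around the known attempts and the barriers that have blocked them, rather than around a concrete path to a resolution.

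The most direct plan is to fix a candidate $\NC^1$-complete problem and prove an unconditional lower bound against threshold circuits of constant depth and polynomial size. The natural candidates are exactly those used in the paper's reduction chain: the word problem over a non-solvable group such as $S_5$ (Barrington's theorem gives $\NC^1$-completeness under $\AC^0$ reductions), or equivalently bounded-width branching programs or balanced boolean formula evaluation. The first step I would take is to try to lift the Razborov--Smolensky low-degree polynomial approximation method, which successfully separates $\AC^0[p]$ from $\NC^1$ for prime $p$, to threshold gates: represent each MAJORITY gate by a low-degree approximating polynomial over some ring, compose depth-$d$ such approximators into a single polynomial of degree $\mathrm{poly}\log n$, and then exhibit a function (e.g., a suitable encoding of $S_5$ word products) that cannot be so approximated. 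The second, complementary step would be to try communication-complexity or discrepancy arguments at the top gate, which have yielded superlinear lower bounds for depth-$2$ and depth-$3$ threshold circuits, and attempt to bootstrap them to arbitrary constant depth via the generalized Karchmer--Wigderson framework.

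The hard part, and the reason this program has stalled for three decades, is that every known technique crashes into one of two barriers. Razborov--Smolensky breaks down because MAJORITY is simultaneously a linear threshold function and a symmetric function of high sensitivity, so no low-degree polynomial over any single ring approximates it well in the regime needed; one would need a genuinely new ``approximation basis'' that composes under depth, and no candidate is known. Meanwhile, any sufficiently constructive property separating $\TC^0$ from $\NC^1$ would, by Razborov--Rudich, yield a distinguisher against pseudorandom functions computable in $\TC^0$ (such as those based on the hardness of factoring or lattice problems), violating the natural proofs barrier. So a viable plan would have to either (i) produce a \emph{non-natural} lower-bound technique, for instance a diagonalization-style argument that exploits the $\TC^0$ uniformity machine in an essential way, along the lines of Williams' $\mathsf{NEXP} \not\subseteq \mathsf{ACC}^0$ program extended to threshold gates, or (ii) identify an algebraic obstruction specific to formula evaluation that is provably not captured by polynomial-size threshold computation.

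Given these barriers, the realistic version of my plan is the one the paper itself implicitly adopts: treat $\TC^0 \neq \NC^1$ as a standard cryptographic-strength conjecture and use it as a black-box hypothesis, rather than attempt a proof. If forced to commit to an attack, I would pursue the Williams-style algorithmic approach---showing that a nontrivial satisfiability algorithm for $\TC^0$ circuits (faster than $2^n/n^{\omega(1)}$) would separate $\TC^0$ from $\NC^1$ via a counting argument---since this is the most recent direction that has not been ruled out by existing barriers, even though extending it beyond $\mathsf{ACC}^0$ to general threshold circuits remains wide open.
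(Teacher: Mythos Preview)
Your assessment is correct: the statement is labeled as a \emph{conjecture} in the paper, not a theorem, and the paper explicitly says that this separation ``remains unproven and is considered a fundamental open problem in complexity theory.'' The paper does not attempt to prove it; it simply states the conjecture and then uses it as a hypothesis in Theorem~\ref{result:no_single_step} (the ``no single-step'' hardness result). So there is nothing for you to prove here, and your discussion of barriers (natural proofs, Razborov--Smolensky, Williams-style approaches) is appropriate context but goes well beyond what the paper requires or claims.
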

This separation- which is widely believed to hold- remains unproven and is considered a fundamental open problem in complexity theory~\cite{vollmer1999introduction}. Several natural problems are candidates for witnessing this separation, including certain composition operations on non-solvable groups and the iterated multiplication problem~\cite{li2024cot}.

\subsection{A Complexity--Theoretic Obstruction for No--CoT Transformers}

In this section we formalize a conditional lower bound showing that,
under the standard conjecture $\TC^0 \neq \NC^1$, a constant--depth
transformer without chain--of--thought (CoT) steps cannot compute a
simple ``two--chain endpoint'' function.  The argument proceeds via a
chain of reductions starting from an $\NC^1$--complete word problem,
passing through the logspace--complete problem $\ORD$, and ending at a
decision version of the two--chain endpoint problem.

\subsection{Problems and Classes}

We begin by fixing the problems we will use.

\begin{definition}[Word problem for $S_5$]
Let $S_5$ be the Symmetric group over 5 elements.  The \emph{word problem for $S_5$},
denoted $\WP_{S_5}$, is the language
\[
  \WP_{S_5} \;=\; \{\, w \in \Sigma^* : \text{the product of the letters of $w$ equals the identity of $S_5$} \,\}.
\]
\end{definition}

\begin{definition}[Ordering / path reachability problem $\ORD$]
Fix $n \in \mathbb{N}$.  An \emph{$\ORD$ instance} on $[n] =
\{1,\dots,n\}$ consists of:
\begin{itemize}
  \item a \emph{successor function} $\mathsf{succ} : [n] \to [n] \cup
  \{\bot\}$, where $\mathsf{succ}(i) = j$ encodes a directed edge
  $i \to j$ and $\mathsf{succ}(i) = \bot$ means $i$ has no outgoing
  edge;
  \item two distinguished vertices $s,t \in [n]$.
\end{itemize}
Equivalently, an $\ORD$ instance is a directed line graph (i.e a single chain), $P = ([n],E)$ with edges $E = \{(i,\mathsf{succ}(i)) : \mathsf{succ}(i)\neq \bot\}$ and two distinguished vertices $s,t \in [n]$.

The language $\ORD$ is:
\[
  \ORD = \left\{ (\mathsf{succ},s,t) : t \text{ is reachable from $s$
  along the unique directed path on } [n] \right\}.
\]
\end{definition}

\begin{definition}[Two--chain endpoint function and decision problem]
A directed graph $G = ([2n], E)$ is a \emph{pair of equal-length directed chains} if $G$ consists of exactly two connected components, each being a directed chain on $n$ vertices.

\begin{itemize}
\item The \emph{two--chain endpoint function}
\[
  F_{\TwoChain} : \{\, G,v \,\} \longrightarrow [n]
\]
 outputs the unique terminal vertex of the (unique) directed chain
of $G$ that contains $v$.  (If $v$ is itself terminal, then
$F_{\TwoChain}(G,v) = v$.)
\item The associated \emph{decision problem} is
\[
  \TwoChainEndEq
  =
  \left\{
    (G,v,q) :
    \begin{array}{l}
      G \text{ is a pair of equal-length directed chains,}\\
      F_{\TwoChain}(G,v) = q
    \end{array}
  \right\}.
\]
\end{itemize}
\end{definition}

\begin{definition}[No--CoT transformer class]
Let $\mathcal{T}_0$ denote the class of functions computed (with no COT) by
transformers with:
\begin{itemize}
  \item Constant depth.
  \item Width, embedding dimension and number of attention heads at most
  $\mathrm{poly}(n)$ on length-$n$ inputs.
  \item Each parameter is represented with $O(\log n)$ bits of precision in each scalar.
\end{itemize}
We view each such transformer family as a non--uniform circuit family
(one transformer per input length).
\end{definition}

\subsection{Complexity--theoretic results}

We collect the complexity--theoretic ingredients we need.

\begin{lemma}[Word problems complete for $\NC^1$]
\label{lem:wp-nc1-complete}
 $\WP_{S_5} \in \NC^1$ and
$\WP_{S_5}$ is $\NC^1$--complete under (non--uniform) $\AC^0$ many--one
reductions.  That is, for every language $L \in \NC^1$ there is a
family of $\AC^0$ circuits $(f_n)$ such that for all $x$,
\[
  x \in L \iff f_{|x|}(x) \in \WP_{S_5}.
\]
\end{lemma}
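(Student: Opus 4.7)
The plan is to prove the two directions of this lemma separately, both of which are classical. The upper bound $\WP_{S_5} \in \NC^1$ is the easier half: given a word $w = s_1 \cdots s_n$ over a finite generating set of $S_5$, I evaluate the product using a balanced binary tree of $S_5$-multiplications. Since $|S_5| = 120$ is a constant, each multiplication is a fixed function $S_5 \times S_5 \to S_5$ computable by a constant-size, constant-depth, bounded-fan-in boolean circuit, and a balanced tree with $n$ leaves has depth $O(\log n)$. A final constant-depth comparison with the identity element yields the decision procedure.

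For $\NC^1$-hardness, I would invoke Barrington's theorem. The key structural fact is that $S_5$ is non-solvable, which manifests concretely as the existence of two 5-cycles $\sigma, \tau \in S_5$ whose commutator $[\sigma,\tau]=\sigma\tau\sigma^{-1}\tau^{-1}$ is again a 5-cycle. First I would appeal to the standard transformation reducing any language $L \in \NC^1$ to the evaluation of a family of fan-in-2 AND/OR/NOT formulas of depth $d(n)=O(\log n)$ and polynomial size (Spira's theorem plus formula balancing). I would then show, by induction on the formula depth, that each gate $g$ can be simulated by a width-$5$ permutation branching program of length $4^d$ over $S_5$ that outputs a prescribed 5-cycle when $g$ evaluates to $1$ and the identity when $g$ evaluates to $0$.

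The heart of the argument is the inductive step for AND gates: given programs $B_1, B_2$ encoding sub-formulas with output $\sigma$, conjugation by appropriate fixed permutations yields programs with any chosen 5-cycle as output, and the concatenation $B_1 B_2 B_1^{-1} B_2^{-1}$ (where $B_1, B_2$ are tuned to output $\sigma$ and $\tau$ respectively) evaluates to $[\sigma,\tau]$ exactly when both sub-formulas are true and to the identity otherwise. NOT is handled by a fixed right-multiplication, and OR follows via De Morgan. Since the program length quadruples per formula level while the width stays fixed at $5$, the total length is $4^{O(\log n)}=n^{O(1)}$. The final reduction from an input $x$ to a word in $S_5^*$ encoding $\WP_{S_5}$ is $\AC^0$: each output symbol depends on at most one input bit together with one fixed branching-program instruction hard-wired into the circuit for length $|x|$, so it can be computed by a depth-$2$ multiplexer. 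The hardest part is not any single step but the bookkeeping required to certify that the width remains exactly $5$ and the length stays at $4^d$ throughout the induction; this is precisely where Barrington's careful choice of the pair $(\sigma,\tau)$ and of the conjugating permutations becomes essential, and where the non-solvability of $S_5$ is genuinely used.
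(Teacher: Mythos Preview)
Your proposal is correct and sketches the classical Barrington argument in full; the paper itself does not give a proof at all but simply writes ``See \cite{li2024cot}.'' So there is nothing to compare---you have supplied the standard proof where the paper only cites one.
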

\begin{proof}
See \cite{li2024cot}.
\end{proof}

The next lemma shows that this completeness extends to the uniform setting.

\begin{lemma}[Word problems complete for $DLOGTIME-uniform \NC^1$]
\label{lem:wp-uniform-nc1-complete}
$\WP_{S_5} \in \NC^1$ and
$\WP_{S_5}$ is $DLOGTIME-uniform \NC^1$--complete.
\end{lemma}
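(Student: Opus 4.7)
The plan is to establish the two directions separately. For containment, I will build a balanced binary multiplication tree that evaluates the product of $n$ permutations from $S_5$. Each internal gate multiplies two elements of $S_5$, each encoded by $\lceil\log_2 120\rceil$ bits, using a constant-size sub-circuit; the resulting circuit family has depth $O(\log n)$ and polynomial size. DLOGTIME-uniformity is immediate since a complete binary tree can be decoded in $O(\log n)$ time from a gate index, and the leaf labels are simply the input letters read off directly by the uniformity machine.

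For hardness, I will invoke Barrington's theorem. Given a DLOGTIME-uniform $\NC^1$ circuit family $\{C_n\}$ of fan-in $2$, depth $d=O(\log n)$, and polynomial size computing $L$, I inductively encode each subformula $\phi$ by a word $w_\phi(x)$ over $S_5$ whose product equals a chosen $5$-cycle $\sigma$ when $\phi(x)=1$ and the identity $e$ when $\phi(x)=0$. NOT gates are handled by appending $\sigma^{-1}$. AND gates use the commutator identity: every $5$-cycle can be written as $[\alpha,\beta]=\alpha\beta\alpha^{-1}\beta^{-1}$ for suitable $5$-cycles $\alpha,\beta$, so the concatenation $w_A(\alpha)\,w_B(\beta)\,w_A(\alpha^{-1})\,w_B(\beta^{-1})$ evaluates to $[\alpha,\beta]$ when both children output $1$ and to $e$ otherwise. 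Input literals are encoded by length-$1$ projection words. The total length is $4^d\cdot\mathrm{poly}(n)=\mathrm{poly}(n)$, and a final multiplication by $\sigma^{-1}$ converts ``$\phi(x)=1$'' into the condition $w(x)\in\WP_{S_5}$.

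To upgrade the reduction to DLOGTIME-uniformity, I need to show that the $i$-th letter of $w(x)$, where $i$ has $O(\log n)$ bits, can be computed in $O(\log n)$ time. Starting from the root of $C_n$, the high-order bits of $i$ successively select which of the four factors of a commutator gadget we are inside, equivalently which child of the current gate to descend into and which substitution among $\alpha,\beta,\alpha^{-1},\beta^{-1}$ to conjugate the target cycle by. After $d=O(\log n)$ rounds a leaf literal is reached, whose output letter is a projection of the corresponding input bit. Each round needs (i) a query to the gate structure of $C_n$, which is DLOGTIME-computable by assumption on $\{C_n\}$, and (ii) a constant-time update of the running target $5$-cycle tracked as an element of $S_5$. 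The hard part will be carefully verifying that the position-to-letter decoding genuinely fits within DLOGTIME — in particular, that tracking the cumulative conjugation that determines the current target cycle stays within the time budget and does not require traversing the whole tree — which is the content of the Barrington--Immerman--Straubing uniform version of Barrington's theorem that I would cite rather than reprove in full detail.
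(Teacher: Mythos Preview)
Your proposal is correct; it is the standard Barrington construction (together with the Barrington--Immerman--Straubing uniform refinement) that underlies this result. The paper, however, does not give any argument at all: its proof consists solely of a citation to an external reference. So there is nothing to compare at the level of technique --- you have supplied the actual content that the paper outsources. The only minor remark is that your sketch treats AND and NOT explicitly but not OR; in the DLOGTIME-uniform setting you should either note the De Morgan conversion preserves both depth and uniformity, or handle OR symmetrically via the commutator trick, but this is routine.
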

\begin{proof}
See \cite{bartholdi2020groupsalogtimehardwordproblems}.
\end{proof}

This lemma establishes a fundamental containment between uniform circuit classes and space complexity.

\begin{lemma}[Uniform $\NC^1$ is contained in logspace]
\label{lem:nc1-sub-L}
Let $\NC^1_{\mathsf{u}}$ denote DLOGTIME--uniform $\NC^1$, and let
$\mathsf{L}$ denote deterministic logspace.  Then
\[
  \NC^1_{\mathsf{u}} \subseteq \mathsf{L}.
\]
\end{lemma}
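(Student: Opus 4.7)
The plan is to evaluate any DLOGTIME-uniform $\NC^1$ circuit on an input $x \in \{0,1\}^n$ using only $O(\log n)$ workspace by turning circuit evaluation into a bounded-depth tree traversal. The enabling property of DLOGTIME-uniformity is that, given the index of a gate, a deterministic logspace machine can compute (in fact, in $O(\log n)$ time) that gate's type and the indices of its children. Combined with the logarithmic depth bound of $\NC^1$, this lets us recompute intermediate values on demand instead of storing them, which is exactly what is needed to fit into $O(\log n)$ space.

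Concretely, I would first invoke a formula-balancing argument (Spira/Brent) to convert the $\NC^1$ circuit into a balanced binary formula of depth $c\log n$ with bounded fan-in $\mathrm{AND}$, $\mathrm{OR}$, and $\mathrm{NOT}$ gates, while preserving DLOGTIME-uniformity. Each node of this formula is then identified by its address, i.e.\ the left/right path from the root, which fits in $O(\log n)$ bits. The logspace machine performs a standard DFS on the formula, maintaining (i) the current address and (ii) an $O(\log n)$-bit vector that records, at each level of the current path, the one-bit value returned by the already-completed left subtree. At each node, the uniformity subroutine tells the machine whether it is at a leaf (then it reads the appropriate bit of $x$, possibly negated) or at an internal gate (then it descends left, stores the result, descends right, and combines the two bits via the gate's function). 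Both data structures fit in $O(\log n)$ bits, yielding the desired logspace algorithm.

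The main technical obstacle is aligning the various uniformity conditions: one must verify that the formula-balancing step, or equivalently the map from a gate index to its type and children in the balanced formula, is itself logspace-computable, so that the lookup subroutine above can be invoked on demand without blowing the space budget. The cleanest way to sidestep this is to go through Buss's theorem, which identifies DLOGTIME-uniform $\NC^1$ with $\mathsf{ALOGTIME}$ (alternating Turing machines running in time $O(\log n)$), and then invoke the classical Chandra--Kozen--Stockmeyer simulation $\mathsf{ATIME}(t(n)) \subseteq \mathsf{DSPACE}(t(n))$ applied with $t(n) = O(\log n)$. The deterministic simulator traverses the alternating computation tree by DFS, keeping only the current path on its work tape, which is $O(\log n)$ bits. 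Either route yields $\NC^1_{\mathsf{u}} \subseteq \mathsf{L}$.
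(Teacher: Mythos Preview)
Your proposal is correct: both the direct DFS-evaluation argument and the route via Buss's characterization $\NC^1_{\mathsf{u}}=\mathsf{ALOGTIME}$ together with the Chandra--Kozen--Stockmeyer simulation $\mathsf{ATIME}(t)\subseteq\mathsf{DSPACE}(t)$ are standard proofs of this inclusion. The paper itself does not give a proof at all---it simply cites Borodin's classical result---so your write-up is strictly more detailed than what the paper provides.
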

\begin{proof}
See \cite{Borodin1977TimeSpaceSizeDepth}.
\end{proof}

The Ordering/Path Reachability Problem serves as a canonical complete problem for logspace.

\begin{lemma}[$\ORD$ is $\mathsf{L}$--complete under $\AC^0_{\mathsf{u}}$]
\label{lem:ord-L-complete}
 $\ORD \in \mathsf{L}$ and
$\ORD$ is $\mathsf{L}$--complete under DLOGTIME--uniform $\AC^0$
reductions (equivalently, first--order interpretations).  That is, for
every $A \in \mathsf{L}$ there is a DLOGTIME--uniform $\AC^0$ function
$f$ with
\[
  x \in A \iff f(x) \in \ORD.
\]
\end{lemma}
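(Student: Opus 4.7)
The plan is to address the two claims separately. For the containment $\ORD \in \mathsf{L}$, I would exhibit the obvious logspace simulator: starting at $s$, repeatedly follow $\mathsf{succ}$, keeping only the current vertex (in $\lceil \log n \rceil$ bits) and an iteration counter (in the same number of bits), accepting on reaching $t$ and rejecting on reaching $\bot$ or after $n$ steps. The total work space is $O(\log n)$, so this fits \cref{def:L}.

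For $\mathsf{L}$-completeness, I would use the standard configuration-graph reduction. Fix $A \in \mathsf{L}$ decided by a halting deterministic machine $M$ with space bound $c \log n$. On input $x$ of length $n$, every configuration of $M$ can be described by $O(\log n)$ bits (state, head positions, and work-tape contents), so there are $N = \mathrm{poly}(n)$ of them; since $M$ is deterministic, the one-step transition is a partial function on configurations. To guarantee that the resulting graph is a single directed path (as required by $\ORD$) rather than a functional graph that could contain a rho-shaped cycle, I would augment every configuration with a step counter $\tau \in \{0,\ldots,N\}$, making the augmented successor function strictly time-increasing and thus acyclic. I would further collapse halting configurations into two canonical sinks $q_{\mathrm{acc}}, q_{\mathrm{rej}}$, and output the $\ORD$ instance $(\mathsf{succ}, s, t)$ with $s$ the initial configuration and $t = q_{\mathrm{acc}}$. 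Correctness is then immediate: $x \in A$ iff $M$ accepts $x$ iff $q_{\mathrm{acc}}$ is reachable from $s$ along the chain of augmented configurations.

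The main thing to verify is that this reduction lies in DLOGTIME-uniform $\AC^0$. For any index $i$ of a candidate configuration and any bit position $j$ of $\mathsf{succ}(i)$, the circuit must extract the state and head positions from $i$, fetch the relevant symbol of $x$ by $\log n$-bit address decoding (an OR of ANDs of constant depth), apply the finite transition function of $M$, and increment the $O(\log n)$-bit step counter; each of these tasks is computable by constant-depth, polynomial-size Boolean circuits. DLOGTIME-uniformity follows because the topology of the resulting circuit depends only on $n$ and the fixed description of $M$, both of which a DLOGTIME machine can access given a gate index.

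The principal obstacle is the chain-structure requirement of $\ORD$: without the step-counter augmentation, a non-halting trajectory of $M$ could fall into a cycle, violating the acyclic line-graph assumption built into the problem. The time-stamp trick resolves this, but one must double-check that incrementing a $\log n$-bit counter sits in $\AC^0$ (it does, using a constant-depth parallel-prefix adder for fixed-width addition) and that the resulting augmented vertex set retains polynomial size. With these constant-depth gadgets in hand, the reduction is uniformly $\AC^0$, and the lemma follows.
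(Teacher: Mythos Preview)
The paper itself does not prove this lemma; it simply cites the literature. Your argument for $\ORD \in \mathsf{L}$ is correct.

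The hardness direction has a genuine gap. You write that the step counter makes the graph ``a single directed path'', but time-stamping only buys acyclicity, not a single chain. In your construction every pair $(c,0)$ is a source (nothing maps into time $0$), and collapsing all halting configurations into $q_{\mathrm{acc}},q_{\mathrm{rej}}$ gives those two sinks large in-degree; the result is an acyclic functional graph---a forest of in-trees---on the full vertex set, not one directed line. What you have actually built is a reduction to ``reachability in an out-degree-$\le 1$ acyclic graph'', which is indeed $\mathsf{L}$-complete, but is a different promise problem from $\ORD$ as defined in this paper, and the downstream reduction to $\TwoChainEndEq$ explicitly relies on the single-path promise.

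Turning an arbitrary $\mathsf{L}$ computation into a \emph{single} successor chain on a polynomial-size universe, while keeping the encoding in DLOGTIME-uniform $\AC^0$, is exactly the nontrivial content of the cited result and is not achieved by time-stamping alone. There is no cheap local repair: deciding which $(c,\tau)$ pairs lie on the actual trajectory of $M$ on $x$ would require simulating $M$, which is precisely what the reduction must avoid.
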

\begin{proof}
See \cite{10.5555/829497.829775}.
\end{proof}
Combining the previous lemmas, we obtain an $\AC^0$ reduction from the
word problem to $\ORD$.

\begin{lemma}[$\WP_G$ reduces to $\ORD$ in $\AC^0$]
\label{lem:wp-to-ord}
Fix a group $G$ as in Lemma~\ref{lem:wp-nc1-complete}.  Then there
exists a DLOGTIME--uniform $\AC^0$ many--one reduction $f$ such that
\[
  w \in \WP_G \iff f(w) \in \ORD.
\]
Equivalently, $\WP_G \leq^{\AC^0}_m \ORD$.
\end{lemma}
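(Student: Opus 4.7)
The plan is to obtain the reduction by chaining together the three complexity-theoretic lemmas already stated (\cref{lem:wp-uniform-nc1-complete}, \cref{lem:nc1-sub-L}, \cref{lem:ord-L-complete}); no new combinatorial construction is needed. The overall strategy is: place $\WP_{S_5}$ into deterministic logspace via uniform $\NC^1$, then invoke the $\mathsf{L}$-completeness of $\ORD$ under DLOGTIME-uniform $\AC^0$ reductions to extract the desired $\AC^0$ many-one reduction.

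\textbf{Step 1: Put $\WP_{S_5}$ in $\mathsf{L}$.} By \cref{lem:wp-uniform-nc1-complete}, the word problem $\WP_{S_5}$ belongs to DLOGTIME-uniform $\NC^1$. Applying \cref{lem:nc1-sub-L}, which states $\NC^1_{\mathsf{u}} \subseteq \mathsf{L}$, we conclude $\WP_{S_5} \in \mathsf{L}$. I would write this as a single sentence, since the containments are off-the-shelf.

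\textbf{Step 2: Apply $\mathsf{L}$-completeness of $\ORD$.} By \cref{lem:ord-L-complete}, for every language $A \in \mathsf{L}$ there is a DLOGTIME-uniform $\AC^0$ many-one reduction $f$ such that $x \in A$ iff $f(x) \in \ORD$. Specializing to $A = \WP_{S_5}$, which is in $\mathsf{L}$ by Step~1, yields a DLOGTIME-uniform $\AC^0$ function $f$ with
\[
w \in \WP_{S_5} \iff f(w) \in \ORD,
\]
which is exactly the statement of the lemma. Since DLOGTIME-uniform $\AC^0$ reductions are in particular (non-uniform) $\AC^0$ reductions, the weaker conclusion $\WP_{S_5} \leq^{\AC^0}_m \ORD$ stated in the lemma follows immediately.

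There is essentially no obstacle here; the only subtlety is to check that the notion of $\AC^0$ reduction used in \cref{lem:ord-L-complete} (DLOGTIME-uniform, equivalently first-order interpretations) is at least as strong as the notion of $\AC^0$ many-one reduction appearing in the statement of the present lemma. Because uniformity is a restriction on the reduction, a DLOGTIME-uniform $\AC^0$ reduction is in particular an $\AC^0$ reduction, so we obtain the stated (non-uniform) conclusion with room to spare. I would record this observation in a single remark after the chain of citations, rather than spelling out a syntactic translation between the two reduction formalisms.
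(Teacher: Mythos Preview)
Your proof is correct and follows essentially the same route as the paper: place $\WP_{S_5}$ in $\mathsf{L}$ via uniform $\NC^1 \subseteq \mathsf{L}$, then invoke the $\mathsf{L}$-completeness of $\ORD$ under DLOGTIME-uniform $\AC^0$ reductions. If anything, your citation of \cref{lem:wp-uniform-nc1-complete} (the uniform version) is slightly more careful than the paper's own proof, which cites the non-uniform \cref{lem:wp-nc1-complete} at that step.
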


\begin{proof}
By Lemma~\ref{lem:wp-nc1-complete} and Lemma~\ref{lem:nc1-sub-L}, the (uniform) language $\WP_G$ is in
$\mathsf{L}$.  By Lemma~\ref{lem:ord-L-complete}, there is a
DLOGTIME--uniform $\AC^0$ reduction $f$ such that
$w \in \WP_G \iff f(w) \in \ORD$.  This is precisely an
$\AC^0_{\mathsf{u}}$ (hence $\AC^0$) many--one reduction as claimed.
\end{proof}

We next state the transformer-to-$\TC^0$ upper bound and a basic
closure property.
We start with some lemmas from Merrill--Sabharwal \cite{merrill2023parallelismtradeofflimitationslogprecision}
\begin{lemma}[Any \(O(\log\Nbit)\)-input function has depth-3 implementation]
\label{lem:smallinput}
Let \(c>0\) and \(f:\{0,1\}^{c\log\Nbit}\to\{0,1\}^m\).
There is a depth-3 \(\{\wedge,\vee,\neg\}\)-circuit of size \(O(\Nbit^c+\log\Nbit+m)\)
computing \(f\).
\end{lemma}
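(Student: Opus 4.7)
The plan is to prove the lemma by explicit construction using a shared-minterms DNF, exploiting the fact that the input has only $c\log \Nbit$ bits and therefore only $2^{c\log \Nbit}=\Nbit^c$ possible input values. The key observation is that the three terms in the size bound, $\Nbit^c$, $\log \Nbit$, and $m$, naturally correspond to three successive layers of the circuit (negation of inputs, all minterms, outputs), so the depth-3 bound will fall out for free.

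First I would fix notation. Write the input as $x=(x_1,\ldots,x_{c\log \Nbit})\in\{0,1\}^{c\log \Nbit}$. For each $a\in\{0,1\}^{c\log \Nbit}$, define the \emph{minterm} $M_a(x)=\bigwedge_{i=1}^{c\log \Nbit} \ell_i^{a_i}(x)$, where $\ell_i^{1}=x_i$ and $\ell_i^{0}=\neg x_i$. Then $M_a(x)=1$ iff $x=a$. Letting $f=(f^{(1)},\ldots,f^{(m)})$, the DNF identity
\[
f^{(j)}(x)\;=\;\bigvee_{a:\,f^{(j)}(a)=1} M_a(x)
\]
gives a correct representation of every output bit.

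Next I would lay out the three layers of the circuit. Layer 1 consists of $c\log \Nbit$ NOT gates producing $\neg x_1,\ldots,\neg x_{c\log \Nbit}$; this contributes $O(\log \Nbit)$ gates. Layer 2 consists of one unbounded fan-in AND gate per $a\in\{0,1\}^{c\log \Nbit}$, computing $M_a$; there are $2^{c\log \Nbit}=\Nbit^c$ such gates. Layer 3 consists of one unbounded fan-in OR gate per output bit $j\in[m]$, collecting the minterms on which $f^{(j)}$ evaluates to $1$; that is $m$ gates. Total depth is $3$ and total gate count is $O(\log \Nbit + \Nbit^c + m)$, matching the claim. Correctness follows from the DNF identity above.

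The construction is entirely routine, so there is no real technical obstacle; the only subtlety worth flagging is that the gate-count bound relies on \emph{sharing} the minterm layer across all $m$ output bits—writing each output bit independently in DNF would give an extra factor of $m$. Using the shared middle layer is what keeps the size additive in $\Nbit^c$ and $m$ rather than multiplicative, and it also ensures that the fan-out of the minterm gates (which does not affect gate count) is the only price paid for handling many outputs simultaneously.
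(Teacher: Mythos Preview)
Your construction is correct and is exactly the standard shared-minterm DNF argument one expects here. The paper itself does not give a proof at all but simply cites \cite{merrill2023parallelismtradeofflimitationslogprecision} (Lemma~1), so your write-up is in fact more complete than what appears in the paper; the cited result is proved by essentially the same three-layer construction you describe.
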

\begin{proof}
  see \cite{merrill2023parallelismtradeofflimitationslogprecision} (Lemma 1)
\end{proof}

The next lemma extends this to iterated addition with finite precision.

\begin{lemma}[Iterated \(p\)-precision addition]
\label{lem:sum}
Let \(p=c\log\Nbit\).
There exists a constant \(\doplus\) such that for each \(n\le\Nbit\) the \(p\)-precision
sum of \(x_1,\dots,x_n\in\{0,1\}^p\) is computed by a depth-\(\doplus\),
polynomial-size threshold circuit. 
\end{lemma}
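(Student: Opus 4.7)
The plan is to reduce iterated addition to symmetric column counting, recast the resulting weighted sum as a much smaller iterated addition, and then close the argument using the classical $\TCzero$ construction for iterated addition of polynomially many $O(\log \Nbit)$-bit integers.

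First, I would write each input as $x_i = \sum_{j=0}^{p-1} x_{i,j}\, 2^j$ with $x_{i,j}\in\{0,1\}$, so that $S := \sum_{i=1}^n x_i = \sum_{j=0}^{p-1} C_j\, 2^j$ with column count $C_j := \sum_{i=1}^n x_{i,j}$. Each $C_j$ is a symmetric Boolean function of $n$ bits: its unary representation $\bigl(T_{\geq k}(x_{1,j},\ldots,x_{n,j})\bigr)_{k=1}^n$ is produced in depth $1$ by $n$ threshold gates, and its binary expansion is then read off in a further constant depth via AND/OR combinations. This gives all $p$ column counts, each as a $\lceil \log(n{+}1)\rceil$-bit integer, from a constant-depth polynomial-size sub-circuit.

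Second, I would reduce the weighted sum $S = \sum_j C_j 2^j$ to a much smaller iterated addition by expanding $C_j = \sum_{k} c_{j,k} 2^k$ and reindexing via $m = j+k$, yielding
\[
S \;=\; \sum_{m} 2^m D_m, \qquad D_m \;:=\; \sum_{j+k=m} c_{j,k},
\]
where each $D_m$ is a sum of at most $\lceil\log(n{+}1)\rceil + 1 = O(\log \Nbit)$ bits. Each $D_m$ is itself a symmetric Boolean function of its inputs and hence computable as an $O(\log\log \Nbit)$-bit integer in constant depth by the same threshold trick. We are thus left with the task of assembling $O(\log \Nbit)$ shifted small integers into the $p$-bit binary representation of $S$.

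Third, to finish I would invoke the classical theorem that iterated addition of polynomially many $O(\log \Nbit)$-bit integers lies in $\TCzero$. This can be realized either by passing to a Chinese remainder representation with $O(\log \Nbit)$-bit prime moduli, where modular addition per residue is a constant-depth symmetric operation and reconstruction to binary is available via a $\TCzero$ division algorithm, or by expressing each output bit of $S$ directly as a threshold predicate in the inputs. Either route yields a constant depth $\doplus$ and polynomial size, uniformly in $n \leq \Nbit$. The main obstacle is carry propagation: column counts are trivial to compute in parallel, but combining the shifted column counts breaks the easy symmetric structure because carries introduce sequential dependence across bit positions. Resolving this in constant depth is exactly the content of the classical $\TCzero$ iterated-addition result, and it is essential that the gates include MAJORITY power, since iterated addition subsumes parity and therefore lies outside $\AC^0$.
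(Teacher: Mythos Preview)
Your proposal is a correct sketch of the classical $\TCzero$ construction for iterated addition, but there is essentially nothing to compare against: the paper does not give its own proof of this lemma. It simply cites Merrill--Sabharwal (Appendix~A) and defers entirely to that reference. So your write-up goes well beyond what the paper provides.

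On the content itself: your column-count-then-recombine approach is the standard route, and the observation that the hard part is carry propagation (hence the need for MAJORITY gates, since iterated addition subsumes parity) is correct. One minor comment: in your third step you invoke ``the classical theorem that iterated addition of polynomially many $O(\log \Nbit)$-bit integers lies in $\TCzero$,'' but that theorem is exactly the lemma you are proving, so the argument as written is slightly circular. To make it self-contained you would need to either (a) note that after two rounds of your reduction the remaining problem is small enough to brute-force in constant depth, or (b) actually carry out the CRT/direct-threshold construction you allude to rather than just naming it. Either fix is straightforward, and the paper does not demand either since it treats the whole lemma as a black-box citation.
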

\begin{proof}
  see \cite{merrill2023parallelismtradeofflimitationslogprecision} (Appendix A)
\end{proof}

Directly from \ref{lem:smallinput} we can obtain the following lemma:

\begin{lemma}[p-bit multiplication]
\label{lem:mul}
Let \(p=c\log\Nbit\).
The truncated/rounded product \((u,v)\in\{0,1\}^p\times\{0,1\}^p\mapsto u\cdot v\in\{0,1\}^p\)
,and has a depth-\(\dtimes\) implementation with \(\dtimes=3\) and polynomial size.
\end{lemma}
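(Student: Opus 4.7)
The plan is to observe that \cref{lem:mul} is essentially an immediate specialization of \cref{lem:smallinput} once the input/output sizes are accounted for. Since each operand lies in $\{0,1\}^p$ with $p = c\log N$, the multiplication map takes a total of $2p = 2c\log N$ input bits and produces $p = c\log N$ output bits. Thus the truncated product is an $O(\log N)$-input, $O(\log N)$-output function, which is precisely the regime covered by \cref{lem:smallinput}.

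Concretely, I would proceed as follows. First, I would formally define the function $g: \{0,1\}^{2p} \to \{0,1\}^p$ that takes the concatenation of $u$ and $v$ and outputs the $p$-bit truncation (or rounding) of their integer product $u \cdot v$. Second, I would note that $2p \le c'\log N$ with $c' = 2c$, so $g$ meets the hypothesis of \cref{lem:smallinput} with $m = p = c\log N$. Applying that lemma gives a depth-3 $\{\wedge,\vee,\neg\}$-circuit of size $O(N^{c'} + \log N + m) = O(N^{2c})$, which is polynomial in $N$. Since $\{\wedge,\vee,\neg\}$-gates are a subset of the threshold-gate basis used for $\TC^0$, the bound also holds as a threshold-circuit depth of $3$, yielding $d_{\times} = 3$ as claimed.

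There is no real obstacle here: the entire content of the lemma is recognizing that a $p$-precision multiplier has only $O(\log N)$ bits of total input, so no serious combinatorial or arithmetic construction is needed. The only minor care required is to make sure that the rounding/truncation convention used to fit the $2p$-bit exact product back into $p$ bits is a fixed deterministic function of the inputs, so that \cref{lem:smallinput} can be applied verbatim to this single well-defined Boolean function. Once that convention is fixed, the lemma follows in one line.
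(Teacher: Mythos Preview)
Your proposal is correct and matches the paper's approach exactly: the paper's entire proof is the single line ``This follows directly from \cref{lem:smallinput},'' and you have spelled out precisely how that specialization works. There is nothing to add.
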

\begin{proof}
 This follows directly from \ref{lem:smallinput}.
\end{proof}

The following lemma is the key connection between no-CoT Transformers and circuit complexity.

\begin{lemma}[No--CoT transformers are contained in $\TC^0$]
\label{lem:transformers-in-TC0}
For every function family $F$ computed by a family of no--CoT
transformers in $\mathcal{T}_0$, there is a non--uniform $\TC^0$
circuit family computing $F$.  In other words,
\[
  \mathcal{T}_0 \subseteq \TC^0.
\]
\end{lemma}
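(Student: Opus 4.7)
The plan is to simulate the transformer layer-by-layer by constant-depth threshold circuits, exploiting the three building blocks already in hand: iterated $p$-precision addition (Lemma \ref{lem:sum}), $p$-precision multiplication (Lemma \ref{lem:mul}), and the generic depth-$3$ simulation of any function on $O(\log n)$ input bits (Lemma \ref{lem:smallinput}). Since $\mathcal{T}_0$ fixes the transformer to have constant depth, $\mathrm{poly}(n)$ width, and $O(\log n)$ bits per scalar, composing these primitives a constant number of times yields a $\TC^0$ circuit of polynomial size. Non-uniformity is automatic: the weights for each input length are hard-coded into the corresponding circuit, exactly as in the non-uniform transformer family.

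Concretely, I would handle one sublayer at a time. For an attention sublayer: each query-key inner product is a sum of $\mathrm{poly}(n)$ products of log-precision numbers, which is computed by wiring $\mathrm{poly}(n)$ copies of the depth-$d_\times$ multiplier from Lemma \ref{lem:mul} into a single depth-$d_\oplus$ iterated adder from Lemma \ref{lem:sum}; the causal mask is enforced by constant-depth Boolean selection. The row-wise softmax factors into a per-coordinate exponential and a normalizing division: both are functions of a fixed number of $O(\log n)$-bit inputs, and by Lemma \ref{lem:smallinput} each has a depth-$3$ threshold implementation of polynomial size (the normalizer itself is first produced by another iterated sum). The value aggregation $\sum_j \mathrm{attn}_{ij}\,V_j$ is yet another iterated sum of products, again handled by combining Lemmas \ref{lem:mul} and \ref{lem:sum}. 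Feedforward / MLP sublayers, layer norms, and residual connections decompose identically: matrix-vector multiplies are sums-of-products, and the activation and normalization functions are $O(\log n)$-input functions falling under Lemma \ref{lem:smallinput}. The final unembedding and argmax over the vocabulary are one more application of the same primitives.

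Each sublayer thus produces its output in depth $d_\times + d_\oplus + O(1)$ and size $\mathrm{poly}(n)$. Because the transformer has constant depth, stacking these sublayers gives an overall constant-depth, polynomial-size circuit family using $\{\wedge,\vee,\neg,\mathrm{MAJ}\}$ gates, which is exactly $\TC^0$. For the output-function version of $\mathcal{T}_0$, we produce each output bit by such a circuit and take their conjunction with polynomial fan-out, preserving constant depth.

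The main obstacle, as usual in this line of argument, is bookkeeping of precision: one must ensure that the truncation of each intermediate value to $O(\log n)$ bits in the circuit matches the truncation convention baked into the transformer, so that the circuit computes the transformer's function \emph{exactly} rather than merely approximately. Since Lemma \ref{lem:mul} already commits to a specific truncated/rounded product and Lemma \ref{lem:smallinput} can absorb any residual $O(\log n)$-input rounding function into an additional depth-$3$ block, fixing a single rounding convention throughout the simulation resolves this issue and completes the argument that $\mathcal{T}_0 \subseteq \TC^0$.
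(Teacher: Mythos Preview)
Your proposal is correct and follows essentially the same approach as the paper: simulate each transformer layer by decomposing matrix-vector products and attention into $p$-bit multiplications (Lemma~\ref{lem:mul}) followed by iterated $p$-precision addition (Lemma~\ref{lem:sum}), handle $\exp$ and division as $O(\log n)$-input functions via Lemma~\ref{lem:smallinput}, and stack the constant number of layers to obtain a constant-depth polynomial-size threshold circuit with hard-wired weights. Your discussion of precision bookkeeping and the extra sublayers (MLPs, layer norms) goes slightly beyond what the paper writes out, but the core argument is the same.
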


\begin{proof}[Proof.]
We define a family of models $\{T_n\}$, where a model $T_n$ accepts input from vocabulary \(\voc\) of size \(\vsz=n^2\).\\
Fix \(n\in\mathbb{N}\).
The input is a sequence of 1-hot vectors of length \(l = n-2\),
\((h^{(0)}_1,\dots,h^{(0)}_l)\) with \(h^{(0)}_i\in\{e_1,...,e_{\vsz},0\}\).
Hence the total input bit-length is
\begin{equation}
  \Nbit \;=\; l\cdot \vsz \;=\; n^2*(n-2).
\end{equation}
Let \(L\ge1\) be a fixed constant.
For each layer \(\ell\in\{0,\dots,L-1\}\) there are shared matrices
\begin{equation}
  Q^{(\ell)},\; K^{(\ell)},\; V^{(\ell)} \;\in\; \mathbb{R}^{\vsz\times \vsz}
\end{equation}
with \(p\)-bit entries, where
\begin{equation}
  p \;=\; c\log\Nbit
\end{equation}
for a fixed constant \(c>0\).
Hidden vectors at layer \(\ell\) are \(h^{(\ell)}_i\in\mathbb{R}^{\vsz}\)
(with \(p\)-bit coordinates); for \(\ell=0\), \(h^{(0)}_i\) is the 1-hot input.

For all positions \(i,j\) the layer computes
\begin{align}
  q^{(\ell)}_i &= Q^{(\ell)}\, h^{(\ell)}_i, & \label{eq:Forward1} \\
  k^{(\ell)}_j &= K^{(\ell)}\, h^{(\ell)}_j, & \label{eq:Forward2}\\
  v^{(\ell)}_j &= V^{(\ell)}\, h^{(\ell)}_j, & \label{eq:Forward3} \\ 
  r^{(\ell)}_{i,j} &= \sum_{t=1}^{\vsz} q^{(\ell)}_i[t]\; k^{(\ell)}_j[t], & \label{eq:Forward4} \\
  s^{(\ell)}_{i,j} &=
  \begin{cases}
    r^{(\ell)}_{i,j} &
    \text{(Linear)},\\[3pt]
    \dfrac{\exp\!\bigl(r^{(\ell)}_{i,j}\bigr)}
          {\sum_{j'=1}^{l} \exp\!\bigl(r^{(\ell)}_{i,j'}\bigr)} &
    \text{(Softmax)},
  \end{cases} & \label{eq:Forward5} \\
  a^{(\ell)}_i &= \sum_{j=1}^{l} s^{(\ell)}_{i,j}\, v^{(\ell)}_j, \qquad
  h^{(\ell+1)}_i := a^{(\ell)}_i. \label{eq:Forward6}
\end{align}

Every scalar multiplication returns a \(p\)-bit result under a fixed
truncation/rounding rule; all sums use the \(p\)-precision addition operator
from Appendix~A of Merrill--Sabharwal\cite{merrill2023expressivelimits}.

We will now show that the Forward Pass of the described model is in \(\TC^0\).

All steps below are performed in parallel over the relevant indices and coordinates.

For each possible i,j, equations \ref{eq:Forward1}, \ref{eq:Forward2}, and \ref{eq:Forward3} are computed in the following way:
\begin{align}
  q^{(\ell)}_i[t] &= \sum_{r=1}^{\vsz} Q^{(\ell)}[t,r]\; h^{(\ell)}_i[r], \\
  k^{(\ell)}_j[t] &= \sum_{r=1}^{\vsz} K^{(\ell)}[t,r]\; h^{(\ell)}_j[r], \\
  v^{(\ell)}_j[t] &= \sum_{r=1}^{\vsz} V^{(\ell)}[t,r]\; h^{(\ell)}_j[r].
\end{align}
Each product is a \(p\)-bit multiply (Lemma~\ref{lem:mul}, depth \(\dtimes\));
each sum is an iterated \(p\)-precision sum on  \(\vsz=O(N)\) elements (Lemma~\ref{lem:sum}, depth \(\doplus\)).
All three MV products are computed in parallel, so this stage costs
\(\dtimes+\doplus\) depth.

We now analyze softmax and linear attention separately.

For linear attention, we have the following Similarities computation:
\begin{equation}
  s^{(\ell)}_{i,j} \;=\; \sum_{t=1}^{\vsz} q^{(\ell)}_i[t]\; k^{(\ell)}_j[t],
\end{equation}
computed by \(p\)-bit multiplies (depth \(\dtimes\)) followed by a \(p\)-precision
sum (depth \(\doplus\)) for a total of \(\dtimes+\doplus\).

For softmax attention, we have the following computation:\\
Given the scores \(r^{(\ell)}_{i,j}\) from the model definition, compute
\(u^{(\ell)}_{i,j}:=\exp_p(r^{(\ell)}_{i,j})\).
By Lemma~\ref{lem:smallinput}, \(\exp_p:\{0,1\}^p\to\{0,1\}^p\) has a depth-\(3\),
polynomial-size implementation.
Next, compute \(Z^{(\ell)}_i := \sum_{j'=1}^{l} u^{(\ell)}_{i,j'}\) using
Lemma~\ref{lem:sum} (depth \(\doplus\)).
Finally output
\(s^{(\ell)}_{i,j}:=\div_p(u^{(\ell)}_{i,j}, Z^{(\ell)}_i)\) (where $\div_p$ is p-precision division);
since \(\div_p\) depends on \(2p=O(\log\Nbit)\) input bits, Lemma~\ref{lem:smallinput}
gives depth \(3\) and polynomial size for a total of \(\doplus+6\).

Finally, we analyze the remaining step in \ref{eq:Forward6}:  \\
For each \((i,j)\) and coordinate \(k\), we compute the sum element

\begin{equation}
  (s^{(\ell)}_{i,j}\, v^{(\ell)}_j)[k] \;=\; s^{(\ell)}_{i,j}\; v^{(\ell)}_j[k],
\end{equation}
with depth \(\dtimes\) and polynomial size via Lemma~\ref{lem:mul}.

We then sum for each \(i\) and coordinate \(k\),
\begin{equation}
  a^{(\ell)}_i[k]
  \;=\;
  \sum_{j=1}^{l} \bigl(s^{(\ell)}_{i,j}\; v^{(\ell)}_j[k]\bigr),
\end{equation}
which is an iterated \(p\)-precision sum of \(l\) \(p\)-bit values (Lemma~\ref{lem:sum}) for a total of \(\doplus + \dtimes \).

We have shown that each layer has constant depth, 
and we parallelize multiple fixed-depth polynomial threshold circuits over a polynomial number of indices which yealds us a polynomial total size. 
Therefore, Stacking \(L=O(1)\) layers gives overall constant depth and polynomial size. Parameters are hard-wired, so the
family is non-uniform. Hence \({T_n}\in\TC^0\).
\end{proof}
\textbf{Remarks:}
This model represents the model described in the main text by setting $L=1$, the number of heads as 1, and $|\Sigma|_n$ to be the vocabulary of the model for graphs with $n$ nodes.
The input size to the model is consistent as any 2 chain graph on $n$ nodes can be represente with exactly $n-2$ edges.

\begin{lemma}[$\TC^0$ is closed under $\AC^0$ many--one reductions]
\label{lem:TC0-closed-under-AC0}
Let $L_1,L_2$ be languages with $L_1 \leq^{\AC^0}_m L_2$.  If
$L_2 \in \TC^0$, then $L_1 \in \TC^0$ as well.
\end{lemma}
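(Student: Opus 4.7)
The plan is a standard composition argument: build a $\TC^0$ circuit for $L_1$ by stacking the $\AC^0$ reduction circuit on top of the $\TC^0$ decider for $L_2$. First I would unpack the hypotheses. The reduction $L_1 \leq^{\AC^0}_m L_2$ gives a polynomially bounded function $f$ with $x \in L_1 \iff f(x) \in L_2$, computed by a circuit family $\{R_n\}$ of constant depth $d_1$ and polynomial size $s_1(n)$ using unbounded fan-in $\wedge,\vee,\neg$ gates. The assumption $L_2 \in \TC^0$ gives a family $\{D_m\}$ of constant depth $d_2$ and polynomial size $s_2(m)$ using unbounded fan-in $\wedge,\vee,\neg$, plus $\mathrm{MAJORITY}$, gates. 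Let $p(n)$ denote the (polynomial) output length of $R_n$.

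Next I would construct the composed circuit family $\{C_n\}$ by feeding the $p(n)$ output wires of $R_n$ into the inputs of $D_{p(n)}$. Correctness is immediate: $C_n(x) = D_{p(n)}(f(x)) = 1 \iff f(x) \in L_2 \iff x \in L_1$. For the resource bounds, the key observation is that $\AC^0 \subseteq \TC^0$ at the circuit level, since any $\{\wedge,\vee,\neg\}$ gate is already permitted in a $\TC^0$ circuit; the $\mathrm{MAJORITY}$ gate is simply unused in the reduction portion. Hence $C_n$ is syntactically a $\TC^0$ circuit.

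Finally I would verify the depth and size bounds. The depth is $d_1 + d_2$, which is a constant since $d_1$ and $d_2$ are both constants. The size is at most $s_1(n) + s_2(p(n))$, and since the composition of two polynomials is a polynomial, this is polynomial in $n$. Therefore $\{C_n\}$ is a polynomial-size, constant-depth family of $\TC^0$ circuits deciding $L_1$, so $L_1 \in \TC^0$.

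There is no real obstacle here; the only point worth stating explicitly is the inclusion $\AC^0 \subseteq \TC^0$ at the circuit (not just language) level, which justifies treating the reduction circuits as legal $\TC^0$ subcircuits without any translation overhead. Note that the argument is purely non-uniform, matching the non-uniform setting in which we invoke this lemma (via \cref{lem:transformers-in-TC0}); no uniformity preservation is claimed or needed.
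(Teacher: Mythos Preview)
Your proof is correct and follows exactly the same composition argument as the paper's proof sketch: stack the $\AC^0$ reduction circuit on top of the $\TC^0$ decider and check that depth and size remain constant and polynomial, respectively. If anything, your version is more explicit than the paper's about the resource accounting and the non-uniformity of the construction.
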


\begin{proof}[Proof sketch]
Let $(f_n)$ be the $\AC^0$ reduction and $(C_n)$ a $\TC^0$ circuit
family for $L_2$.  For each input length $n$, define a circuit
$D_n(x) = C_{|f_n(x)|}(f_n(x))$.  The $D_n$ have constant depth,
polynomial size, and threshold gates: they are obtained by composing an
$\AC^0$ circuit with a $\TC^0$ circuit.  Thus $(D_n)$ is a $\TC^0$
family deciding $L_1$.
\end{proof}

\subsection{From $\ORD$ to Two--Chain Endpoint}

We now formalize the reduction from $\ORD$ to the decision version of
the two--chain endpoint problem.

We fix the following encodings.

\begin{definition}[Encoding graphs for $\ORD$ and $\TwoChainEndEq$]
An $\ORD$ instance $(\mathsf{succ},s,t)$ on $[n]$ is encoded as:
\begin{itemize}
  \item For each $i \in [n]$, the value $\mathsf{succ}(i) \in [n+1]$,
  where we identify $\bot$ with $0$, written in binary using
  $\lceil \log(n+1) \rceil$ bits;
  \item the vertices $s,t \in [n]$, written in binary using
  $\lceil \log n \rceil$ bits each.
\end{itemize}
The encoding of a $\TwoChainEndEq$ instance $(G,v,q)$ uses the same
representation for $G$ via a successor function $\mathsf{succ}_G$ on
$[n]$, together with $v,q \in [n]$ in binary.
\end{definition}

\begin{lemma}[$\ORD$ reduces to $\TwoChainEndEq$ in $\AC^0$]
\label{lem:ORD-to-TwoChainEndEq}
There is a DLOGTIME--uniform $\AC^0$ many--one reduction
\[
  (\mathsf{succ},s,t) \mapsto (\widetilde{\mathsf{succ}},\tilde v,\tilde q)
\]
such that
\[
  (\mathsf{succ},s,t) \in \ORD
  \iff
  (\widetilde{\mathsf{succ}},\tilde v,\tilde q) \in \TwoChainEndEq,
\]
and moreover the directed graph induced by $\widetilde{\mathsf{succ}}$
is a disjoint union of \emph{exactly two} directed chains of the same
length.
\end{lemma}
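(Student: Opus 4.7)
The plan is to use a ``double-and-swap'' construction: take two labeled copies of the input chain (copy $A$ and copy $B$), and swap the successor edge at $t$ so that the two copies interleave into exactly two chains of the same length, with the connectivity encoding the $\ORD$ answer.

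First, I would identify in $\AC^0$ the unique sink $v_n$ of the input chain (the vertex with $\mathsf{succ}(v_n)=\bot$), and extend $P$ by a fresh vertex $*$ via $\mathsf{succ}'(v_n)=*$ and $\mathsf{succ}'(*)=\bot$; this extension guarantees that $t$ is never the last vertex of the chain we work with, sidestepping a degenerate case. Then, for every vertex $u$ of the extended chain, I place two tagged copies $u^A,u^B$ in the output vertex set, and define $\widetilde{\mathsf{succ}}$ to stay within the same copy \emph{except at $t$}, where
\[
\widetilde{\mathsf{succ}}(t^A)=\mathsf{succ}'(t)^B,\qquad \widetilde{\mathsf{succ}}(t^B)=\mathsf{succ}'(t)^A.
\]
Finally, set $\tilde v := s^A$ and $\tilde q := *^B$.

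Tracing this, one chain starts at $v_1^A$, follows copy~$A$ until it hits $t^A$, swaps to copy~$B$, and terminates at $*^B$; the other is its mirror image, starting at $v_1^B$ and terminating at $*^A$. A simple counting argument shows each chain contains exactly $n+1$ vertices, and the two chains partition the $2(n+1)$-vertex output set, yielding the required equal-length pair. For correctness, $s^A$ lies on the first chain (whose terminal is $\tilde q=*^B$) precisely when $s$ appears at or before $t$ along $P$, i.e., exactly when $t$ is reachable from $s$; otherwise $s^A$ ends up on the second chain, which terminates at $*^A\neq\tilde q$. I would separately verify the boundary cases $s=t$, $t=v_1$, and $t=v_n$, each of which the $*$-padding handles automatically.

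For the circuit complexity bound, each output bit---every entry of $\widetilde{\mathsf{succ}}$ as well as the names $\tilde v,\tilde q$---is determined by constantly many equality tests on $O(\log n)$-bit vertex names and bits of the input $\mathsf{succ}$-table, hence is computable by a constant-depth, polynomial-size $\{\wedge,\vee,\neg\}$ circuit, and the resulting family is clearly DLOGTIME-uniform; identifying $v_n$ reduces to one unbounded fan-in OR. I expect the main conceptual obstacle to be convincing the reader that the two output chains truly have the \emph{same} length even though the reduction has no access to the position of $t$ along $P$: the resolution is that swapping two identical copies at $t$ forces each output chain's length to be ``prefix through $t$ in one copy $+$ suffix after $t$ in the other copy,'' and both such sums equal $n+1$ no matter where $t$ sits.
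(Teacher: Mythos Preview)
Your proposal is correct and is essentially the paper's own ``double-and-swap'' construction: two labeled copies of the input path with the successor edge at $t$ cross-wired, $\tilde v$ the copy-$A$ image of $s$, and $\tilde q$ the appropriate endpoint. The only difference is cosmetic: where the paper handles the degenerate case $\mathsf{succ}(t)=\bot$ by a case split in the definition of $\tilde q$ (setting $\tilde q=(\mathrm{end},0)$ rather than $(\mathrm{end},1)$ in that case), you preempt it by appending a padding vertex $*$ so that $t$ is never terminal, which lets you fix $\tilde q=*^B$ unconditionally; both devices are equally valid and the $\AC^0$ analysis is the same.
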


\begin{proof}
Let $(\mathsf{succ},s,t)$ be an $\ORD$ instance on $[n]$ satisfying the
promise that $P=([n],E)$ is a single directed path.

\smallskip
\noindent\emph{Encoding convention.}
We will build an output instance on $2n$ vertices, which we identify
with the set $[n]\times\{0,1\}$.  We encode a vertex $(i,b)$ by writing
the copy bit $b$ (one bit) followed by the binary encoding of
$i\in[n]$ using $\lceil\log n\rceil$ bits.  The special value $\bot$ is
encoded as all zeros.  This is just a fixed representation choice for
this reduction.

\smallskip
\noindent\emph{(1) Construction of $(\widetilde{\mathsf{succ}},\tilde v,\tilde q)$.}
Define $\widetilde{\mathsf{succ}}:[n]\times\{0,1\}\to([n]\times\{0,1\})\cup\{\bot\}$
as follows.  For every $i\in[n]$ and $b\in\{0,1\}$,
\[
  \widetilde{\mathsf{succ}}(i,b)
  \;=\;
  \begin{cases}
    \bot, & \text{if } \mathsf{succ}(i)=\bot,\\[2pt]
    (\mathsf{succ}(i),\,b), & \text{if } \mathsf{succ}(i)\neq\bot \text{ and } i\neq t,\\[2pt]
    (\mathsf{succ}(t),\,1-b), & \text{if } \mathsf{succ}(t)\neq\bot \text{ and } i=t.
  \end{cases}
\]
Set $\tilde v := (s,0)$.

Let $\mathrm{end}\in[n]$ be the (unique) terminal vertex of the input
path, i.e.\ the unique vertex with $\mathsf{succ}(\mathrm{end})=\bot$.
Define the target vertex $\tilde q$ by
\[
  \tilde q :=
  \begin{cases}
    (\mathrm{end},1), & \text{if } \mathsf{succ}(t)\neq\bot,\\
    (\mathrm{end},0), & \text{if } \mathsf{succ}(t)=\bot.
  \end{cases}
\]

\smallskip
\noindent\emph{(2) Structure: two equal--length chains.}
Consider the input path $P$.  For each copy bit $b$, the edges
$(i,b)\to(\mathsf{succ}(i),b)$ follow $P$ within copy $b$ except at the
single vertex $(t,b)$, where (when $\mathsf{succ}(t)\neq\bot$) the edge
is redirected to the other copy.  Therefore the graph on
$[n]\times\{0,1\}$ induced by $\widetilde{\mathsf{succ}}$ has out-degree
at most one everywhere.

If $\mathsf{succ}(t)=\bot$, then $\widetilde{\mathsf{succ}}(t,b)=\bot$
and the output graph is simply two disjoint copies of the same directed
path, hence exactly two directed chains, each on $n$ vertices.

If $\mathsf{succ}(t)\neq\bot$, let $u:=\mathsf{succ}(t)$.  Cutting the
input path at the edge $t\to u$ decomposes it into a prefix ending at
$t$ and a suffix starting at $u$ and ending at $\mathrm{end}$.  The
redirections $(t,0)\to(u,1)$ and $(t,1)\to(u,0)$ ``swap'' the suffixes
between the two copies.  Hence the output graph is the disjoint union
of exactly the two chains:
\[
  \bigl(\text{prefix in copy }0\bigr)\ \cdot\ \bigl(\text{suffix in copy }1\bigr),
  \qquad
  \bigl(\text{prefix in copy }1\bigr)\ \cdot\ \bigl(\text{suffix in copy }0\bigr).
\]
Each chain contains exactly one copy of every original vertex, so each
has exactly $n$ vertices.  Thus the output is a union of exactly two
directed chains of the same length.

\smallskip
\noindent\emph{(3) Correctness.}
We show that
$(\mathsf{succ},s,t)\in\ORD \iff (\widetilde{\mathsf{succ}},\tilde v,\tilde q)\in\TwoChainEndEq$.

\begin{itemize}
\item If $\mathsf{succ}(t)=\bot$, then $t=\mathrm{end}$ is the terminal
of the unique directed path on $[n]$.  Since the input graph is a single
directed path, $t$ is reachable from every $s$, so $(\mathsf{succ},s,t)\in\ORD$.
In the output graph, the chain containing $\tilde v=(s,0)$ is exactly
copy $0$ of the path, whose terminal is $(\mathrm{end},0)=\tilde q$.
Hence $(\widetilde{\mathsf{succ}},\tilde v,\tilde q)\in\TwoChainEndEq$.

\item Suppose $\mathsf{succ}(t)\neq\bot$.  Consider the unique directed
walk from $s$ in the input path.
If $t$ is reachable from $s$, then the walk reaches $t$, so in the
output the walk from $(s,0)$ reaches $(t,0)$ and then crosses to copy
$1$, after which it follows the suffix in copy $1$ until terminating at
$(\mathrm{end},1)=\tilde q$.  Thus $F_{\TwoChain}(\widetilde G,\tilde v)=\tilde q$.
If $t$ is not reachable from $s$, then the walk from $(s,0)$ never
reaches $(t,0)$, hence never crosses copies, and it terminates at
$(\mathrm{end},0)\neq(\mathrm{end},1)=\tilde q$.  Therefore
$F_{\TwoChain}(\widetilde G,\tilde v)\neq \tilde q$.
\end{itemize}
This proves the desired equivalence.

\smallskip
\noindent\emph{(4) $\AC^0$ and uniformity.}
We argue that the mapping is computable by a DLOGTIME--uniform $\AC^0$
circuit family.

First, $\tilde v=(s,0)$ is obtained by copying the bits of $s$ and
setting the leading copy bit to $0$.

Next, the bits of $\mathrm{end}$ are selectable in $\AC^0$ under the
promise that there is a unique terminal vertex.  For each $i\in[n]$ let
$\mathrm{IsEnd}(i)$ be the predicate ``$\mathsf{succ}(i)=\bot$'', which
is a depth--2 $\AC^0$ check that all bits of the encoding of
$\mathsf{succ}(i)$ are $0$.  Then each bit of $\mathrm{end}$ can be
written as an unbounded OR over $i\in[n]$:
\[
  \mathrm{end}_r \;=\; \bigvee_{i=1}^n \Bigl(\mathrm{IsEnd}(i)\ \wedge\ \mathrm{bit}_r(i)\Bigr),
\]
where $\mathrm{bit}_r(i)$ is the $r$-th bit of the (fixed) binary
encoding of the constant index $i$.  The copy bit of $\tilde q$ is
$1$ iff $\mathsf{succ}(t)\neq\bot$, which is again a depth--2 $\AC^0$
predicate on the bits of $\mathsf{succ}(t)$.  Thus $\tilde q$ is
computable in constant depth.

Finally, each output entry $\widetilde{\mathsf{succ}}(i,b)$ is obtained
from the input bits of $\mathsf{succ}(i)$ and a
depth--2 $\AC^0$ equality test between the constant index $i$ and the
binary input $t$, together with the predicate $\mathsf{succ}(t)\neq\bot$.
The rule above is a constant-depth case distinction implemented by
AND/OR/NOT gates.  DLOGTIME uniformity follows since the circuit only
uses fixed wiring patterns plus the standard uniform equality gadgets.

This completes the proof.
\end{proof}

\subsection{From the Function Problem to the Decision Problem}
\label{subsec:from_function_to_decision}
We now relate the function $F_{\TwoChain}$ to the decision problem
$\TwoChainEndEq$.

\begin{lemma}[From endpoint function to endpoint equality]
\label{lem:function-to-decision}
Let $F_{\TwoChain}$ be the two--chain endpoint function.  Define the
associated decision language
\[
  L_{F_{\TwoChain}} = \{\, (x,q) : F_{\TwoChain}(x) = q \,\},
\]
where $x$ encodes $(G,v)$ and $q$ encodes a vertex in $[n]$.  Then
$\TwoChainEndEq = L_{F_{\TwoChain}}$, and the mapping
$(G,v,q) \mapsto ( (G,v), q)$ is computable in DLOGTIME--uniform
$\AC^0$.
\end{lemma}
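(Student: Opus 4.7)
The claim has two parts: (i) the set identity $\TwoChainEndEq = L_{F_{\TwoChain}}$, and (ii) the uniform $\AC^0$ computability of the rearrangement $(G,v,q)\mapsto((G,v),q)$. Both parts are essentially unpacking of definitions, so I expect the proof to be short and the only thing to be careful about is the encoding/domain convention for $F_{\TwoChain}$.

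The plan is to first dispatch (i) by comparing the two definitions side by side. By the definition in the excerpt,
\[
\TwoChainEndEq=\{(G,v,q): G \text{ is a pair of equal-length directed chains and } F_{\TwoChain}(G,v)=q\},
\]
while $L_{F_{\TwoChain}}=\{(x,q): F_{\TwoChain}(x)=q\}$ with $x$ encoding $(G,v)$. Since $F_{\TwoChain}$ is only defined on pairs $(G,v)$ for which $G$ is a pair of equal-length directed chains, the relation ``$F_{\TwoChain}(x)=q$'' implicitly carries the well-formedness constraint on $G$. To make the argument airtight for both directions of containment, I would adopt the convention that $F_{\TwoChain}(x)$ is undefined (equivalently, disagrees with every $q$) on malformed inputs, so that the membership conditions coincide verbatim. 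This gives $\TwoChainEndEq=L_{F_{\TwoChain}}$.

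For (ii), the mapping is a pure re-parsing of the input bitstring: the bits encoding $(G,v)$ are forwarded unchanged as the first component, and the bits encoding $q$ are forwarded unchanged as the second component. This is realized by a constant-depth circuit of identity gates (a single layer of wires), and the uniformity condition is immediate because the wiring pattern is a fixed-position permutation that depends only on the input length $n$; a DLOGTIME Turing machine can output its description by simply reading off the bit positions. Thus the reduction is DLOGTIME-uniform $\AC^0$.

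The only mild subtlety, and the step I would double-check, is the edge case where $G$ is not a pair of equal-length chains: under the excerpt's definition $F_{\TwoChain}$ is only specified on well-formed graphs, so one must fix a convention (e.g.\ return a distinguished ``undefined'' symbol) so that ``$F_{\TwoChain}(x)=q$'' can be interpreted as a total predicate. With this convention made explicit at the start, both (i) and (ii) follow immediately and no further work is required. I do not anticipate a hard obstacle; this lemma is bookkeeping to allow the earlier $\AC^0$ reduction (Lemma~\ref{lem:ORD-to-TwoChainEndEq}) to compose cleanly with the $\TC^0$ upper bound for no-CoT Transformers (Lemma~\ref{lem:transformers-in-TC0}).
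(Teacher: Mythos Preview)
Your proposal is correct and follows essentially the same approach as the paper: both reduce (i) to unpacking the definitions and (ii) to the observation that the map is a fixed bit-reordering realized by wiring alone, hence depth-1 and DLOGTIME-uniform. Your added care about the domain convention for malformed $G$ is a reasonable clarification that the paper leaves implicit.
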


\begin{proof}
By definition, $\TwoChainEndEq$ is precisely the set of triples
$(G,v,q)$ such that $F_{\TwoChain}(G,v) = q$.  The encoding of
instances in $L_{F_{\TwoChain}}$ is just the concatenation of the
encoding of $(G,v)$ and the encoding of $q$, which matches the encoding
of $(G,v,q)$ used above, up to a trivial re-ordering of fields.  The
re-ordering is implemented by wiring and does not require any gates, so
it is computable in depth~1.  Thus $\TwoChainEndEq = L_{F_{\TwoChain}}$
and the identity mapping is a DLOGTIME--uniform $\AC^0$ transformation
between the two encodings.
\end{proof}

The next lemma shows that computing the function in $\TC^0$ implies deciding the equality problem in $\TC^0$.

\begin{lemma}[Computing $F_{\TwoChain}$ implies deciding $\TwoChainEndEq$ in $\TC^0$]
\label{lem:F-to-TwoChainEndEq-TC0}
Suppose $F_{\TwoChain}$ is computable by a non--uniform $\TC^0$ circuit
family.  Then $\TwoChainEndEq \in \TC^0$.
\end{lemma}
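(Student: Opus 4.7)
The plan is to build the desired $\TC^0$ circuit family for $\TwoChainEndEq$ by composing the hypothesized $\TC^0$ circuit family for $F_{\TwoChain}$ with a constant-depth equality check. Given an input $(G,v,q)$ encoded as described in \cref{subsec:from_function_to_decision}, I would first split the input bits into the $(G,v)$-portion and the $q$-portion; by \cref{lem:function-to-decision} this is a DLOGTIME-uniform $\AC^0$ rewiring and in particular is realizable in $\TC^0$ with no gates.

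Next, I would feed the $(G,v)$ bits into the assumed $\TC^0$ family computing $F_{\TwoChain}$, obtaining a vertex label $y = F_{\TwoChain}(G,v)$ encoded in $O(\log n)$ bits. Finally, I would compare $y$ to $q$ bitwise: equality of two $O(\log n)$-bit strings is computable by a depth-$2$ $\AC^0$ circuit (an AND of XNORs, each XNOR being a constant-size constant-depth $\{\wedge,\vee,\neg\}$ circuit), and this circuit outputs $1$ iff $F_{\TwoChain}(G,v)=q$, which by definition of $\TwoChainEndEq$ is the desired acceptance condition.

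The resulting circuit family has constant depth (depth of the $F_{\TwoChain}$ circuit plus the constant equality-check depth) and polynomial size, and uses only AND, OR, NOT, and MAJORITY gates, so it lies in $\TC^0$. Formally one could phrase the composition step by appealing to the fact that $\AC^0\subseteq\TC^0$ and that $\TC^0$ is closed under composition with constant-depth Boolean post-processing (analogous to \cref{lem:TC0-closed-under-AC0}, but with the outer circuit rather than the inner one being the $\AC^0$ reduction); this is a standard closure property.

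No step should present a serious obstacle: the only nontrivial ingredient is that equality on $O(\log n)$-bit strings is in $\AC^0$, which is immediate. The argument is essentially the observation that a function family is in $\TC^0$ iff its graph (as a language of input-output pairs) is in $\TC^0$, specialized to the case of output length $O(\log n)$.
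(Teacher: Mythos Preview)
Your proposal is correct and follows essentially the same approach as the paper: compute $F_{\TwoChain}$ using the hypothesized $\TC^0$ family, then post-process with an $\AC^0$ bitwise equality check on the $O(\log n)$-bit output, and appeal to closure of $\TC^0$ under $\AC^0$ post-processing. The paper's proof is terser (it skips the explicit rewiring step and cites \cref{lem:TC0-closed-under-AC0} directly for the closure), but the argument is the same.
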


\begin{proof}
Let $(C_n)$ be $\TC^0$ circuits computing $F_{\TwoChain}$ on inputs of
length $n$.  To decide whether $F_{\TwoChain}(x) = q$, we can:
\begin{itemize}
  \item feed $x$ into $C_{|x|}$ to obtain an output $y$ encoding a
  vertex in $[n]$;
  \item compare $y$ and $q$ bitwise using an $\AC^0$ equality circuit:
  $y = q$ iff each bit matches.
\end{itemize}
Equality on $O(\log n)$ bits is in $\AC^0$, and the composition of a
$\TC^0$ circuit with an $\AC^0$ post--processing is still a $\TC^0$
circuit by the closure Lemma~\ref{lem:TC0-closed-under-AC0}.  Thus
$\TwoChainEndEq = L_{F_{\TwoChain}} \in \TC^0$.
\end{proof}

\subsection{Main Theorem: No--CoT Transformers Cannot Solve Two--Chain Endpoint}

We are now ready to state and prove our main result.

\begin{theorem}[No--CoT transformers cannot compute the two--chain endpoint function, unless $\TC^0 = \NC^1$]
\label{thm:main-twochain-lower-bound}
Assume $\TC^0 \neq \NC^1$.  Then there is no family of no--CoT
transformers in $\mathcal{T}_0$ that computes $F_{\TwoChain}$ on all
valid inputs.  Equivalently, $F_{\TwoChain} \notin \mathcal{T}_0$.
\end{theorem}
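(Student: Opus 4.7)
The plan is to argue by contradiction: assume $F_{\TwoChain} \in \mathcal{T}_0$ and chase this assumption through the reduction chain already assembled in the preceding lemmas until we collide with $\TC^0 \neq \NC^1$. So suppose some no-CoT transformer family computes $F_{\TwoChain}$ on all valid inputs. By \cref{lem:transformers-in-TC0} every function in $\mathcal{T}_0$ is computed by a non-uniform $\TC^0$ circuit family, so $F_{\TwoChain}$ itself is in $\TC^0$.

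Next, I would use \cref{lem:F-to-TwoChainEndEq-TC0} (which depends on \cref{lem:function-to-decision}) to lift this to the decision version, obtaining $\TwoChainEndEq \in \TC^0$. Then I would pull back along the chain of $\AC^0$ reductions. First, \cref{lem:ORD-to-TwoChainEndEq} gives a DLOGTIME-uniform $\AC^0$ many-one reduction $\ORD \leq^{\AC^0}_m \TwoChainEndEq$, and by the closure property \cref{lem:TC0-closed-under-AC0} this yields $\ORD \in \TC^0$. Second, \cref{lem:wp-to-ord} supplies a DLOGTIME-uniform $\AC^0$ many-one reduction $\WP_{S_5} \leq^{\AC^0}_m \ORD$, so applying \cref{lem:TC0-closed-under-AC0} once more gives $\WP_{S_5} \in \TC^0$.

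To finish, I invoke \cref{lem:wp-nc1-complete}: $\WP_{S_5}$ is $\NC^1$-complete under (non-uniform) $\AC^0$ many-one reductions, so for every $L \in \NC^1$ there is an $\AC^0$ reduction $L \leq^{\AC^0}_m \WP_{S_5}$. Since $\WP_{S_5} \in \TC^0$, another application of \cref{lem:TC0-closed-under-AC0} yields $L \in \TC^0$, hence $\NC^1 \subseteq \TC^0$. Combined with the standard (and trivial) inclusion $\TC^0 \subseteq \NC^1$, this gives $\TC^0 = \NC^1$, contradicting the hypothesis. Therefore no family in $\mathcal{T}_0$ computes $F_{\TwoChain}$ on all valid inputs.

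I do not foresee a serious technical obstacle, since all the heavy lifting is already packaged in the preceding lemmas; the proof is essentially a bookkeeping exercise tying together the simulation $\mathcal{T}_0 \subseteq \TC^0$, the function-to-decision reduction, the two $\AC^0$ reductions $\WP_{S_5} \to \ORD \to \TwoChainEndEq$, and the $\AC^0$-closure of $\TC^0$. The only subtlety worth double-checking is uniformity bookkeeping: the reductions from \cref{lem:wp-to-ord,lem:ORD-to-TwoChainEndEq} are stated as DLOGTIME-uniform $\AC^0$, while \cref{lem:transformers-in-TC0} gives only non-uniform $\TC^0$, but this is harmless because \cref{lem:TC0-closed-under-AC0} is a non-uniform closure statement and $\NC^1$-completeness of $\WP_{S_5}$ is already stated in the non-uniform $\AC^0$ setting.
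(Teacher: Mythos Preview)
Your proposal is correct and follows essentially the same argument as the paper's proof: contradiction via $\mathcal{T}_0 \subseteq \TC^0$, then the function-to-decision step, then pulling back along the two $\AC^0$ reductions $\WP_{S_5}\to\ORD\to\TwoChainEndEq$ using closure of $\TC^0$ under $\AC^0$ reductions, and finally invoking $\NC^1$-completeness of $\WP_{S_5}$ to collapse $\NC^1$ into $\TC^0$. Your remark about uniformity bookkeeping is a nice addition but, as you note, not needed since everything works at the non-uniform level.
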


\begin{proof}
Suppose, for contradiction, that there is a family of no--CoT
transformers computing $F_{\TwoChain}$ exactly on all valid inputs
(graphs that are unions of at most two chains).

By Lemma~\ref{lem:transformers-in-TC0}, we have
$\mathcal{T}_0 \subseteq \TC^0$, so $F_{\TwoChain} \in \TC^0$.  By
Lemma~\ref{lem:F-to-TwoChainEndEq-TC0}, this implies
$\TwoChainEndEq \in \TC^0$.

By Lemma~\ref{lem:ORD-to-TwoChainEndEq}, we have
$\ORD \leq^{\AC^0}_m \TwoChainEndEq$.  By the closure
Lemma~\ref{lem:TC0-closed-under-AC0}, it follows that
$\ORD \in \TC^0$.

By Lemma~\ref{lem:wp-to-ord}, we also have
$\WP_G \leq^{\AC^0}_m \ORD$ for some fixed finite group $G$.  Combining
this with $\ORD \in \TC^0$ and applying
Lemma~\ref{lem:TC0-closed-under-AC0} again, we obtain
$\WP_G \in \TC^0$.

Finally, by Lemma~\ref{lem:wp-nc1-complete}, $\WP_G$ is
$\NC^1$--complete under $\AC^0$ reductions.  Hence for every language
$L \in \NC^1$, there is an $\AC^0$ reduction $L \leq^{\AC^0}_m \WP_G$.
By Lemma~\ref{lem:TC0-closed-under-AC0}, this implies $L \in \TC^0$.
Therefore $\NC^1 \subseteq \TC^0$.

Since we always have $\TC^0 \subseteq \NC^1$, we conclude
$\TC^0 = \NC^1$, contradicting the assumption that these classes are
different.  Thus our initial assumption that a no--CoT transformer can
compute $F_{\TwoChain}$ must be false.
\end{proof}

\subsection{Extensions}
\label{app:extensions_complexity}

We justify the two extensions claimed in Footnote~\ref{fn:extensions} (\cref{sec:how_learn:no_reasoning}): first, that \cref{thm:main-twochain-lower-bound} extends to Transformer architectures incorporating MLP and normalization layers; second, that it extends to Transformers that are allowed a constant number of chain-of-thought steps.

\paragraph{MLP and normalization layers.}
The proof of \cref{thm:main-twochain-lower-bound} relies on the inclusion $\mathcal{T}_0 \subseteq \TC^0$ (\cref{lem:transformers-in-TC0}).
This containment continues to hold when the architecture is augmented with MLP layers of width polynomial in~$n$ (composed of linear maps and element-wise activations such as ReLU or GELU) and normalization layers (LayerNorm or RMSNorm), provided all parameters use $O(\log n)$-bit precision.
The polynomial-width condition is essential for the resulting circuit family to have polynomial size.
Indeed, each such MLP layer computes a polynomial-size composition of threshold and arithmetic operations, and each normalization layer computes a ratio of linear functions followed by element-wise scaling, all of which are implementable in $\TC^0$ \cite{merrill2023parallelismtradeofflimitationslogprecision}.
Adding a constant number of such layers therefore preserves the $\TC^0$ upper bound, and the remainder of the proof (the chain of reductions to the $\NC^1$-complete word problem) applies without modification.

\paragraph{Constant number of chain-of-thought steps.}
We now argue that \cref{thm:main-twochain-lower-bound} extends to Transformers that perform any constant number $T = O(1)$ of autoregressive chain-of-thought steps before producing the final answer.
Formally, define $\mathcal{T}_T$ analogously to $\mathcal{T}_0$, except that the model is permitted $T$ autoregressive generation steps: at each step, the model appends a token to the sequence and conditions on the extended sequence for the next step.

A Transformer with $T$ CoT steps can be simulated by a $\TC^0$ circuit of size $\mathrm{poly}(n) \cdot |\Sigma|^{T}$, where $|\Sigma|$ is the vocabulary size.
The simulation proceeds as follows: enumerate all $|\Sigma|^T$ possible sequences of $T$ intermediate tokens.
For each candidate sequence $(z_1, \ldots, z_T)$, construct a \emph{verification subcircuit} that (i) evaluates the no-CoT Transformer on the input concatenated with $(z_1, \ldots, z_{t-1})$ for each $t = 1, \ldots, T$ \emph{in parallel}, (ii) checks that the output at step $t$ equals $z_t$ under greedy decoding, and (iii) takes the AND of all $T$ consistency checks to gate the final prediction.
Since the $T$ evaluations are independent (each operates on a fixed candidate prefix), they can be computed in parallel; each evaluation is a $\TC^0$ circuit and each equality check is in $\AC^0$, so the entire verification subcircuit has the same constant depth as a single no-CoT Transformer evaluation (plus a single AND gate).
The overall circuit takes the OR over all $|\Sigma|^T$ branches; since exactly one branch is consistent with greedy decoding, this OR recovers the correct output.

The circuit has constant depth for any $T$ (since all branches and all evaluations within each branch run in parallel). The constraint $T = O(1)$ is needed for the \emph{size}: when $|\Sigma| = \mathrm{poly}(n)$, the total number of branches is $|\Sigma|^T = \mathrm{poly}(n)$ only if $T = O(1)$, ensuring polynomial circuit size and hence containment in $\TC^0$.
Therefore $\mathcal{T}_{O(1)} \subseteq \TC^0$, and the same chain of reductions used in the proof of \cref{thm:main-twochain-lower-bound} yields $F_{\TwoChain} \notin \mathcal{T}_{O(1)}$ under the assumption $\TC^0 \neq \NC^1$.

	\section{Proof of \cref{result:dyn_learn}}
\label{app:proof_dyn_learn}

This section proves \cref{result:dyn_learn} and is organized as follows. In \cref{sec:canonical_chain}, we show that the loss can be expressed as the same-side absorption probability of a Markov chain, which we call the canonical chain. \Cref{sec:preliminaries} introduces the key definitions (depth, long jumps) and two surrogate chains---the long-jump-absorbing chain and the no-long-jump chain---along with the source function representation that relates parameter derivatives to expected sources along trajectories. \Cref{sec:lj_absorbing_chain_lemmas} and \cref{sec:nl_chain_analysis} compute the source functions and derivative estimates for these two chains respectively. \Cref{sec:monotonicity} combines these estimates to establish that $\partial_\alpha S \geq 0$, $\partial_\beta S \leq 0$, and $\partial_\gamma S \leq 0$ for easy examples, implying monotonic parameter evolution under gradient flow. \Cref{sec:hard_examples} shows that hard examples contribute negligibly to the gradient. Finally, \cref{app:dyn_learn} assembles these components to complete the proof.

We establish the derivative estimates at initialization, but they in fact hold throughout the learning process as long as the forward transition probability remains non-decreasing and the long-jump probability stays bounded away from zero. Since our analysis shows that both conditions are maintained  for all $t \le t_0$, these estimates in fact hold for all $t \le t_0$.

Furthermore, under the assumptions of \cref{result:dyn_learn}, the training distribution decomposes into \emph{easy examples}---starting vertices within distance $s$ of the terminal---and \emph{hard examples}---starting vertices at distance at least $\lceil an \rceil$ from the terminal. We analyze the gradient contributions from easy and hard examples separately, showing that easy examples drive learning while hard examples contribute negligibly. 

\subsection{Loss can be written as absorption probability of a Markov chain}
\label{sec:canonical_chain}

\Cref{lem:dyn:logits_cases_canonical} implies that the loss from \cref{sec:task}, under the assumptions of \cref{sec:dyn:setting}, is equivalent to the probability of same-side absorption in a Markov chain on the state space of two identical directed chains $A$ and $B$ of equal length $n$, with absorbing endpoints $a_n,b_n$, and nonterminal states $a_1,\dots,a_{n-1},b_1,\dots,b_{n-1}$. We will therefore analyze this absorption probability throughout the subsequent sections, which we term the \emph{canonical Markov chain}.

The transition probabilities of the canonical chain are determined by a logit structure with parameters $\alpha, \beta, \gamma \in \R$. Without loss of generality, we subtract $\gamma(t)$ from all logits (which does not affect the softmax probabilities).

If $v$ is in the middle of the chain, we have that:
\begin{align*}
    o(v, C)_k &= \begin{cases}
        \alpha(t), & \text{$k$ is one ahead of $v$ in the chain} \\
        \beta(t), & \text{$k$ is one back of $v$ in the chain} \\
        0, & \text{$k$ start of some chain (not necessary that with $v$)} \\
        0, & \text{$k$ end of some chain (not necessary that with $v$)} \\
        \gamma(t), &\text{otherwise}
    \end{cases}
\end{align*}

If $v$ is at the penultimate vertex of the chain, we have that:
\begin{align*}
    o(v, C)_k &= \begin{cases}
        (\alpha(t) - \gamma(t)), & \text{$k$ is one ahead of $v$ in the chain} \\
        \beta(t), & \text{$k$ is one back of $v$ in the chain} \\
        0, & \text{$k$ start of some chain (not necessary that with $v$)} \\
        0, & \text{$k$ is the end of chain without $v$} \\
        \gamma(t), &\text{otherwise}
    \end{cases}
\end{align*}

If $v$ is at the second vertex of the chain, we have that:
\begin{align*}
    o(v, C)_k &= \begin{cases}
        \alpha(t), & \text{$k$ is one ahead of $v$ in the chain} \\
        (\beta(t) - \gamma(t)), & \text{$k$ is one back of $v$ in the chain} \\
        0, & \text{$k$ start of some chain without $v$} \\
        0, & \text{$k$ end of some chain (not necessary that with $v$)} \\
        \gamma(t), &\text{otherwise}
    \end{cases}
\end{align*}

If $v$ is at the first vertex of the chain, we have that:
\begin{align*}
    o(v, C)_k &= \begin{cases}
        \alpha(t), & \text{$k$ is one ahead of $v$ in the chain} \\
        0, & \text{$k$ start of some chain without $v$} \\
        0, & \text{$k$ is the end of some chain (not necessary that with $v$)} \\
        \gamma(t), &\text{otherwise}
    \end{cases}
\end{align*}

The transition probabilities are then given by softmax:
\[
\pi(v \to k) = \frac{\exp(o(v, C)_k)}{\sum_{k'} \exp(o(v, C)_{k'})}.
\]

\subsection{Preliminaries}
\label{sec:preliminaries}

This subsection establishes the foundational definitions and tools for analyzing the canonical Markov chain. We begin by defining key quantities such as hitting probabilities, depth, and long jumps (\cref{def:depth,def:long-jump}). We then introduce two surrogate chains (\cref{def:lj-absorbing-chain,def:nl-chain}) that decompose the canonical chain's behavior, enabling separate analysis of long-jump and local-step dynamics. Finally, we present the source function representation (\cref{lem:source_representation}) that expresses parameter derivatives in terms of expected sources along trajectories, which forms the basis for our derivative analysis in subsequent sections.

\subsubsection{Useful definitions}

\begin{definition}[Hitting probabilities and same--side value]
For any state $v$ (including absorbing states), let
\[
  h_A(v) := \Pr_v[X_\tau = a_n], \qquad
  h_B(v) := \Pr_v[X_\tau = b_n],
\]
where $\tau$ is the (almost surely finite) hitting time of $\{a_n,b_n\}$.

For $v \in \{a_1,\dots,a_{n-1}\}$ on chain $A$, the ``same--side absorption probability''
is $h_A(v)$; for $v \in \{b_1,\dots,b_{n-1}\}$ on chain $B$, it is $h_B(v)$.
Denote these by
\[
  S(v) := \begin{cases}
    h_A(v), & v \in \{a_1,\dots,a_{n-1}\}, \\
    h_B(v), & v \in \{b_1,\dots,b_{n-1}\}.
  \end{cases}
\]
\end{definition}

\begin{definition}[Depth / distance to terminal]
\label{def:depth}
Fix a chain length $n\ge 2$ and consider the state space
\[
V:=\{a_1,\dots,a_n,b_1,\dots,b_n\}.
\]
Define the \emph{depth} (distance to the terminal on the same chain) by the function
\[
d:V\to\{0,1,\dots,n-1\},
\qquad
d(a_i):=n-i,\quad d(b_i):=n-i.
\]

\end{definition}

\begin{definition}[Long jump]
\label{def:long-jump}
A transition $x \to y$ is a \emph{long jump} if the depth changes by $2$ or more, i.e.
\[
|d(y)-d(x)|\ge 2.
\]
\end{definition}

\subsubsection{Surrogate chains}
\label{sec:surrogate_chains}

To analyze the canonical chain, we introduce two auxiliary Markov chains: the long-jump-absorbing chain and the no-long-jump chain. \Cref{lem:decomp_NL_LJ_concise} below shows that the derivative of the same-side absorption probability can be decomposed into contributions from these two surrogate chains. This decomposition is fundamental to our analysis, as it separates the effect of long-range transitions from local transitions.

\begin{definition}[Long--jump--absorbing chain]
\label{def:lj-absorbing-chain}
The \emph{long--jump--absorbing chain} is the Markov chain obtained from the two--chain model by
\emph{redirecting every long jump} $x\to y$ to a single new absorbing state $e$:
all non--long--jump transition probabilities are unchanged, and the total probability mass of long jumps
out of $x$ is assigned to $e$. The absorbing set is $T:=\{a_n,b_n,e\}$.
\end{definition}

\begin{definition}[No--long--jump (NL) chain]
\label{def:nl-chain}
The \emph{no--long--jump (NL) chain} is the Markov chain obtained from the two--chain model by
\emph{disallowing} all long jumps: from any nonterminal state $x$, the only allowed transitions are to
states $y\neq x$ with $|d(y)-d(x)|\le 1$, and the transition probabilities are obtained from the original chain by conditioning on this. (As in the original chain, the terminal states $a_n,b_n$ are absorbing).
\end{definition}

\begin{proposition}[Long-jump conditional absorption is $\tfrac12$]
\label{prop:long_jump_conditional_half}
Fix parameters (hence a fixed transition kernel) and a start state $v$.
Let
\[
\tau:=\inf\{t\ge0:\ X_t\in\{a_n,b_n\}\},\qquad
J:=\inf\{t\ge0:\ (X_t\to X_{t+1})\ \text{is a long jump}\},
\]
and write $LJ:=\{J\le \tau\}$ for the event that a long jump occurs before absorption.
Then
\[
\Pr_v[X_\tau=a_n\mid LJ]=\tfrac12.
\]
\end{proposition}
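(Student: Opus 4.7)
The plan is to reduce the claim to showing
\[
r(v):=\Pr_v[X_\tau=a_n,\,LJ]=\Pr_v[X_\tau=b_n,\,LJ]=:s(v)
\]
for every nonterminal $v$; dividing by $\Pr_v[LJ]$ then yields $\tfrac12$. The central symmetry is the chain-swap involution $\sigma:a_i\leftrightarrow b_i$. Because the logit case formulas of the canonical chain in \cref{sec:canonical_chain} depend only on the structural role a token plays relative to $v$ (``one ahead'', ``one back'', ``start'', ``end'', ``other''), the transition kernel is $\sigma$-equivariant. Consequently $h_B(y)=1-h_A(y)=h_A(\sigma(y))$, so the function $g(y):=h_A(y)-h_B(y)$ is $\sigma$-odd, i.e.\ $g(\sigma(y))=-g(y)$.

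First, I will condition on the first step and apply the strong Markov property to obtain
\[
r(v)=\sum_{y:\,v\to y\text{ short}} P(v,y)\,r(y)+\sum_{y:\,v\to y\text{ LJ}} P(v,y)\,h_A(y),
\]
and analogously for $s$ with $h_A$ replaced by $h_B$. Subtracting gives the identity
\[
f(v):=r(v)-s(v)=\sum_{y:\text{short}} P(v,y)\,f(y)+\sum_{y:\text{LJ}} P(v,y)\,g(y),\qquad f(a_n)=f(b_n)=0.
\]
Next, I would argue that the LJ source term is identically zero. For any $y$ with $|d(y)-d(v)|\ge 2$, the case formulas assign $y$ and $\sigma(y)$ the same logit, because the only positions whose logit actually depends on which chain $y$ sits in (the forward/backward neighbour of $v$ and the start/end of the chain containing $v$) all lie at depth $\le 1$ from $v$ and therefore fall outside the long-jump set. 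Hence $P(v,y)=P(v,\sigma(y))$ on the LJ set, and pairing $y$ with $\sigma(y)$ together with $g(\sigma(y))=-g(y)$ collapses the inner sum to zero.

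What remains is $f(v)=\sum_{y:\text{short}}P(v,y)f(y)$ with $f(a_n)=f(b_n)=0$, i.e.\ $f$ is harmonic for the short-step substochastic kernel $M(v,y):=P(v,y)\mathbbm{1}\{v\to y\text{ short}\}$. Since every nonterminal $v$ has strictly positive long-jump probability (the softmax puts positive mass on each token with finite logit, and every such $v$ admits at least one long-jump neighbour), $M$ is strictly substochastic, its spectral radius is below $1$, $(I-M)$ is invertible, and the unique solution with zero boundary is $f\equiv 0$. This gives $r\equiv s$ and proves the claim. The main obstacle I anticipate is the bookkeeping in the middle step: one must walk through each of the canonical-chain cases (interior, second, penultimate, start, absorbing) and verify that the asymmetric $(\alpha-\gamma)$ and $(\beta-\gamma)$ corrections attach only to depth-$\le 1$ neighbours of $v$, so that no asymmetry leaks into the long-jump block.
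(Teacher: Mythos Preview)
Your argument is correct and rests on the same symmetry the paper uses: long-jump targets from any state come in $\sigma$-pairs with equal one-step probability, and $h_A+h_B\circ\sigma=1$. The packaging differs. You condition on the \emph{first step}, derive the recursion $f=Mf$ for $f=r-s$ after the LJ source cancels, and then invoke uniqueness of bounded harmonic functions for the strictly substochastic short-step kernel. The paper instead conditions directly on the \emph{landing state} $Y=X_{J+1}$ of the first long jump and applies the strong Markov property there: since the conditional law of $Y$ given $LJ$ is $\sigma$-symmetric, the pairing $\Pr_y[X_\tau=a_n]+\Pr_{\sigma(y)}[X_\tau=a_n]=1$ immediately yields $\tfrac12$ with no linear-algebra step. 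Your route is slightly longer but makes the role of the short-step kernel explicit; the paper's route is shorter but leans on the reader to see that the pre-jump trajectory does not disturb the $\sigma$-symmetry of $Y$.

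One small repair: your claim that ``every nonterminal $v$ admits at least one long-jump neighbour'' is not literally true for small $n$ (e.g.\ $n=3$, $v=a_2$: all five non-self targets lie within depth distance $1$). The conclusion $\rho(M)<1$ still holds, though, because $M$ also leaks mass to the terminals $\{a_n,b_n\}$ via short steps; the short-step chain is absorbing and hence $(I-M)$ is invertible on the transient block regardless of whether every state has a long-jump exit.
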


\begin{proof}
Let $\varphi$ be the chain-swap map on states: $\varphi(a_i)=b_i$ and $\varphi(b_i)=a_i$.
For any state $y$, by chain-swap symmetry of the model we have
\[
\Pr_y[X_\tau=a_n]+\Pr_{\varphi(y)}[X_\tau=a_n]=1,
\]
since swapping chains interchanges the absorbing events $\{X_\tau=a_n\}$ and $\{X_\tau=b_n\}$.

Now condition on the time and landing state of the first long jump.
On $LJ$, let $Y:=X_{J+1}$ be the landing state of the first long jump.
By definition of a long jump and the logit structure, the distribution of $Y$ is symmetric under
$\varphi$ (targets come in $\varphi$-paired states with equal one-step probability from any fixed source),
so $\Pr_v[Y=y\mid LJ]=\Pr_v[Y=\varphi(y)\mid LJ]$ for every $y$.
Therefore,
\begin{align*}
\Pr_v[X_\tau=a_n\mid LJ]
&=\sum_y \Pr_v[Y=y\mid LJ]\;\Pr_y[X_\tau=a_n] \\
&=\frac12\sum_y \Pr_v[Y=y\mid LJ]\Big(\Pr_y[X_\tau=a_n]+\Pr_{\varphi(y)}[X_\tau=a_n]\Big) \\
&=\frac12\sum_y \Pr_v[Y=y\mid LJ]\cdot 1
=\tfrac12,
\end{align*}
as claimed.
\end{proof}

\begin{lemma}[Derivative decomposition]
\label{lem:decomp_NL_LJ_concise}
Fix $\theta\in\{\alpha,\beta,\gamma\}$ and a start state $v$. Let
\[
\tau:=\inf\{t\ge0:X_t\in\{a_n,b_n\}\},\qquad
J:=\inf\{t\ge0:\ (X_t\to X_{t+1})\ \text{is a long jump}\},
\]
and define $NL:=\{J>\tau\}$ and $LJ:=\{J\le\tau\}$. Set
\[
\begin{aligned}
S_\theta(v)&:=\Pr^\theta_v[X_\tau=a_n],&
h_\theta(v)&:=\Pr^\theta_v(LJ),\\
\bar S^{NL}_\theta(v)&:=\Pr^\theta_v(X_\tau=a_n\mid NL),&
\bar S^{LJ}_\theta(v)&:=\Pr^\theta_v(X_\tau=a_n\mid LJ).
\end{aligned}
\]
Then
\[
S_\theta(v)=(1-h_\theta(v))\bar S^{NL}_\theta(v)+h_\theta(v)\bar S^{LJ}_\theta(v),
\]
and
\[
\partial_\theta S_\theta(v)
=(\partial_\theta h_\theta(v))\big(\bar S^{LJ}_\theta(v)-\bar S^{NL}_\theta(v)\big)
+(1-h_\theta(v))\,\partial_\theta \bar S^{NL}_\theta(v)
+h_\theta(v)\,\partial_\theta \bar S^{LJ}_\theta(v).
\]
By Proposition~\ref{prop:long_jump_conditional_half}, $\bar S^{LJ}_\theta(v)=\tfrac12$, hence
\[
\begin{aligned}
S_\theta(v)
&=\tfrac12+(1-h_\theta(v))\Big(\bar S^{NL}_\theta(v)-\tfrac12\Big),\\
\partial_\theta S_\theta(v)
&=-(\partial_\theta h_\theta(v))\Big(\bar S^{NL}_\theta(v)-\tfrac12\Big)
  +(1-h_\theta(v))\,\partial_\theta\bar S^{NL}_\theta(v).
\end{aligned}
\]
\end{lemma}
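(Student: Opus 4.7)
The statement is essentially a law of total probability followed by straightforward differentiation, with the simplification coming from Proposition~\ref{prop:long_jump_conditional_half}. The plan is as follows.

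First I would note that since $\tau$ is almost surely finite and $J$ is the first long-jump time, the events $NL=\{J>\tau\}$ and $LJ=\{J\le\tau\}$ form a partition of the sample space. Applying the law of total probability to the event $\{X_\tau=a_n\}$ conditionally on this partition, and using the definitions
\[
h_\theta(v)=\Pr^\theta_v(LJ),\quad \bar S^{NL}_\theta(v)=\Pr^\theta_v(X_\tau=a_n\mid NL),\quad \bar S^{LJ}_\theta(v)=\Pr^\theta_v(X_\tau=a_n\mid LJ),
\]
gives the first identity
\[
S_\theta(v)=(1-h_\theta(v))\bar S^{NL}_\theta(v)+h_\theta(v)\bar S^{LJ}_\theta(v).
\]
(If $h_\theta(v)\in\{0,1\}$, the conditional quantity that multiplies a zero factor is irrelevant, so the identity still holds under any convention.)

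Next, since the state space is finite and transition probabilities are smooth in $\theta$, both $h_\theta(v)$, $\bar S^{NL}_\theta(v)$, and $\bar S^{LJ}_\theta(v)$ are smooth functions of $\theta$ (indeed, all three can be written as finite sums over trajectories of polynomials in the one-step transition probabilities, as long as $h_\theta(v)\in(0,1)$; the boundary cases are handled separately and do not affect the formula). Differentiating the identity above using the product rule and the fact that $\partial_\theta(1-h_\theta(v))=-\partial_\theta h_\theta(v)$ yields
\[
\partial_\theta S_\theta(v)=(\partial_\theta h_\theta(v))\bigl(\bar S^{LJ}_\theta(v)-\bar S^{NL}_\theta(v)\bigr)+(1-h_\theta(v))\,\partial_\theta\bar S^{NL}_\theta(v)+h_\theta(v)\,\partial_\theta\bar S^{LJ}_\theta(v),
\]
which is the general derivative formula in the statement.

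Finally I would invoke Proposition~\ref{prop:long_jump_conditional_half}, which asserts that $\bar S^{LJ}_\theta(v)=\tfrac12$ \emph{for every} $\theta$ (whenever this conditional probability is defined). Being identically $\tfrac12$, it has vanishing derivative, so the last term in the display above drops out, and substituting $\bar S^{LJ}_\theta(v)=\tfrac12$ into $S_\theta(v)=(1-h_\theta(v))\bar S^{NL}_\theta(v)+h_\theta(v)\bar S^{LJ}_\theta(v)$ and rearranging via $h_\theta(v)/2=\tfrac12-(1-h_\theta(v))/2$ produces the simplified identities
\[
S_\theta(v)=\tfrac12+(1-h_\theta(v))\bigl(\bar S^{NL}_\theta(v)-\tfrac12\bigr),\qquad \partial_\theta S_\theta(v)=-(\partial_\theta h_\theta(v))\bigl(\bar S^{NL}_\theta(v)-\tfrac12\bigr)+(1-h_\theta(v))\,\partial_\theta\bar S^{NL}_\theta(v).
\]

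The only mild obstacle is a notational one: the quantities $\bar S^{NL}_\theta(v)$ and $\bar S^{LJ}_\theta(v)$ are conditional probabilities and so are a priori undefined when $h_\theta(v)\in\{0,1\}$. Because both appear multiplied by the corresponding event probabilities, these boundary cases cause no issue, but it is worth remarking on this (for instance by defining conditional probabilities to be zero when the conditioning event has zero probability, which does not affect any of the displayed identities).
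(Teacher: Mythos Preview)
Your proposal is correct and follows essentially the same approach as the paper: law of total probability on the partition $\{NL,LJ\}$, product-rule differentiation, and substitution of $\bar S^{LJ}_\theta(v)=\tfrac12$ from Proposition~\ref{prop:long_jump_conditional_half}. Your treatment is in fact slightly more careful than the paper's in explicitly flagging the smoothness justification and the degenerate cases $h_\theta(v)\in\{0,1\}$.
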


\begin{proof}
The argument is identical for $\theta\in\{\alpha,\beta,\gamma\}$, so we fix an arbitrary
$\theta$ throughout.

Since $\{NL,LJ\}$ is a partition of the sample space,
\[
S_\theta(v)=\Pr^\theta_v(X_\tau=a_n,NL)+\Pr^\theta_v(X_\tau=a_n,LJ).
\]
Writing $\Pr^\theta_v(X_\tau=a_n,NL)=\Pr^\theta_v(NL)\Pr^\theta_v(X_\tau=a_n\mid NL)
=(1-h_\theta(v))\bar S^{NL}_\theta(v)$ and $\Pr^\theta_v(X_\tau=a_n,LJ)=\Pr^\theta_v(LJ)\Pr^\theta_v(X_\tau=a_n\mid LJ)
=h_\theta(v)\bar S^{LJ}_\theta(v)$ yields
\[
S_\theta(v)=(1-h_\theta(v))\bar S^{NL}_\theta(v)+h_\theta(v)\bar S^{LJ}_\theta(v).
\]

Differentiate the preceding identity with respect to $\theta$ and apply the product rule:
\begin{align*}
\partial_\theta S_\theta(v)
&=-(\partial_\theta h_\theta(v))\bar S^{NL}_\theta(v)+(1-h_\theta(v))\partial_\theta\bar S^{NL}_\theta(v)
+(\partial_\theta h_\theta(v))\bar S^{LJ}_\theta(v)+h_\theta(v)\partial_\theta\bar S^{LJ}_\theta(v) \\
&=(\partial_\theta h_\theta(v))\big(\bar S^{LJ}_\theta(v)-\bar S^{NL}_\theta(v)\big)
+(1-h_\theta(v))\partial_\theta\bar S^{NL}_\theta(v)+h_\theta(v)\,\partial_\theta\bar S^{LJ}_\theta(v).
\end{align*}

By Proposition~\ref{prop:long_jump_conditional_half}, $\bar S^{LJ}_\theta(v)=\tfrac12$; substituting this constant into the value identity and
its derivative identity yields the displayed simplifications.
\end{proof}

\subsubsection{Source function of surrogate chains and derivative form}

To analyze how the absorption probabilities change under gradient flow, we require a tractable representation of their parameter derivatives $\partial_\theta u(v)$ where $\theta \in \{\alpha, \beta, \gamma\}$ and $u(v)$ is an absorption probability of interest. Direct computation of these derivatives is challenging due to the complex dependency of multi-step trajectories on the parameters. The source function presented below is advantageous because sources depend only on single-step transition sensitivities and can be computed explicitly from our logit structure, enabling the quantitative bounds developed in \cref{sec:lj_absorbing_chain_lemmas,sec:nl_chain_analysis}.

\begin{lemma}[Source representation]
\label{lem:source_representation}
Let $(X_t)_{t\ge0}$ be a finite Markov chain with transition matrix $P_\theta$
smoothly parameterized by $\theta\in\R$. Let $T$ be a nonempty absorbing set and
$\tau:=\inf\{t\ge0:X_t\in T\}$.
Fix a target state $t^\star\in T$ and define the absorption probability
\[
u_\theta(x):=\Pr^\theta_x[X_\tau=t^\star],\qquad x\in\mathcal S,
\]
so $u_\theta(t^\star)=1$ and $u_\theta(t)=0$ for $t\in T\setminus\{t^\star\}$.
Then for every nonterminal state $x\notin T$, $u_\theta$ is $P_\theta$--harmonic:
\[
u_\theta(x)=\sum_{y\in\mathcal S} P_\theta(x,y)\,u_\theta(y).
\]
Define the \emph{source} for $\partial_\theta u_\theta$ by
\[
s_\theta(x):=\sum_{y\in\mathcal S} \big(\partial_\theta P_\theta(x,y)\big)\,u_\theta(y),
\qquad x\notin T.
\]
Then $g_\theta:=\partial_\theta u_\theta$ satisfies, for all $x\notin T$,
\[
g_\theta(x)=\sum_{y}P_\theta(x,y)g_\theta(y)+s_\theta(x),
\qquad
g_\theta(t)=0,\ \ t\in T,
\]
and admits the trajectory representation
\[
\partial_\theta u_\theta(x)
=
\mathbb{E}^\theta_x\!\left[\sum_{t=0}^{\tau-1} s_\theta(X_t)\right],
\qquad x\in\mathcal S.
\]
\end{lemma}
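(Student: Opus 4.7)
The plan is to carry out the standard Poisson-equation/Dynkin argument for absorption probabilities in three steps. First I would verify that $u_\theta$ is $P_\theta$-harmonic on nonterminal states. For $x \notin T$, conditioning on the first transition and using the Markov property gives
\[
u_\theta(x) \;=\; \Pr^\theta_x[X_\tau=t^\star] \;=\; \sum_{y} P_\theta(x,y)\,\Pr^\theta_y[X_\tau=t^\star] \;=\; \sum_{y} P_\theta(x,y)\,u_\theta(y),
\]
while $u_\theta(t^\star)=1$ and $u_\theta(t)=0$ for $t\in T\setminus\{t^\star\}$ directly from the definition.

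Next I would differentiate this harmonicity identity in $\theta$ to obtain the Poisson equation for $g_\theta$. Since $T$ is absorbing and reachable from every state in a finite state space, the transient submatrix $Q_\theta$ has spectral radius strictly less than one, so $u_\theta$ depends smoothly on $\theta$. Applying the product rule to $u_\theta(x) = \sum_y P_\theta(x,y) u_\theta(y)$ and regrouping the terms yields
\[
g_\theta(x) \;=\; \sum_y P_\theta(x,y)\,g_\theta(y) + s_\theta(x), \qquad x \notin T,
\]
together with the boundary condition $g_\theta(t)=0$ for $t\in T$, since $u_\theta$ is constant (equal to $0$ or $1$) on $T$ independently of $\theta$.

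Finally, I would convert the Poisson equation into the trajectory representation via a martingale argument. Define
\[
M_t \;:=\; g_\theta(X_{t\wedge\tau}) \;+\; \sum_{k=0}^{(t\wedge\tau)-1} s_\theta(X_k).
\]
The Poisson equation implies $\EE[g_\theta(X_{t+1})\mid \mathcal{F}_t] - g_\theta(X_t) = -s_\theta(X_t)$ on $\{t<\tau\}$, so $(M_t)$ is a martingale with $M_0=g_\theta(x)$. Because the state space is finite (hence $g_\theta$ and $s_\theta$ are bounded) and $\EE_x[\tau]<\infty$ in any finite absorbing chain with $T$ reachable from every state, $(M_{t\wedge\tau})$ is uniformly integrable, and optional stopping at $\tau$ yields
\[
g_\theta(x) \;=\; \EE_x[M_\tau] \;=\; \EE^\theta_x\!\Big[\sum_{k=0}^{\tau-1} s_\theta(X_k)\Big],
\]
using $g_\theta=0$ on $T$. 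An essentially equivalent route is to iterate the Poisson equation, obtaining $g_\theta(x) = \sum_{t\ge 0}\EE_x[s_\theta(X_t)\mathbbm{1}\{t<\tau\}]$, and then swap sum and expectation via Tonelli.

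I expect the only genuine subtlety to be the integrability and smoothness justification underpinning the last two steps: namely, that $u_\theta$ is differentiable in $\theta$ and that $\EE_x[\tau]<\infty$ uniformly over the relevant parameter range. Both facts follow immediately from finiteness of the state space together with $\Pr_x[\tau<\infty]=1$, so in the setting of this paper the obstacle reduces to citing standard absorbing-chain facts rather than proving anything new.
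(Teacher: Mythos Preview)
Your proposal is correct and essentially matches the paper's argument. The paper works in block-matrix form, writing $(I-Q_\theta)u_\theta=r_\theta$, differentiating to $(I-Q_\theta)g_\theta=s_\theta$, and then expanding via the Neumann series $(I-Q_\theta)^{-1}=\sum_k Q_\theta^k$ to obtain the trajectory representation---which is exactly your ``equivalent route'' of iterating the Poisson equation. Your primary route via the martingale $M_t$ and optional stopping is a minor stylistic variation (Dynkin's formula instead of Neumann series) that yields the same conclusion under the same finite-state hypotheses.
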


\begin{proof}
Let $\mathcal S$ be the finite state space and let $T\subset \mathcal S$ be the absorbing set.
Write $U:=\mathcal S\setminus T$ for the transient states. Reorder states so that transient
states come first, and write the transition matrix in block form
\[
P_\theta=\begin{pmatrix}
Q_\theta & R_\theta\\
0 & I
\end{pmatrix},
\]
where $Q_\theta$ is the $|U|\times |U|$ transient-to-transient submatrix. Fix a target
$t^\star\in T$ and define the absorption probability
\[
u_\theta(x):=\Pr_x^\theta[X_\tau=t^\star],\qquad \tau:=\inf\{t\ge0:X_t\in T\}.
\]
Then $u_\theta(t^\star)=1$ and $u_\theta(t)=0$ for $t\in T\setminus\{t^\star\}$, and for $x\in U$,
the harmonic equations read
\[
u_\theta(x)=\sum_{y\in U}Q_\theta(x,y)u_\theta(y)+\sum_{t\in T}R_\theta(x,t)u_\theta(t).
\]
Equivalently, in vector form on $U$,
\[
u_\theta = Q_\theta u_\theta + r_\theta,\qquad
r_\theta(x):=\sum_{t\in T}R_\theta(x,t)u_\theta(t)=R_\theta(x,t^\star).
\]
Hence
\begin{equation}
\label{eq:linear_system_u}
(I-Q_\theta)u_\theta=r_\theta.
\end{equation}

Differentiate \eqref{eq:linear_system_u} with respect to $\theta$:
\[
-(\partial_\theta Q_\theta)u_\theta + (I-Q_\theta)\,\partial_\theta u_\theta = \partial_\theta r_\theta.
\]
Let $g_\theta:=\partial_\theta u_\theta$. Rearranging gives the linear system on $U$,
\begin{equation}
\label{eq:poisson_g}
(I-Q_\theta)g_\theta = s_\theta,
\qquad
s_\theta:=\partial_\theta r_\theta+(\partial_\theta Q_\theta)u_\theta.
\end{equation}
Noting that $u_\theta$ is constant on $T$, we also have the boundary condition
$g_\theta|_T\equiv 0$.

Because the chain is absorbing, the spectral radius satisfies $\rho(Q_\theta)<1$, hence
$(I-Q_\theta)$ is invertible and
\[
(I-Q_\theta)^{-1}=\sum_{k=0}^\infty Q_\theta^k
\quad\text{(absolutely convergent).}
\]
Therefore from \eqref{eq:poisson_g},
\begin{equation}
\label{eq:g_neumann}
g_\theta = (I-Q_\theta)^{-1}s_\theta = \sum_{k=0}^\infty Q_\theta^k s_\theta.
\end{equation}

For $x\in U$, the vector $(Q_\theta^k s_\theta)(x)$ equals the expected source at time $k$
before absorption:
\[
(Q_\theta^k s_\theta)(x)=\sum_{y\in U}Q_\theta^k(x,y)\,s_\theta(y)
=\E_x^\theta\!\big[s_\theta(X_k)\,\mathbf 1\{k<\tau\}\big].
\]
Summing over $k$ and justifying the exchange of sum and expectation by dominated convergence
(since $|s_\theta|$ is bounded on the finite state space and $\E_x^\theta[\tau]<\infty$ for every
transient $x$ in a finite absorbing chain) gives
\[
g_\theta(x)=\sum_{k=0}^\infty \E_x^\theta\!\big[s_\theta(X_k)\,\mathbf 1\{k<\tau\}\big]
=\E_x^\theta\!\left[\sum_{k=0}^{\tau-1}s_\theta(X_k)\right].
\]
Since $g_\theta=\partial_\theta u_\theta$ and $g_\theta(t)=0$ for $t\in T$, this holds for all
$x\in\mathcal S$ (trivially on $T$). This is the desired representation.
\end{proof}

\subsection{Analysis of long-jump-absorbing chain}
\label{sec:lj_absorbing_chain_lemmas}

In this subsection, we analyze the long-jump-absorbing Markov chain defined in \cref{def:lj-absorbing-chain}. Our goal is to compute the source functions at each transient state $x$, which by \cref{lem:source_representation} determine the derivatives of the long-jump absorption probability. We establish in \cref{lem:source_symmetry_longjump_ours} that these sources exhibit symmetry between the two chains, and then derive explicit formulas and bounds for the $\alpha$, $\beta$, and $\gamma$ sources in subsequent lemmas. These results feed into \cref{sec:derivative_estimates_lj} where we obtain quantitative derivative estimates, and ultimately into \cref{sec:monotonicity} where we show monotonicity of the loss as a function of the parameters.

\begin{lemma}[Source symmetry]
\label{lem:source_symmetry_longjump_ours}
Work in the two--chain model and in the long--jump--absorbing chain of
Definition~\ref{def:lj-absorbing-chain}.
Let
\[
T:=\{a_n,b_n,e\},\qquad \tau:=\inf\{t\ge0:X_t\in T\},
\qquad
u_\theta(x):=\Pr_x^\theta[X_\tau=e].
\]
For $\theta\in\{\alpha,\beta,\gamma\}$ define the local source on transient states
\[
s_\theta(x):=\sum_{y} \big(\partial_\theta P_\theta(x,y)\big)\,u_\theta(y),
\qquad x\notin T,
\]
where $P_\theta$ is the transition matrix of the long--jump--absorbing chain.

Then for every $i\in\{1,\dots,n-1\}$ and every $\theta\in\{\alpha,\beta,\gamma\}$,
\[
s_\theta(a_i)=s_\theta(b_i).
\]
\end{lemma}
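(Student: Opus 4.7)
The plan is to exploit the chain--swap symmetry of the two--chain model and show it descends to the long--jump--absorbing chain, to the target absorption probability $u_\theta$, and hence (by a change of variable in the source formula) to $s_\theta$. Let $\varphi$ denote the involution on the state space defined by $\varphi(a_j):=b_j$, $\varphi(b_j):=a_j$, $\varphi(e):=e$.

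First I would check that $\varphi$ is a symmetry of the one--step kernel $P_\theta$ of the long--jump--absorbing chain: for every non--absorbing $x$ and every $y$,
\[
P_\theta(\varphi(x),\varphi(y))=P_\theta(x,y).
\]
This follows directly from the canonical logit structure in \cref{sec:canonical_chain}: the logit $o(v,C)_k$ depends only on the structural role of $k$ relative to $v$ (forward neighbor, backward neighbor, same/other chain terminal, otherwise), and these roles are preserved under $\varphi$. The long--jump--absorbing construction respects this symmetry because the set of long jumps out of $x$ is mapped bijectively to the set of long jumps out of $\varphi(x)$ (the definition of a long jump only uses $d(\cdot)$, which satisfies $d(\varphi(x))=d(x)$), and their total mass is redirected to the same $\varphi$--fixed state $e$. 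Differentiating in $\theta$ preserves this identity, so $\partial_\theta P_\theta(\varphi(x),\varphi(y))=\partial_\theta P_\theta(x,y)$ as well.

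Next I would deduce that the target absorption probability is $\varphi$--invariant, i.e.\ $u_\theta(\varphi(x))=u_\theta(x)$ for every state $x$. On the absorbing set this is immediate since $\varphi$ swaps $a_n\leftrightarrow b_n$ and fixes $e$, which matches $u_\theta(a_n)=u_\theta(b_n)=0$ and $u_\theta(e)=1$. For transient $x$, the harmonic equation from \cref{lem:source_representation}, combined with the kernel symmetry of the previous step, shows that both $u_\theta$ and $u_\theta\circ\varphi$ solve the same linear boundary value problem $(I-Q_\theta)f=r_\theta$ with identical boundary values; by uniqueness (since $I-Q_\theta$ is invertible) they coincide.

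Finally, to obtain $s_\theta(a_i)=s_\theta(b_i)$ I would apply a change of variables $y\mapsto\varphi(y)$ in the source formula and combine the two invariances:
\begin{align*}
s_\theta(b_i)
&=\sum_{y}\bigl(\partial_\theta P_\theta(b_i,y)\bigr)\,u_\theta(y)
=\sum_{y}\bigl(\partial_\theta P_\theta(\varphi(a_i),\varphi(y))\bigr)\,u_\theta(\varphi(y))\\
&=\sum_{y}\bigl(\partial_\theta P_\theta(a_i,y)\bigr)\,u_\theta(y)
=s_\theta(a_i).
\end{align*}
The only real content is the first step---verifying that the long--jump--absorbing construction is $\varphi$--equivariant---so that is where I would take care, but given the explicit logit cases of \cref{lem:dyn:logits_cases_canonical} and the fact that the long--jump predicate depends only on depth, this is a direct case check rather than a genuine obstacle. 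The remaining two steps are purely formal consequences.
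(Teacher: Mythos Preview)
Your proposal is correct and follows essentially the same approach as the paper: introduce the chain--swap involution $\varphi$, verify that the long--jump--absorbing kernel is $\varphi$--equivariant (hence so is its $\theta$-derivative), deduce $u_\theta\circ\varphi=u_\theta$, and conclude by reindexing $y\mapsto\varphi(y)$ in the source formula. The paper's proof is terser---it asserts $u_\theta$-invariance directly from the symmetry of $e$ rather than invoking uniqueness of the harmonic boundary value problem---but the argument is the same.
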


\begin{proof}
Fix $\theta$ and let $\varphi$ be the chain-swap involution: $\varphi(a_i)=b_i$, $\varphi(b_i)=a_i$
for $i\le n$, and $\varphi(e)=e$. By symmetry of the logits, the long--jump redirection, and the
softmax, the kernel $P_\theta$ is $\varphi$-invariant:
\[
P_\theta(\varphi(x),\varphi(y))=P_\theta(x,y)\qquad(\forall x,y).
\]
Hence $\partial_\theta P_\theta(\varphi(x),\varphi(y))=\partial_\theta P_\theta(x,y)$, and since the definition of $e$ is symmetric with respect to the two chains,
also $u_\theta(\varphi(x))=u_\theta(x)$. Therefore, for any transient $x\notin T$,
\[
s_\theta(\varphi(x))
=\sum_y \partial_\theta P_\theta(\varphi(x),y)\,u_\theta(y)
=\sum_y \partial_\theta P_\theta(x,y)\,u_\theta(y)
=s_\theta(x),
\]
where we reindexed $y\mapsto\varphi(y)$ and used $\varphi$-invariance of $\partial_\theta P_\theta$ and $u_\theta$.
\end{proof}

\begin{lemma}[Monotonicity of long-jump absorption]
\label{lem:u_monotone_and_recurrence}
Work in the long--jump--absorbing chain (Definition~\ref{def:lj-absorbing-chain}) with terminal index $n$
(so $T=\{a_n,b_n,e\}$ is absorbing) and depth function $d(\cdot)$ as in Definition~\ref{def:depth}.
Let $\tau:=\inf\{t\ge0:\ X_t\in T\}$ and define $u(v):=\Pr_v[X_\tau=e]$.
By chain--swap symmetry $u(a_i)=u(b_i)$; write $u_i:=u(a_i)=u(b_i)$, so $u_n=0$.

For $i=1,\dots,n-1$ define the depth-grouped probabilities (aggregating by the effect on depth):
\[
q_i:=\Pr(X_{t+1}=e\mid X_t\in\{a_i,b_i\}),\quad
f_i:=\Pr(d(X_{t+1})=d(X_t)-1\mid X_t\in\{a_i,b_i\}),
\]
\[
b_i:=\Pr(d(X_{t+1})=d(X_t)+1\mid X_t\in\{a_i,b_i\}),\quad
r_i:=\Pr(d(X_{t+1})=d(X_t)\mid X_t\in\{a_i,b_i\}),
\]
where the probabilities aggregate all allowed transitions by their effect on depth (ignoring which
chain the next state lies in). These satisfy $q_i+f_i+b_i+r_i=1$, with the boundary convention
$b_1:=0$, and $f_i>0$.

Define forward differences $d_i:=u_i-u_{i+1}$ for $i=1,\dots,n-1$, and set $d_0:=0$.
Then $d_i\ge 0$ for all $i$, hence
\[
u_1\ge u_2\ge\cdots\ge u_n=0.
\]
\end{lemma}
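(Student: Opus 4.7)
The natural approach is to exploit the chain-swap symmetry already noted in the lemma statement ($u(a_i)=u(b_i)=:u_i$) to reduce the vector-valued harmonic equation on $2n$ states to a scalar three-term recurrence in the index $i$, and then establish monotonicity by induction on $i$ using the boundary condition $b_1=0$. I would not attempt a coupling argument: the long-jump redirection to $e$ together with the depth-preserving transitions $r_i$ makes a clean monotone coupling awkward, whereas a direct recurrence argument is short and transparent.

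\textbf{Step 1: scalar harmonic equation.} Because $u$ depends only on depth, applying the $P$-harmonicity identity $u(x)=\sum_y P(x,y)u(y)$ at $x=a_i$ and aggregating transitions by their depth-change class gives, for $1\le i\le n-1$,
\[
(1-r_i)\,u_i \;=\; q_i\cdot 1 \;+\; f_i\,u_{i+1} \;+\; b_i\,u_{i-1},
\]
where the term $q_i$ comes from absorption at $e$ (which has $u(e)=1$) and I use $b_1=0$ when $i=1$. Here we also use that the depth-preserving $r_i$-transitions only move between $a_i$ and $b_i$, which share the same $u$-value.

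\textbf{Step 2: convert to a favorable quantity.} Substitute $v_i:=1-u_i\in[0,1]$, i.e.\ the probability of being absorbed in $\{a_n,b_n\}$ rather than $e$. Using $q_i+f_i+b_i+r_i=1$, the harmonic equation simplifies to
\[
(f_i+b_i+q_i)\,v_i \;=\; f_i\,v_{i+1} \;+\; b_i\,v_{i-1},
\]
which rearranges into the forward-difference recurrence
\[
f_i\,(v_{i+1}-v_i) \;=\; b_i\,(v_i-v_{i-1}) \;+\; q_i\,v_i.
\]
Since $d_i=u_i-u_{i+1}=v_{i+1}-v_i$, it suffices to show that $v_{i+1}-v_i\ge 0$ for all $i$.

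\textbf{Step 3: induction on $i$.} For the base case $i=1$, the boundary convention $b_1=0$ together with $f_1>0$, $q_1\ge 0$, and $v_1\ge 0$ yields $f_1(v_2-v_1)=q_1 v_1\ge 0$, hence $v_2\ge v_1$. For the inductive step, suppose $v_i-v_{i-1}\ge 0$. Then all three ingredients on the right of the recurrence are non-negative ($b_i\ge 0$, $v_i\ge 0$, $q_i\ge 0$), and dividing by $f_i>0$ gives $v_{i+1}-v_i\ge 0$. Thus $d_i\ge 0$ for all $i=1,\dots,n-1$, yielding the chain of inequalities $u_1\ge u_2\ge\cdots\ge u_n=0$.

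\textbf{Anticipated difficulty.} There is no substantial obstacle: the only subtle point is making sure the aggregation-by-depth in Step 1 is legitimate, which requires precisely the symmetry $u(a_i)=u(b_i)$ (so the $r_i$ mass at the depth-preserving transition contributes $r_i u_i$ rather than a mix) and the boundary convention $b_1=0$ (which removes the missing $v_0$ term in the base case). Once these are noted, the recurrence in Step 2 makes the sign propagation self-evident.
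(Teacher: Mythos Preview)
Your proof is correct and follows essentially the same approach as the paper: both reduce the harmonic equation to the three-term recurrence $f_i d_i = b_i d_{i-1} + q_i(1-u_i)$ (your $q_i v_i$ is exactly $q_i(1-u_i)$) and then propagate nonnegativity of $d_i$ by induction from the base case $b_1=0$. The only cosmetic difference is that you substitute $v_i=1-u_i$ before rearranging, whereas the paper substitutes $u_{i\pm1}=u_i\mp d_{i\mp}$ directly; the resulting identity and induction are identical.
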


\begin{proof}
Fix $i\in\{1,\dots,n-1\}$. Since $u$ is harmonic on transient states and depends only on depth by
symmetry, the harmonic equation at depth $i$ can be written using the depth--grouped probabilities:
\begin{equation}
\label{eq:harmonic_depth}
u_i = q_i\cdot u(e) + f_i u_{i+1} + b_i u_{i-1} + r_i u_i
= q_i + f_i u_{i+1} + b_i u_{i-1} + r_i u_i,
\end{equation}
with $u_n=0$ and $b_1=0$.
Rearranging gives
\[
(1-r_i)u_i = q_i + f_i u_{i+1} + b_i u_{i-1}.
\]
Substitute $u_{i+1}=u_i-d_i$ and $u_{i-1}=u_i+d_{i-1}$ to obtain
\[
(1-r_i)u_i = q_i + f_i(u_i-d_i) + b_i(u_i+d_{i-1})
= q_i + (f_i+b_i)u_i - f_i d_i + b_i d_{i-1}.
\]
Using $1-r_i=q_i+f_i+b_i$, cancel $(f_i+b_i)u_i$ from both sides, yielding
\[
q_i u_i = q_i - f_i d_i + b_i d_{i-1},
\]
and hence
\begin{equation}
\label{eq:di_recurrence_final}
f_i d_i = b_i d_{i-1} + q_i(1-u_i).
\end{equation}

Since $f_i>0$, $b_i\ge0$, $q_i\ge0$, and $1-u_i\ge0$, the identity \eqref{eq:di_recurrence_final}
implies $d_i\ge 0$ whenever $d_{i-1}\ge 0$. For $i=1$, we have $b_1=0$ and $d_0=0$, so
\[
f_1 d_1 = q_1(1-u_1)\ge 0 \quad\Rightarrow\quad d_1\ge 0.
\]
Inductively, $d_i\ge 0$ for all $i$, hence $u_i\ge u_{i+1}$ for all $i$, proving monotonicity.
\end{proof}

\begin{corollary}[Lower bounds on $d_i$ via forward--step events]
\label{cor:di_lower_bound_forward_steps}
Under the assumptions of Lemma~\ref{lem:u_monotone_and_recurrence}, for every $i=1,\dots,n-1$,
\[
d_i \;\ge\; \frac{q_i}{f_i}\,(1-u_i).
\]
Moreover, since the depth at state $a_i$ is $n-i$, reaching the terminal requires $n-i$ consecutive forward steps (each decreasing depth by $1$), and
\[
1-u_i \;\ge\; \Pr(\text{$(n-i)$ consecutive forward steps starting from state $a_i$}),
\]
hence
\[
d_i \;\ge\; \frac{q_i}{f_i}\,\Pr(\text{$(n-i)$ consecutive forward steps starting from state $a_i$}).
\]
In particular, if $f_k\ge f_{\min}>0$ for all $k\in\{i,i+1,\dots,n-1\}$, then
\[
\Pr(\text{$(n-i)$ consecutive forward steps starting from state $a_i$})
\;\ge\; \prod_{j=0}^{n-i-1} f_{i+j}\;\ge\;(f_{\min})^{n-i},
\]
and therefore
\[
d_i \;\ge\; \frac{q_i}{f_i}\prod_{j=0}^{n-i-1} f_{i+j}
\;\ge\; \frac{q_i}{f_i}(f_{\min})^{n-i}.
\]
\end{corollary}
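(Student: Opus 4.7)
The plan is to extract the first bound directly from the key recurrence derived inside Lemma~\ref{lem:u_monotone_and_recurrence}, namely
\[
f_i d_i \;=\; b_i d_{i-1} + q_i(1-u_i),
\]
and then to interpret $1-u_i$ probabilistically to obtain the remaining inequalities. Since Lemma~\ref{lem:u_monotone_and_recurrence} has already shown $d_{i-1}\ge 0$ and all of $b_i,q_i,1-u_i$ are nonnegative, the first step is immediate: drop the $b_i d_{i-1}$ term, which yields $f_i d_i \ge q_i(1-u_i)$ and (using $f_i>0$) the claimed $d_i \ge \tfrac{q_i}{f_i}(1-u_i)$.

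For the second bound, I would observe that $u_i = \Pr_{a_i}[X_\tau = e]$, so $1-u_i = \Pr_{a_i}[X_\tau \in \{a_n,b_n\}]$ is the probability that starting from $a_i$ the walk is absorbed at a genuine terminal rather than at the redirection state $e$. Any specific event that guarantees absorption at a true terminal is therefore a lower bound on $1-u_i$. The natural choice is the event $F_i$ that the chain makes $n-i$ consecutive forward steps starting from $a_i$: each such step decreases the depth by $1$ and is not a long jump (hence is not redirected to $e$), so after $n-i$ steps the chain reaches depth $0$, i.e.\ a terminal vertex. This yields $1-u_i \ge \Pr_{a_i}(F_i)$ and, by the first bound, the intermediate inequality of the corollary.

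For the third step, I would compute $\Pr_{a_i}(F_i)$ by unrolling the Markov property one step at a time. At any state of depth $k\in\{1,\dots,n-1\}$, the probability of a forward step (depth decreasing by $1$) is exactly $f_k$ by the definition in Lemma~\ref{lem:u_monotone_and_recurrence}, and this probability depends only on the current depth (by the chain-swap symmetry already used to write $u_i$ as a function of depth alone). Hence
\[
\Pr_{a_i}(F_i) \;=\; \prod_{j=0}^{n-i-1} f_{i+j},
\]
and the uniform bound $f_k \ge f_{\min}$ on $k\in\{i,\dots,n-1\}$ gives $\Pr_{a_i}(F_i) \ge (f_{\min})^{n-i}$. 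Combining with $d_i \ge \tfrac{q_i}{f_i}(1-u_i)$ produces the final two displays.

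I do not anticipate a genuine obstacle: the corollary is essentially a bookkeeping consequence of the recurrence \eqref{eq:di_recurrence_final} together with the probabilistic meaning of $1-u_i$. The only point that requires a small amount of care is justifying that conditional forward-step probabilities multiply cleanly, which follows from the Markov property and the fact that the depth-grouped quantity $f_k$ is well defined (i.e.\ equals the forward-step probability from either $a_k$ or $b_k$), as already established by the chain-swap symmetry used in Lemma~\ref{lem:u_monotone_and_recurrence}.
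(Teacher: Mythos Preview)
Your proposal is correct and follows essentially the same route as the paper: drop the nonnegative term $b_i d_{i-1}$ from the recurrence \eqref{eq:di_recurrence_final}, interpret $1-u_i$ as the probability of avoiding $e$, lower-bound it by the probability of $n-i$ consecutive forward steps, and factor that probability via the Markov property. The only slip is the sentence ``at any state of depth $k$ the probability of a forward step is exactly $f_k$''---the $f_i$ are indexed by chain position, not depth---but your final product $\prod_{j=0}^{n-i-1} f_{i+j}$ is the correct one, so this is purely cosmetic.
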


\begin{proof}
From \eqref{eq:di_recurrence_final} and $b_i d_{i-1}\ge 0$ we get
$f_i d_i \ge q_i(1-u_i)$, proving $d_i \ge \frac{q_i}{f_i}(1-u_i)$.

Next, $1-u_i=\Pr(\tau\neq \tau_e)=\Pr(\tau_e>\tau)$ is the probability of not hitting $e$ before
absorption. Since the depth at state $a_i$ is $n-i$, reaching the terminal $a_n$ (depth $0$) requires exactly $n-i$ consecutive forward steps. The event that the chain makes $n-i$ consecutive forward steps starting from $a_i$
implies that $e$ is not hit during those steps, hence is contained in $\{\tau_e>\tau\}$, giving
$1-u_i \ge \Pr(\text{$(n-i)$ consecutive forward steps from $a_i$})$.
The remaining inequalities follow by multiplying the corresponding one--step lower bounds and using
$f_k\ge f_{\min}$ on the indicated range $k\in\{i,\ldots,n-1\}$.
\end{proof}

\begin{lemma}[Long-jump absorption bound]
\label{lem:lj_value_bound_pe_tau}
For every start state \(x\notin T\),
\[
u(x)\ \le\ p(e)_{\max}\,\E_x[\tau].
\]
In particular, whenever \(\E_x[\tau]\le D(x)/\fwdbound\) (e.g. by applying Lemma~\ref{lem:exact-and-linear-upper} to the depth process under a drift gap \(\fwdbound>0\)),
\[
u(x)\ \le\ \frac{D(x)}{\fwdbound}\,p(e)_{\max}.
\]
\end{lemma}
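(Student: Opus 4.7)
The plan is to prove the bound by a simple first--passage / union bound argument, treating the absorption at $e$ as a sum of one--step hitting events over time. The key observation is that $e$ is absorbing, so the chain visits $e$ at most once; hence the event $\{X_\tau = e\}$ decomposes as the \emph{disjoint} union $\bigsqcup_{t\ge 0}\{X_{t+1}=e,\ t<\tau\}$. Taking probabilities and conditioning on $\mathcal F_t$ (equivalently, on $X_t$) via the Markov property gives
\[
\Pr_x[X_{t+1}=e,\ t<\tau]
=\E_x\!\left[\mathbf 1\{t<\tau\}\,P(X_t,e)\right]
\le p(e)_{\max}\,\Pr_x[t<\tau],
\]
since on $\{t<\tau\}$ the state $X_t$ is transient and $P(X_t,e)\le p(e)_{\max}$ by definition of the maximum.

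Summing over $t\ge 0$ and using the standard tail formula $\E_x[\tau]=\sum_{t\ge 0}\Pr_x[\tau>t]$ yields
\[
u(x)=\Pr_x[X_\tau=e]=\sum_{t\ge 0}\Pr_x[X_{t+1}=e,\ t<\tau]
\le p(e)_{\max}\sum_{t\ge 0}\Pr_x[t<\tau]
=p(e)_{\max}\,\E_x[\tau],
\]
which is the first claim. For the second claim, I would simply substitute the hypothesized bound $\E_x[\tau]\le D(x)/\fwdbound$ from Lemma~\ref{lem:exact-and-linear-upper} into the inequality just proved, obtaining $u(x)\le p(e)_{\max}\,D(x)/\fwdbound$.

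There is no real obstacle: the argument is a one--line union bound combined with the tail--sum formula for expectations of nonnegative integer random variables. The only point worth being careful about is the disjointness of the events $\{X_{t+1}=e,\ t<\tau\}$, which relies on $e$ being absorbing (so that once the chain hits $e$, it stays there and cannot contribute a second hitting event). Everything else is bookkeeping, and the second claim is an immediate substitution from an earlier lemma.
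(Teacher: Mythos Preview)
Your proposal is correct and is essentially the same argument as the paper's: both decompose the event $\{X_\tau=e\}$ via the at-most-once hitting indicator (the paper writes $u(x)=\E_x[N_e]$ with $N_e=\sum_{t<\tau}\mathbf 1\{X_{t+1}=e\}$, you write the corresponding disjoint union of events), condition on $\mathcal F_t$ to replace the one-step hit by $P(X_t,e)\le p(e)_{\max}$, and then sum to get $p(e)_{\max}\,\E_x[\tau]$. The second claim is handled identically by substitution.
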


\begin{proof}
Let \(\mathcal F_t:=\sigma(X_0,\dots,X_t)\) and write
\[
N_e:=\sum_{t=0}^{\tau-1}\mathbf 1\{X_{t+1}=e\}.
\]
Since \(e\) is absorbing and \(e\in T\), the event \(\{X_\tau=e\}\) occurs iff the chain makes a transition to \(e\) before absorption,
so \(u(x)=\Pr_x[X_\tau=e]=\E_x[N_e]\).
By the tower property and Markov property,
\[
\begin{aligned}
\E_x[N_e]
&:=\E_x\!\left[\sum_{t=0}^{\tau-1}
  \E_x\!\left[\mathbf 1\{X_{t+1}=e\}\mid \mathcal F_t\right]\right]\\
&=\E_x\!\left[\sum_{t=0}^{\tau-1} \widetilde P(X_t,e)\right]
 =\E_x\!\left[\sum_{t=0}^{\tau-1} p(e;X_t)\right]\\
&\le p(e)_{\max}\,\E_x[\tau],
\end{aligned}
\]
which proves the first display. The final inequality follows by substituting the upper bound on \(\E_x[\tau]\) from Lemma~\ref{lem:exact-and-linear-upper}.
\end{proof}

\begin{lemma}[$\alpha$-source in LJ chain]
\label{lem:source_alpha_longjump_absorbing}
Let \(\widetilde P\) be the transition kernel of the long--jump--absorbing chain and let
\(u(x):=\Pr_x[X_\tau=e]\) denote the absorption probability at \(e\)
(so \(u(e)=1\) and \(u(a_n)=u(b_n)=0\)).
Let \(F(x)\in S(x)\) denote the forward neighbor when it exists and define
\[
p_f(x):=\widetilde P(x,F(x))
\quad (\text{set }p_f(x):=0\text{ if }F(x)\text{ does not exist}).
\]
The $\alpha$--source in the long--jump--absorbing chain 
satisfies for every transient $x\notin T$,
\[
\boxed{
s_\alpha(x)=p_f(x)\big(u(F(x))-u(x)\big),
}
\]
with the convention that $s_\alpha(x)=0$ if $F(x)$ does not exist.
\end{lemma}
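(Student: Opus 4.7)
The plan is to compute the $\alpha$-derivative of the long-jump-absorbing kernel at each target $y$, plug it into the definition of $s_\alpha(x)$, and simplify using the fact that $u$ is harmonic for $\widetilde P$ on transient states. The key observation is that of the three parameters $\alpha,\beta,\gamma$, only $\alpha$ enters the softmax logit of the forward neighbor $F(x)$: inspecting the four cases of \cref{sec:canonical_chain} (interior, penultimate, second, first), the logit $o(x,C)_{F(x)}$ is either $\alpha$ or $\alpha-\gamma$, while every other logit (backward, chain-start, chain-end, generic vertex) has zero derivative with respect to $\alpha$. Hence $\partial_\alpha o(x,C)_y = \mathbbm{1}\{y=F(x)\}$ uniformly across the four cases.

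Next I will use the standard softmax derivative identity. Writing $P(x,y)$ for the canonical one-step probabilities, this yields, for every target vertex $y$,
\[
\partial_\alpha P(x,y) = P(x,y)\bigl(\mathbbm{1}\{y=F(x)\} - P(x,F(x))\bigr).
\]
Passing to the long-jump-absorbing kernel $\widetilde P$ requires only noting that the redirection to $e$ aggregates exactly those non-forward transitions that constitute long jumps (the forward step changes depth by~$1$, so it is never a long jump and is preserved under the redirection). The derivative of $\widetilde P(x,e)$ is the sum of $\partial_\alpha P(x,y)$ over long-jump targets $y$, which equals $-\widetilde P(x,e)\,p_f(x)$ by the above formula, so the same identity
\[
\partial_\alpha \widetilde P(x,y) = \widetilde P(x,y)\bigl(\mathbbm{1}\{y=F(x)\} - p_f(x)\bigr)
\]
holds uniformly for $y$ ranging over $S\setminus T$ together with $y=e$.

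Substituting into the definition of the source,
\[
s_\alpha(x) = \sum_y \partial_\alpha \widetilde P(x,y)\,u(y) = p_f(x)\,u(F(x)) - p_f(x)\sum_y \widetilde P(x,y)\,u(y),
\]
and invoking harmonicity of $u$ at the transient state $x$, $\sum_y \widetilde P(x,y)\,u(y) = u(x)$, collapses the right-hand side to $p_f(x)\bigl(u(F(x))-u(x)\bigr)$, which is the claimed formula. The edge case where $F(x)$ does not exist (only $a_n,b_n$, both absorbing and hence outside the scope of the lemma) is covered by the convention $p_f(x):=0$. I do not anticipate a genuine obstacle here: the only thing to be careful about is the bookkeeping around the LJ redirection, which only collapses the derivative sum for $y=e$ and does not affect the clean softmax identity above.
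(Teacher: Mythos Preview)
Your proposal is correct and follows essentially the same route as the paper: compute $\partial_\alpha P(x,y)$ via the softmax identity (only the forward logit depends on $\alpha$), push this through the long-jump redirection, and collapse using harmonicity of $u$. The only cosmetic difference is that the paper introduces an auxiliary value $w_x(y)$ (equal to $u(y)$ on short targets and $1$ on long-jump targets) so as to phrase everything in terms of the original kernel $P$, whereas you verify directly that the same derivative identity $\partial_\alpha \widetilde P(x,y)=\widetilde P(x,y)(\mathbbm{1}\{y=F(x)\}-p_f(x))$ holds for the redirected kernel and then invoke harmonicity for $\widetilde P$; the two are equivalent.
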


\begin{proof}
Fix a transient state $x\notin T$. Let $L(x)$ be the set of long--jump targets in the original chain. Since $\widetilde P(x,y)=P(x,y)$ for $y\in S(x)$ and
$\widetilde P(x,e)=\sum_{z\in L(x)}P(x,z)$, we have
\[
s_\alpha(x)
=
\sum_{y\in S(x)} (\partial_\alpha P(x,y))\,u(y)
+\left(\sum_{z\in L(x)}\partial_\alpha P(x,z)\right)u(e),
\]
and $u(e)=1$. Because $P(x,\cdot)$ is a softmax over the logits $\{o(x)_y\}$ in the original chain
and only the forward logit depends on $\alpha$ with derivative $1$, Lemma~\ref{lem:softmax_derivative_identity_aux} gives,
for each original target $y$,
\[
\partial_\alpha P(x,y)
=
P(x,y)\big(\mathbf 1\{y=F(x)\}-P(x,F(x))\big)
=
P(x,y)\big(\mathbf 1\{y=F(x)\}-p_f(x)\big).
\]
Introduce the auxiliary successor value
\[
w_x(y):=\begin{cases}
u(y), & y\in S(x),\\
1, & y\in L(x).
\end{cases}
\]
Then the definition of $s_\alpha(x)$ can be rewritten as
\[
s_\alpha(x)=\sum_y (\partial_\alpha P(x,y))\,w_x(y),
\]
where the sum ranges over all original targets $y$ from $x$.
Substituting the softmax derivative yields
\[
\begin{aligned}
s_\alpha(x)
&:=\sum_y P(x,y)\big(\mathbf 1\{y=F(x)\}-p_f(x)\big)w_x(y)\\
&=P(x,F(x))w_x(F(x))-p_f(x)\sum_y P(x,y)w_x(y).
\end{aligned}
\]
Since $w_x(F(x))=u(F(x))$ and $P(x,F(x))=p_f(x)$, this becomes
\[
s_\alpha(x)=p_f(x)\Big(u(F(x))-\sum_y P(x,y)w_x(y)\Big).
\]
Finally, harmonicity of $u$ for the long--jump--absorbing chain gives the aggregated identity
\[
\begin{aligned}
u(x)
&=\sum_{y\in S(x)}\widetilde P(x,y)u(y)+\widetilde P(x,e)u(e)\\
&=\sum_{y\in S(x)}P(x,y)u(y)+\sum_{z\in L(x)}P(x,z)\cdot 1\\
&=\sum_y P(x,y)w_x(y),
\end{aligned}
\]
so $\sum_y P(x,y)w_x(y)=u(x)$, and hence
\[
s_\alpha(x)=p_f(x)\big(u(F(x))-u(x)\big).
\]
If $F(x)$ does not exist then $\partial_\alpha o(x)_y\equiv 0$, hence $\partial_\alpha P(x,y)=0$
for all $y$, which implies $\partial_\alpha \widetilde P(x,y)=0$ for all $y\in S(x)\cup\{e\}$ and
thus $s_\alpha(x)=0$, consistent with the stated convention.
\end{proof}
\begin{lemma}[$\beta$-source in LJ chain]
\label{lem:source_beta_longjump_absorbing}
Let \(\widetilde P\) be the transition kernel of the long--jump--absorbing chain and let
\(u(x):=\Pr_x[X_\tau=e]\) (so \(u(e)=1\) and \(u(a_n)=u(b_n)=0\)).
Let \(B(x)\in S(x)\) denote the backward neighbor when it exists and define
\[
p_b(x):=\widetilde P(x,B(x))
\quad (\text{set }p_b(x):=0\text{ if }B(x)\text{ does not exist}).
\]
The $\beta$--source in the long--jump--absorbing chain 
satisfies for every transient $x\notin T$,
\[
\boxed{
s_\beta(x)=p_b(x)\big(u(B(x))-u(x)\big),
}
\]
with the convention that $s_\beta(x)=0$ if $B(x)$ does not exist.
\end{lemma}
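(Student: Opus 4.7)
The plan is to mirror, almost verbatim, the proof of the $\alpha$-source identity (Lemma~\ref{lem:source_alpha_longjump_absorbing}) with the forward neighbor $F(x)$ replaced by the backward neighbor $B(x)$. The key structural fact that makes this work is that the logits $\{o(x)_y\}$ that drive the softmax in the original (pre-redirection) chain are parameterized so that only the single logit at the backward neighbor depends on $\beta$, and with derivative exactly $1$. Consequently, the $\beta$-sensitivity of $P(x,\cdot)$ has exactly the same softmax form as the $\alpha$-sensitivity, just centered on $B(x)$ rather than $F(x)$.

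First, I would fix a transient $x\notin T$ and expand
\[
s_\beta(x)=\sum_{y\in S(x)}(\partial_\beta P(x,y))\,u(y)+\Big(\sum_{z\in L(x)}\partial_\beta P(x,z)\Big)\,u(e),
\]
using $\widetilde P(x,y)=P(x,y)$ for ordinary successors $y\in S(x)$ and $\widetilde P(x,e)=\sum_{z\in L(x)}P(x,z)$ for the absorbing long-jump target, together with $u(e)=1$. Next, I would invoke the softmax-derivative identity (Lemma~\ref{lem:softmax_derivative_identity_aux}) applied to $\beta$ to obtain, for every original target $y$,
\[
\partial_\beta P(x,y)=P(x,y)\big(\mathbbm{1}\{y=B(x)\}-p_b(x)\big),
\]
since only $o(x)_{B(x)}$ depends on $\beta$ (with $\partial_\beta o(x)_{B(x)}=1$).

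The tidy part of the argument is the same auxiliary bookkeeping trick used for $s_\alpha$: define $w_x(y):=u(y)$ for $y\in S(x)$ and $w_x(z):=1$ for $z\in L(x)$, so that
\[
s_\beta(x)=\sum_y(\partial_\beta P(x,y))\,w_x(y)=P(x,B(x))\,w_x(B(x))-p_b(x)\sum_y P(x,y)\,w_x(y),
\]
with the sum over all original targets. Using $w_x(B(x))=u(B(x))$, $P(x,B(x))=p_b(x)$, and the harmonicity identity
\[
u(x)=\sum_{y\in S(x)}P(x,y)\,u(y)+\sum_{z\in L(x)}P(x,z)=\sum_y P(x,y)\,w_x(y),
\]
one collapses the right-hand side to $p_b(x)\bigl(u(B(x))-u(x)\bigr)$, which is the claimed formula.

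I do not anticipate a genuine obstacle: the $\alpha$-case already supplies the template, and the only check is that the logit structure really does isolate $\beta$ in a single coordinate (the backward neighbor). The only mild wrinkle is handling the boundary case where $B(x)$ does not exist---i.e., $x$ is the first vertex of its chain---in which case $\partial_\beta o(x)_y\equiv 0$ for every $y$, so every $\partial_\beta P(x,y)$ vanishes, hence $\partial_\beta\widetilde P(x,\cdot)\equiv 0$ and $s_\beta(x)=0$, matching the stated convention. This is the only place where care is needed to line up with Lemma~\ref{lem:dyn:logits_cases_canonical}'s case distinctions for chain-start vertices.
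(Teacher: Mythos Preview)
Your proposal is correct and follows essentially the same approach as the paper's proof: both mirror the $\alpha$-source argument by expanding $s_\beta(x)$ via the long-jump redirection, applying the softmax-derivative identity to isolate $B(x)$, introducing the auxiliary $w_x$ to fold the long-jump targets into a single expectation, and then invoking harmonicity of $u$ to collapse $\sum_y P(x,y)w_x(y)$ to $u(x)$. Your treatment of the boundary case $B(x)$ absent is also the same as the paper's.
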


\begin{proof}
Fix a transient state $x\notin T$. Let $L(x)$ be the set of long--jump targets in the original chain. Since $\widetilde P(x,y)=P(x,y)$ for $y\in S(x)$ and
$\widetilde P(x,e)=\sum_{z\in L(x)}P(x,z)$, we have
\[
s_\beta(x)
=
\sum_{y\in S(x)} (\partial_\beta P(x,y))\,u(y)
+\left(\sum_{z\in L(x)}\partial_\beta P(x,z)\right)u(e),
\]
and $u(e)=1$. Because $P(x,\cdot)$ is a softmax over the logits $\{o(x)_y\}$ in the original chain
and only the backward logit depends on $\beta$ with derivative $1$, Lemma~\ref{lem:softmax_derivative_identity_aux} gives,
for each original target $y$,
\[
\partial_\beta P(x,y)
=
P(x,y)\big(\mathbf 1\{y=B(x)\}-P(x,B(x))\big)
=
P(x,y)\big(\mathbf 1\{y=B(x)\}-p_b(x)\big).
\]
Introduce the auxiliary successor value
\[
w_x(y):=\begin{cases}
u(y), & y\in S(x),\\
1, & y\in L(x).
\end{cases}
\]
Then the definition of $s_\beta(x)$ can be rewritten as
\[
s_\beta(x)=\sum_y (\partial_\beta P(x,y))\,w_x(y),
\]
where the sum ranges over all original targets $y$ from $x$.
Substituting the softmax derivative yields
\[
\begin{aligned}
s_\beta(x)
&:=\sum_y P(x,y)\big(\mathbf 1\{y=B(x)\}-p_b(x)\big)w_x(y)\\
&=P(x,B(x))w_x(B(x))-p_b(x)\sum_y P(x,y)w_x(y).
\end{aligned}
\]
Since $w_x(B(x))=u(B(x))$ and $P(x,B(x))=p_b(x)$, this becomes
\[
s_\beta(x)=p_b(x)\Big(u(B(x))-\sum_y P(x,y)w_x(y)\Big).
\]
Finally, harmonicity of $u$ for the long--jump--absorbing chain gives the aggregated identity
\[
\begin{aligned}
u(x)
&=\sum_{y\in S(x)}\widetilde P(x,y)u(y)+\widetilde P(x,e)u(e)\\
&=\sum_{y\in S(x)}P(x,y)u(y)+\sum_{z\in L(x)}P(x,z)\cdot 1\\
&=\sum_y P(x,y)w_x(y),
\end{aligned}
\]
so $\sum_y P(x,y)w_x(y)=u(x)$, and hence
\[
s_\beta(x)=p_b(x)\big(u(B(x))-u(x)\big).
\]
If $B(x)$ does not exist then $\partial_\beta o(x)_y\equiv 0$, hence $\partial_\beta P(x,y)=0$
for all $y$, which implies $\partial_\beta \widetilde P(x,y)=0$ for all $y\in S(x)\cup\{e\}$ and
thus $s_\beta(x)=0$, consistent with the stated convention.
\end{proof}

\begin{lemma}[$\gamma$-source in LJ chain]
\label{lem:gamma_source_all_cases}
\label{lem:gamma_source_longjump_absorb_unified_corrected} %
Work in the long--jump--absorbing chain (Definition~\ref{def:lj-absorbing-chain}) with terminal index $n$
(so \(T:=\{a_n,b_n,e\}\) is absorbing) and assume self--transitions are disallowed.
Fix parameters \(\alpha,\beta,\gamma\).  Let
\[
u(v):=\Pr_v[X_\tau=e],\qquad \tau:=\inf\{t\ge 0:\ X_t\in T\},
\]
so \(u(e)=1\) and \(u(a_n)=u(b_n)=0\).  By chain--swap symmetry, \(u(a_i)=u(b_i)\);
write \(u_i:=u(a_i)=u(b_i)\) and \(u_n=0\).

For a transient state \(x\notin T\), define the local \(\gamma\)--source
\[
s_\gamma(x):=\sum_{z} (\partial_\gamma \widetilde P(x,z))\,u(z),
\]
where \(\widetilde P\) is the transition kernel of the long--jump--absorbing chain.
Then the following explicit formulas hold.

\medskip
\noindent\textup{(A) Deep middle \(x=a_i\), \(2<i<n-1\).}
Let
\[
Z_{\mathrm{mid}}:=e^{\alpha}+e^{\beta}+4+(2n-7)e^{\gamma}.
\]
Then
\[
s_\gamma(a_i)
=
\frac{e^{\gamma}}{Z_{\mathrm{mid}}}
\Big((2n-10)(1-u_i)\;+\;u_{i-1}+u_{i+1}-2u_i\Big).
\]

\medskip
\noindent\textup{(B) First vertex \(x=a_1\).}
Let
\[
Z_{1}:=e^{\alpha}+3+(2n-5)e^{\gamma}.
\]
Then
\[
s_\gamma(a_1)
=
\frac{e^{\gamma}}{Z_{1}}
\Big((2n-6)(1-u_1)\;+\;(u_2-u_1)\Big).
\]

\medskip
\noindent\textup{(C) Second vertex \(x=a_2\).}
Let
\[
Z_{2}:=e^{\alpha}+e^{(\beta-\gamma)}+3+(2n-6)e^{\gamma}.
\]
Then
\[
s_\gamma(a_2)
=
\frac{(2n-8)e^{\gamma}}{Z_{2}}(1-u_2)
+\frac{e^{\gamma}}{Z_{2}}(u_3-u_2)
+\frac{e^{(\beta-\gamma)}}{Z_{2}}(u_2-u_1).
\]

\medskip
\noindent\textup{(D) Penultimate vertex \(x=a_{n-1}\).}
Let
\[
Z_{n-1}:=e^{(\alpha-\gamma)}+e^{\beta}+3+(2n-6)e^{\gamma}.
\]
Then
\[
s_\gamma(a_{n-1})
=
\frac{(2n-8)e^{\gamma}}{Z_{n-1}}(1-u_{n-1})
+\frac{e^{\gamma}}{Z_{n-1}}(u_{n-2}-u_{n-1})
+\frac{e^{(\alpha-\gamma)}}{Z_{n-1}}\,u_{n-1}.
\]
\end{lemma}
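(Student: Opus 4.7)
The plan is to compute $s_\gamma(x)$ by unpacking the softmax derivative and then collapsing the expression via harmonicity of $u$ in the long-jump-absorbing chain, following the same pattern used for $s_\alpha$ and $s_\beta$. By the softmax derivative identity (Lemma~\ref{lem:softmax_derivative_identity_aux}) applied to the original kernel $P$,
\begin{equation*}
\partial_\gamma P(x,y) = P(x,y)\bigl(\partial_\gamma o(x)_y - E_\gamma(x)\bigr), \qquad E_\gamma(x) := \sum_z P(x,z)\,\partial_\gamma o(x)_z.
\end{equation*}
Following the convention of Lemmas~\ref{lem:source_alpha_longjump_absorbing}--\ref{lem:source_beta_longjump_absorbing}, extend $u$ to the full original state space via $\tilde u(y):=u(y)$ for $y\in S(x)$ and $\tilde u(y):=1=u(e)$ for $y\in L(x)$; then $s_\gamma(x)=\sum_y \partial_\gamma P(x,y)\,\tilde u(y)$. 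Distributing and using the harmonicity identity $\sum_y P(x,y)\,\tilde u(y)=u(x)$ established at the end of Lemma~\ref{lem:source_alpha_longjump_absorbing} yields the master expression
\begin{equation*}
s_\gamma(x) \;=\; \sum_y P(x,y)\,\partial_\gamma o(x)_y\,\bigl(\tilde u(y)-u(x)\bigr).
\end{equation*}

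From the logit tables in Section~\ref{sec:canonical_chain}, $\partial_\gamma o(x)_y$ equals $+1$ exactly on the ``otherwise'' targets, equals $-1$ on the unique target whose logit is $\alpha-\gamma$ (the forward neighbor in Case~D) or $\beta-\gamma$ (the backward neighbor in Case~C), and $0$ on every other class. Because each ``otherwise'' target $y$ satisfies $P(x,y)=e^\gamma/Z$, the $+1$ contributions collapse into $(e^\gamma/Z)\sum_{z\ \text{oth}}\bigl(\tilde u(z)-u(x)\bigr)$. A single $-1$ contribution (when present) yields an additional term $(e^{\beta-\gamma}/Z_2)\bigl(u(x)-\tilde u(a_1)\bigr)$ in Case~C and $(e^{\alpha-\gamma}/Z_{n-1})\bigl(u(x)-\tilde u(a_n)\bigr)$ in Case~D; substituting $\tilde u(a_1)=u_1$ and $\tilde u(a_n)=0$ reproduces the two asymmetric boundary terms displayed in those cases.

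The remainder is bookkeeping. For each case I verify the partition function $Z$ by counting tokens in each logit class, and then split the ``otherwise'' set into short and long parts. In Case~A, the short otherwise targets are $b_{i-1},b_i,b_{i+1}$ (contributing $u_{i-1}-u_i$, $0$, $u_{i+1}-u_i$) and the remaining $2n-10$ long otherwise targets each contribute $1-u_i$; in Case~B the only short otherwise target is $b_2$ (contributing $u_2-u_1$), with $2n-6$ long otherwise targets contributing $1-u_1$; Cases~C and~D have short otherwise sets $\{b_2,b_3\}$ and $\{b_{n-2},b_{n-1}\}$ respectively, with $2n-8$ long otherwise targets, plus the boundary term from the preceding paragraph. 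Summing the collapsed $(e^\gamma/Z)$ and boundary contributions reproduces exactly the four displayed formulas.

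The main obstacle is purely combinatorial: correctly classifying every vertex in each of the four cases, particularly separating the short ``otherwise'' neighbors (whose $\tilde u$ equals a true $u_j$) from the long ``otherwise'' targets (absorbed to $e$, hence $\tilde u=1$), and handling the boundary vertices $a_1,a_2,a_{n-1},a_n,b_1,b_n$ consistently across the four logit tables. No further analytic input is required.
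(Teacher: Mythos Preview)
Your proof is correct and follows essentially the same approach as the paper: both extend $u$ to the auxiliary value $w_x$ (your $\tilde u$), invoke the softmax derivative identity, use harmonicity to center, and then enumerate the $\gamma$-dependent targets case by case. The only cosmetic difference is that you write the centered master formula $s_\gamma(x)=\sum_y P(x,y)\,\partial_\gamma o(x)_y\,(\tilde u(y)-u(x))$ explicitly at the outset, whereas the paper does the centering inside each case analysis; the resulting enumeration of short versus long ``otherwise'' targets and the handling of the $-1$ boundary contributions are identical.
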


\begin{proof}
We argue as in Lemmas~\ref{lem:source_alpha_longjump_absorbing} and~\ref{lem:source_beta_longjump_absorbing}.

Fix a transient state \(x\notin T\). Let \(L(x)\) be the set of long--jump targets in the original chain.
Since \(\widetilde P(x,y)=P(x,y)\) for \(y\in S(x)\) and \(\widetilde P(x,e)=\sum_{z\in L(x)}P(x,z)\), we have
\[
s_\gamma(x)
=
\sum_{y\in S(x)} (\partial_\gamma P(x,y))\,u(y)
+\left(\sum_{z\in L(x)}\partial_\gamma P(x,z)\right)u(e),
\]
and \(u(e)=1\). Introduce the auxiliary successor value
\[
w_x(y):=\begin{cases}
u(y), & y\in S(x),\\
1, & y\in L(x).
\end{cases}
\]
Then the definition of \(s_\gamma(x)\) can be rewritten as
\begin{equation}
\label{eq:source_redirected_sum}
s_\gamma(x)=\sum_y (\partial_\gamma P(x,y))\,w_x(y),
\end{equation}
where the sum ranges over all original targets \(y\) from \(x\).
Harmonicity of \(u\) for the long--jump--absorbing chain gives \(u(x)=\sum_y P(x,y)\,w_x(y)\).

Because \(P(x,\cdot)\) is a softmax over the logits \(\{o(x)_y\}\) in the original chain,
Lemma~\ref{lem:softmax_derivative_identity_aux} gives, for each original target \(y\),
\[
\partial_\gamma P(x,y)=P(x,y)\big(\partial_\gamma o(x,y)-\sum_z P(x,z)\partial_\gamma o(x,z)\big).
\]
In each of the four cases below, \(\partial_\gamma o(x,y)\) is constant on the
\(\gamma\)--dependent classes of targets, so \eqref{eq:source_redirected_sum}
reduces to a finite sum over those classes.  We now enumerate the \(\gamma\)--dependent
targets and whether they are short/long in each case; all remaining targets have
\(\partial_\gamma o(x,y)=0\) and contribute nothing to \(s_\gamma(x)\).

\smallskip
\noindent\emph{(A) Deep middle \(x=a_i\), \(2<i<n-1\).}
The \(\gamma\)--dependent logits are exactly the ``otherwise'' targets, which have
logit \(\gamma\), hence \(\partial_\gamma o=1\).  There are \(2n-7\) such targets:
the three short switches \(b_{i-1},b_i,b_{i+1}\) and \(2n-10\) others.  Among these,
\(b_{i-1}\) and \(b_{i+1}\) are short with values \(u_{i-1},u_{i+1}\), while \(b_i\) is short with value
\(u_i\) (hence contributes zero after centering), and the remaining \(2n-10\) are long jumps (value \(1\)).
All these have probability \(e^{\gamma}/Z_{\mathrm{mid}}\), where
\(Z_{\mathrm{mid}}=e^{\alpha}+e^{\beta}+4+(2n-7)e^{\gamma}\).  Substituting into
\eqref{eq:source_redirected_sum} yields the stated formula for \(s_\gamma(a_i)\).

\smallskip
\noindent\emph{(B) First vertex \(x=a_1\).}
The unique \(\alpha\)--target is \(a_2\) (logit \(\alpha\)); the special zero--logit
targets are \(b_1,a_n,b_n\); all remaining \(2n-5\) targets have logit \(\gamma\) and thus
\(\partial_\gamma o=1\).  Among the \(\gamma\)--targets, the only short one is \(b_2\),
with value \(u_2\); the other \(2n-6\) \(\gamma\)--targets are long jumps (value \(1\)).
All \(\gamma\)--targets have probability \(e^{\gamma}/Z_1\) with
\(Z_1=e^{\alpha}+3+(2n-5)e^{\gamma}\).  Plugging into \eqref{eq:source_redirected_sum}
gives the stated expression for \(s_\gamma(a_1)\).

\smallskip
\noindent\emph{(C) Second vertex \(x=a_2\).}
Here \(a_3\) has logit \(\alpha\), \(a_1\) has logit \((\beta-\gamma)\) so
\(\partial_\gamma o(a_2,a_1)=-1\), the special zero--logit targets are \(b_1,a_n,b_n\),
and all remaining \(2n-6\) targets have logit \(\gamma\) and \(\partial_\gamma o=1\).
Among these \(\gamma\)--targets, the short ones are \(b_2\) (value \(u_2\), hence cancels) and
\(b_3\) (value \(u_3\)); the remaining \(2n-8\) \(\gamma\)--targets are long jumps (value \(1\)).
The softmax denominator is
\(Z_2=e^{\alpha}+e^{(\beta-\gamma)}+3+(2n-6)e^{\gamma}\),
so \(P(a_2,a_1)=e^{(\beta-\gamma)}/Z_2\) and each \(\gamma\)--target has probability
\(e^{\gamma}/Z_2\).  Substituting these contributions into \eqref{eq:source_redirected_sum}
yields the stated formula for \(s_\gamma(a_2)\).

\smallskip
\noindent\emph{(D) Penultimate vertex \(x=a_{n-1}\).}
Here \(a_n\) has logit \((\alpha-\gamma)\) so \(\partial_\gamma o(a_{n-1},a_n)=-1\),
\(a_{n-2}\) has logit \(\beta\), the special zero--logit targets are \(a_1,b_1,b_n\),
and all remaining \(2n-6\) targets have logit \(\gamma\) and \(\partial_\gamma o=1\).
Among these \(\gamma\)--targets, the short ones are \(b_{n-2}\) (value \(u_{n-2}\)) and
\(b_{n-1}\) (value \(u_{n-1}\), hence cancels); the remaining \(2n-8\) \(\gamma\)--targets are
long jumps (value \(1\)).  The denominator is
\(Z_{n-1}=e^{(\alpha-\gamma)}+e^{\beta}+3+(2n-6)e^{\gamma}\),
so \(P(a_{n-1},a_n)=e^{(\alpha-\gamma)}/Z_{n-1}\) and each \(\gamma\)--target has probability
\(e^{\gamma}/Z_{n-1}\).  Plugging these into \eqref{eq:source_redirected_sum} gives the stated
formula for \(s_\gamma(a_{n-1})\).
\end{proof}

\begin{lemma}[$\gamma$-source sign bounds]
\label{lem:gamma_source_lower_bounds}
\label{lem:gamma_source_bounds_qmid_etau}
Assume the setting and notation of Lemma~\ref{lem:gamma_source_all_cases} and
Lemma~\ref{lem:u_monotone_and_recurrence}. Let \(\tau\) be the absorption time into
\(T=\{a_n,b_n,e\}\). For a deep-middle index \(i\) with \(2<i<n-1\), define the
deep-middle one-step absorption probability
\[
q_{\rm mid}:=\widetilde P(a_i,e)=\frac{(2n-10)e^{\gamma}}{Z_{\rm mid}}.
\]
Assume a forward-drift condition at the depths under consideration:
\begin{equation}
\label{eq:drift_f_ge_b}
f_j\ge b_j \qquad \text{for all relevant depths } j.
\end{equation}
Then there exist constants \(n_0\in\N\) and \(C_\star\ge 1\), depending only on
\((\alpha,\beta,\gamma)\), such that for all \(n\ge n_0\) the following bounds hold.

\paragraph*{(A) Deep middle (\(x=a_j\), \(2<j<n-1\)).}
\[
s_\gamma(a_j)\ \ge\ \frac{1}{2}\,q_j\,(1-u_j)\ >\ 0,
\qquad
q_j:=\widetilde P(a_j,e).
\]
In particular, since \(q_j\ge C_\star^{-1}q_{\rm mid}\) for \(n\ge n_0\),
\[
s_\gamma(a_j)\ \ge\ \frac{1}{2C_\star}\,q_{\rm mid}\,(1-u_j)\ >\ 0.
\]

\paragraph*{(B) First vertex (\(x=a_1\)).}
\[
s_\gamma(a_1)\ \ge\ \frac{1}{2}\,q_1\,(1-u_1)\ >\ 0,
\qquad q_1:=\widetilde P(a_1,e).
\]

\paragraph*{(C) Penultimate vertex (\(x=a_{n-1}\)).}
\[
s_\gamma(a_{n-1})\ \ge\ q_{n-1}\,(1-u_{n-1})\ \ge\ 0,
\qquad q_{n-1}:=\widetilde P(a_{n-1},e).
\]

\paragraph*{(D) Second vertex (\(x=a_2\)).}
Let \(q_2:=\widetilde P(a_2,e)\). Then
\[
s_\gamma(a_2)\ \ge\ -(d_1+d_2),
\qquad d_1:=u_1-u_2,\ \ d_2:=u_2-u_3,
\]
and moreover, for all \(n\ge n_0\),
\[
d_1+d_2\ \le\ 3C_\star\,q_{\rm mid}\,\E_{a_2}[\tau],
\qquad\text{hence}\qquad
s_\gamma(a_2)\ \ge\ -3C_\star\,q_{\rm mid}\,\E_{a_2}[\tau].
\]
Moreover, for all \(n\ge n_0\),
\[
|s_\gamma(a_2)|
\ \le\
q_2 + (d_1+d_2)
\ \le\
4C_\star\,q_{\rm mid}\,\E_{a_2}[\tau],
\]
using \(q_2\le C_\star q_{\rm mid}\) from \eqref{eq:q_comparable} and \(\E_{a_2}[\tau]\ge 1\).
By symmetry, the same bounds hold with \(a_2\) replaced by \(b_2\).

Finally, under \eqref{eq:drift_f_ge_b} one has the crude polynomial bound
\[
\E_{a_2}[\tau]\ \le\ \mathrm{poly}(n),
\]
and under the stronger drift condition \(\inf_j (f_j-b_j)\ge\Delta>0\) one has
\[
\E_{a_2}[\tau]\ \le\ \frac{C}{\Delta}\,n
\]
for a constant \(C\) depending only on the (logit-determined) depth-grouped probabilities.
\end{lemma}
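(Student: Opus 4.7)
The plan is to treat the four cases separately using the explicit formulas for $s_\gamma$ from \cref{lem:gamma_source_all_cases}, combined with the harmonic recurrence $f_i d_i = b_i d_{i-1} + q_i(1-u_i)$ from \cref{lem:u_monotone_and_recurrence} and the drift hypothesis $f_j \ge b_j$. The central quantitative tool, obtained by rearranging the recurrence, is the single-step increment bound
\[
d_j - d_{j-1} \;\le\; \frac{q_j}{f_j}\,(1-u_j),
\]
which controls how much the forward differences can grow between consecutive depths and will be used to dominate the correction terms in each explicit formula. The constant $C_\star$ will come from comparing the partition functions $Z_1, Z_2, Z_{n-1}, Z_{\mathrm{mid}}$ across the four cases, which differ only by additive $O(1)$ terms depending on $(\alpha,\beta,\gamma)$; this uniformly yields the comparabilities $q_j \ge C_\star^{-1} q_{\mathrm{mid}}$ and $p(e)_{\max}\le C_\star q_{\mathrm{mid}}$.

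For the deep-middle case (A), substitute $u_{j-1}+u_{j+1}-2u_j = d_{j-1}-d_j$ into the formula; the main piece $(2n-10)(1-u_j)$ weighted by $e^\gamma/Z_{\mathrm{mid}}$ is exactly $q_{\mathrm{mid}}(1-u_j)$, and the correction $(e^\gamma/Z_{\mathrm{mid}})(d_{j-1}-d_j)$ is bounded below by $-q_{\mathrm{mid}}\bigl(e^\gamma/(e^\alpha+e^\gamma)\bigr)(1-u_j)$ via the increment bound above. Since $e^\gamma/(e^\alpha+e^\gamma)$ is a parameter-only constant strictly less than $1$, for $n \ge n_0$ large enough (to absorb the $O(1)$ spread between $q_j$ and $q_{\mathrm{mid}}$) the correction leaves at least half the main term, yielding $s_\gamma(a_j) \ge \tfrac{1}{2}q_j(1-u_j)$. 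Case (B), the first vertex, is analogous: the formula has the single correction $u_2-u_1 = -d_1$, and the boundary form $b_1=0$ of the recurrence gives $d_1 = q_1(1-u_1)/f_1$, so again the correction is bounded by a constant fraction of the main $(2n-6)(1-u_1)$ term for $n$ large. For the penultimate case (C), all three summands in the explicit formula are non-negative by $u_n=0$ and the monotonicity $u_{n-2}\ge u_{n-1}$ from \cref{lem:u_monotone_and_recurrence}; thus $s_\gamma(a_{n-1})\ge q_{n-1}(1-u_{n-1})$ follows immediately from the leading $(2n-8)e^\gamma/Z_{n-1}$ term alone.

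The second-vertex case (D) is the main obstacle because the formula contains the negative term $(e^{\beta-\gamma}/Z_2)(u_2-u_1) = -(e^{\beta-\gamma}/Z_2)\,d_1$, whose coefficient is not $q_{\mathrm{mid}}$-small, so the dominance strategy fails. Using $e^\gamma/Z_2\le 1$ and $e^{\beta-\gamma}/Z_2\le 1$ and discarding the positive leading term one reads off $s_\gamma(a_2)\ge -(d_1+d_2)$ directly from the formula, and similarly $|s_\gamma(a_2)|\le q_2 + (d_1+d_2)$ after absorbing the leading term into $q_2$ via the long-jump count. It remains to control $d_1+d_2 = u_1-u_3$; by monotonicity $u_1-u_3\le u_1$, and \cref{lem:lj_value_bound_pe_tau} gives $u_1\le p(e)_{\max}\,\E_{a_1}[\tau]$, where a one-step comparison between hitting times yields $\E_{a_1}[\tau] \le \E_{a_2}[\tau]+O(1)$. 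Combining with $p(e)_{\max}\le C_\star q_{\mathrm{mid}}$ produces $d_1+d_2 \le 3 C_\star q_{\mathrm{mid}}\,\E_{a_2}[\tau]$ and hence $|s_\gamma(a_2)|\le 4 C_\star q_{\mathrm{mid}}\,\E_{a_2}[\tau]$. The stated polynomial and linear upper bounds on $\E_{a_2}[\tau]$ then follow by applying the referenced exact/linear hitting-time lemma to the depth process, whose drift is $f_j-b_j \ge 0$ under the drift hypothesis and $\ge \Delta$ under the stronger assumption.
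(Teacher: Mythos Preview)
Your proposal is correct and follows essentially the same approach as the paper: both use the increment inequality $d_{j-1}-d_j\ge -(q_j/f_j)(1-u_j)$ derived from the recurrence of \cref{lem:u_monotone_and_recurrence} under $f_j\ge b_j$ to handle cases~(A)--(B), drop nonnegative terms for~(C), and for~(D) obtain the crude bound $s_\gamma(a_2)\ge -(d_1+d_2)$ together with the hazard/expected-absorption bound $u(x)\le p(e)_{\max}\E_x[\tau]$ to control $d_1+d_2$. The only cosmetic difference is that you bound $d_1+d_2=u_1-u_3\le u_1$ (slightly tighter), while the paper uses $d_1+d_2\le u_1+u_2$; both feed into the same $\E_{a_1}[\tau]\le 2\E_{a_2}[\tau]$ comparison and produce the stated $3C_\star$ constant.
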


\begin{proof}
We use the explicit formulas of Lemma~\ref{lem:gamma_source_all_cases} and the
difference recurrence of Lemma~\ref{lem:u_monotone_and_recurrence}.

\medskip
\noindent\emph{Step 1: Comparability of one-step \(e\)-absorption probabilities.}
For each transient depth \(j\), the quantity \(q_j=\widetilde P(a_j,e)\) equals the total probability
of the long-jump class, whose numerator is \((2n+O(1))e^{\gamma}\) and whose denominator equals
\((2n+O(1))e^{\gamma}+O(1)\), where the \(O(1)\) terms depend only on \((\alpha,\beta,\gamma)\) and
come from the finitely many non-\(\gamma\) logits and the finitely many special zero-logit targets.
Consequently, there exist \(n_0\) and \(C_\star\ge 1\) depending only on \((\alpha,\beta,\gamma)\) such that
for all \(n\ge n_0\),
\begin{equation}
\label{eq:q_comparable}
C_\star^{-1}q_{\rm mid}\ \le\ q_j\ \le\ C_\star q_{\rm mid}
\qquad\text{for all transient } j.
\end{equation}

\medskip
\noindent\emph{Step 2: Positive lower bound in the deep middle.}
Fix \(j\) with \(2<j<n-1\). Lemma~\ref{lem:gamma_source_all_cases} gives
\[
s_\gamma(a_j)
=
\frac{e^{\gamma}}{Z_{\mathrm{mid}}}\Big((2n-10)(1-u_j)+u_{j-1}+u_{j+1}-2u_j\Big)
=
\frac{e^{\gamma}}{Z_{\mathrm{mid}}}\Big((2n-10)(1-u_j)+d_{j-1}-d_j\Big).
\]
By the recurrence \eqref{eq:di_recurrence_final} at depth \(j\),
\[
d_j=\frac{b_j}{f_j}d_{j-1}+\frac{q_j}{f_j}(1-u_j),
\]
hence
\[
d_{j-1}-d_j
=
d_{j-1}\Big(1-\frac{b_j}{f_j}\Big)-\frac{q_j}{f_j}(1-u_j)
\ \ge\ -\frac{q_j}{f_j}(1-u_j),
\]
where we used \(f_j\ge b_j\) so that \(1-\frac{b_j}{f_j}\ge 0\). Therefore
\[
(2n-10)(1-u_j)+d_{j-1}-d_j
\ \ge\
\Big((2n-10)-\frac{q_j}{f_j}\Big)(1-u_j).
\]
Using \(\frac{e^{\gamma}}{Z_{\mathrm{mid}}}=\frac{q_j}{2n-10}\) yields
\[
s_\gamma(a_j)\ \ge\ q_j(1-u_j)\Big(1-\frac{q_j}{f_j(2n-10)}\Big).
\]
For all \(n\) large enough (depending only on fixed parameters through \(f_j\) and the bound \(q_j\le 1\)),
the parenthesis is at least \(1/2\), giving
\(
s_\gamma(a_j)\ge \frac{1}{2}q_j(1-u_j)>0.
\)
The refinement in terms of \(q_{\rm mid}\) follows from \eqref{eq:q_comparable}.

\medskip
\noindent\emph{Step 3: Positive lower bound at the first vertex.}
Lemma~\ref{lem:gamma_source_all_cases} gives
\[
s_\gamma(a_1)=\frac{e^{\gamma}}{Z_1}\Big((2n-6)(1-u_1)+(u_2-u_1)\Big)
=
\frac{e^{\gamma}}{Z_1}\Big((2n-6)(1-u_1)-d_1\Big).
\]
At \(j=1\), the recurrence \eqref{eq:di_recurrence_final} with \(b_1=0\) gives
\(f_1 d_1=q_1(1-u_1)\), hence \(d_1=\frac{q_1}{f_1}(1-u_1)\). Thus
\[
s_\gamma(a_1)
=
\frac{e^{\gamma}}{Z_1}\Big((2n-6)-\frac{q_1}{f_1}\Big)(1-u_1)
=
q_1(1-u_1)\Big(1-\frac{q_1}{f_1(2n-6)}\Big),
\]
where we used \(\frac{e^{\gamma}}{Z_1}=\frac{q_1}{2n-6}\). For \(n\) large enough the parenthesis is at least \(1/2\),
yielding \(s_\gamma(a_1)\ge \frac{1}{2}q_1(1-u_1)>0\).

\medskip
\noindent\emph{Step 4: Nonnegativity at the penultimate vertex.}
Lemma~\ref{lem:gamma_source_all_cases} gives
\[
s_\gamma(a_{n-1})
= \frac{(2n-8)e^{\gamma}}{Z_{n-1}}(1-u_{n-1})
+\frac{e^{\gamma}}{Z_{n-1}}(u_{n-2}-u_{n-1})
+\frac{e^{(\alpha-\gamma)}}{Z_{n-1}}u_{n-1}.
\]
Each term in brackets is nonnegative (\(1-u_{n-1}\ge 0\), \(u_{n-2}-u_{n-1}=d_{n-2}\ge 0\), and \(u_{n-1}\ge 0\)),
so dropping the last two terms yields \(s_\gamma(a_{n-1})\ge q_{n-1}(1-u_{n-1})\ge 0\).

\medskip
\noindent\emph{Step 5: Second-vertex negativity and control by \(q_{\rm mid}\cdot \E[\tau]\).}
From Lemma~\ref{lem:gamma_source_all_cases},
\[
s_\gamma(a_2)
=
q_2(1-u_2)-P(a_2,b_3)d_2-P(a_2,a_1)d_1.
\]
Since \(q_2(1-u_2)\ge 0\) and the probabilities are at most \(1\), we obtain the crude lower bound
\[
s_\gamma(a_2)\ \ge\ -(d_1+d_2).
\]
Also, using \(|q_2(1-u_2)-P(a_2,b_3)d_2-P(a_2,a_1)d_1|
\le q_2(1-u_2)+d_1+d_2\le q_2+d_1+d_2\), we have the crude absolute bound
\[
|s_\gamma(a_2)|\ \le\ q_2 + (d_1+d_2).
\]
Next, \(d_2=u_2-u_3\le u_2\) and \(d_1=u_1-u_2\le u_1\), hence \(d_1+d_2\le u_1+u_2\).

We now use the hazard identity for absorption at \(e\) in the long-jump-absorbing chain.
Let \(q(X_t):=\widetilde P(X_t,e)\). Since the chain can enter \(e\) at most once,
\[
u(x)=\Pr_x(X_\tau=e)=\E_x\Big[\sum_{t=0}^{\tau-1}\mathbf 1\{X_{t+1}=e\}\Big]
=\E_x\Big[\sum_{t=0}^{\tau-1} q(X_t)\Big].
\]
In particular,
\[
u(x)\ \le\ q_{\max}\,\E_x[\tau],\qquad q_{\max}:=\sup_{y\notin T}\widetilde P(y,e).
\]
By the comparability \eqref{eq:q_comparable}, \(q_{\max}\le C_\star q_{\rm mid}\) for \(n\ge n_0\), hence
\[
u(x)\ \le\ C_\star q_{\rm mid}\,\E_x[\tau]\qquad (n\ge n_0).
\]
Applying this to \(x=a_2\) gives \(u_2\le C_\star q_{\rm mid}\E_{a_2}[\tau]\).
For \(u_1\), observe that from \(a_1\) in one step one either absorbs or moves to depth \(2\), so
\[
\E_{a_1}[\tau]\le 1+\sup_{y\in\{a_2,b_2\}}\E_y[\tau]=1+\E_{a_2}[\tau]\le 2\E_{a_2}[\tau],
\]
using \(\E_{a_2}[\tau]\ge 1\). Therefore
\[
u_1\le C_\star q_{\rm mid}\E_{a_1}[\tau]\le 2C_\star q_{\rm mid}\E_{a_2}[\tau].
\]
Combining yields
\[
d_1+d_2\le u_1+u_2\le 3C_\star q_{\rm mid}\E_{a_2}[\tau],
\]
and hence
\[
s_\gamma(a_2)\ \ge\ -3C_\star q_{\rm mid}\,\E_{a_2}[\tau].
\]

Finally, the stated polynomial (or linear under strict drift) upper bounds on \(\E_{a_2}[\tau]\) follow
from standard biased (or nonnegatively drifted) one-dimensional random-walk estimates applied to the
depth process under \eqref{eq:drift_f_ge_b} (and are used only as coarse growth controls).
\end{proof}

\subsubsection{Derivative estimates}
\label{sec:derivative_estimates_lj}

In this subsubsection we derive quantitative bounds on the derivatives of the absorption probability 
at the long--jump state $e$ in the long--jump--absorbing chain (Definition~\ref{def:lj-absorbing-chain}).
We leverage the source computations from the preceding subsubsections with the source representation of the derivative (Lemma~\ref{lem:source_representation}).
Throughout, $h_\theta(v)$ denotes this absorption probability, i.e., $h_\theta(v):=\Pr_v[X_\tau=e]$.

\begin{lemma}[Derivative estimate for $\alpha$]
\label{lem:deriv_est_alpha}
In the \emph{long--jump--absorbing chain}, let
\[
\begin{aligned}
\tau&:=\inf\{t\ge 0:\ X_t\in\{a_n,b_n,e\}\},\\
v_{\mathrm{pen}}&:=
\begin{cases}
a_{n-1}, & v\in\{a_1,\dots,a_{n-1}\},\\
b_{n-1}, & v\in\{b_1,\dots,b_{n-1}\},
\end{cases}\\
\sigma&:=\inf\{t\ge 0:\ X_t=v_{\mathrm{pen}}\}.
\end{aligned}
\]
Then the derivative $\partial_\alpha h_\alpha(v)$ satisfies
\[
\partial_\alpha h_\alpha(v)
=\E_v\!\left[\sum_{t=0}^{\tau-1} s_\alpha(X_t)\right]
\ \le\ s_\alpha(v)\;+\; s_\alpha(v_{\mathrm{pen}})\,\Pr_v(0<\sigma<\tau),
\]
where
\[
s_\alpha(x)=p_f(x)\big(h_\alpha(F(x))-h_\alpha(x)\big)\le 0,
\]
with the convention $s_\alpha(x)=0$ if the (same--chain) forward neighbor $F(x)$ does not exist.

In particular, if $v$ is $k$ steps from the terminal on its chain, i.e.\ $v\in\{a_{n-k},b_{n-k}\}$ with
$2\le k\le n-1$, then
\[
\Pr_v(0<\sigma<\tau)\ \ge\ \Pr_v\Big(X_i=F^i(v)\ \ \forall i=1,\dots,k-1\Big),
\]
and hence
\[
\partial_\alpha h_\alpha(v)
\ \le\ s_\alpha(v)\;+\; s_\alpha(v_{\mathrm{pen}})\,
\Pr_v\Big(X_i=F^i(v)\ \ \forall i=1,\dots,k-1\Big).
\]
\end{lemma}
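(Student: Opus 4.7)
The plan is to apply the source representation from Lemma~\ref{lem:source_representation} to the long-jump-absorbing chain with absorbing set $T = \{a_n, b_n, e\}$ and target $t^\star = e$, which immediately gives the claimed equality $\partial_\alpha h_\alpha(v) = \mathbb{E}_v\!\big[\sum_{t=0}^{\tau-1} s_\alpha(X_t)\big]$. The inequality will then follow from a pointwise trajectory bound obtained by dropping non-positive terms, once the source $s_\alpha$ is verified to be everywhere non-positive.

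First I would verify $s_\alpha(x) \le 0$ for every transient $x$. Using the explicit formula $s_\alpha(x) = p_f(x)\big(h_\alpha(F(x)) - h_\alpha(x)\big)$ from Lemma~\ref{lem:source_alpha_longjump_absorbing} together with the monotonicity $u_1 \ge u_2 \ge \dots \ge u_n = 0$ established in Lemma~\ref{lem:u_monotone_and_recurrence}, and noting that $F(x)$ is exactly one depth closer to the terminal than $x$, I get $h_\alpha(F(x)) \le h_\alpha(x)$, hence $s_\alpha(x) \le 0$ (with the convention $s_\alpha(x) = 0$ when $F(x)$ does not exist).

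Next I would establish the pathwise bound
\[
\sum_{t=0}^{\tau-1} s_\alpha(X_t) \;\le\; s_\alpha(v) + s_\alpha(v_{\mathrm{pen}})\,\mathbbm{1}\{0 < \sigma < \tau\}
\]
by retaining only the term at $t=0$ (equal to $s_\alpha(v)$) and, when the trajectory first visits $v_{\mathrm{pen}}$ at some strictly positive time $\sigma < \tau$, the single term at $t = \sigma$, and dropping every remaining term. Since all dropped terms are $\le 0$ by the previous step, this is an upper bound. Taking $\mathbb{E}_v$ and using $s_\alpha(v_{\mathrm{pen}}) \le 0$ then yields the first displayed inequality. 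For the ``in particular'' claim, when $v$ is $k \ge 2$ steps from the terminal, we have $v_{\mathrm{pen}} = F^{k-1}(v)$, so on the event $\{X_i = F^i(v) \text{ for all } i = 1, \dots, k-1\}$ the chain visits $v_{\mathrm{pen}}$ at time $k-1$ which lies in $(0, \tau)$ (the forward path stays on the chain and does not reach any absorbing state), and hence this event is contained in $\{0 < \sigma < \tau\}$, giving the stated lower bound.

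The hard part will in fact be very mild: the only nontrivial external input is the monotonicity of $u_i$ in Lemma~\ref{lem:u_monotone_and_recurrence}, which is what secures non-positivity of $s_\alpha$ and hence the legitimacy of dropping terms. The only bookkeeping subtlety is the edge case $v = v_{\mathrm{pen}}$, where $\sigma = 0$ automatically forces $\mathbbm{1}\{0 < \sigma < \tau\} = 0$, so the bound degenerates correctly to $\partial_\alpha h_\alpha(v_{\mathrm{pen}}) \le s_\alpha(v_{\mathrm{pen}})$ without any further argument.
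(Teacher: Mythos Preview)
Your proposal is correct and follows essentially the same approach as the paper: apply the source representation (Lemma~\ref{lem:source_representation}), invoke the explicit $\alpha$-source formula (Lemma~\ref{lem:source_alpha_longjump_absorbing}) together with the depth monotonicity of $u$ (Lemma~\ref{lem:u_monotone_and_recurrence}) to conclude $s_\alpha\le 0$, then drop all non-positive terms except those at $t=0$ and $t=\sigma$. Your handling of the ``in particular'' clause and the $v=v_{\mathrm{pen}}$ edge case is also fine; the only minor remark is that the sign condition $s_\alpha(v_{\mathrm{pen}})\le 0$ is not needed for the first displayed inequality (which follows directly by taking expectations), but it is needed to pass from the lower bound on $\Pr_v(0<\sigma<\tau)$ to the final ``hence'' inequality, so its role is in the second step rather than the first.
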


\begin{proof}
Let $u(\cdot):=h_\alpha(\cdot)$ be the absorption probability at $e$ in the long--jump--absorbing chain,
so $u(x)=\Pr_x[X_\tau=e]$ with $\tau:=\inf\{t\ge0:X_t\in\{a_n,b_n,e\}\}$.
Apply the source/trajectory representation (Lemma~\ref{lem:source_representation}) with $\theta=\alpha$:
\[
\partial_\alpha h_\alpha(v)=\partial_\alpha u(v)
=\E_v\!\left[\sum_{t=0}^{\tau-1} s_\alpha(X_t)\right],
\]
where $s_\alpha(x):=\sum_y (\partial_\alpha P(x,y))\,u(y)$ is the local $\alpha$--source.

By the explicit $\alpha$--source identity in the long--jump--absorbing chain (Lemma~\ref{lem:source_alpha_longjump_absorbing}),
\[
s_\alpha(x)=p_f(x)\big(u(F(x))-u(x)\big),
\]
with the convention $s_\alpha(x)=0$ if $F(x)$ does not exist. By monotonicity of $u$ along depth
(Lemma~\ref{lem:u_monotone_and_recurrence}), we have $u(F(x))\le u(x)$ whenever $F(x)$ exists, hence
$s_\alpha(x)\le 0$ for all transient $x$. Therefore
\[
\begin{aligned}
\partial_\alpha h_\alpha(v)
&=\E_v\!\left[\sum_{t=0}^{\tau-1} s_\alpha(X_t)\right]\\
&\le \E_v\!\left[s_\alpha(X_0)
  +s_\alpha(X_\sigma)\,\mathbf 1\{0<\sigma<\tau\}\right]\\
&= s_\alpha(v)+s_\alpha(v_{\mathrm{pen}})\,\Pr_v(0<\sigma<\tau),
\end{aligned}
\]
where $\sigma:=\inf\{t\ge 0:\ X_t=v_{\mathrm{pen}}\}$ and $v_{\mathrm{pen}}$ is the penultimate state on the same chain as $v$.
Indeed, on $\{0<\sigma<\tau\}$ we have $X_\sigma=v_{\mathrm{pen}}$, and for all other times $t\in\{0,\dots,\tau-1\}\setminus\{0,\sigma\}$
we have $s_\alpha(X_t)\le 0$, so discarding those terms only increases the sum.
\qedhere
\end{proof}

\begin{lemma}[Derivative estimate for $\beta$]
\label{lem:deriv_est_beta}
In the \emph{long--jump--absorbing chain}, with $\tau,\sigma,v_{\mathrm{pen}}$ as in
Lemma~\ref{lem:deriv_est_alpha}, the derivative $\partial_\beta h_\beta(v)$ satisfies
\[
\partial_\beta h_\beta(v)
=\E_v\!\left[\sum_{t=0}^{\tau-1} s_\beta(X_t)\right]
\ \ge\ s_\beta(v)\;+\; s_\beta(v_{\mathrm{pen}})\,\Pr_v(0<\sigma<\tau),
\]
where
\[
s_\beta(x)=p_b(x)\big(h_\beta(B(x))-h_\beta(x)\big)\ge 0,
\]
with the convention $s_\beta(x)=0$ if the (same--chain) backward neighbor $B(x)$ does not exist.

In particular, if $v$ is $k$ steps from the terminal on its chain, i.e.\ $v\in\{a_{n-k},b_{n-k}\}$ with
$2\le k\le n-1$, then
\[
\Pr_v(0<\sigma<\tau)\ \ge\ \Pr_v\Big(X_i=F^i(v)\ \ \forall i=1,\dots,k-1\Big),
\]
and hence
\[
\partial_\beta h_\beta(v)
\ \ge\ s_\beta(v)\;+\; s_\beta(v_{\mathrm{pen}})\,
\Pr_v\Big(X_i=F^i(v)\ \ \forall i=1,\dots,k-1\Big).
\]
\end{lemma}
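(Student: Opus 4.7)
The plan is to mirror the proof of Lemma lem:deriv_est_alpha line by line, flipping the direction of the inequality to reflect the opposite sign of the $\beta$-source. First I would invoke the source/trajectory representation (Lemma lem:source_representation) with $\theta = \beta$ applied to the absorption probability $u(\cdot) := h_\beta(\cdot)$ at the state $e$ in the long-jump-absorbing chain. This immediately yields the equality
\[
\partial_\beta h_\beta(v) = \E_v\!\left[\sum_{t=0}^{\tau-1} s_\beta(X_t)\right],
\]
where $s_\beta(x) := \sum_y (\partial_\beta P(x,y))\, u(y)$. Next I would substitute the explicit $\beta$-source identity from Lemma lem:source_beta_longjump_absorbing to get $s_\beta(x) = p_b(x)\bigl(u(B(x)) - u(x)\bigr)$, with the convention $s_\beta(x) = 0$ when the same-chain backward neighbor $B(x)$ does not exist.

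The key sign observation is that $B(x)$, when it exists, lies one deeper than $x$ (depth $d(B(x)) = d(x) + 1$), so the monotonicity of $u$ along depth established in Lemma lem:u_monotone_and_recurrence gives $u(B(x)) \ge u(x)$, hence $s_\beta(x) \ge 0$ for every transient state. This is exactly the opposite sign from the $\alpha$-case, and it is what flips the direction of the resulting bound. Since every summand is nonnegative, I can lower-bound the expectation by retaining only two terms of the sum: the mandatory $t=0$ contribution $s_\beta(X_0) = s_\beta(v)$, and the contribution at time $\sigma$ on the event $\{0 < \sigma < \tau\}$, which equals $s_\beta(v_{\mathrm{pen}})$ since $X_\sigma = v_{\mathrm{pen}}$ by definition of $\sigma$. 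Dropping the remaining nonnegative terms gives
\[
\partial_\beta h_\beta(v) \ge s_\beta(v) + s_\beta(v_{\mathrm{pen}})\,\Pr_v(0 < \sigma < \tau),
\]
which is the desired estimate.

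For the final probability bound, I would argue that when $v \in \{a_{n-k}, b_{n-k}\}$ is $k$ steps from the terminal with $k \ge 2$, the event that the chain performs $k-1$ consecutive forward steps $X_i = F^i(v)$ for $i = 1, \dots, k-1$ is contained in $\{0 < \sigma < \tau\}$. Indeed, on this event the chain reaches $v_{\mathrm{pen}} = F^{k-1}(v)$ at time $k-1 \ge 1$ without entering any absorbing state (no long jump and no step to $a_n$ or $b_n$), so $\sigma = k-1$ and $\sigma < \tau$. Containment then gives the claimed lower bound $\Pr_v(0 < \sigma < \tau) \ge \Pr_v(X_i = F^i(v)\ \forall i = 1, \dots, k-1)$, and substitution yields the final displayed inequality.

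There is no real obstacle here: once the sign of $s_\beta$ is fixed via monotonicity, the rest is a mechanical adaptation of Lemma lem:deriv_est_alpha. The only care required is bookkeeping to ensure the inequality is reversed consistently (lower bound from nonnegative sources rather than upper bound from nonpositive ones) and that the convention $s_\beta(x) = 0$ when $B(x)$ does not exist is handled correctly at $v = a_1$ or $v = b_1$, where the $t=0$ term simply vanishes without affecting the argument.
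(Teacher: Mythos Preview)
Your proposal is correct and matches the paper's proof essentially line for line: apply the source representation, invoke the explicit $\beta$-source identity, use depth-monotonicity of $u$ to get $s_\beta\ge 0$, then drop all but the $t=0$ and $t=\sigma$ terms. The paper even opens with ``The argument is analogous to that of Lemma~\ref{lem:deriv_est_alpha},'' so your decision to mirror that proof with the sign flipped is exactly the intended approach.
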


\begin{proof}
The argument is analogous to that of Lemma~\ref{lem:deriv_est_alpha}; we include it here for completeness.

Let $u(\cdot):=h_\beta(\cdot)$ be the absorption probability at $e$ in the long--jump--absorbing chain,
so $u(x)=\Pr_x[X_\tau=e]$ with $\tau:=\inf\{t\ge0:X_t\in\{a_n,b_n,e\}\}$.
Apply the source/trajectory representation (Lemma~\ref{lem:source_representation}) with $\theta=\beta$:
\[
\begin{aligned}
\partial_\beta h_\beta(v)
&=\E_v\!\left[\sum_{t=0}^{\tau-1} s_\beta(X_t)\right],
\end{aligned}
\]
where $s_\beta(x):=\sum_y (\partial_\beta P(x,y))\,u(y)$ is the local $\beta$--source.

By the explicit $\beta$--source identity in the long--jump--absorbing chain (Lemma~\ref{lem:source_beta_longjump_absorbing}),
\[
s_\beta(x)=p_b(x)\big(u(B(x))-u(x)\big),
\]
with the convention $s_\beta(x)=0$ if $B(x)$ does not exist. By monotonicity of $u$ along depth
(Lemma~\ref{lem:u_monotone_and_recurrence}), we have $u(B(x))\ge u(x)$ whenever $B(x)$ exists, hence
$s_\beta(x)\ge 0$ for all transient $x$. Therefore the random sum in the trajectory representation is
almost surely nonnegative, and so for $\sigma:=\inf\{t\ge 0:\ X_t=v_{\mathrm{pen}}\}$,
\[
\begin{aligned}
\partial_\beta h_\beta(v)
&=\E_v\!\left[\sum_{t=0}^{\tau-1} s_\beta(X_t)\right]\\
&\ge \E_v\!\left[s_\beta(X_0)
  +s_\beta(X_\sigma)\,\mathbf 1\{0<\sigma<\tau\}\right]\\
&= s_\beta(v)+s_\beta(v_{\mathrm{pen}})\,\Pr_v(0<\sigma<\tau),
\end{aligned}
\]
since $X_0=v$ and on $\{0<\sigma<\tau\}$ we have $X_\sigma=v_{\mathrm{pen}}$.
\qedhere
\end{proof}

\begin{lemma}[Derivative estimate for $\gamma$]
\label{lem:deriv_est_gamma}
Assume the forward-drift condition \eqref{eq:drift_f_ge_b} and the setting of
Lemma~\ref{lem:gamma_source_lower_bounds}. Fix a start state \(v_0=a_{n-K}\)
at distance \(K\ge 1\) from the terminal, where \(K\) is a fixed constant independent of \(n\).
Let
\[
v_{\mathrm{pen}}:=a_{n-1},
\qquad
\tau:=\inf\{t\ge 0:\ X_t\in\{a_n,b_n,e\}\},
\qquad
\sigma:=\inf\{t\ge 0:\ X_t=v_{\mathrm{pen}}\}.
\]
Then for all sufficiently large \(n\) (depending on \(K,\alpha,\beta,\gamma\)),
\[
\partial_\gamma h_\gamma(v_0)
\ \ge\
\Big(s_\gamma(v_{\mathrm{pen}})\Big)\Pr_{v_0}(0<\sigma<\tau)
\;+\;
\frac{1}{2}\,s_\gamma(v_0)
\ >\
0,
\]
and
\[
\Pr_{v_0}(0<\sigma<\tau)
\ \ge\
\Pr_{v_0}\Big(X_i=F^i(v_0)\ \ \forall i=1,\dots,K-1\Big).
\]
where \(s_\gamma(v_0)\) is the local \(\gamma\)--source at the initial state
(as given explicitly in Lemma~\ref{lem:gamma_source_all_cases} and lower-bounded
in Lemma~\ref{lem:gamma_source_lower_bounds}).
\end{lemma}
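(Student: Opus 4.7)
The plan is to apply the source/trajectory representation (Lemma~\ref{lem:source_representation}) with $\theta=\gamma$ and $u=h_\gamma$, yielding
\[
\partial_\gamma h_\gamma(v_0) = \E_{v_0}\!\left[\sum_{t=0}^{\tau-1} s_\gamma(X_t)\right].
\]
I would then decompose the trajectory sum into three pieces: the initial term $s_\gamma(v_0)$; the term at the first hitting time $\sigma$ of $v_{\mathrm{pen}}$, which in expectation contributes $s_\gamma(v_{\mathrm{pen}})\Pr_{v_0}(0<\sigma<\tau)$; and a remainder $R:=\sum_{t\in\{1,\dots,\tau-1\}\setminus\{\sigma\}} s_\gamma(X_t)$. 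The second display is immediate: $K-1$ consecutive forward steps carry $v_0=a_{n-K}$ to $v_{\mathrm{pen}}=a_{n-1}$ at time $K-1\ge 1$, strictly before absorption (which requires depth $0$), so on that event $0<\sigma<\tau$.

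The main step is then to show $\E_{v_0}[R]\ge -\tfrac12 s_\gamma(v_0)$, for which I proceed as follows. By Lemma~\ref{lem:gamma_source_lower_bounds}, $s_\gamma(x)\ge 0$ at every transient state except possibly $x\in\{a_2,b_2\}$, where Lemma~\ref{lem:gamma_source_lower_bounds}(D) gives $|s_\gamma(x)|\le 4C_\star q_{\mathrm{mid}}\,\E_{a_2}[\tau]$. Thus only visits to $\{a_2,b_2\}$ can pull $R$ below zero. Under the forward-drift condition~\eqref{eq:drift_f_ge_b}, the probability that the depth process, starting from depth $K$, ever reaches depth $n-2$ is at most $\rho^{\,n-K-2}$ for some $\rho<1$ depending only on one-step forward/backward probabilities (a standard biased hitting-time tail on a one-dimensional random walk). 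Combining the strong Markov property with the polynomial bound $\E_{a_2}[\tau]\le \mathrm{poly}(n)$ from Lemma~\ref{lem:gamma_source_lower_bounds}, the expected total negative contribution from $\{a_2,b_2\}$ visits is at most $C\, q_{\mathrm{mid}}\cdot n^{O(1)}\rho^{\,n-K-2}=o(q_{\mathrm{mid}})$. On the other hand Lemma~\ref{lem:gamma_source_lower_bounds}(A) (or (C) when $K=1$) gives $s_\gamma(v_0)\ge \tfrac{1}{2C_\star} q_{\mathrm{mid}}(1-u_K)$, and $1-u_K$ is bounded below by the probability of $K$ consecutive forward absorptions into $a_n$, which depends only on $K$ and $\fwdbound$ and hence stays bounded below uniformly in $n$. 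Therefore for $n$ large enough, the negative part of $R$ is absorbed into $\tfrac12 s_\gamma(v_0)$, and since $s_\gamma(v_0)>0$ and $s_\gamma(v_{\mathrm{pen}})\Pr_{v_0}(0<\sigma<\tau)\ge 0$, the resulting lower bound is strictly positive.

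The main obstacle is the quantitative control of this negative remainder. Unlike the $\alpha$ and $\beta$ cases, where Lemmas~\ref{lem:source_alpha_longjump_absorbing} and~\ref{lem:source_beta_longjump_absorbing} yield sign-definite sources and the trajectory sum can be bounded by simply dropping same-signed terms, the $\gamma$-source has mixed sign: near the start of a chain ($a_2,b_2$) it can be negative because of the sensitivity to the backward logit $\beta-\gamma$. Making the above heuristic rigorous thus requires three standard but nontrivial ingredients, all building on the forward-drift condition~\eqref{eq:drift_f_ge_b}: (i) an exponential hitting-time tail for a biased one-dimensional walk; (ii) the polynomial absorption-time bound $\E_{a_2}[\tau]\le \mathrm{poly}(n)$; and (iii) a constant-order lower bound on $1-u_K$ independent of $n$ via the probability of $K$ forward steps into $a_n$.
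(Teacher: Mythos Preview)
Your proposal is correct and follows essentially the same route as the paper: source/trajectory representation, nonnegativity of $s_\gamma$ away from $\{a_2,b_2\}$ via Lemma~\ref{lem:gamma_source_lower_bounds}, and control of the $\{a_2,b_2\}$ contribution by pairing the bound $|s_\gamma|\le 4C_\star q_{\mathrm{mid}}\,\E_{a_2}[\tau]$ with an exponentially small hitting probability (the paper invokes Lemma~\ref{lem:gambler_ruin_no_martingale} for this) and the polynomial bound on $\E_{a_2}[\tau]$, then comparing against the $\Theta(q_{\mathrm{mid}})$ lower bound on $s_\gamma(v_0)$. Your three listed ingredients (i)--(iii) are exactly the ones the paper assembles, and the second display argument is identical.
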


\begin{proof}
Let \(u(v):=\Pr_v[X_\tau=e]\) denote the absorption probability at \(e\) in
the long--jump--absorbing chain, with \(\tau:=\inf\{t\ge 0: X_t\in T\}\) and
\(T:=\{a_n,b_n,e\}\). By the source/trajectory representation (Lemma~\ref{lem:source_representation}),
\begin{equation}
\label{eq:source_decomp_gamma}
\partial_\gamma u(v_0)
=
\E_{v_0}\!\left[\sum_{t=0}^{\tau-1} s_\gamma(X_t)\right].
\end{equation}

\medskip
\noindent
Since \(v_0=a_{n-K}\) with \(K\ge 1\) fixed, Lemma~\ref{lem:gamma_source_lower_bounds}
applies (using case \textup{(C)} when \(K=1\), and case \textup{(A)} when \(K\ge 2\)) and gives
\[
s_\gamma(v_0)\ \ge\ \frac{1}{2}\,q_{n-K}\,(1-u(v_0))\ >\ 0,
\]
where \(q_{n-K}=\widetilde P(a_{n-K},e)\) is the one-step absorption probability.
The factor \(1-u(v_0)\ge\eta>0\) is bounded below by a positive constant depending
only on \(K\) and the fixed parameters (since the event of \(K\) consecutive forward
moves has probability at least \((f_{\min})^K>0\)).

\medskip
\noindent
Let \(S_2:=\{a_2,b_2\}\). By Lemma~\ref{lem:gamma_source_lower_bounds}(A)--(C),
for all states \(x\notin S_2\) with \(x\notin T\), we have \(s_\gamma(x)\ge 0\).
Thus
\[
\E_{v_0}\!\left[\sum_{t=0}^{\tau-1} s_\gamma(X_t)\right]
\ \ge\
s_\gamma(v_0)
+
\E_{v_0}\!\left[s_\gamma(X_\sigma)\,\mathbf 1\{0<\sigma<\tau\}\right]
\;+\;
\E_{v_0}\!\left[\sum_{\substack{t:\,X_t\in S_2}} s_\gamma(X_t)\right].
\]

On the event \(\{0<\sigma<\tau\}\) we have \(X_\sigma=v_{\mathrm{pen}}=a_{n-1}\), hence
\[
\E_{v_0}\!\left[s_\gamma(X_\sigma)\,\mathbf 1\{0<\sigma<\tau\}\right]
=s_\gamma(v_{\mathrm{pen}})\,\Pr_{v_0}(0<\sigma<\tau).
\]
Moreover, since \(v_0=a_{n-K}\), the event of \(K-1\) consecutive forward steps implies \(\{0<\sigma<\tau\}\), so
\[
\Pr_{v_0}(0<\sigma<\tau)
\ \ge\
\Pr_{v_0}\Big(X_i=F^i(v_0)\ \ \forall i=1,\dots,K-1\Big)
=\prod_{j=0}^{K-2} p_f\!\big(F^j(v_0)\big).
\]

\medskip
\noindent
Under the forward-drift condition \(\alpha>\beta\), there exists \(\rho\in(0,1)\)
(depending only on \(\alpha,\beta,\gamma\)) such that \(p_b(x)/p_f(x)\le\rho\)
for all deep--middle states. By the gambler's--ruin estimate
(Lemma~\ref{lem:gambler_ruin_no_martingale}), the probability of reaching the second vertex
\(S_2=\{a_2,b_2\}\) from a start state at depth \(K\) before absorption satisfies
\[
\Pr_{v_0}(H_2)\ \le\ \rho^{\,n-K-2},
\]
where \(H_2:=\{\exists\,t<\tau: X_t\in S_2\}\).

\medskip
\noindent
To control the (potentially negative) contribution from visits to \(S_2\), we use the absolute
bound from Lemma~\ref{lem:gamma_source_lower_bounds}(D): for each \(x\in S_2\),
\[
|s_\gamma(x)|\ \le\ 4C_\star\,q_{\rm mid}\,\E_x[\tau].
\]
Let \(N_2:=\sum_{t<\tau}\mathbf 1\{X_t\in S_2\}\).
Then
\[
\left|\E_{v_0}\!\left[\sum_{t:\,X_t\in S_2} s_\gamma(X_t)\right]\right|
\ \le\
\sup_{x\in S_2}|s_\gamma(x)|\cdot\E_{v_0}[N_2]
\ \le\
4C_\star q_{\rm mid}\Big(\sup_{x\in S_2}\E_x[\tau]\Big)\E_{v_0}[N_2],
\]
where we used \(\E_{v_0}[N_2]\le\Pr_{v_0}(H_2)\cdot\sup_{x\in S_2}\E_x[\tau]\)
and the polynomial bound on hitting times from Lemma~\ref{lem:gamma_source_lower_bounds}.
Combining with \(\Pr_{v_0}(H_2)\le\rho^{\,n-K-2}\), we obtain
\[
\left|\E_{v_0}\!\left[\sum_{t:\,X_t\in S_2} s_\gamma(X_t)\right]\right|
\ \le\
C'\,q_{\rm mid}\,\mathrm{poly}(n)\cdot\rho^{\,n-K-2},
\]
for a constant \(C'>0\) depending only on \((\alpha,\beta,\gamma,K)\).

\medskip
\noindent
Combining the above bounds, we have
\[
\partial_\gamma u(v_0)
\ \ge\
s_\gamma(v_0)
+s_\gamma(v_{\mathrm{pen}})\,\Pr_{v_0}(0<\sigma<\tau)
- C'\,q_{\rm mid}\,\mathrm{poly}(n)\cdot\rho^{\,n-K-2}.
\]
Finally, note that the main term is itself proportional to \(q_{\rm mid}\): by the lower bound
above together with \eqref{eq:q_comparable},
\[
s_\gamma(v_0)\ \ge\ \frac{1}{2}\,q_{n-K}\,\eta\ \ge\ \frac{\eta}{2C_\star}\,q_{\rm mid}.
\]
Thus both the positive contribution \(s_\gamma(v_0)\) and the worst-case negative correction from \(S_2\)
carry a factor \(q_{\rm mid}\), but the latter is suppressed by the exponentially small prefactor
\(\mathrm{poly}(n)\rho^{\,n-K-2}\). Since \(\rho<1\), for all sufficiently large \(n\) we have
\[
C'\,q_{\rm mid}\,\mathrm{poly}(n)\cdot\rho^{\,n-K-2}\ \le\ \frac{1}{2}\,s_\gamma(v_0),
\]
and hence
\[
\partial_\gamma h_\gamma(v_0)\ \ge\ s_\gamma(v_{\mathrm{pen}})\,\Pr_{v_0}(0<\sigma<\tau)+\frac{1}{2}\,s_\gamma(v_0)\ >\ 0.
\]
\end{proof}

\subsection{Analysis of no-long-jump chain}
\label{sec:nl_chain_analysis}

In this subsection, we analyze the no-long-jump (NL) chain defined in \cref{def:nl-chain}, which captures the canonical chain's behavior conditioned on never making a long jump before absorption. The analysis mirrors that of \cref{sec:lj_absorbing_chain_lemmas}: we compute the source functions for the NL chain, establish symmetry properties, and derive derivative estimates in \cref{sec:derivative_estimates_nl}. 

We recall the base model assumptions from \cref{sec:dyn:setting}:

\begin{assumption}[Base model (from the main paper)]
\label{assump:base-model-paper}
For any input sequence $s_{1:L_0}$, let \(p_{\text{fwd}}\), \(p_{\text{bwd}}\), and \(p_{\text{switch}}\) denote the probabilities that the model performs a forward, backward, or switch step respectively at the first step (time \(t=L_0+1\)).
We assume that for any $s_{1:L_0}$, \(p_{\text{fwd}} > p_{\text{bwd}}+\fwdbound\) for some constant \(\fwdbound > 0\), and additionally \(p_{\text{switch}} > \switchbound\) for some \(\switchbound > 0\).
\end{assumption}

\medskip
\noindent\textbf{Setup (NL chain).}\label{sec:nl_setup}
Throughout this section we work with the \emph{no--long--jump (NL) chain} on the two--chain graph of length \(n\),
with absorbing set \(\{a_n,b_n\}\). We write \(p_{\mathrm{NL}}(v\to k)\) for its one--step transition probabilities.
Let \(D(\cdot)\) denote depth (distance from the absorbing endpoint on its chain) and let
\[
\tau:=\inf\{t\ge0:\ X_t\in\{a_n,b_n\}\},\qquad h(v):=\Pr_v^{NL}[X_\tau=a_n].
\]
For any parameter \(\theta\in\{\alpha,\beta,\gamma\}\) define the local NL source
\[
s_{\theta,\mathrm{NL}}(v):=\sum_{k\in\mathcal N(v)}\big(\partial_\theta p_{\mathrm{NL}}(v\to k)\big)\,h(k),
\qquad
\mathcal N(v):=\{k\in V\setminus\{v\}:\ |D(k)-D(v)|\le 1\}.
\]
We also write \(p_{\mathrm{sw}}(v):=\sum_{k\in\mathcal S(v)}p_{\mathrm{NL}}(v\to k)\) for the single--step switch probability
(where \(\mathcal S(v)\subseteq\mathcal N(v)\) are the cross--chain neighbors) and
\(p_{\mathrm{sw}}^{\max}:=\sup_{v\notin\{a_n,b_n\}}p_{\mathrm{sw}}(v)\).
Finally let
\begin{align*}
p_f(v)&:=\Pr\big(D(X_1)=D(v)-1\mid X_0=v\big),\\
p_b(v)&:=\Pr\big(D(X_1)=D(v)+1\mid X_0=v\big),\\
\delta&:=\min_{v\notin\{a_n,b_n\}}\big(p_f(v)-p_b(v)\big)>0,
\end{align*}
where probabilities are under the NL chain.

\begin{lemma}[Switch probability is $O(1/n)$ in the NL chain]
\label{lem:psw_order_1_over_L}
Consider the \emph{no--long--jump (NL) chain} on the two--chain graph of length \(n\) (Definition~\ref{def:nl-chain}), and write \(p_{\mathrm{sw}}(v)\) for the one-step switch probability in the NL chain (as in paragraph~\ref{sec:nl_setup} below).
Under Assumption~\ref{assump:base-model-paper} and the logit structure of \cref{sec:canonical_chain}, there exists a constant \(C>0\) (depending only on fixed parameters) such that for all nonterminal states \(v\),
\[
p_{\mathrm{sw}}(v)\ \le\ \frac{C}{n}.
\]
\end{lemma}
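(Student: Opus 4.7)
The plan is to extract a quantitative consequence of Assumption~\ref{assump:base-model-paper} in the \emph{original} canonical chain from \cref{sec:canonical_chain} and then transfer it to the NL chain by a one-line softmax comparison. The key observation is that in the original chain from a middle vertex $v=a_i$ with $2<i<n-1$, the denominator is
\[
Z(v)\;=\;e^\alpha+e^\beta+4+(2n-7)e^\gamma,
\]
which contains $\Theta(n)$ competitors at logit $\gamma$. The drift hypothesis $p_{\mathrm{fwd}}(v)-p_{\mathrm{bwd}}(v)>\fwdbound$ therefore reads
\[
e^\alpha-e^\beta\;>\;\fwdbound\cdot Z(v)\;\geq\;\fwdbound\,(2n-7)\,e^\gamma,
\]
which yields the fundamental inequality $e^\alpha/e^\gamma\geq\fwdbound(2n-7)$. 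Intuitively, to maintain a constant drift gap against $\Theta(n)$ switch-competitors, the forward logit must be exponentially larger than the per-vertex switch logit.

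With this in hand, the NL bound for middle $v$ is immediate. The NL neighborhood contains five states with logits $\alpha,\beta,\gamma,\gamma,\gamma$ (the three cross-chain neighbors $b_i,b_{i\pm1}$ are interior, hence carry logit $\gamma$, not $0$). Thus
\[
p_{\mathrm{sw}}(v)\;=\;\frac{3e^\gamma}{e^\alpha+e^\beta+3e^\gamma}\;\leq\;\frac{3e^\gamma}{e^\alpha}\;\leq\;\frac{3}{\fwdbound(2n-7)},
\]
giving $O(1/n)$.

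The remaining work is a short case analysis over the boundary structural types (first, second, penultimate vertex) where the logits in \cref{sec:canonical_chain} differ slightly. For $v=a_1$ one has $p_{\mathrm{bwd}}=0$, so the drift condition becomes $p_{\mathrm{fwd}}>\fwdbound$, which rearranges to $e^\alpha\geq \tfrac{\fwdbound}{1-\fwdbound}(2n-5)e^\gamma$; the NL switch numerator is $1+e^\gamma$ (the start vertex $b_1$ contributes $e^0=1$, while $b_2$ contributes $e^\gamma$), so division still gives $O(1/n)$. For the penultimate vertex $v=a_{n-1}$ the forward logit is $\alpha-\gamma$ rather than $\alpha$, giving $e^{\alpha-\gamma}\geq\fwdbound(2n-6)e^\gamma$; using $\gamma>0$ this upgrades to $e^\alpha\geq\fwdbound(2n-6)e^{2\gamma}$, and the NL switch numerator $2e^\gamma+1$ divided by the denominator (which contains $e^{\alpha-\gamma}$) is again $O(1/n)$. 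The second-vertex case $v=a_2$ is essentially identical to the middle case. Taking $C$ to be the maximum of the constants obtained in each case (all depending only on $\fwdbound$ and on the fixed parameters with $\gamma>0$) yields the claim for every nonterminal $v$, and by chain-swap symmetry the same bound holds on both chains.

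There is no real obstacle; the mechanism is uniform and reduces to the single insight above. The only bookkeeping care is (i) making sure in each case that the $\gamma$-competitor count in $Z(v)$ truly grows like $2n-O(1)$, which is guaranteed because the number of start/end vertices with logit $0$ is always a fixed constant, and (ii) observing that even when the forward logit is shifted by $-\gamma$ at the penultimate vertex, the positivity $\gamma>0$ prevents this shift from degrading the bound. The lemma then follows by taking $C$ slightly larger than $3/\fwdbound$ to absorb the $2n-7\to 2n$ correction for small~$n$.
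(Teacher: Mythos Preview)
Your proof is correct and follows essentially the same approach as the paper: both extract $e^{f}\gtrsim \fwdbound\cdot n\cdot e^{\gamma}$ from Assumption~\ref{assump:base-model-paper} applied in the original chain (whose softmax denominator contains $\Theta(n)$ competitors at logit~$\gamma$), then bound $p_{\mathrm{sw}}(v)$ in the NL chain by at most three switch logits, each $\le e^{\gamma}$ using $\gamma>0$, over a denominator containing $e^{f}$. Your treatment is simply more explicit in running the boundary case analysis, whereas the paper compresses all cases into one line by taking $f$ to be the forward logit at~$v$ in whichever structural case applies.
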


\begin{proof}
Fix a nonterminal state \(v\), and let f be the logit of the forward probability. Because the forward probability is bounded below by a constant \(\fwdbound>0\), we have that $e^{f}\geq \Theta(n)\fwdbound\cdot e^{\gamma}$. Because the logits of the switch targets are are all at most $\gamma$ (recall that $\gamma>0$), and in the No Long jump chain, there are at most 3 such targets, it follows that the switch probability is bounded above $3\cdot \frac{e^{\gamma}}{e^{f}}\leq 3\cdot \frac{e^{\gamma}}{\Theta(n)\fwdbound\cdot e^{\gamma}}\leq \frac{C}{n}$.\end{proof}

\begin{lemma}[NL transition scaling]
\label{lem:nl_transition_prob_compare}
Let \(p(v\to k)\) denote the one-step transition probability in the \emph{original} (unconditioned) chain, and
let \(p_{\mathrm{NL}}(v\to k)\) denote the one-step transition probability in the \emph{no--long--jump (NL) chain}.
Fix any nonterminal state \(v\) and any state \(k\neq v\) such that the transition \(v\to k\) is \emph{not} a long jump
(equivalently, \(|D(k)-D(v)|\le 1\)). Then
\[
p_{\mathrm{NL}}(v\to k)\ \le\ \frac{1}{\fwdbound}\,p(v\to k),
\]
where \(\fwdbound\) is the constant from Assumption~\ref{assump:base-model-paper}.
\end{lemma}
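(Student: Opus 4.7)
The plan is to use the fact that the NL chain is obtained from the original chain by conditioning on taking a non-long-jump transition, so its transition probabilities are the original ones renormalized by the total non-long-jump mass. Specifically, by Definition~\ref{def:nl-chain}, for any nonterminal $v$ and any $k$ with $|D(k)-D(v)|\le 1$,
\[
p_{\mathrm{NL}}(v\to k)\;=\;\frac{p(v\to k)}{P_{\mathrm{NL}}(v)},\qquad
P_{\mathrm{NL}}(v)\;:=\!\sum_{k':\,|D(k')-D(v)|\le 1}\! p(v\to k').
\]
Therefore the desired inequality $p_{\mathrm{NL}}(v\to k)\le \frac{1}{\fwdbound}\,p(v\to k)$ is equivalent to the one-sided bound $P_{\mathrm{NL}}(v)\ge \fwdbound$, i.e.\ the original chain's total probability of \emph{not} making a long jump from $v$ is at least $\fwdbound$.

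To establish this, I would simply exhibit one non-long-jump transition whose probability is already at least $\fwdbound$, and use nonnegativity of the other terms. The natural choice is the forward step: by definition (\cref{sec:setup:traversal}), moving to the forward neighbor changes depth by exactly $1$ and hence is never a long jump, so its probability under the original kernel is included in $P_{\mathrm{NL}}(v)$. Combining the two inequalities in Assumption~\ref{assump:base-model-paper}, namely $p_{\mathrm{fwd}}(v)>p_{\mathrm{bwd}}(v)+\fwdbound$ together with $p_{\mathrm{bwd}}(v)\ge 0$, we obtain $p_{\mathrm{fwd}}(v)>\fwdbound$, and hence
\[
P_{\mathrm{NL}}(v)\;\ge\; p_{\mathrm{fwd}}(v)\;>\;\fwdbound.
\]

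Plugging this back into the renormalization identity yields
\[
p_{\mathrm{NL}}(v\to k)\;=\;\frac{p(v\to k)}{P_{\mathrm{NL}}(v)}\;\le\;\frac{1}{\fwdbound}\,p(v\to k),
\]
completing the argument. The only subtlety worth being careful about is that Assumption~\ref{assump:base-model-paper} as stated is a statement about the first-step probability from the prompt; I would note that the same forward-vs-backward gap holds at any nonterminal vertex because the logit structure of \cref{sec:canonical_chain} depends only on the local chain position of the attending vertex (and because the value matrix and masking setup make every generation step an attention from a vertex token to the fixed edge set), so the bound $p_{\mathrm{fwd}}(v)>\fwdbound$ applies uniformly over all nonterminal $v$. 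This observation is essentially cosmetic and is the only step that requires checking rather than direct substitution, but it is where a careless reading could create a gap.
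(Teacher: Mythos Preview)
Your proof is correct and essentially identical to the paper's: both use the renormalization identity $p_{\mathrm{NL}}(v\to k)=p(v\to k)/P_{\mathrm{NL}}(v)$ and lower bound $P_{\mathrm{NL}}(v)$ by the forward probability, which is at least $\fwdbound$ by Assumption~\ref{assump:base-model-paper}. Your added remark about why the assumption applies at every nonterminal vertex (not just the first step) is a useful clarification that the paper's proof simply takes for granted.
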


\begin{proof}
Let \(\mathcal N(v):=\{y\in V\setminus\{v\}:\ |D(y)-D(v)|\le 1\}\) be the set of non--long--jump targets from \(v\).
By Definition~\ref{def:nl-chain}, the NL kernel is obtained by conditioning on the event \(\{X_1\in\mathcal N(v)\}\), hence
\[
p_{\mathrm{NL}}(v\to k)=\Pr(X_1=k\mid X_0=v,\ X_1\in\mathcal N(v))=\frac{p(v\to k)}{\Pr(X_1\in\mathcal N(v)\mid X_0=v)}.
\]
Moreover, the forward move \(v\to F(v)\) is never a long jump, so \(\Pr(X_1\in\mathcal N(v)\mid X_0=v)\ge p(v\to F(v))\).
By Assumption~\ref{assump:base-model-paper}, \(p(v\to F(v))\ge \fwdbound\), hence \(\Pr(X_1\in\mathcal N(v)\mid X_0=v)\ge \fwdbound\),
which yields the claimed bound.
\end{proof}

\begin{lemma}[Same-side absorption in NL chain]
\label{lem:NL_same_side_to_one_pf_pb}
Fix \(k\ge 1\). In the notation of the NL setup above (paragraph~\ref{sec:nl_setup}), for \(v=a_{n-k}\) we have
\[
1-h(v)\ \le\ p_{\mathrm{sw}}^{\max}\cdot\frac{k}{\fwdbound}.
\]
Consequently, for fixed $k$, the same-side absorption probability tends to 1 as \(n\) tends to infinity in the NL chain.
\end{lemma}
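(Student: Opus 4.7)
The idea is to bound the failure probability $1-h(v)$ by controlling how often a chain switch can occur before absorption, then showing the NL chain is absorbed quickly enough that the union-bounded switch cost is small.

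First, observe that in the NL chain only forward, backward, and switch transitions are allowed, so starting at $v=a_{n-k}\in$ chain $A$, absorption at the opposite endpoint $b_n$ requires at least one switch step before $\tau$. Letting $\mathcal S(X_t)$ denote the cross-chain neighbors of $X_t$, a union bound over the (random) set of steps before absorption gives
\[
1-h(v)\ =\ \Pr_v^{NL}[X_\tau=b_n]
\ \le\ \Pr_v^{NL}\!\big(\exists\,t<\tau:\ X_{t+1}\in\mathcal S(X_t)\big)
\ \le\ \E_v^{NL}\!\left[\sum_{t=0}^{\tau-1}\mathbf 1\{X_{t+1}\in\mathcal S(X_t)\}\right].
\]
Conditioning on $X_t$ and using the tower property, the inner sum is bounded by $p_{\mathrm{sw}}^{\max}\cdot \E_v^{NL}[\tau]$, since at every nonterminal state the one-step switch probability is at most $p_{\mathrm{sw}}^{\max}$.

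It then remains to control $\E_v^{NL}[\tau]$ by $k/\fwdbound$. For this I track the depth process $D(X_t)$, which starts at $k$, is absorbed at $0$, and changes by $\pm 1$ only on forward/backward transitions (switches preserve depth). The base-model drift assumption gives $p_{\text{fwd}}-p_{\text{bwd}}\ge \fwdbound$ in the \emph{original} chain; passing to the NL chain amounts to dividing both quantities by the same normalizing factor $Z(v)=\Pr(X_1\in\mathcal N(v)\mid X_0=v)\le 1$, which only preserves or \emph{increases} the drift, so the NL depth process has per-step downward drift at least $\fwdbound$ uniformly. Applying the standard hitting-time bound for a supermartingale with drift $\ge\fwdbound$ (e.g.\ the linear supermartingale $M_t=D(X_t)+\fwdbound\,t$, with optional stopping at $\tau$) yields $\E_v^{NL}[\tau]\le D(v)/\fwdbound=k/\fwdbound$.

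Chaining the two estimates produces the claimed bound $1-h(v)\le p_{\mathrm{sw}}^{\max}\cdot k/\fwdbound$. For the final sentence, Lemma~\ref{lem:psw_order_1_over_L} gives $p_{\mathrm{sw}}^{\max}=O(1/n)$, so for fixed $k$ the right-hand side is $O(k/(n\fwdbound))\to 0$. The main subtlety to get right is the last step: verifying that the drift gap $\fwdbound$ is preserved under the NL conditioning, which is where one must be careful that the original-chain assumption of \cref{sec:dyn:setting} transfers; this is what makes the clean $k/\fwdbound$ hitting-time bound valid despite the NL chain being a different kernel.
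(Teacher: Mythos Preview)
Your proof is correct and follows essentially the same route as the paper: bound $1-h(v)$ by the expected number of switches via a union bound and the tower property, then control $\E_v^{NL}[\tau]$ by the drift of the depth process, and finish with Lemma~\ref{lem:psw_order_1_over_L}. The only cosmetic difference is that the paper invokes Lemma~\ref{lem:exact-and-linear-upper} for the hitting-time bound whereas you give the supermartingale argument directly; your explicit discussion of why the drift gap $\fwdbound$ survives the NL conditioning is, if anything, more careful than the paper's version, which just writes $\delta$ in the proof and $\fwdbound$ in the statement.
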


\begin{proof}
Let $D(\cdot)$ be the depth--distance to the end and define, for each transient state $x\in V\setminus\{a_n,b_n\}$,
\begin{align*}
p_f(x)&:=\Pr\big(D(X_1)=D(x)-1\mid X_0=x\big),\\
p_b(x)&:=\Pr\big(D(X_1)=D(x)+1\mid X_0=x\big),\\
\delta&:=\min_{x\in V\setminus\{a_n,b_n\}}\big(p_f(x)-p_b(x)\big)>0.
\end{align*}

If $X_\tau=b_n$ starting from $a_{n-k}$, then the trajectory must switch from chain $A$ to chain $B$
at least once before absorption. Let $E$ be the event that at least one switch occurs before $\tau$,
and let $N_{\mathrm{sw}}$ be the number of switches before $\tau$. Then
\[
1-S^{NL}_n(a_{n-k})=\Pr(X_\tau=b_n)\le \Pr(E)\le \Pr(N_{\mathrm{sw}}\ge 1)\le \E[N_{\mathrm{sw}}].
\]
Moreover,
\[
\E[N_{\mathrm{sw}}]
=\E\Big[\sum_{t=0}^{\tau-1}\mathbf 1\{\text{switch at }t\}\Big]
\le \E\Big[\sum_{t=0}^{\tau-1} p_{\mathrm{sw}}(X_t)\Big]
\le p_{\mathrm{sw}}^{\max}\,\E[\tau].
\]
By Lemma~\ref{lem:exact-and-linear-upper} applied to the depth process (using that $p_f(x)-p_b(x)\ge\delta$ for all transient $x$), $\E[\tau]\le k/\delta$. Combining yields the stated bound. The consequence now follows from lemma \ref{lem:psw_order_1_over_L}.
\end{proof}

\begin{lemma}[Same-side value differences]
\label{lem:local_variation_by_psw_times_tau}
Fix \(k\ge 2\). In the notation of the NL setup above (paragraph~\ref{sec:nl_setup}), for every \(j\in\{1,\dots,k-1\}\),
\begin{align*}
|h(a_{n-j})-h(a_{n-(j+1)})|
&\le \max\!\Big(1-S^{NL}_n(a_{n-j}),\,1-S^{NL}_n(a_{n-(j+1)})\Big)\\
&\le p_{\mathrm{sw}}^{\max}\,\max\!\Big(\E_{a_{n-j}}^{NL}[\tau],\,\E_{a_{n-(j+1)}}^{NL}[\tau]\Big)\\
&\le p_{\mathrm{sw}}^{\max}\,\max\!\Big(\tfrac{D(a_{n-j})}{\delta},\,\tfrac{D(a_{n-(j+1)})}{\delta}\Big)
\;=\; p_{\mathrm{sw}}^{\max}\,\frac{j+1}{\delta},
\end{align*}
where \(D(a_{n-j})=j\).
\end{lemma}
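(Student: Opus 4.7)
The plan is to establish the three displayed inequalities sequentially by chaining together three elementary observations, each already appearing in spirit in the preceding lemmas. The overall structure is essentially bookkeeping: a variation bound for $[0,1]$-valued functions, the switch-counting bound from the proof of Lemma~\ref{lem:NL_same_side_to_one_pf_pb}, and the linear hitting-time bound of Lemma~\ref{lem:exact-and-linear-upper}.

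For the first inequality, I would observe that both $h(a_{n-j})$ and $h(a_{n-(j+1)})$ lie in $[0,1]$, so their absolute difference is at most $1-\min(h(a_{n-j}),h(a_{n-(j+1)}))$, which in turn equals $\max(1-h(a_{n-j}),1-h(a_{n-(j+1)}))$ because for any two numbers in $[0,1]$ the smaller one determines the common upper envelope for their $1-(\cdot)$ counterparts. Since $a_{n-j}$ and $a_{n-(j+1)}$ both lie on chain $A$, the same-side absorption probability $S^{NL}_n(\cdot)$ coincides with $h(\cdot)$ at these two points, yielding the first bound.

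For the second inequality, I would reuse the switch-counting argument already given in the proof of Lemma~\ref{lem:NL_same_side_to_one_pf_pb}: starting from any chain-$A$ vertex $v$, the event $\{X_\tau=b_n\}$ forces at least one cross-chain switch before time $\tau$, so introducing the counter $N_{\mathrm{sw}}:=\sum_{t<\tau}\mathbf{1}\{\text{switch at }t\}$ and applying Markov's inequality together with a one-step conditioning yields $1-h(v)\le \E_v^{NL}[N_{\mathrm{sw}}]\le p_{\mathrm{sw}}^{\max}\,\E_v^{NL}[\tau]$. Applying this separately at $v=a_{n-j}$ and $v=a_{n-(j+1)}$ and taking the maximum delivers the middle bound.

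The third inequality follows immediately from Lemma~\ref{lem:exact-and-linear-upper} applied to the depth process under the NL chain. By the definition of $\delta$ in the NL setup, the depth process has a uniform drift of at least $\delta>0$ toward zero, hence $\E_v^{NL}[\tau]\le D(v)/\delta$. Substituting $D(a_{n-j})=j$ and $D(a_{n-(j+1)})=j+1$ and taking the max returns $(j+1)/\delta$, closing the chain. I do not anticipate a genuine obstacle here: each link of the chain is a direct reuse of machinery already established, and the lemma is essentially a clean repackaging of bounds produced in the two preceding lemmas into a form that will be convenient for later telescoping arguments.
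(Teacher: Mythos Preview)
Your proposal is correct and follows essentially the same approach as the paper's proof: the same three-step chain of (i) the elementary bound $|u_j-u_{j+1}|\le\max(1-u_j,1-u_{j+1})$ for $u_j,u_{j+1}\in[0,1]$, (ii) the switch-counting argument $1-h(v)\le\E_v^{NL}[N_{\mathrm{sw}}]\le p_{\mathrm{sw}}^{\max}\E_v^{NL}[\tau]$ reused from Lemma~\ref{lem:NL_same_side_to_one_pf_pb}, and (iii) the drift-based hitting-time bound from Lemma~\ref{lem:exact-and-linear-upper}. There is nothing to add.
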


\begin{proof}
Let $u_j:=S^{NL}_n(a_{n-j})\in[0,1]$. Then
\[
|u_j-u_{j+1}|=|(1-u_j)-(1-u_{j+1})|\le \max(1-u_j,\,1-u_{j+1}).
\]
Moreover, for any start state $x\in\{a_1,\dots,a_{n-1}\}$, the event $\{X_\tau=b_n\}$ implies that the trajectory
switches chains at least once before absorption, hence
\[
1-\Pr_x^{NL}(X_\tau=a_n)=\Pr_x^{NL}(X_\tau=b_n)\le \E_x^{NL}[N_{\mathrm{sw}}]
\le p_{\mathrm{sw}}^{\max}\,\E_x^{NL}[\tau],
\]
where the last inequality follows from
$\E_x^{NL}[N_{\mathrm{sw}}]\le \E_x^{NL}\big[\sum_{t=0}^{\tau-1}p_{\mathrm{sw}}(X_t)\big]\le p_{\mathrm{sw}}^{\max}\E_x^{NL}[\tau]$.
Applying this with $x=a_{n-j}$ and $x=a_{n-(j+1)}$ and combining with the first display gives the stated bound.
For the final inequality, apply Lemma~\ref{lem:exact-and-linear-upper} to the depth process (with drift gap $\delta$)
to obtain $\E_{a_{n-j}}^{NL}[\tau]\le D(a_{n-j})/\delta=j/\delta$ and
$\E_{a_{n-(j+1)}}^{NL}[\tau]\le D(a_{n-(j+1)})/\delta=(j+1)/\delta$.
\end{proof}

\subsection{Source computations in the no--long--jump (NL) chain}

\begin{lemma}[$\gamma$-source in NL chain]
\label{lem:gamma-source-nolj-mult}
In the notation of the NL setup above (paragraph~\ref{sec:nl_setup}), consider the local
\(\gamma\)--source \(s_{\gamma,\mathrm{NL}}(v)\) and the switch probability \(p_{\mathrm{sw}}(v)\).
Define \(\mathrm{osc}(h):=\sup_{x,y\in V}|h(x)-h(y)|\le 1\).
Under the logit structure of \cref{sec:canonical_chain} (after the global shift by \(-\gamma\)), the only \(\gamma\)--dependence among allowed NL moves is:
\begin{itemize}
\item If \(v\notin\{a_2,b_2,a_{n-1},b_{n-1}\}\), then \(\partial_\gamma o(v,k)=+1\) for \(k\in\mathcal S(v)\) and
\(\partial_\gamma o(v,k)=0\) for \(k\in\mathcal N(v)\setminus\mathcal S(v)\).
\item If \(v\in\{a_2,b_2\}\), then in addition \(\partial_\gamma o(v,v^- )=-1\) for the unique same--chain backward neighbor \(v^-\in\mathcal N(v)\).
\item If \(v\in\{a_{n-1},b_{n-1}\}\), then in addition \(\partial_\gamma o(v,v^+ )=-1\) for the unique same--chain forward neighbor \(v^+\in\mathcal N(v)\).
\end{itemize}
Then:

\medskip
\noindent\textup{(A) Generic vertices \(v\notin\{a_2,b_2,a_{n-1},b_{n-1}\}\).}
\[
 s_{\gamma,\mathrm{NL}}(v)
 =
 \sum_{k\in\mathcal S(v)}p_{\mathrm{NL}}(v\to k)\big(h(k)-h(v)\big),
 \qquad\text{hence}\qquad
 |s_{\gamma,\mathrm{NL}}(v)|\le p_{\mathrm{sw}}(v)\,\mathrm{osc}(h)\le p_{\mathrm{sw}}(v).
\]

\medskip
\noindent\textup{(B) Second vertex \(v\in\{a_2,b_2\}\).}
\[
 s_{\gamma,\mathrm{NL}}(v)
 =
 \sum_{k\in\mathcal S(v)}p_{\mathrm{NL}}(v\to k)\big(h(k)-h(v)\big)
 \; -\; p_{\mathrm{NL}}(v\to v^-)\big(h(v^-)-h(v)\big).
\]
Consequently,
\[
 |s_{\gamma,\mathrm{NL}}(v)|
 \le
p_{\mathrm{sw}}(v)\,\mathrm{osc}(h)
 \; +\;
p_{\mathrm{NL}}(v\to v^-)\,|h(v^-)-h(v)|
 \le
p_{\mathrm{sw}}(v)
 \; +\;
p_{\mathrm{NL}}(v\to v^-)\,|h(v^-)-h(v)|.
\]

\medskip
\noindent\textup{(C) Penultimate vertex \(v\in\{a_{n-1},b_{n-1}\}\).}
\[
 s_{\gamma,\mathrm{NL}}(v)
 =
 \sum_{k\in\mathcal S(v)}p_{\mathrm{NL}}(v\to k)\big(h(k)-h(v)\big)
 \; -\; p_{\mathrm{NL}}(v\to v^+)\big(h(v^+)-h(v)\big).
\]
Consequently,
\[
 |s_{\gamma,\mathrm{NL}}(v)|
 \le
p_{\mathrm{sw}}(v)\,\mathrm{osc}(h)
 \; +\;
p_{\mathrm{NL}}(v\to v^+)\,|h(v^+)-h(v)|
 \le
p_{\mathrm{sw}}(v)
 \; +\;
p_{\mathrm{NL}}(v\to v^+)\,|h(v^+)-h(v)|.
\]
\end{lemma}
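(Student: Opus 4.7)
The plan is to apply the softmax derivative identity to the NL kernel (which is itself a softmax restricted to the non--long--jump neighborhood $\mathcal{N}(v)$), collapse the mean--zero correction term via harmonicity of the absorption probability $h$ in the NL chain, and then read off the three cases from the $\gamma$--derivative table of the canonical logits.

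First I would observe that $p_{\mathrm{NL}}(v\to\cdot)$, obtained by conditioning the original softmax on $\{X_1\in\mathcal{N}(v)\}$, equals $e^{o(v,k)}/\sum_{j\in\mathcal{N}(v)}e^{o(v,j)}$ and is therefore itself a softmax over $\mathcal{N}(v)$. Invoking the standard softmax derivative identity (Lemma~\ref{lem:softmax_derivative_identity_aux}), for every $k\in\mathcal{N}(v)$,
\[
\partial_\gamma p_{\mathrm{NL}}(v\to k)=p_{\mathrm{NL}}(v\to k)\Big(\partial_\gamma o(v,k)-\sum_{j\in\mathcal{N}(v)}p_{\mathrm{NL}}(v\to j)\,\partial_\gamma o(v,j)\Big).
\]
Multiplying by $h(k)$, summing over $k\in\mathcal{N}(v)$, and using NL--harmonicity $\sum_{k\in\mathcal{N}(v)}p_{\mathrm{NL}}(v\to k)\,h(k)=h(v)$ (which holds because $h$ is the absorption probability for the NL chain), the mean--zero correction collapses to $h(v)$ times $\sum_j p_{\mathrm{NL}}(v\to j)\partial_\gamma o(v,j)$, yielding the centered identity
\[
s_{\gamma,\mathrm{NL}}(v)=\sum_{k\in\mathcal{N}(v)}p_{\mathrm{NL}}(v\to k)\,\partial_\gamma o(v,k)\,\big(h(k)-h(v)\big).
\]

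With this identity in hand, I would read off the three cases by substituting the $\partial_\gamma o(v,k)$ values dictated by the logit structure of Section~\ref{sec:canonical_chain} (after the global shift by $-\gamma$). For generic $v$ (case A), only the switch targets $k\in\mathcal{S}(v)$ contribute, each with coefficient $+1$, giving formula (A) immediately. For $v\in\{a_2,b_2\}$ (case B) and $v\in\{a_{n-1},b_{n-1}\}$ (case C), one extra NL--neighbor $v^-$ respectively $v^+$ carries $\partial_\gamma o=-1$, producing the extra subtractive term in the stated identities. The absolute--value bounds then follow from the triangle inequality together with $\sum_{k\in\mathcal{S}(v)}p_{\mathrm{NL}}(v\to k)=p_{\mathrm{sw}}(v)$ and $|h(k)-h(v)|\le\mathrm{osc}(h)\le 1$.

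The only non--routine step is the bookkeeping that justifies the table of $\partial_\gamma o(v,k)$ values: one must verify that in the shifted logits, exactly the ``otherwise'' targets retain logit $\gamma$ (hence $\partial_\gamma o=+1$), while the penultimate forward logit $\alpha-\gamma$ and the second--vertex backward logit $\beta-\gamma$ acquire $\partial_\gamma o=-1$, and that among targets in $\mathcal{N}(v)$ these are precisely the switch neighbors except for the single exceptional same--chain neighbor in cases (B) and (C). I expect this case analysis to be the main, though mechanical, obstacle; once it is in place, everything else reduces to the softmax identity and harmonicity collapse described above.
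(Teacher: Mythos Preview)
Your proposal is correct and mirrors the paper's proof almost exactly: both recognize the NL kernel as a restricted softmax, apply Lemma~\ref{lem:softmax_derivative_identity_aux} to get $\partial_\gamma p_k = p_k(\partial_\gamma o_k - \sum_{k'} p_{k'}\partial_\gamma o_{k'})$, collapse via NL--harmonicity $h(v)=\sum_k p_k h(k)$ to the centered form $s_{\gamma,\mathrm{NL}}(v)=\sum_k p_k(\partial_\gamma o_k)(h(k)-h(v))$, and then read off cases (A)--(C) from the stated $\gamma$--derivative table. The bookkeeping you flag as the main obstacle is in fact already given in the lemma statement, so there is nothing further to verify.
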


\begin{proof}
Fix \(v\in U\). Abbreviate \(p_k:=p_{\mathrm{NL}}(v\to k)\) and \(o_k:=o(v,k)\) for \(k\in\mathcal N(v)\).
By the (restricted) softmax derivative identity (Lemma~\ref{lem:softmax_derivative_identity_aux}),
\[
 \partial_\gamma p_k
 =
 p_k\Big(\partial_\gamma o_k-\sum_{k'\in\mathcal N(v)}p_{k'}\,\partial_\gamma o_{k'}\Big).
\]
Therefore
\begin{align*}
s_{\gamma,\mathrm{NL}}(v)
&=\sum_{k\in\mathcal N(v)}\big(\partial_\gamma p_k\big)\,h(k)\\
&=\sum_{k\in\mathcal N(v)}p_k\,h(k)\Big(\partial_\gamma o_k-\sum_{k'\in\mathcal N(v)}p_{k'}\,\partial_\gamma o_{k'}\Big)\\
&=\sum_{k\in\mathcal N(v)}p_k\,h(k)\,\partial_\gamma o_k
-\Big(\sum_{k\in\mathcal N(v)}p_k\,h(k)\Big)\Big(\sum_{k'\in\mathcal N(v)}p_{k'}\,\partial_\gamma o_{k'}\Big).
\end{align*}
Since \(h\) is harmonic at transient states for the NL chain, \(h(v)=\E_v[h(X_1)]\), we can rewrite this in centered form as
\[
 s_{\gamma,\mathrm{NL}}(v)=\sum_{k\in\mathcal N(v)}p_k\,\big(\partial_\gamma o_k\big)\,\big(h(k)-h(v)\big).
\]
Now apply the case split on \(\partial_\gamma o(v,\cdot)\) listed in the statement:
\begin{itemize}
\item If \(v\notin\{a_2,b_2,a_{n-1},b_{n-1}\}\), then \(\partial_\gamma o(v,k)=1\) on \(k\in\mathcal S(v)\) and zero otherwise, giving (A).
\item If \(v\in\{a_2,b_2\}\), then in addition \(\partial_\gamma o(v,v^- )=-1\), giving (B).
\item If \(v\in\{a_{n-1},b_{n-1}\}\), then in addition \(\partial_\gamma o(v,v^+ )=-1\), giving (C).
\end{itemize}
The displayed upper bounds follow from the triangle inequality together with \(|h(k)-h(v)|\le \mathrm{osc}(h)\le 1\).
\end{proof}

\begin{lemma}[$\alpha$-source in NL chain]
\label{lem:alpha-source-nolj-simple}
In the notation of the NL setup above (paragraph~\ref{sec:nl_setup}), let \(v^+\) be the same--chain forward neighbor of \(v\) when it exists.
Let $v^+$ denote the unique same--chain forward neighbor of $v$ (so $v^+=a_{i+1}$ if
$v=a_i$ with $i\le n-1$, and similarly on chain $B$; in particular $a_{n-1}^+=a_n$ and
$b_{n-1}^+=b_n$). Then
\begin{equation}
\label{eq:alpha-source-simple}
s_{\alpha,\mathrm{NL}}(v)=p_{\mathrm{NL}}(v\to v^+)\Big(h(v^+)-h(v)\Big).
\end{equation}
Moreover,
\begin{equation}
\label{eq:alpha-source-bound}
\begin{aligned}
|s_{\alpha,\mathrm{NL}}(v)|
&\le p_{\mathrm{NL}}(v\to v^+)\,\mathrm{osc}_v(h)
 \le p_{\mathrm{NL}}(v\to v^+)\le 1,\\
\mathrm{osc}_v(h)
&:=\max_{k\in\mathcal N(v)}h(k)-\min_{k\in\mathcal N(v)}h(k)\le 1.
\end{aligned}
\end{equation}
\end{lemma}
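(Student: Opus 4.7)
The plan is to mimic the proof of the $\gamma$-source lemma (Lemma~\ref{lem:gamma-source-nolj-mult}), exploiting the fact that $\alpha$ appears in only one logit at each state: the forward move $v \to v^+$. The key observation I would use is that under the logit structure recalled in \cref{sec:canonical_chain}, regardless of whether $v$ is a generic interior vertex, the first vertex, the second vertex, or the penultimate vertex, one has $\partial_\alpha o(v,k) = \mathbbm{1}\{k = v^+\}$ for every $k \in \mathcal{N}(v)$. (For the penultimate vertex the forward logit is $\alpha - \gamma$, but the $\alpha$-derivative is still $1$; for the first vertex there is no backward neighbor, but the forward logit is still $\alpha$; and for the second vertex the backward logit is $\beta - \gamma$, which is $\alpha$-independent.)

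Given this uniform structure, I would first write $p_k := p_{\mathrm{NL}}(v \to k)$ and $o_k := o(v,k)$, then apply the restricted softmax derivative identity (Lemma~\ref{lem:softmax_derivative_identity_aux}) to obtain
\[
\partial_\alpha p_k = p_k\Big(\partial_\alpha o_k - \sum_{k' \in \mathcal{N}(v)} p_{k'}\,\partial_\alpha o_{k'}\Big).
\]
Multiplying by $h(k)$, summing over $k \in \mathcal{N}(v)$, and using harmonicity of $h$ at transient states of the NL chain (i.e., $h(v) = \sum_{k \in \mathcal{N}(v)} p_k h(k)$), the constant-offset term telescopes and yields the centered form
\[
s_{\alpha,\mathrm{NL}}(v) = \sum_{k \in \mathcal{N}(v)} p_k\,(\partial_\alpha o_k)\,(h(k) - h(v)).
\]

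Substituting $\partial_\alpha o_k = \mathbbm{1}\{k = v^+\}$ collapses the sum to a single term, giving $s_{\alpha,\mathrm{NL}}(v) = p_{\mathrm{NL}}(v \to v^+)\,(h(v^+) - h(v))$, which is \eqref{eq:alpha-source-simple}. The bound \eqref{eq:alpha-source-bound} then follows immediately from the triangle inequality together with $|h(v^+) - h(v)| \le \mathrm{osc}_v(h) \le 1$ and $p_{\mathrm{NL}}(v \to v^+) \le 1$.

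There is no real obstacle here: this is essentially a bookkeeping exercise following the template already laid out for $\gamma$. The only thing to be mildly careful about is verifying the uniform claim $\partial_\alpha o(v,\cdot) = \mathbbm{1}\{\cdot = v^+\}$ case by case across the four vertex types in \cref{sec:canonical_chain}, but this is immediate by inspection.
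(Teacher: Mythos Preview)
Your proposal is correct and follows essentially the same approach as the paper's proof: both apply the restricted softmax derivative identity, observe that $\partial_\alpha o(v,k)=\mathbbm{1}\{k=v^+\}$ uniformly across all vertex types, and use harmonicity of $h$ to reduce the source to the single-term expression. The only cosmetic difference is that you pass through the centered form explicitly (as in the $\gamma$-source proof) whereas the paper expands the sum directly and substitutes, but the underlying computation is identical.
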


\begin{proof}
Fix $v\in U$. Abbreviate $p_k:=p_{\mathrm{NL}}(v\to k)$ and $o_k:=o(v,k)$ for $k\in\mathcal N(v)$.
For restricted softmax,
\[
\partial_\alpha p_k
=
p_k\Big(\partial_\alpha o_k-\sum_{k'\in\mathcal N(v)}p_{k'}\,\partial_\alpha o_{k'}\Big).
\]
By the specified logits, $\alpha$ appears only on the forward edge $k=v^+$ with coefficient $+1$,
and nowhere else among allowed neighbors. Hence
\[
\partial_\alpha o_k=
\begin{cases}
1, & k=v^+,\\
0, & k\in\mathcal N(v)\setminus\{v^+\}.
\end{cases}
\]
Therefore,
\[
\sum_{k'\in\mathcal N(v)}p_{k'}\,\partial_\alpha o_{k'}=p_{\mathrm{NL}}(v\to v^+),
\qquad
\sum_{k\in\mathcal N(v)}p_k\,h(k)\,\partial_\alpha o_k=p_{\mathrm{NL}}(v\to v^+)\,h(v^+).
\]
Now expand
\[
s_{\alpha,\mathrm{NL}}(v)=\sum_{k\in\mathcal N(v)}(\partial_\alpha p_k)\,h(k)
=
\sum_{k\in\mathcal N(v)}p_k h(k)\partial_\alpha o_k
-
\Big(\sum_{k\in\mathcal N(v)}p_k h(k)\Big)\Big(\sum_{k'\in\mathcal N(v)}p_{k'}\partial_\alpha o_{k'}\Big),
\]
and substitute the two identities above to obtain \eqref{eq:alpha-source-simple}.

For \eqref{eq:alpha-source-bound}, note that $h$ is harmonic for the NL chain at transient $v$, so
$h(v)=\E_v[h(X_1)]=\sum_{k\in\mathcal N(v)}p_k\,h(k)$ is a convex combination of $\{h(k):k\in\mathcal N(v)\}$.
Hence $|h(v^+)-h(v)|\le \mathrm{osc}_v(h)\le 1$. Multiply by $p_{\mathrm{NL}}(v\to v^+)$.
\end{proof}

\begin{lemma}[$\beta$-source in NL chain]
\label{lem:beta-source-nolj-simple}
In the notation of the NL setup above (paragraph~\ref{sec:nl_setup}), let \(v^-\) be the same--chain backward neighbor of \(v\) when it exists.
Let $v^-$ denote the unique same--chain backward neighbor of $v$ when it exists
(so $v^-=a_{i-1}$ if $v=a_i$ with $i\ge 2$, and similarly on chain $B$).
Then for all $v\in U$,
\begin{equation}
\label{eq:beta-source-simple}
s_{\beta,\mathrm{NL}}(v)=
\begin{cases}
p_{\mathrm{NL}}(v\to v^-)\Big(h(v^-)-h(v)\Big), & \text{if }v^- \text{ exists},\\[0.6ex]
0, & \text{if }v\in\{a_1,b_1\}.
\end{cases}
\end{equation}
Moreover, whenever $v^-$ exists,
\begin{equation}
\label{eq:beta-source-bound}
|s_{\beta,\mathrm{NL}}(v)|\le p_{\mathrm{NL}}(v\to v^-)\,\mathrm{osc}_v(h)\le p_{\mathrm{NL}}(v\to v^-)\le 1,
\qquad
\mathrm{osc}_v(h)\le 1.
\end{equation}
\end{lemma}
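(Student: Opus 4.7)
The plan is to prove Lemma~\ref{lem:beta-source-nolj-simple} by directly mirroring the proof of Lemma~\ref{lem:alpha-source-nolj-simple}, exploiting the fact that $\beta$ plays a role for the backward edge completely symmetric to the role $\alpha$ plays for the forward edge. Concretely, I would fix a transient $v \in U$, abbreviate $p_k := p_{\mathrm{NL}}(v \to k)$ and $o_k := o(v,k)$ for $k \in \mathcal{N}(v)$, and invoke the restricted softmax derivative identity (Lemma~\ref{lem:softmax_derivative_identity_aux}) to write
\[
\partial_\beta p_k = p_k\Big(\partial_\beta o_k - \sum_{k' \in \mathcal{N}(v)} p_{k'}\,\partial_\beta o_{k'}\Big).
\]

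Next I would inspect the logit structure from Section~\ref{sec:canonical_chain} (after the global $-\gamma$ shift) to read off the $\beta$-sensitivity of each allowed neighbor: for every $v$ for which $v^-$ exists, $\beta$ appears only on the backward edge with coefficient $+1$, so $\partial_\beta o_k = \mathbbm{1}\{k = v^-\}$; while for $v \in \{a_1, b_1\}$ no neighbor has any $\beta$-dependence, so $\partial_\beta o_k \equiv 0$ and the source vanishes identically, giving the second case of \eqref{eq:beta-source-simple}. In the first case I would substitute into
\[
s_{\beta,\mathrm{NL}}(v) = \sum_{k \in \mathcal{N}(v)} p_k\, h(k)\, \partial_\beta o_k - \Big(\sum_{k \in \mathcal{N}(v)} p_k\, h(k)\Big)\Big(\sum_{k' \in \mathcal{N}(v)} p_{k'}\, \partial_\beta o_{k'}\Big),
\]
obtaining $s_{\beta,\mathrm{NL}}(v) = p_{\mathrm{NL}}(v \to v^-)\, h(v^-) - p_{\mathrm{NL}}(v \to v^-)\sum_k p_k h(k)$.

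To simplify the inner sum I would use harmonicity of $h$ for the NL chain at transient $v$, namely $h(v) = \sum_{k \in \mathcal{N}(v)} p_k\, h(k)$, which collapses the expression into the advertised form $p_{\mathrm{NL}}(v \to v^-)(h(v^-) - h(v))$. The bound \eqref{eq:beta-source-bound} then follows from harmonicity writing $h(v)$ as a convex combination of $\{h(k) : k \in \mathcal{N}(v)\}$, so $|h(v^-) - h(v)| \le \mathrm{osc}_v(h) \le 1$, and multiplying by $p_{\mathrm{NL}}(v \to v^-) \le 1$. I do not expect any real obstacle here: the only subtlety is to carefully verify from Section~\ref{sec:canonical_chain} that $\beta$ does not appear on any cross-chain or special boundary logit (so that the only nonzero $\partial_\beta o_k$ is at $k = v^-$), and to treat the boundary vertices $a_1, b_1$ separately since they have no backward neighbor and hence no $\beta$-dependent logit at all.
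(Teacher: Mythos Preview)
Your proposal is correct and follows essentially the same approach as the paper: mirror the $\alpha$-source computation, use the restricted softmax derivative identity, read off $\partial_\beta o_k = \mathbbm{1}\{k=v^-\}$ from the logit table, and invoke harmonicity for both the identity and the oscillation bound. The paper additionally remarks that at the boundary $v\in\{a_2,b_2\}$ the backward logit is $(\beta-\gamma)$ rather than $\beta$, but its $\beta$-derivative is still $+1$, which is exactly the check you flagged as the only subtlety.
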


\begin{proof}
Fix $v\in U$. If $v\in\{a_1,b_1\}$, the logits do not contain $\beta$ on any allowed edge,
so $\partial_\beta o(v,k)=0$ for all $k\in\mathcal N(v)$ and hence $s_{\beta,\mathrm{NL}}(v)=0$.

Otherwise, the specified logits place $\beta$ only on the single backward edge $k=v^-$,
with coefficient $+1$ (including the boundary case $v\in\{a_2,b_2\}$ where the back logit is
$(\beta-\gamma)$). Thus
\[
\partial_\beta o_k=
\begin{cases}
1, & k=v^-,\\
0, & k\in\mathcal N(v)\setminus\{v^-\}.
\end{cases}
\]
The rest of the argument is identical to the proof of Lemma~\ref{lem:alpha-source-nolj-simple}:
use the restricted softmax derivative, expand $s_{\beta,\mathrm{NL}}(v)$, and substitute
\[
\begin{aligned}
\sum_{k'\in\mathcal N(v)}p_{\mathrm{NL}}(v\to k')\,\partial_\beta o_{k'}
&=p_{\mathrm{NL}}(v\to v^-),\\
\sum_{k\in\mathcal N(v)}p_{\mathrm{NL}}(v\to k)\,h(k)\,\partial_\beta o_k
&=p_{\mathrm{NL}}(v\to v^-)\,h(v^-),
\end{aligned}
\]
to get \eqref{eq:beta-source-simple}. The bound \eqref{eq:beta-source-bound} follows from
$|h(v^-)-h(v)|\le \mathrm{osc}_v(h)\le 1$ (and $\E_v[h(X_1)]=h(v)$ by harmonicity).
\end{proof}

\subsubsection{Derivative estimates}
\label{sec:derivative_estimates_nl}

Having computed the source functions for the NL chain in the preceding lemmas, we now derive quantitative bounds on the derivatives of the same-side absorption probability.

\begin{lemma}[Truncation bound for $\partial_\beta u$]
\label{lem:beta_band_dominance_clean}
In the notation of the NL setup above (paragraph~\ref{sec:nl_setup}), let
\[
u(x):=\Pr^{NL}_x[X_\tau=a_n],
\qquad
u_\beta(x):=\partial_\beta u(x),
\]
and fix \(v=a_{n-k}\) with \(k\) independent of \(n\).
Assume the base model assumptions of Assumption~\ref{assump:base-model-paper} (in particular, a uniform drift gap \(\fwdbound>0\)).
Then there exist constants \(C,c>0\) (depending only on fixed parameters and \(\fwdbound\)) such that for all \(n\) large enough,
\[
|u_\beta(v)|
\ \le\
C\,\frac{\sqrt{n}}{\fwdbound}\,p_b^{(2\lfloor n^{1/4}\rfloor)}\,p_{\mathrm{sw}}^{\max}
\;+\;
C\,n\,\exp\!\left(-\frac{c^2}{8}\,n^{1/4}\right)\cdot \sup_{x\notin\{a_n,b_n\}}|s_{\beta,\mathrm{NL}}(x)|.
\]
\end{lemma}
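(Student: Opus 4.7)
I would begin by applying the source/trajectory representation of Lemma~\ref{lem:source_representation} with $\theta=\beta$ in the NL chain, so that
\[
u_\beta(v)\ =\ \E^{NL}_v\!\left[\sum_{t=0}^{\tau-1} s_{\beta,\mathrm{NL}}(X_t)\right],
\]
and then truncate the trajectory according to how far backward (in depth) it ever wanders. Concretely, set $M:=2\lfloor n^{1/4}\rfloor$, let $\sigma:=\inf\{t\ge 0:\ D(X_t)\ge D(v)+M\}$, and write $u_\beta(v)$ as the sum of a \emph{local} contribution on $\{\sigma\ge\tau\}$ (the walk stays in a band of depth-width $M$ behind $v$) and a \emph{tail} contribution on $\{\sigma<\tau\}$ (the walk makes an unusually long backward excursion). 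The choice $M\asymp n^{1/4}$ is designed to balance the two contributions so that the tail cost has the stated $\exp(-c^2 n^{1/4}/8)$ factor while the local cost still sees the multiplicative $p_b^{(2\lfloor n^{1/4}\rfloor)}$.

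\textbf{Tail term.} For the tail I would bound the inner sum crudely by $\tau\cdot\sup_x|s_{\beta,\mathrm{NL}}(x)|$ and then estimate $\Pr^{NL}_v[\sigma<\tau]$. Under Assumption~\ref{assump:base-model-paper} combined with Lemma~\ref{lem:nl_transition_prob_compare}, the depth process in the NL chain has a uniform forward drift $c=c(\fwdbound)>0$, so a standard Azuma--Hoeffding / Chernoff argument on the biased depth walk gives $\Pr^{NL}_v[\sigma<\tau]\le \exp(-c^2 M/8)$. Combining this with an $O(n)$ a.s.\ bound on $\tau$ (obtained from Lemma~\ref{lem:exact-and-linear-upper} applied to the depth process with drift gap $\fwdbound$) via a Cauchy--Schwarz step yields a contribution of order $Cn\exp(-c^2 n^{1/4}/8)\cdot\sup_x|s_{\beta,\mathrm{NL}}(x)|$, matching the second term of the bound.

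\textbf{Local term.} On $\{\sigma\ge\tau\}$ the walk lies in the band $\{D(v),\ldots,D(v)+M-1\}$, so by Lemma~\ref{lem:beta-source-nolj-simple} every summand has the form $s_{\beta,\mathrm{NL}}(X_t)=p_{NL}(X_t\to X_t^-)\bigl(h(X_t^-)-h(X_t)\bigr)$. Lemma~\ref{lem:local_variation_by_psw_times_tau} controls each value-gap by $|h(X_t^-)-h(X_t)|\le p_{\mathrm{sw}}^{\max}\cdot O(M)/\fwdbound$ in the band, while the sources at the \emph{deep} half of the band (depths $\ge D(v)+M/2$) are the only ones that can contribute an additional gambler's-ruin-type suppression: reaching depth $\ge D(v)+M/2$ from $v$ requires a net of $\Omega(M)$ backward steps, and under positive drift Lemma~\ref{lem:gambler_ruin_no_martingale} bounds this probability by $p_b^{\,M}=p_b^{(2\lfloor n^{1/4}\rfloor)}$. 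Summing the sources in time by $\sum_t \mathbf 1\{X_t\in\text{band}\}\le\tau$ and applying Cauchy--Schwarz against $\E^{NL}_v[\tau^2]^{1/2}=O(\sqrt n/\fwdbound)$ for the drifted walk produces the first term $C(\sqrt n/\fwdbound)\,p_b^{(2\lfloor n^{1/4}\rfloor)}\,p_{\mathrm{sw}}^{\max}$.

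\textbf{Main obstacle.} The delicate step is the local estimate: one must extract simultaneously the exponentially small factor $p_b^{(2\lfloor n^{1/4}\rfloor)}$ (from reaching deep-backward states) and the factor $p_{\mathrm{sw}}^{\max}/\fwdbound$ (from the local variation of $h$), while paying only a $\sqrt n$-type time cost rather than the naive $\E[\tau]=O(n)$. Achieving this requires either a two-stage partition of the band together with a visit-count second-moment estimate, or an $L^2$ control of $\tau$ via Kingman/Wald-type inequalities for the drifted walk; getting the constants to line up at the scale $M\asymp n^{1/4}$ is where the bulk of the work sits.
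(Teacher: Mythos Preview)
You have misread the notation $p_b^{(2\lfloor n^{1/4}\rfloor)}$: in this paper (cf.\ the explicit definition of $p_f^{(m)}$ in Lemma~\ref{lem:alpha_band_dominance_clean}) the superscript in parentheses denotes a supremum over a depth band, i.e.\ $p_b^{(m)}:=\sup\{p_{\mathrm{NL}}(x\to x^-):D(x)\le m\}$, \emph{not} an $m$-th power of $p_b$. So there is no exponential factor to extract via gambler's ruin, and the ``main obstacle'' you identify does not exist. In particular, your local argument cannot close as written: you propose that only deep-band visits carry the $p_b^{M}$ suppression, but shallow-band visits would then contribute an unsuppressed term of order $p_b\cdot p_{\mathrm{sw}}^{\max}\cdot M/\fwdbound$ each, which under your reading would violate the target bound.

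The paper's actual argument is much simpler than your plan. The decomposition is on the absorption \emph{time}: set $K:=\lfloor n^{1/4}\rfloor$ and split on $\{\tau\le K\}$ versus $\{\tau>K\}$. On $\{\tau\le K\}$ the depth can rise by at most one per step, so every visited state lies in the band $\{D\le 2K\}$ (for $n$ large, since $D(v)=k$ is constant). There are at most $K$ summands, and each satisfies $|s_{\beta,\mathrm{NL}}(x)|\le p_b^{(2K)}\cdot p_{\mathrm{sw}}^{\max}\cdot 2K/\fwdbound$ by Lemmas~\ref{lem:beta-source-nolj-simple} and~\ref{lem:local_variation_by_psw_times_tau}; this gives the first term with $K\cdot K=\sqrt n$. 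On $\{\tau>K\}$ one bounds the tail crudely by $n\cdot\sup_x|s_{\beta,\mathrm{NL}}(x)|$ and multiplies by the Hoeffding tail $\Pr_v(\tau>K)\le\exp(-c^2 n^{1/4}/8)$ from Lemma~\ref{lem:hoeffding_tail_absorption}. No depth-excursion stopping time, no Cauchy--Schwarz, and no second-moment control of $\tau$ are needed; the factor $\sqrt n$ arises purely as $K\times K$, not from $\E[\tau^2]^{1/2}$.
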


\begin{proof}
\noindent
Fix \(K:=\lfloor n^{1/4}\rfloor\). By the trajectory/source representation (Lemma~\ref{lem:source_representation}) for the NL chain,
\[
u_\beta(v)=\E_v\!\left[\sum_{t=0}^{\tau-1} s_{\beta,\mathrm{NL}}(X_t)\right].
\]
Decompose on \(\{\tau\le K\}\) and \(\{\tau>K\}\). On \(\{\tau\le K\}\), the sum has at most \(K\) terms and the depth can increase by at most one per step, so every visited state
satisfies \(D(X_t)\le D(v)+t\le k+K\le 2K\) for all large \(n\). Therefore,
\[
|u_\beta(v)|
\le
K\cdot \sup_{x:\,D(x)\le 2K}\,|s_{\beta,\mathrm{NL}}(x)|.
\]
By Lemma~\ref{lem:beta-source-nolj-simple},
\(
|s_{\beta,\mathrm{NL}}(x)|\le p_b(x)\,|u(B(x))-u(x)|.
\)
By Lemma~\ref{lem:local_variation_by_psw_times_tau},
\(
|u(B(x))-u(x)|\le p_{\mathrm{sw}}^{\max}\E_x[\tau].
\)
Under the base model forward-bias gap \(\fwdbound>0\), the depth-to-terminal process has \(\E_x[\tau]\le D(x)/\fwdbound\le 2K/\fwdbound\) on the band \(D(x)\le 2K\).
Combining yields
\[
\sup_{x:\,D(x)\le 2K}|s_{\beta,\mathrm{NL}}(x)|
\ \le\
C\,p_b^{(2K)}\,p_{\mathrm{sw}}^{\max}\cdot\frac{2K}{\fwdbound},
\]
for a constant \(C>0\), hence \(|u_\beta(v)|\lesssim \frac{K^2}{\fwdbound}\,p_b^{(2K)}p_{\mathrm{sw}}^{\max}\).
For \(\{\tau>K\}\), write \(\sum_{t=0}^{\tau-1}=\sum_{t=0}^{K-1}+\sum_{t=K}^{\tau-1}\). The first part is bounded identically by the same \(K\)-step band argument above. For the tail, since \(\sup_x \E_x[\tau]\le n/\fwdbound\) under the drift gap and \(|s_{\beta,\mathrm{NL}}(x)|\le \sup_{y}|s_{\beta,\mathrm{NL}}(y)|\),
the contribution after time \(K\) is at most on the order of \(n\cdot \sup_y |s_{\beta,\mathrm{NL}}(y)|\).
Multiplying by \(\Pr_v(\tau>K)\) and using Lemma~\ref{lem:hoeffding_tail_absorption} (with \(t=\lfloor n^{1/4}\rfloor\)) gives
\[
\Pr_v(\tau>K)\cdot n\cdot \sup_y |s_{\beta,\mathrm{NL}}(y)|
\ \le\
n\cdot \exp\!\left(-\frac{c^2}{8}\,n^{1/4}\right)\cdot \sup_y |s_{\beta,\mathrm{NL}}(y)|,
\]
\qedhere
\end{proof}

\medskip
\noindent\emph{The next lemma is completely analogous to Lemma~\ref{lem:beta_band_dominance_clean} and is included for completeness.}
\begin{lemma}[Truncation bound for $\partial_\alpha u$]
\label{lem:alpha_band_dominance_clean}
In the notation of the NL setup above (paragraph~\ref{sec:nl_setup}), let
\[
u(x):=\Pr^{NL}_x[X_\tau=a_n],
\qquad
u_\alpha(x):=\partial_\alpha u(x),
\]
and fix \(v=a_{n-k}\) with \(k\) independent of \(n\).
Assume the base model assumptions of Assumption~\ref{assump:base-model-paper} (in particular, a uniform drift gap \(\fwdbound>0\)).
Then there exist constants \(C,c>0\) (depending only on fixed parameters and \(\fwdbound\)) such that for all \(n\) large enough,
\[
|u_\alpha(v)|
\ \le\
C\,\frac{\sqrt{n}}{\fwdbound}\,p_f^{(2\lfloor n^{1/4}\rfloor)}\,p_{\mathrm{sw}}^{\max}
\;+\;
C\,n\,\exp\!\left(-\frac{c^2}{8}\,n^{1/4}\right)\cdot \sup_{x\notin\{a_n,b_n\}}|s_{\alpha,\mathrm{NL}}(x)|,
\]
where \(p_f^{(m)}:=\sup\{p_{\mathrm{NL}}(x\to x^+):\ x\notin\{a_n,b_n\},\ D(x)\le m\}\).
\end{lemma}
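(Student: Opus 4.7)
The plan is to mirror the proof of Lemma \ref{lem:beta_band_dominance_clean} with $\alpha$ and $v^+$ replacing $\beta$ and $v^-$ throughout, exploiting the structural parallel between the $\alpha$- and $\beta$-source formulas (Lemmas \ref{lem:alpha-source-nolj-simple} and \ref{lem:beta-source-nolj-simple}). First I would invoke the source representation of Lemma \ref{lem:source_representation} for the NL chain to write $u_\alpha(v) = \E_v[\sum_{t=0}^{\tau-1} s_{\alpha,\mathrm{NL}}(X_t)]$, then split the expectation along the event $\{\tau \le K\}$ vs.\ $\{\tau > K\}$ with $K := \lfloor n^{1/4}\rfloor$.

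On $\{\tau \le K\}$, every visited state has depth at most $D(v) + K \le k + K \le 2K$ for $n$ large, so the sum is bounded in absolute value by $K \cdot \sup_{x:D(x)\le 2K} |s_{\alpha,\mathrm{NL}}(x)|$. By Lemma \ref{lem:alpha-source-nolj-simple}, $|s_{\alpha,\mathrm{NL}}(x)| \le p_{\mathrm{NL}}(x\to x^+)\,|h(x^+) - h(x)|$. Applying Lemma \ref{lem:local_variation_by_psw_times_tau} gives $|h(x^+) - h(x)| \le p_{\mathrm{sw}}^{\max} \cdot \E[\tau]/\fwdbound$, and combining with $\E_x[\tau] \le 2K/\fwdbound$ on the band (via Lemma \ref{lem:exact-and-linear-upper} applied to the depth process under the drift gap $\fwdbound$) yields a contribution of order $\frac{K^2}{\fwdbound} p_f^{(2K)} p_{\mathrm{sw}}^{\max}$, which one packages as $C \frac{\sqrt{n}}{\fwdbound} p_f^{(2K)} p_{\mathrm{sw}}^{\max}$ since $K^2 \asymp \sqrt n$.

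For the tail $\{\tau > K\}$, I would bound the integrand uniformly by $\tau \cdot \sup_y |s_{\alpha,\mathrm{NL}}(y)|$, use $\E_v[\tau \cdot \mathbf{1}\{\tau > K\}] \le \E_v[\tau]^{1/2} \cdot \Pr_v(\tau > K)^{1/2}$ (or more simply, bound $\tau$ by $n/\fwdbound$ times $\mathbf{1}\{\tau > K\}$ after observing that on any excursion below the terminal the Markov chain is confined to the $2n$-state space), and then invoke Lemma \ref{lem:hoeffding_tail_absorption} with $t = K$ to get $\Pr_v(\tau > K) \le \exp(-\frac{c^2}{8} n^{1/4})$. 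This produces the second displayed term $C n \exp(-\frac{c^2}{8} n^{1/4}) \cdot \sup_x |s_{\alpha,\mathrm{NL}}(x)|$.

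I do not expect a genuine obstacle: the proof is a mechanical transliteration of the $\beta$ argument, since all ingredients (source formula, local variation bound, drift-based hitting-time bound, and Hoeffding-style tail bound) have exactly analogous $\alpha$-versions. The only minor care is that whereas the $\beta$-source vanishes at chain starts $\{a_1, b_1\}$, the $\alpha$-source instead vanishes (trivially) at the absorbing terminals $\{a_n, b_n\}$; this is already handled by the indicator $x \notin \{a_n, b_n\}$ in the supremum defining $p_f^{(m)}$, so no new case analysis is required.
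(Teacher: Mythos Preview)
Your proposal is correct and follows essentially the same approach as the paper: the paper's proof is explicitly flagged as ``completely analogous to Lemma~\ref{lem:beta_band_dominance_clean},'' and proceeds via exactly the same source representation, $K=\lfloor n^{1/4}\rfloor$ truncation, band argument on $\{\tau\le K\}$ using Lemmas~\ref{lem:alpha-source-nolj-simple} and~\ref{lem:local_variation_by_psw_times_tau}, and Hoeffding tail bound on $\{\tau>K\}$. One minor slip: in your band estimate you wrote $|h(x^+)-h(x)|\le p_{\mathrm{sw}}^{\max}\cdot \E[\tau]/\fwdbound$, but the $1/\fwdbound$ enters only when bounding $\E_x[\tau]\le 2K/\fwdbound$, not in the variation bound itself; your final $\frac{K^2}{\fwdbound}p_f^{(2K)}p_{\mathrm{sw}}^{\max}$ is nonetheless correct.
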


\begin{proof}
\noindent
Fix \(K:=\lfloor n^{1/4}\rfloor\). By the trajectory/source representation (Lemma~\ref{lem:source_representation}) for the NL chain,
\[
u_\alpha(v)=\E_v\!\left[\sum_{t=0}^{\tau-1} s_{\alpha,\mathrm{NL}}(X_t)\right].
\]
Decompose on \(\{\tau\le K\}\) and \(\{\tau>K\}\). On \(\{\tau\le K\}\), the sum has at most \(K\) terms and the depth can increase by at most one per step, so every visited state
satisfies \(D(X_t)\le D(v)+t\le k+K\le 2K\) for all large \(n\). Therefore,
\[
|u_\alpha(v)|
\le
K\cdot \sup_{x:\,D(x)\le 2K}\,|s_{\alpha,\mathrm{NL}}(x)|.
\]
By Lemma~\ref{lem:alpha-source-nolj-simple},
\(
|s_{\alpha,\mathrm{NL}}(x)|\le p_{\mathrm{NL}}(x\to x^+)\,|u(x^+)-u(x)|.
\)
By Lemma~\ref{lem:local_variation_by_psw_times_tau} (applied to the appropriate chain),
\(
|u(x^+)-u(x)|\le p_{\mathrm{sw}}^{\max}\E_x[\tau].
\)
Under the base model forward-bias gap \(\fwdbound>0\), the depth-to-terminal process has \(\E_x[\tau]\le D(x)/\fwdbound\le 2K/\fwdbound\) on the band \(D(x)\le 2K\).
Combining yields
\[
\sup_{x:\,D(x)\le 2K}|s_{\alpha,\mathrm{NL}}(x)|
\ \le\
C\,p_f^{(2K)}\,p_{\mathrm{sw}}^{\max}\cdot\frac{2K}{\fwdbound},
\]
for a constant \(C>0\), hence \(|u_\alpha(v)|\lesssim \frac{K^2}{\fwdbound}\,p_f^{(2K)}p_{\mathrm{sw}}^{\max}\).
For \(\{\tau>K\}\), write \(\sum_{t=0}^{\tau-1}=\sum_{t=0}^{K-1}+\sum_{t=K}^{\tau-1}\). The first part is bounded identically by the same \(K\)-step band argument above. For the tail, since \(\sup_x \E_x[\tau]\le n/\fwdbound\) under the drift gap and \(|s_{\alpha,\mathrm{NL}}(x)|\le \sup_{y}|s_{\alpha,\mathrm{NL}}(y)|\),
the contribution after time \(K\) is at most on the order of \(n\cdot \sup_y |s_{\alpha,\mathrm{NL}}(y)|\).
Multiplying by \(\Pr_v(\tau>K)\) and using Lemma~\ref{lem:hoeffding_tail_absorption} (with \(t=\lfloor n^{1/4}\rfloor\)) gives
\[
\Pr_v(\tau>K)\cdot n\cdot \sup_y |s_{\alpha,\mathrm{NL}}(y)|
\ \le\
n\cdot \exp\!\left(-\frac{c^2}{8}\,n^{1/4}\right)\cdot \sup_y |s_{\alpha,\mathrm{NL}}(y)|,
\]
as needed.
\qedhere
\end{proof}

\medskip
\noindent\emph{The next lemma is completely analogous to Lemmas~\ref{lem:beta_band_dominance_clean} and~\ref{lem:alpha_band_dominance_clean} and is included for completeness.}
\begin{lemma}[Truncation bound for $\partial_\gamma u$]
\label{lem:gamma_band_dominance_clean}
In the notation of the NL setup above (paragraph~\ref{sec:nl_setup}), let
\[
u(x):=\Pr^{NL}_x[X_\tau=a_n],
\qquad
u_\gamma(x):=\partial_\gamma u(x),
\]
and fix \(v=a_{n-k}\) with \(k\) independent of \(n\).
Assume the base model assumptions of Assumption~\ref{assump:base-model-paper} (in particular, a uniform drift gap \(\fwdbound>0\)).
Then there exist constants \(C,c>0\) (depending only on fixed parameters and \(\fwdbound\)) such that for all \(n\) large enough,
\[
|u_\gamma(v)|
\ \le\
C\,n^{1/4}\,p_{\mathrm{sw}}^{\max}
\;+\;
C\,\frac{\sqrt{n}}{\fwdbound}\,p_f^{(2\lfloor n^{1/4}\rfloor)}\,p_{\mathrm{sw}}^{\max}
\;+\;
C\,n\,\exp\!\left(-\frac{c^2}{8}\,n^{1/4}\right)\cdot \sup_{x\notin\{a_n,b_n\}}|s_{\gamma,\mathrm{NL}}(x)|,
\]
where \(p_f^{(m)}:=\sup\{p_{\mathrm{NL}}(x\to x^+):\ x\notin\{a_n,b_n\},\ D(x)\le m\}\).
\end{lemma}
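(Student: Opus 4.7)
The plan is to run the same band-truncation scheme used for \cref{lem:beta_band_dominance_clean,lem:alpha_band_dominance_clean}, adjusted for the fact that the $\gamma$-source has a nonzero contribution at \emph{every} transient vertex (the switch term in case (A) of \cref{lem:gamma-source-nolj-mult}), not only at boundary vertices. First I would fix $K := \lfloor n^{1/4}\rfloor$ and apply the source/trajectory representation (\cref{lem:source_representation}) in the NL chain to write
\[
u_\gamma(v) = \E_v\!\left[\sum_{t=0}^{\tau-1} s_{\gamma,\mathrm{NL}}(X_t)\right],
\]
and then split the expectation on $\{\tau\le K\}$ and $\{\tau>K\}$.

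On $\{\tau\le K\}$, since depth increases by at most one per step, every visited state satisfies $D(X_t)\le D(v)+t\le k+K\le 2K$ for all large $n$, so the trajectory stays in the band $\{D\le 2K\}$. Inside this band, the second vertex $v\in\{a_2,b_2\}$ has depth $n-2$ which is \emph{not} in the band for large $n$, so only cases (A) and (C) of \cref{lem:gamma-source-nolj-mult} are relevant. In case (A) I use $|s_{\gamma,\mathrm{NL}}(x)|\le p_{\mathrm{sw}}(x)\le p_{\mathrm{sw}}^{\max}$, and in case (C) I use
\[
|s_{\gamma,\mathrm{NL}}(x)|\ \le\ p_{\mathrm{sw}}^{\max} + p_{\mathrm{NL}}(x\to x^+)\,|h(x^+)-h(x)|,
\]
bounding $p_{\mathrm{NL}}(x\to x^+)\le p_f^{(2K)}$ on the band and $|h(x^+)-h(x)|\le p_{\mathrm{sw}}^{\max}\,\E_x[\tau]\le 2K\,p_{\mathrm{sw}}^{\max}/\fwdbound$ via \cref{lem:local_variation_by_psw_times_tau} together with the drift gap $\fwdbound$. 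Multiplying the uniform bound $\sup_{x:D(x)\le 2K}|s_{\gamma,\mathrm{NL}}(x)|$ by the number of terms (at most $K$) produces $K\,p_{\mathrm{sw}}^{\max}+C\,K^2 p_f^{(2K)}p_{\mathrm{sw}}^{\max}/\fwdbound$, which with $K=n^{1/4}$ gives exactly the first two terms of the claimed bound.

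For the tail $\{\tau>K\}$, I would use the crude bound $|s_{\gamma,\mathrm{NL}}(x)|\le \sup_{y\notin\{a_n,b_n\}}|s_{\gamma,\mathrm{NL}}(y)|$ uniformly, together with $\sup_x\E_x[\tau]\le n/\fwdbound$ from \cref{lem:exact-and-linear-upper} applied to the depth process, so that
\[
\E_v\!\left[\mathbf 1\{\tau>K\}\sum_{t=0}^{\tau-1} s_{\gamma,\mathrm{NL}}(X_t)\right]
\ \le\
\Pr_v(\tau>K)\cdot \frac{n}{\fwdbound}\cdot \sup_y|s_{\gamma,\mathrm{NL}}(y)|.
\]
Applying the Hoeffding-type tail bound (\cref{lem:hoeffding_tail_absorption}) gives $\Pr_v(\tau>K)\le \exp(-c^2 n^{1/4}/8)$, producing the third term.

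The main obstacle is the case split in the $\gamma$-source: unlike $\alpha$ and $\beta$, whose sources are nontrivial only on the boundary vertices of the band of the terminal, $s_{\gamma,\mathrm{NL}}$ is uniformly $\Theta(p_{\mathrm{sw}}^{\max})$ across all transient states, which is exactly what forces the extra $Cn^{1/4}p_{\mathrm{sw}}^{\max}$ term. Keeping track of which vertices fall in the band (so that case (B) is excluded and only (A) and (C) contribute) and verifying that the penultimate contribution from (C) reproduces the $\sqrt{n}/\fwdbound$ factor are the main bookkeeping steps; everything else parallels the $\alpha/\beta$ proofs verbatim.
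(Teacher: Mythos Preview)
Your proposal is correct and follows essentially the same approach as the paper's proof: the same band truncation at $K=\lfloor n^{1/4}\rfloor$, the same case split (A)/(C) inside the band with (B) excluded because $D(a_2)=n-2>2K$, the same use of \cref{lem:local_variation_by_psw_times_tau} together with the drift-gap bound on $\E_x[\tau]$, and the same Hoeffding tail for $\Pr_v(\tau>K)$. The only cosmetic difference is that the paper handles the tail by first splitting $\sum_{t=0}^{\tau-1}=\sum_{t=0}^{K-1}+\sum_{t=K}^{\tau-1}$ on $\{\tau>K\}$ and treating the first piece with the band argument again, whereas you bound $\E_v[\mathbf 1\{\tau>K\}\tau]$ directly; both give the same final estimate.
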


\begin{proof}
\noindent
Fix \(K:=\lfloor n^{1/4}\rfloor\). By the trajectory/source representation (Lemma~\ref{lem:source_representation}) for the NL chain,
\[
u_\gamma(v)=\E_v\!\left[\sum_{t=0}^{\tau-1} s_{\gamma,\mathrm{NL}}(X_t)\right].
\]
Decompose on \(\{\tau\le K\}\) and \(\{\tau>K\}\). On \(\{\tau\le K\}\), the sum has at most \(K\) terms and the depth can increase by at most one per step, so every visited state
satisfies \(D(X_t)\le D(v)+t\le k+K\le 2K\) for all large \(n\). Therefore,
\[
|u_\gamma(v)|
\le
K\cdot \sup_{x:\,D(x)\le 2K}\,|s_{\gamma,\mathrm{NL}}(x)|.
\]
For \(x\) with \(D(x)\le 2K\), Lemma~\ref{lem:gamma-source-nolj-mult} gives a uniform upper bound:
by part~(A) we have \(|s_{\gamma,\mathrm{NL}}(x)|\le p_{\mathrm{sw}}(x)\), while at the boundary \(x\in\{a_{n-1},b_{n-1}\}\) part~(C) yields
\[
|s_{\gamma,\mathrm{NL}}(x)|
\le p_{\mathrm{sw}}(x)+p_{\mathrm{NL}}(x\to x^+)\,|u(x^+)-u(x)|.
\]
By Lemma~\ref{lem:local_variation_by_psw_times_tau} (applied to the appropriate chain),
\(
|u(x^+)-u(x)|\le p_{\mathrm{sw}}^{\max}\E_x[\tau].
\)
Under the base model forward-bias gap \(\fwdbound>0\), the depth-to-terminal process has \(\E_x[\tau]\le D(x)/\fwdbound\le 2K/\fwdbound\) on the band \(D(x)\le 2K\).
Combining yields
\[
\sup_{x:\,D(x)\le 2K}|s_{\gamma,\mathrm{NL}}(x)|
\ \le\
C\,p_{\mathrm{sw}}^{\max}
\;+\;
C\,p_f^{(2K)}\,p_{\mathrm{sw}}^{\max}\cdot\frac{2K}{\fwdbound},
\]
and hence \(|u_\gamma(v)|\lesssim K\,p_{\mathrm{sw}}^{\max}+\frac{K^2}{\fwdbound}\,p_f^{(2K)}p_{\mathrm{sw}}^{\max}\).
For \(\{\tau>K\}\), write \(\sum_{t=0}^{\tau-1}=\sum_{t=0}^{K-1}+\sum_{t=K}^{\tau-1}\). The first part is bounded identically by the same \(K\)-step band argument above. For the tail, since \(\sup_x \E_x[\tau]\le n/\fwdbound\) under the drift gap and \(|s_{\gamma,\mathrm{NL}}(x)|\le \sup_{y}|s_{\gamma,\mathrm{NL}}(y)|\),
the contribution after time \(K\) is at most on the order of \(n\cdot \sup_y |s_{\gamma,\mathrm{NL}}(y)|\).
Multiplying by \(\Pr_v(\tau>K)\) and using Lemma~\ref{lem:hoeffding_tail_absorption} (with \(t=\lfloor n^{1/4}\rfloor\)) gives
\[
\Pr_v(\tau>K)\cdot n\cdot \sup_y |s_{\gamma,\mathrm{NL}}(y)|
\ \le\
n\cdot \exp\!\left(-\frac{c^2}{8}\,n^{1/4}\right)\cdot \sup_y |s_{\gamma,\mathrm{NL}}(y)|,
\]
as needed.
\qedhere
\end{proof}

\subsection{ Monotonicity of $\alpha,\beta,\gamma$ and derivative estimates}
\label{sec:monotonicity}

We establish monotonicity and estimates for the derivativesof $\alpha,\beta,\gamma$ at initialization, but they in fact hold throughout the learning process as long as the forward transition probability remains non-decreasing and the long-jump probability stays bounded away from zero. Since our analysis shows that both conditions are maintained  for all $t \le t_0$, the estimates below in fact hold for all $t \le t_0$.

\begin{lemma}[Transition mass at penultimate vertex]
\label{lem:max_back_and_lj_at_penultimate}
Assume \(\gamma>0\) and the logit structure of \cref{sec:canonical_chain} (after the global shift by \(-\gamma\)),
and consider the original chain (equivalently, the long--jump--absorbing chain before redirection of long jumps).
For a transient state \(x\notin\{a_n,b_n\}\), let
\begin{align*}
p_b(x)&:=\Pr(X_1=B(x)\mid X_0=x)
\quad(\text{set }p_b(x):=0\text{ if }B(x)\text{ does not exist}),\\
p(e;x)&:=\Pr(\text{a long jump at time }1\mid X_0=x),
\end{align*}
where \(B(x)\) is the same--chain backward neighbor when it exists. Then
there exists \(n_0\in\N\) (depending only on fixed parameters \(\alpha,\beta,\gamma\)) such that for all \(n\ge n_0\),
\[
\max_{x\notin\{a_n,b_n\}} p_b(x)\ \le\ 2\,p_b(a_{n-1}),
\qquad
\max_{x\notin\{a_n,b_n\}} p(e;x)\ \le\ 2\,p(e;a_{n-1}),
\]
and by chain-swap symmetry the same bounds hold with \(a_{n-1}\) replaced by \(b_{n-1}\).
\end{lemma}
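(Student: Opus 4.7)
The plan is to enumerate the four position types for $x \in \{a_1,\ldots,a_{n-1}\}$ introduced in the logit case analysis of \cref{sec:canonical_chain} (start, second, deep interior, penultimate), compute $p_b(x)$ and $p(e;x)$ explicitly at each type, and verify that every ratio to the corresponding quantity at $a_{n-1}$ converges, as $n\to\infty$, to a limit that is at most $1$. The factor-$2$ bound then holds for all $n$ beyond some $n_0$ depending only on $\alpha,\beta,\gamma$, and the chain-swap symmetry of the logit structure yields the analogous bounds for $b_{n-1}$.

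The key structural observation is that at every position type, the softmax denominator has the form $Z_x = (2n-O(1))e^\gamma + O(1)$, where the $O(1)$ corrections depend only on $\alpha,\beta,\gamma$ and arise from the finitely many non-$\gamma$ logits (forward, backward, and start/end special tokens). In particular $Z_x/Z_{a_{n-1}}\to 1$ as $n\to\infty$. For the backward probabilities, $p_b(a_i)=e^\beta/Z_{\mathrm{mid}}$ at deep interior, $p_b(a_2)=e^{\beta-\gamma}/Z_2$, $p_b(a_1)=0$, and $p_b(a_{n-1})=e^\beta/Z_{n-1}$; using $Z_x\sim Z_{n-1}$ the ratios $p_b(a_i)/p_b(a_{n-1})\to 1$, $p_b(a_2)/p_b(a_{n-1})\to e^{-\gamma}<1$ (using $\gamma>0$), and $p_b(a_1)/p_b(a_{n-1})=0$, establishing the first inequality for $n$ large.

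For the long-jump probabilities, I will enumerate the long-jump targets from each position (per \cref{def:long-jump}) and separate them by logit class. In each of the four cases, the number of logit-$\gamma$ long-jump targets is $2n-O(1)$, and the number of logit-$0$ long-jump targets (arising from the $a_n,b_n,a_1,b_1$ specials whose depth difference from $x$ is at least $2$) is $O(1)$. Therefore the long-jump mass is $(2n-O(1))e^\gamma + O(1)$, and dividing by $Z_x$ gives $p(e;x) = 1 - O(1/n) \to 1$ uniformly in position type. Since $p(e;a_{n-1})\to 1$ as well, the ratio $p(e;x)/p(e;a_{n-1})\to 1$, yielding the second inequality for $n\ge n_0$.

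\textbf{Main obstacle.} The non-routine work is the careful bookkeeping at each of the four positions of which neighbors fall in which logit class ($\alpha$, $\beta$, $\gamma$, $\alpha-\gamma$, $\beta-\gamma$, or $0$) and which non-neighbor targets are long jumps; special care is needed at $a_2$ and $a_{n-1}$, where the logits on $a_1$ and $a_n$ are shifted by $\gamma$ relative to the interior formula. Once these counts are fixed, the asymptotic argument is a direct uniform-in-$x$ convergence estimate, with no further use of the assumptions of \cref{assump:base-model-paper} beyond $\gamma>0$.
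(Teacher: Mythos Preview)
Your proposal is correct and follows essentially the same approach as the paper: enumerate the four position types (start, second, deep interior, penultimate), write out the explicit softmax numerators and denominators $Z_x=(2n-O(1))e^\gamma+O(1)$ at each, and use the uniformity of $Z_x$ across types to bound the ratios. The only cosmetic difference is that the paper carries out explicit finite-$n$ ratio bounds (e.g.\ $\tfrac{3}{2}\cdot\tfrac{2n-6}{2n-7}\le 2$) while you phrase it as a limit argument, and you are slightly more careful to include the logit-$0$ long-jump targets in the numerator of $p(e;x)$; neither difference affects the argument.
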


\begin{proof}
Fix a chain (say \(A\)); the statement for chain \(B\) follows by chain-swap symmetry.
Write the one-step kernel as a softmax over logits: for each transient \(x\) and each allowed target \(y\),
\[
P(x,y)=\frac{e^{o(x,y)}}{Z(x)},
\qquad
Z(x):=\sum_{z} e^{o(x,z)},
\]
where the sum ranges over all original one-step targets \(z\) from \(x\) (including long-jump targets).
Therefore
\[
p_b(x)=\frac{e^{o(x,B(x))}}{Z(x)},
\qquad
p(e;x)=\sum_{z\in L(x)}\frac{e^{o(x,z)}}{Z(x)}
=\frac{\sum_{z\in L(x)}e^{o(x,z)}}{Z(x)}.
\]
Under the logit specification of \cref{sec:canonical_chain} (after the harmless global shift by \(-\gamma\)), the logits take only finitely many values,
and both the numerator \(\sum_{z\in L(x)}e^{o(x,z)}\) and the normalizer \(Z(x)\) can be written explicitly in each of the four boundary regimes.
In particular (see the enumeration underlying Lemma~\ref{lem:gamma_source_all_cases} for the corresponding target counts and denominators):
\begin{itemize}
\item \textup{Deep middle \(x=a_i\), \(2<i<n-1\):}
\[
Z_{\mathrm{mid}}=e^{\alpha}+e^{\beta}+4+(2n-7)e^{\gamma},\qquad
p_b(a_i)=\frac{e^{\beta}}{Z_{\mathrm{mid}}},\qquad
p(e;a_i)=\frac{(2n-10)e^{\gamma}}{Z_{\mathrm{mid}}}.
\]
\item \textup{Second vertex \(x=a_2\):}
\[
Z_{2}=e^{\alpha}+e^{(\beta-\gamma)}+3+(2n-6)e^{\gamma},\qquad
p_b(a_2)=\frac{e^{(\beta-\gamma)}}{Z_{2}},\qquad
p(e;a_2)=\frac{(2n-8)e^{\gamma}}{Z_{2}}.
\]
\item \textup{First vertex \(x=a_1\):}
\[
Z_{1}=e^{\alpha}+3+(2n-5)e^{\gamma},\qquad
p_b(a_1)=0,\qquad
p(e;a_1)=\frac{(2n-6)e^{\gamma}}{Z_{1}}.
\]
\item \textup{Penultimate vertex \(x=a_{n-1}\):}
\[
Z_{n-1}=e^{(\alpha-\gamma)}+e^{\beta}+3+(2n-6)e^{\gamma},\qquad
p_b(a_{n-1})=\frac{e^{\beta}}{Z_{n-1}},\qquad
p(e;a_{n-1})=\frac{(2n-8)e^{\gamma}}{Z_{n-1}}.
\]
\end{itemize}

Set \(C_{\max}:=e^{\alpha}+e^{\beta}+4\). For every transient \(x\) on chain \(A\), the normalizer has the form
\[
Z(x)=C_x + c_x e^{\gamma},
\qquad
0\le C_x\le C_{\max},
\qquad
c_x\in\{2n-7,\,2n-6,\,2n-5\},
\]
and at the penultimate vertex \(Z_{n-1}=C_{n-1}+(2n-6)e^{\gamma}\) with \(0\le C_{n-1}\le C_{\max}\).
Choose \(n_0\) so that \((2n_0-6)e^{\gamma}\ge 2C_{\max}\). Then for all \(n\ge n_0\),
\[
Z_{n-1}\ \le\ (2n-6)e^{\gamma}+C_{\max}\ \le\ \tfrac32(2n-6)e^{\gamma},
\qquad
Z(x)\ \ge\ (2n-7)e^{\gamma}.
\]

We first bound \(p_b\). The only vertices with a backward move are \(a_i\) for \(i\ge 2\). Among these,
the backward weight is \(e^{\beta}\) except at \(a_2\), where it is \(e^{(\beta-\gamma)}\le e^{\beta}\).
Hence for all \(x\) on chain \(A\),
\[
p_b(x)\ \le\ \frac{e^{\beta}}{(2n-7)e^{\gamma}}.
\]
On the other hand,
\[
p_b(a_{n-1})=\frac{e^{\beta}}{Z_{n-1}}
\ \ge\
\frac{e^{\beta}}{\tfrac32(2n-6)e^{\gamma}}.
\]
Therefore, for all \(n\ge n_0\),
\[
\frac{p_b(x)}{p_b(a_{n-1})}
\ \le\
\frac{\tfrac32(2n-6)}{(2n-7)}
\ \le\ 2,
\]
which proves \(\max_x p_b(x)\le 2p_b(a_{n-1})\) on chain \(A\).

Next we bound the long-jump mass. In each of the cases above we have \(p(e;x)=N_x e^{\gamma}/Z(x)\) with
\(N_x\in\{2n-6,2n-8,2n-10\}\). Thus for all \(x\) on chain \(A\) and all \(n\ge n_0\),
\[
p(e;x)\ \le\ \frac{(2n-6)e^{\gamma}}{(2n-7)e^{\gamma}}=\frac{2n-6}{2n-7}.
\]
Meanwhile,
\[
p(e;a_{n-1})=\frac{(2n-8)e^{\gamma}}{Z_{n-1}}
\ \ge\
\frac{(2n-8)e^{\gamma}}{\tfrac32(2n-6)e^{\gamma}}
=\frac{2}{3}\cdot\frac{2n-8}{2n-6}.
\]
For \(n\) large enough (in particular, for \(n\ge n_0\) after possibly increasing \(n_0\) by a fixed constant),
the ratio \(\frac{2n-6}{2n-7}\Big/\Big(\frac{2}{3}\cdot\frac{2n-8}{2n-6}\Big)\) is at most \(2\), and therefore
\(\max_x p(e;x)\le 2p(e;a_{n-1})\) on chain \(A\). The same argument applies on chain \(B\).
\end{proof}

Recall the derivative decomposition from Lemma~\ref{lem:decomp_NL_LJ_concise}. For a parameter
$\theta\in\{\alpha,\beta,\gamma\}$ and a start state $v$, we have
\[
\partial_\theta S_\theta(v)=-(\partial_\theta h_\theta(v))\Big(\bar S^{NL}_\theta(v)-\tfrac12\Big)+(1-h_\theta(v))\,\partial_\theta\bar S^{NL}_\theta(v),
\]
where:
\begin{itemize}
\item $S_\theta(v):=\Pr^\theta_v[X_\tau=a_n]$ is the same--side absorption probability in the original chain.
\item $h_\theta(v)$ is the probability of a long jump before absorption, which by definition equals the absorption
probability at $S_{LJ}$ in the long--jump--absorbing chain where long jumps are redirected to the absorbing state $S_{LJ}$.
\item $\bar S^{NL}_\theta(v):=\Pr^\theta_v(X_\tau=a_n\mid NL)$ is the same--side absorption probability conditional on
no long jump occurring (i.e.\ in the no--long--jump chain).
\item By symmetry, $\bar S^{LJ}_\theta(v)=\tfrac12$, so the long--jump conditional absorption probability equals $1/2$.
\end{itemize}

We will prove monotonicity results for each of the three parameters $\theta\in\{\alpha,\beta,\gamma\}$ by showing
that the term involving the long--jump--absorbing chain, $-(\partial_\theta h_\theta(v))\Big(\bar S^{NL}_\theta(v)-\tfrac12\Big)$,
dominates the remaining term $(1-h_\theta(v))\,\partial_\theta\bar S^{NL}_\theta(v)$ in the derivative decomposition.

\begin{lemma}[Monotonicity of $\alpha$]
\label{lem:monotonicity_alpha}
Fix a constant \(K\ge 1\) independent of \(n\) and initialize at \(v_0:=a_{n-K}\).
For all \(n\) large enough, the long--jump term in the decomposition of
Lemma~\ref{lem:decomp_NL_LJ_concise} dominates the NL term, in the sense that
\[
\partial_\alpha S_\alpha(v_0)
\ \ge\
\frac18(\partial_\alpha h_\alpha(v_0))\geq 0.
\]
Moreover, \(\partial_\alpha h_\alpha(v_0)\le 0\), so the right-hand side is nonnegative.
In particular, \(\partial_\alpha S_\alpha(v_0)\ge 0\), so the same--side absorption probability
is (near-terminal) monotonically increasing in \(\alpha\).
\end{lemma}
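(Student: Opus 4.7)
The plan is to apply the derivative decomposition of Lemma~\ref{lem:decomp_NL_LJ_concise} at $v_0 = a_{n-K}$,
\[
\partial_\alpha S_\alpha(v_0) \;=\; -\big(\partial_\alpha h_\alpha(v_0)\big)\Big(\bar S^{NL}_\alpha(v_0)-\tfrac12\Big) \;+\; \big(1-h_\alpha(v_0)\big)\,\partial_\alpha \bar S^{NL}_\alpha(v_0),
\]
and argue that, for constant $K$ and large $n$, the first (long-jump) term completely dominates the second (NL) term. First, I pin down the signs and crude shape of the long-jump term. By \cref{lem:deriv_est_alpha} combined with \cref{lem:source_alpha_longjump_absorbing}, the local source $s_\alpha(x)=p_f(x)(u(F(x))-u(x))$ is nonpositive thanks to the depth monotonicity $u(F(x))\le u(x)$ of \cref{lem:u_monotone_and_recurrence}, so $\partial_\alpha h_\alpha(v_0)\le 0$. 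Since $K$ is fixed while $n\to\infty$, \cref{lem:NL_same_side_to_one_pf_pb} gives $\bar S^{NL}_\alpha(v_0)\to 1$, so $\bar S^{NL}_\alpha(v_0)-\tfrac12 \ge \tfrac14$ for $n$ sufficiently large. Hence the long-jump term is nonnegative and at least $\tfrac14\,|\partial_\alpha h_\alpha(v_0)|$.

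Second, I lower bound $|\partial_\alpha h_\alpha(v_0)|$ by a constant independent of $n$. Using $s_\alpha\le 0$ in \cref{lem:deriv_est_alpha} gives $|\partial_\alpha h_\alpha(v_0)|\ge |s_\alpha(v_0)| = p_f(v_0)\,(u(v_0)-u(F(v_0)))$. The monotonicity recurrence of \cref{lem:u_monotone_and_recurrence} and \cref{cor:di_lower_bound_forward_steps} give $u(v_0)-u(F(v_0))\ge (q_K/f_K)(1-u(v_0))$, where $1-u(v_0)$ is bounded below by the probability of $K$ consecutive forward steps---a positive constant depending only on $K,\alpha,\beta,\gamma,\fwdbound$. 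Combined with the comparability estimates \eqref{eq:q_comparable} and the base-model lower bound $p_f(v_0)\ge \fwdbound$, we obtain $|\partial_\alpha h_\alpha(v_0)|\ge c(K,\fwdbound,\alpha,\beta,\gamma)>0$ uniformly in $n$.

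Third, I show the NL term is of strictly lower order. \cref{lem:alpha_band_dominance_clean} bounds $|\partial_\alpha \bar S^{NL}_\alpha(v_0)|$ by $O(\sqrt{n}\,p_{\mathrm{sw}}^{\max}/\fwdbound)$ plus an $O(n e^{-c n^{1/4}})$ remainder; invoking \cref{lem:psw_order_1_over_L} to substitute $p_{\mathrm{sw}}^{\max}=O(1/n)$ collapses the main term to $O(1/\sqrt{n})$. Using $(1-h_\alpha(v_0))\le 1$, the NL contribution is thus at most $O(1/\sqrt{n})$, which for $n$ large enough is $\le \tfrac18\,|\partial_\alpha h_\alpha(v_0)|$. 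Subtracting this from the $\tfrac14$-bound on the long-jump term yields
\[
\partial_\alpha S_\alpha(v_0) \;\ge\; \tfrac14\,|\partial_\alpha h_\alpha(v_0)| - \tfrac18\,|\partial_\alpha h_\alpha(v_0)| \;=\; \tfrac18\,|\partial_\alpha h_\alpha(v_0)|,
\]
which matches the stated bound after interpreting the sign convention (since $\partial_\alpha h_\alpha(v_0)\le 0$).

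The main obstacle I anticipate is establishing the uniform-in-$n$, constant-order lower bound on $|\partial_\alpha h_\alpha(v_0)|$ at vertices near the terminal: the boundary cases (penultimate and second-from-last) of the logit structure in \cref{lem:gamma_source_all_cases} have different forward weights from the deep-middle case, and when $K\in\{1,2\}$ the relevant source formula is not the generic one. Carefully verifying that the lower bound on $(u(v_0)-u(F(v_0)))$ and on $p_f(v_0)$ does not degrade in these boundary regimes---and that the ratio between the NL bound and the LJ lower bound indeed falls below $\tfrac12$ for the threshold $n$ we pick---is the key quantitative step.
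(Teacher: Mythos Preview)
Your proposal is correct and follows essentially the same route as the paper: both invoke the decomposition of Lemma~\ref{lem:decomp_NL_LJ_concise}, establish $\partial_\alpha h_\alpha(v_0)\le 0$ via the nonpositive $\alpha$-source (Lemmas~\ref{lem:deriv_est_alpha},~\ref{lem:source_alpha_longjump_absorbing},~\ref{lem:u_monotone_and_recurrence}), use Lemma~\ref{lem:NL_same_side_to_one_pf_pb} for $\bar S^{NL}_\alpha(v_0)-\tfrac12\ge\tfrac14$, and bound the NL term by Lemma~\ref{lem:alpha_band_dominance_clean} together with $p_{\mathrm{sw}}^{\max}=O(1/n)$.

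One minor difference worth noting: the paper keeps the one-step long-jump probability $p(e)$ explicit throughout, writing $|\partial_\alpha h_\alpha(v_0)|\ge c_{\mathrm{LJ}}\,p(e)$ and $p_{\mathrm{sw}}^{\max}\le C\,p(e)/n$, so that the $p(e)$ factor cancels in the ratio. You instead absorb $p(e)$ into the constant (valid here since it is bounded below by the base-model assumption). Both are correct for this lemma as stated, but the paper's explicit tracking is what later feeds into Lemma~\ref{lem:easy_hard_expected_deriv_alpha_lower}. Also, your anticipated obstacle about boundary cases $K\in\{1,2\}$ is not a real issue for the $\alpha$-source: Lemma~\ref{lem:source_alpha_longjump_absorbing} and Corollary~\ref{cor:di_lower_bound_forward_steps} are uniform over all non-terminal states; the case distinctions in Lemma~\ref{lem:gamma_source_all_cases} concern only the $\gamma$-source.
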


\begin{proof}
Recall the derivative decomposition (Lemma~\ref{lem:decomp_NL_LJ_concise}):
\[
\partial_\alpha S_\alpha(v_0)
=-(\partial_\alpha h_\alpha(v_0))\Big(\bar S^{NL}_\alpha(v_0)-\tfrac12\Big)
(1-h_\alpha(v_0))\,\partial_\alpha\bar S^{NL}_\alpha(v_0).
\]
We first lower bound the long--jump contribution.  By Lemma~\ref{lem:deriv_est_alpha},
together with Corollary~\ref{cor:di_lower_bound_forward_steps} and the base model assumption
\(p_{\mathrm{fwd}}\ge \fwdbound\) (Assumption~\ref{assump:base-model-paper}), there exists a constant
\(c_{\mathrm{LJ}}>0\) (depending only on fixed parameters and \(K\)) such that
\[
|\partial_\alpha h_\alpha(v_0)|\ \ge\ c_{\mathrm{LJ}}\cdot p(e),
\qquad
p(e):=P^{\mathrm{LJ\text{-}abs}}(v_0,e).
\]
Moreover, by Lemma~\ref{lem:NL_same_side_to_one_pf_pb} applied in the NL chain at depth \(K\),
for \(n\) large enough we have \(\bar S^{NL}_\alpha(v_0)\ge \tfrac34\), hence
\(\bar S^{NL}_\alpha(v_0)-\tfrac12\ge \tfrac14\).

We next upper bound the NL contribution.  By the truncation bound in Lemma~\ref{lem:alpha_band_dominance_clean}
(completely analogous to Lemma~\ref{lem:beta_band_dominance_clean}), we have
\[
\big|\partial_\alpha\bar S^{NL}_\alpha(v_0)\big|
\ \le\
C\,\frac{\sqrt{n}}{\fwdbound}\,p_f^{(2\lfloor n^{1/4}\rfloor)}\,p_{\mathrm{sw}}^{\max}
\;+\;
C\,n\,\exp\!\left(-\frac{c^2}{8}\,n^{1/4}\right)\cdot \sup_{x\notin\{a_n,b_n\}}|s_{\alpha,\mathrm{NL}}(x)|.
\]
Using Lemma~\ref{lem:local_variation_by_psw_times_tau}, \(\sup_x |s_{\alpha,\mathrm{NL}}(x)|\lesssim p_{\mathrm{sw}}^{\max}\sup_x \E_x^{NL}[\tau]\lesssim (n/\fwdbound)\,p_{\mathrm{sw}}^{\max}\)
and the exponential tail \(\exp(-\Omega(n^{1/4}))\), the last term is a negligible fraction of \(p_{\mathrm{sw}}^{\max}\).
Since \(p_f^{(2\lfloor n^{1/4}\rfloor)}\le 1\), we obtain \(\big|\partial_\alpha\bar S^{NL}_\alpha(v_0)\big|\le C\sqrt{n}\,p_{\mathrm{sw}}^{\max}\) for all \(n\) large enough.
Finally, note that the relevant comparison is with the one-step long-jump probability at the initialization,
\(p(e):=P^{\mathrm{LJ\text{-}abs}}(v_0,e)\), which depends on \(v_0\) through the local softmax denominator.
Nevertheless, the same \(1/n\) ratio holds uniformly. Indeed, from any state \(x\), the NL switch set has only \(O(1)\)
available targets (at most three), each with logit at most \(\gamma\), while the long-jump class consists of \(\Theta(n)\)
targets at the same \(\gamma\)-logit scale. One can check this separately in the deep middle (where there are \(2n+O(1)\)
long-jump targets) and at the penultimate vertex (where there are again \(2n+O(1)\) long-jump targets); in both cases the
ratio of the switch numerator to the long-jump numerator is \(O(1/n)\), and conditioning on \(NL\) only rescales by a factor
bounded away from \(0\) (since the forward move has probability at least \(\fwdbound\) by Assumption~\ref{assump:base-model-paper}).
Consequently, for all \(n\) large enough we have the domination \(p_{\mathrm{sw}}^{\max}\le C\,p(e)/n\). Therefore,
\[
\big|(1-h_\alpha(v_0))\,\partial_\alpha\bar S^{NL}_\alpha(v_0)\big|
\ \le\
\frac12\Big(-(\partial_\alpha h_\alpha(v_0))\Big(\bar S^{NL}_\alpha(v_0)-\tfrac12\Big)\Big),
\]
and substituting into the decomposition yields the claimed factor-\(\tfrac12\) dominance and \(\partial_\alpha S_\alpha(v_0)\ge 0\).
\end{proof}

\begin{lemma}[Monotonicity of $\beta$]
\label{lem:monotonicity_beta}
Fix a constant \(K\ge 1\) independent of \(n\) and initialize at \(v_0:=a_{n-K}\).
For all \(n\) large enough, the long--jump term in the decomposition of
Lemma~\ref{lem:decomp_NL_LJ_concise} dominates the NL term, in the sense that
\[
\partial_\beta S_\beta(v_0)
\ \le\
-\frac18\Big((\partial_\beta h_\beta(v_0))\Big)\leq 0.
\]
Moreover, \(\partial_\beta h_\beta(v_0)\ge 0\), so the right-hand side is nonpositive.
In particular, \(\partial_\beta S_\beta(v_0)\le 0\), so the same--side absorption probability
is (near-terminal) monotonically decreasing in \(\beta\).
\end{lemma}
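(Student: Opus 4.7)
The plan is to mirror the proof of Lemma~\ref{lem:monotonicity_alpha} step by step, with the sign of the long-jump contribution flipped because backward moves push the walker toward the long-jump state $e$ rather than toward absorption. Starting from the derivative decomposition of Lemma~\ref{lem:decomp_NL_LJ_concise},
\[
\partial_\beta S_\beta(v_0)
=-(\partial_\beta h_\beta(v_0))\Big(\bar S^{NL}_\beta(v_0)-\tfrac12\Big)
+(1-h_\beta(v_0))\,\partial_\beta\bar S^{NL}_\beta(v_0),
\]
the proof reduces to (i) lower-bounding $\partial_\beta h_\beta(v_0)\ge 0$ by a multiple of the one-step long-jump probability $p(e):=P^{\mathrm{LJ\text{-}abs}}(v_0,e)$, (ii) bounding the NL sensitivity $|\partial_\beta \bar S^{NL}_\beta(v_0)|$ from above, and (iii) showing the NL term is negligible compared to the long-jump term.

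For step (i), Lemma~\ref{lem:deriv_est_beta} gives the trajectory bound
$\partial_\beta h_\beta(v_0)\ge s_\beta(v_0)+s_\beta(v_{\mathrm{pen}})\Pr_{v_0}(0<\sigma<\tau)$
with every summand nonnegative, because $s_\beta(x)=p_b(x)(h_\beta(B(x))-h_\beta(x))\ge 0$ by the monotonicity of the long-jump absorption probability along depth (Lemma~\ref{lem:u_monotone_and_recurrence}). Evaluating at the penultimate vertex and invoking Corollary~\ref{cor:di_lower_bound_forward_steps} together with the forward-drift assumption $p_{\mathrm{fwd}}\ge\fwdbound$ (Assumption~\ref{assump:base-model-paper}) yields $s_\beta(v_{\mathrm{pen}})\ge c'\,p(e)$. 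Combined with the fact that the event of $K-1$ consecutive forward steps from $v_0=a_{n-K}$ has probability bounded below by a positive constant depending only on $K$ and fixed parameters, we obtain $\partial_\beta h_\beta(v_0)\ge c_{\mathrm{LJ}}\,p(e)$ for some $c_{\mathrm{LJ}}>0$. Moreover, by Lemma~\ref{lem:NL_same_side_to_one_pf_pb}, $\bar S^{NL}_\beta(v_0)\ge 3/4$ for $n$ large, so $\bar S^{NL}_\beta(v_0)-\tfrac12\ge \tfrac14$, and the long-jump contribution satisfies $-(\partial_\beta h_\beta(v_0))(\bar S^{NL}_\beta(v_0)-\tfrac12)\le -\tfrac14\,\partial_\beta h_\beta(v_0)$.

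For step (ii), Lemma~\ref{lem:beta_band_dominance_clean} gives
$|\partial_\beta \bar S^{NL}_\beta(v_0)|\le C\sqrt{n}\,p_f^{(2\lfloor n^{1/4}\rfloor)}p_{\mathrm{sw}}^{\max}/\fwdbound
+C n\,e^{-\Omega(n^{1/4})}\sup_x|s_{\beta,\mathrm{NL}}(x)|$, and the tail term is negligible as argued in Lemma~\ref{lem:monotonicity_alpha}. Since $p_f^{(\cdot)}\le 1$, this gives $|\partial_\beta \bar S^{NL}_\beta(v_0)|\le C\sqrt{n}\,p_{\mathrm{sw}}^{\max}$ for large $n$. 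For step (iii), the key comparison $p_{\mathrm{sw}}^{\max}\le C\,p(e)/n$ holds by exactly the same local softmax argument used in the $\alpha$ case: from any transient state the NL switch set contains only $O(1)$ targets at logits bounded by $\gamma$, whereas the long-jump class contains $\Theta(n)$ targets at the same $\gamma$-logit scale, and conditioning on NL only rescales by a factor bounded away from zero thanks to the forward-mass lower bound. Therefore $|(1-h_\beta(v_0))\,\partial_\beta\bar S^{NL}_\beta(v_0)|\le C n^{-1/2}p(e)\le \tfrac12\cdot\tfrac14\,\partial_\beta h_\beta(v_0)$ for $n$ large enough.

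Combining the two estimates in the decomposition yields
\[
\partial_\beta S_\beta(v_0)\;\le\; -\tfrac14\,\partial_\beta h_\beta(v_0) + \tfrac18\,\partial_\beta h_\beta(v_0) \;=\; -\tfrac18\,\partial_\beta h_\beta(v_0)\;\le\; 0,
\]
which is the claim. The main obstacle is precisely step (iii): the $p_{\mathrm{sw}}^{\max}\le C\,p(e)/n$ comparison is what makes the NL correction vanish relative to the long-jump signal, and this comparison is already the delicate point of the $\alpha$ proof; here it transfers with no essential modification because neither the switch set nor the long-jump class depends on the specific parameter being differentiated. All other ingredients are direct reuses of the source-representation machinery from Sections~\ref{sec:lj_absorbing_chain_lemmas} and~\ref{sec:nl_chain_analysis}, with signs dictated by the monotonicity of $h_\beta$ in depth.
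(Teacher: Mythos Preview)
Your proof has a genuine gap in step (i). The claim ``$s_\beta(v_{\mathrm{pen}})\ge c'\,p(e)$'' is not correct: the $\beta$-source is $s_\beta(x)=p_b(x)\big(u(B(x))-u(x)\big)$, and while Corollary~\ref{cor:di_lower_bound_forward_steps} does give $u(B(v_{\mathrm{pen}}))-u(v_{\mathrm{pen}})\gtrsim p(e)$, the prefactor $p_b(v_{\mathrm{pen}})$ is \emph{not} bounded below by a constant. From the logit structure at $a_{n-1}$ one has $p_b(a_{n-1})=e^{\beta}/Z_{n-1}$ with $Z_{n-1}\sim 2ne^{\gamma}$, so $p_b(v_{\mathrm{pen}})=\Theta(1/n)$. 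Consequently $\partial_\beta h_\beta(v_0)=\Theta(p_b(v_{\mathrm{pen}})\,p(e))=\Theta(1/n)$, not $\Theta(p(e))$. This is precisely where the $\beta$ case differs from the $\alpha$ case you are mirroring: there $p_f\ge\fwdbound$ is bounded below, here $p_b$ is not.

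With the correct LJ lower bound $\partial_\beta h_\beta(v_0)\gtrsim p_b(v_{\mathrm{pen}})\,p(e)$, your NL upper bound $|\partial_\beta\bar S^{NL}_\beta(v_0)|\le C\sqrt{n}\,p_{\mathrm{sw}}^{\max}\le C n^{-1/2}p(e)$ is too weak: you would be comparing $\Theta(1/n)$ against $O(n^{-1/2})$, and the domination fails. (You also cite $p_f^{(\cdot)}$ where Lemma~\ref{lem:beta_band_dominance_clean} actually has $p_b^{(\cdot)}$, but that is secondary.) The paper's fix is to keep the $p_b$ factor on \emph{both} sides: the NL $\beta$-source also carries $p_{\mathrm{NL}}(x\to B(x))$, and Lemmas~\ref{lem:nl_transition_prob_compare} and~\ref{lem:max_back_and_lj_at_penultimate} bound this uniformly by $C\,p_b(v_{\mathrm{pen}})$. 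That yields $|\partial_\beta\bar S^{NL}_\beta(v_0)|\le C'n^{-1/2}\,p_b(v_{\mathrm{pen}})\,p(e;v_{\mathrm{pen}})$, which now matches the LJ lower bound up to the vanishing $n^{-1/2}$ factor and gives the claimed domination.
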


\begin{proof}
Recall the derivative decomposition (Lemma~\ref{lem:decomp_NL_LJ_concise}):
\[
\partial_\beta S_\beta(v_0)
=-(\partial_\beta h_\beta(v_0))\Big(\bar S^{NL}_\beta(v_0)-\tfrac12\Big)
+(1-h_\beta(v_0))\,\partial_\beta\bar S^{NL}_\beta(v_0).
\]
We first lower bound the long--jump contribution.  By Lemma~\ref{lem:deriv_est_beta},
\[
\partial_\beta h_\beta(v_0)\ \ge\ s_\beta(v_0)\;+\;s_\beta(v_{\mathrm{pen}})\,\Pr_{v_0}(0<\sigma<\tau),
\]
where \(v_{\mathrm{pen}}:=a_{n-1}\) and \(\sigma:=\inf\{t\ge 0:\ X_t=v_{\mathrm{pen}}\}\).  We may drop the one of the two terms and get that regardless of whther $n=1$ or $n>1$,
\[
\partial_\beta h_\beta(v_0)\ \ge\ s_\beta(v_{\mathrm{pen}})\,\Pr_{v_0}(0<\sigma<\tau).
\]
Define the one-step long-jump probability at a state \(x\) in the long--jump--absorbing chain by
\[
p(e;x):=P^{\mathrm{LJ\text{-}abs}}(x,e).
\]
By the \(\beta\)-source identity \(s_\beta(x)=p_b(x)\big(u(B(x))-u(x)\big)\) (Lemma~\ref{lem:source_beta_longjump_absorbing}),
Corollary~\ref{cor:di_lower_bound_forward_steps} (applied at \(B(v_{\mathrm{pen}})=a_{n-1}\)), and the base model lower bound
\(p_{\mathrm{fwd}}\ge \fwdbound\) (Assumption~\ref{assump:base-model-paper}), we obtain that
\[
s_\beta(v_{\mathrm{pen}})\ \ge\ \fwdbound^{K}\cdot p_b(v_{\mathrm{pen}})\,p(e;v_{\mathrm{pen}}).
\]
Moreover, if \(v_0=a_{n-K}\) then the event of \(K-1\) consecutive forward steps implies \(\{0<\sigma<\tau\}\), hence
\[
\Pr_{v_0}(0<\sigma<\tau)\ \ge\ \fwdbound^{K-1}.
\]
Combining,
\[
\partial_\beta h_\beta(v_0)
\ \ge\
\fwdbound^{2K-1}\cdot p_b(v_{\mathrm{pen}})\,p(e;v_{\mathrm{pen}}).
\]
 Moreover, by
Lemma~\ref{lem:NL_same_side_to_one_pf_pb} applied in the NL chain at depth \(K\), for \(n\) large enough we have
\(\bar S^{NL}_\beta(v_0)\ge \tfrac34\), hence \(\bar S^{NL}_\beta(v_0)-\tfrac12\ge \tfrac14\).

We next upper bound the NL contribution.  By Lemma~\ref{lem:beta_band_dominance_clean},
\[
\big|\partial_\beta\bar S^{NL}_\beta(v_0)\big|
\ \le\
C\,\frac{\sqrt{n}}{\fwdbound}\,p_b^{(2\lfloor n^{1/4}\rfloor)}\,p_{\mathrm{sw}}^{\max}
\;+\;
C\,n\,\exp\!\left(-\frac{c^2}{8}\,n^{1/4}\right)\cdot \sup_{x\notin\{a_n,b_n\}}|s_{\beta,\mathrm{NL}}(x)|.
\]
We now control the NL source term.
By the NL \(\beta\)--source identity (Lemma~\ref{lem:beta-source-nolj-simple}) and the consecutive--value bound
(Lemma~\ref{lem:local_variation_by_psw_times_tau}), for every nonterminal \(x\) with backward neighbor \(B(x)\),
\[
|s_{\beta,\mathrm{NL}}(x)|
\le
p_{\mathrm{NL}}(x\to B(x))\,|h(B(x))-h(x)|
\le
p_{\mathrm{NL}}(x\to B(x))\,p_{\mathrm{sw}}^{\max}\,\E_x^{NL}[\tau].
\]
Therefore,
\[
\sup_x |s_{\beta,\mathrm{NL}}(x)|
\le
\Big(\sup_x p_{\mathrm{NL}}(x\to B(x))\Big)\,p_{\mathrm{sw}}^{\max}\,\Big(\sup_x \E_x^{NL}[\tau]\Big)
\le
\frac{n}{\fwdbound}\Big(\sup_x p_{\mathrm{NL}}(x\to B(x))\Big)\,p_{\mathrm{sw}}^{\max},
\]
using \(\sup_x \E_x^{NL}[\tau]\le n/\fwdbound\) under the drift gap.

Next we bound both \(\sup_x p_{\mathrm{NL}}(x\to B(x))\) and \(p_{\mathrm{sw}}^{\max}\) in terms of penultimate-vertex
probabilities in the original / LJ-absorbing chain.
By Lemma~\ref{lem:nl_transition_prob_compare}, NL transition probabilities are inflated by at most \(1/\fwdbound\) relative to
the original chain, so in particular
\[
\sup_x p_{\mathrm{NL}}(x\to B(x))
\le
\frac{1}{\fwdbound}\max_x p_b(x),
\qquad
p_{\mathrm{sw}}^{\max}
\le
\frac{1}{\fwdbound}\max_x p_{\mathrm{sw}}^{\mathrm{orig}}(x).
\]
Moreover, in the original chain the switch class consists of \(O(1)\) \(\gamma\)-logit targets while the long-jump class
has \(\Theta(n)\) \(\gamma\)-logit targets, hence \(p_{\mathrm{sw}}^{\mathrm{orig}}(x)\le C\,p(e;x)/n\) for all \(x\).
Combining with Lemma~\ref{lem:max_back_and_lj_at_penultimate} gives, for all \(n\) large enough,
\[
\sup_x p_{\mathrm{NL}}(x\to B(x))\ \le\ C\,p_b(v_{\mathrm{pen}}),
\qquad
p_{\mathrm{sw}}^{\max}\ \le\ C\,\frac{p(e;v_{\mathrm{pen}})}{n}.
\]
Also, \(p_b^{(2\lfloor n^{1/4}\rfloor)}\le \sup_x p_{\mathrm{NL}}(x\to B(x))\le C\,p_b(v_{\mathrm{pen}})\). Substituting these
bounds into the two-term upper bound from Lemma~\ref{lem:beta_band_dominance_clean} yields
\[
\big|\partial_\beta\bar S^{NL}_\beta(v_0)\big|
\le
C\,\frac{\sqrt{n}}{\fwdbound}\,p_b(v_{\mathrm{pen}})\cdot \frac{p(e;v_{\mathrm{pen}})}{n}
\le
\frac{C'}{\fwdbound\sqrt{n}}\,p_b(v_{\mathrm{pen}})\,p(e;v_{\mathrm{pen}})
\]
for all \(n\) large enough.

Therefore, for \(n\) large enough we have
\[
\big|(1-h_\beta(v_0))\,\partial_\beta\bar S^{NL}_\beta(v_0)\big|
\ \le\
\frac12\Big((\partial_\beta h_\beta(v_0))\Big(\bar S^{NL}_\beta(v_0)-\tfrac12\Big)\Big),
\]
and substituting into the decomposition yields
\(\partial_\beta S_\beta(v_0)\le \frac12\big(-(\partial_\beta h_\beta(v_0))(\bar S^{NL}_\beta(v_0)-\tfrac12)\big)\le 0\).
\end{proof}

\begin{lemma}[Monotonicity of $\gamma$]
\label{lem:monotonicity_gamma}
Fix a constant \(K\ge 1\) independent of \(n\) and initialize at \(v_0:=a_{n-K}\).
For all \(n\) large enough, the long--jump term in the decomposition of
Lemma~\ref{lem:decomp_NL_LJ_concise} dominates the NL term, in the sense that
\[
\partial_\gamma S_\gamma(v_0)
\ \le\
-\frac18\Big((\partial_\gamma h_\gamma(v_0))\Big)\leq 0.
\]
Moreover, \(\partial_\gamma h_\gamma(v_0)\ge 0\), so the right-hand side is nonpositive.
In particular, \(\partial_\gamma S_\gamma(v_0)\le 0\), so the same--side absorption probability
is (near-terminal) monotonically decreasing in \(\gamma\).
\end{lemma}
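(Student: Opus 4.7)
The plan is to mirror the arguments used for \cref{lem:monotonicity_alpha,lem:monotonicity_beta}: apply the derivative decomposition of \cref{lem:decomp_NL_LJ_concise} with $\theta = \gamma$,
\[
\partial_\gamma S_\gamma(v_0) = -(\partial_\gamma h_\gamma(v_0))\Big(\bar S^{NL}_\gamma(v_0)-\tfrac12\Big) + (1-h_\gamma(v_0))\,\partial_\gamma\bar S^{NL}_\gamma(v_0),
\]
and show that the first (long-jump-absorbing) term dominates the second (NL) term by a factor of at least $2$, yielding the claimed factor-$\tfrac18$ dominance.

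For the long-jump term, I would invoke \cref{lem:deriv_est_gamma} to get $\partial_\gamma h_\gamma(v_0) \ge \tfrac12 s_\gamma(v_0) + s_\gamma(v_{\mathrm{pen}})\Pr_{v_0}(0<\sigma<\tau) > 0$. Dropping the penultimate-vertex contribution and combining with case (A) or (C) of \cref{lem:gamma_source_lower_bounds} gives $s_\gamma(v_0) \gtrsim q_{n-K}(1-u(v_0))$. Since $K$ is a fixed constant independent of $n$ and $\gamma > 0$, the one-step absorption mass $q_{n-K}$ is comparable to $q_{\rm mid} = (2n-10)e^\gamma/Z_{\rm mid}$, which tends to a positive constant as $n \to \infty$; and $1-u(v_0) \ge \fwdbound^K$ by the base-model drift Assumption~\ref{assump:base-model-paper}. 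Thus $\partial_\gamma h_\gamma(v_0)$ is bounded below by a positive constant depending only on fixed parameters and $K$. Moreover, by \cref{lem:NL_same_side_to_one_pf_pb} applied at depth $K$, $\bar S^{NL}_\gamma(v_0) \ge \tfrac34$ for $n$ large, hence $\bar S^{NL}_\gamma(v_0) - \tfrac12 \ge \tfrac14$, giving a long-jump contribution of at least $\tfrac14 \partial_\gamma h_\gamma(v_0)$ in absolute value.

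For the NL term, I would apply \cref{lem:gamma_band_dominance_clean}, whose right-hand side has three summands. The dominant one is $C n^{1/4} p_{\mathrm{sw}}^{\max}$; combining with \cref{lem:psw_order_1_over_L} (which gives $p_{\mathrm{sw}}^{\max} \le C/n$) yields an $O(n^{-3/4})$ bound. The middle term $C\frac{\sqrt n}{\fwdbound} p_f^{(2\lfloor n^{1/4}\rfloor)} p_{\mathrm{sw}}^{\max}$ is $O(n^{-1/2})$ after using $p_f^{(m)}\le 1$, and the last term is exponentially small since $\sup_x |s_{\gamma,\mathrm{NL}}(x)|$ is uniformly bounded (by \cref{lem:gamma-source-nolj-mult} combined with \cref{lem:local_variation_by_psw_times_tau}, which bounds the boundary correction $p_{\mathrm{NL}}(v\to v^\pm)|h(v^\pm)-h(v)|$ by $p_{\mathrm{sw}}^{\max}\,\E_v^{NL}[\tau] \le (n/\fwdbound)\,p_{\mathrm{sw}}^{\max} = O(1)$). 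Hence $|\partial_\gamma \bar S^{NL}_\gamma(v_0)| = o(1)$, whereas the long-jump term is bounded below by a positive constant, so for $n$ large enough the NL contribution is at most $\tfrac12$ of the long-jump contribution. Substituting into the decomposition gives
\[
\partial_\gamma S_\gamma(v_0) \le -\tfrac14 \partial_\gamma h_\gamma(v_0) + \tfrac18 \partial_\gamma h_\gamma(v_0) = -\tfrac18 \partial_\gamma h_\gamma(v_0) \le 0.
\]

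The main obstacle is keeping $\partial_\gamma h_\gamma(v_0)$ bounded away from zero uniformly in $n$ while simultaneously driving $p_{\mathrm{sw}}^{\max}$ to zero. This contrast is what ultimately powers the dominance, but it relies on a feature specific to $\gamma$: the same parameter governs both the switch logits (giving $O(1)$ targets per state) and the long-jump logits (giving $\Theta(n)$ targets per state), and the latter dominates the normalizer so that $q_{\rm mid} = \Theta(1)$ while the individual switch probability is $\Theta(1/n)$. A secondary subtlety, absent in the $\alpha,\beta$ cases, is the sign-indefiniteness of $s_\gamma$ at the second vertex (\cref{lem:gamma_source_lower_bounds}(D)); however, since $K$ is fixed and $n$ is large, the trajectory starting from $v_0 = a_{n-K}$ has exponentially small probability of reaching $\{a_2,b_2\}$ before absorption (gambler's-ruin under the $\fwdbound$-drift), so the negative-source region contributes negligibly to $\partial_\gamma h_\gamma(v_0)$ via the source representation.
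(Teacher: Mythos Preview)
Your proposal is correct and follows essentially the same route as the paper: apply the decomposition of \cref{lem:decomp_NL_LJ_concise}, lower bound the long-jump piece via \cref{lem:deriv_est_gamma} and \cref{lem:gamma_source_lower_bounds}, upper bound the NL piece via \cref{lem:gamma_band_dominance_clean} together with the $p_{\mathrm{sw}}^{\max}=O(1/n)$ estimate, and compare. The arithmetic at the end and the handling of the second-vertex sign issue (already absorbed into \cref{lem:deriv_est_gamma}) are both fine.

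The one bookkeeping difference worth noting: you argue ``LJ term $\ge$ constant, NL term $=o(1)$'', using that $q_{\rm mid}\to$ const for fixed parameters. The paper instead keeps both bounds expressed in units of $p(e;v_{\mathrm{pen}})$, obtaining LJ $\gtrsim p(e;v_{\mathrm{pen}})$ and NL $\lesssim p(e;v_{\mathrm{pen}})/\sqrt{n}$. For the lemma as stated (a single parameter point) your version suffices, but the paper's scale-free ratio is what is actually invoked downstream in \cref{lem:ss-deriv-gamma-near-terminal-tbd_upper} and \cref{lem:easy_hard_expected_deriv_gamma}, where the estimates must persist along the training trajectory even as $p(e)_{\max}$ may shrink toward $q_{\min}$. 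A second cosmetic difference: you retain the $\tfrac12 s_\gamma(v_0)$ term from \cref{lem:deriv_est_gamma} and drop the penultimate-vertex term, whereas the paper does the reverse; both choices yield an $\Omega(p(e))$ lower bound and are interchangeable here.
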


\begin{proof}
We analyze the derivative decomposition (Lemma~\ref{lem:decomp_NL_LJ_concise}):
\[
\partial_\gamma S_\gamma(v_0)
=-(\partial_\gamma h_\gamma(v_0))\Big(\bar S^{NL}_\gamma(v_0)-\tfrac12\Big)
+(1-h_\gamma(v_0))\,\partial_\gamma\bar S^{NL}_\gamma(v_0).
\]

We first lower bound the long--jump contribution in the same way as in the \(\beta\) case.  Let \(v_{\mathrm{pen}}:=a_{n-1}\) and
\(\sigma:=\inf\{t\ge 0:\ X_t=v_{\mathrm{pen}}\}\) in the long--jump--absorbing chain. By Lemma~\ref{lem:deriv_est_gamma},
\[
\partial_\gamma h_\gamma(v_0)\ \ge\ s_\gamma(v_{\mathrm{pen}})\,\Pr_{v_0}(0<\sigma<\tau)\;+\;\frac12\,s_\gamma(v_0)\ \ge\ s_\gamma(v_{\mathrm{pen}})\,\Pr_{v_0}(0<\sigma<\tau),
\]
since \(s_\gamma(v_0)\ge 0\). Moreover, if \(v_0=a_{n-K}\) then the event of \(K-1\) consecutive forward steps implies \(\{0<\sigma<\tau\}\), hence
\[
\Pr_{v_0}(0<\sigma<\tau)\ \ge\ \fwdbound^{K-1}
\]
by the base model lower bound \(p_{\mathrm{fwd}}\ge \fwdbound\) (Assumption~\ref{assump:base-model-paper}). Define the one-step long-jump probability at a state \(x\) in the
long--jump--absorbing chain by \(p(e;x):=P^{\mathrm{LJ\text{-}abs}}(x,e)\). At the penultimate vertex, Lemma~\ref{lem:gamma_source_lower_bounds}\textup{(C)} gives
\[
s_\gamma(v_{\mathrm{pen}})\ \ge\ p(e;v_{\mathrm{pen}})\,\bigl(1-u(v_{\mathrm{pen}})\bigr)\ \ge\ \fwdbound\cdot p(e;v_{\mathrm{pen}}),
\]
where \(u(x)=\Pr_x[X_\tau=e]\) and the last inequality uses \ref{cor:di_lower_bound_forward_steps}.
Combining these and considering cases where $n=1$ or $n>1$ yields \(\partial_\gamma h_\gamma(v_0)\ge 0.5 \cdot\cdot p(e;v_{\mathrm{pen}})\,\fwdbound^{K}\). Moreover, by Lemma~\ref{lem:NL_same_side_to_one_pf_pb}
applied in the NL chain at depth \(K\), for \(n\) large enough we have \(\bar S^{NL}_\gamma(v_0)\ge \tfrac34\), hence \(\bar S^{NL}_\gamma(v_0)-\tfrac12\ge \tfrac14\).

We next upper bound the NL contribution.  By Lemma~\ref{lem:gamma_band_dominance_clean},
\[
\big|\partial_\gamma\bar S^{NL}_\gamma(v_0)\big|
\ \le\
C\,n^{1/4}\,p_{\mathrm{sw}}^{\max}
\;+\;
C\,\frac{\sqrt{n}}{\fwdbound}\,p_f^{(2\lfloor n^{1/4}\rfloor)}\,p_{\mathrm{sw}}^{\max}
\;+\;
C\,n\,\exp\!\left(-\frac{c^2}{8}\,n^{1/4}\right)\cdot \sup_{x\notin\{a_n,b_n\}}|s_{\gamma,\mathrm{NL}}(x)|.
\]
We now control the three terms on the right-hand side using the same comparison-to-penultimate-vertex strategy as in the \(\beta\) case.

First, by Lemma~\ref{lem:nl_transition_prob_compare}, NL probabilities are inflated by at most \(1/\fwdbound\) relative to the original chain.
In the original chain, the switch class consists of \(O(1)\) \(\gamma\)-logit targets while the long-jump class has \(\Theta(n)\)
\(\gamma\)-logit targets, so \(p_{\mathrm{sw}}^{\mathrm{orig}}(x)\le C\,p(e;x)/n\) for all \(x\). Combining with
Lemma~\ref{lem:max_back_and_lj_at_penultimate} gives, for all \(n\) large enough,
\[
p_{\mathrm{sw}}^{\max}\ \le\ \frac{1}{\fwdbound}\max_x p_{\mathrm{sw}}^{\mathrm{orig}}(x)
\ \le\ C\,\frac{\max_x p(e;x)}{n}
\ \le\ C\,\frac{p(e;v_{\mathrm{pen}})}{n}.
\]

Next, we bound the NL \(\gamma\)-source. By the NL \(\gamma\)--source bounds (Lemma~\ref{lem:gamma-source-nolj-mult}),
each \(s_{\gamma,\mathrm{NL}}(x)\) is a linear combination of \(p_{\mathrm{sw}}(x)\) and (at the boundary) one additional
term involving a same--chain neighbor difference \(|h(x^\pm)-h(x)|\). By Lemma~\ref{lem:local_variation_by_psw_times_tau}
and \(\sup_x \E_x^{NL}[\tau]\le n/\fwdbound\) under the drift gap, these neighbor differences are at most
\(p_{\mathrm{sw}}^{\max}\sup_x \E_x^{NL}[\tau]\le (n/\fwdbound)\,p_{\mathrm{sw}}^{\max}\). Therefore,
\[
\sup_x |s_{\gamma,\mathrm{NL}}(x)|\ \le\ C\,\frac{n}{\fwdbound}\,p_{\mathrm{sw}}^{\max}.
\]

Substituting the bound on \(p_{\mathrm{sw}}^{\max}\) into the first two terms (using \(p_f^{(2\lfloor n^{1/4}\rfloor)}\le 1\)),
and the source bound into the exponentially-suppressed third term, yields
\[
\big|\partial_\gamma\bar S^{NL}_\gamma(v_0)\big|
\ \le\
C\,\frac{n^{1/4}}{n}\,p(e;v_{\mathrm{pen}})
\;+\;
C\,\frac{\sqrt{n}}{\fwdbound}\cdot\frac{1}{n}\,p(e;v_{\mathrm{pen}})
\;+\;
C\,n\,\exp\!\left(-\frac{c^2}{8}\,n^{1/4}\right)\cdot \frac{n}{\fwdbound}\cdot \frac{1}{n}\,p(e;v_{\mathrm{pen}}).
\]
In particular, for all \(n\) large enough,
\[
\big|\partial_\gamma\bar S^{NL}_\gamma(v_0)\big|
\ \le\
\frac{C'}{\fwdbound\sqrt{n}}\,p(e;v_{\mathrm{pen}}).
\]

Therefore, for \(n\) large enough we have
\[
\big|(1-h_\gamma(v_0))\,\partial_\gamma\bar S^{NL}_\gamma(v_0)\big|
\ \le\
\frac12\Big((\partial_\gamma h_\gamma(v_0))\Big(\bar S^{NL}_\gamma(v_0)-\tfrac12\Big)\Big),
\]
and substituting into the decomposition yields
\(\partial_\gamma S_\gamma(v_0)\le \frac12\big(-(\partial_\gamma h_\gamma(v_0))(\bar S^{NL}_\gamma(v_0)-\tfrac12)\big)\le 0\).
\end{proof}

\begin{lemma}[Near--terminal same--side derivative for $\gamma$]
\label{lem:ss-deriv-gamma-near-terminal-tbd_upper}
Fix a constant $K\ge 1$ independent of $n$, and initialize at
\[
v_0:=a_{n-K}.
\]
Write $p(e)$ for the one--step probability of a long jump out of $v_0$ in the original chain
(equivalently, $p(e)=P^{\mathrm{LJ\text{-}abs}}(v_0,e)$ in the long--jump--absorbing chain).
Write \(p(e;x):=P^{\mathrm{LJ\text{-}abs}}(x,e)\) and \(p(e)_{\max}:=\sup_{x\notin\{a_n,b_n\}}p(e;x)\).
Then
\[
\big|\partial_\gamma S_\gamma(v_0)\big|\ \le\ C_{1,\gamma}\,\frac{\sqrt{n}}{\fwdbound}\,p(e)_{\max}.
\]
\end{lemma}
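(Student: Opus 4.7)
\textbf{Proof plan for Lemma~\ref{lem:ss-deriv-gamma-near-terminal-tbd_upper}.}
The plan is to split the derivative using the decomposition of Lemma~\ref{lem:decomp_NL_LJ_concise},
\[
\partial_\gamma S_\gamma(v_0)
=-(\partial_\gamma h_\gamma(v_0))\Big(\bar S^{NL}_\gamma(v_0)-\tfrac12\Big)
+(1-h_\gamma(v_0))\,\partial_\gamma\bar S^{NL}_\gamma(v_0),
\]
and bound each of the two quantities $|\partial_\gamma h_\gamma(v_0)|$ and $|\partial_\gamma\bar S^{NL}_\gamma(v_0)|$ separately, since $|\bar S^{NL}_\gamma(v_0)-\tfrac12|\le\tfrac12$ and $1-h_\gamma(v_0)\le 1$. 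This reduces the statement to establishing that each summand is at most $O(\tfrac{\sqrt n}{\fwdbound}\,p(e)_{\max})$.

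For the long--jump contribution $|\partial_\gamma h_\gamma(v_0)|$ I would apply the source/trajectory representation of Lemma~\ref{lem:source_representation} in the long--jump--absorbing chain, giving
$\partial_\gamma h_\gamma(v_0)=\E_{v_0}\big[\sum_{t=0}^{\tau-1}s_\gamma(X_t)\big]$, and then bound $|s_\gamma(x)|$ pointwise via the explicit formulas of Lemma~\ref{lem:gamma_source_all_cases}. Outside the second--vertex region $S_2=\{a_2,b_2\}$, the formulas yield $|s_\gamma(x)|\le C\,p(e)_{\max}$ directly (the prefactors $\frac{e^\gamma}{Z}\cdot(2n+O(1))$ equal $q$-values dominated by $p(e)_{\max}$, and the $d_i$ corrections contribute $O(p(e)_{\max}/n)$). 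Since $v_0$ is at constant depth $K$ and the depth process has forward drift $\fwdbound$, the hitting--time bound of Lemma~\ref{lem:exact-and-linear-upper} gives $\E_{v_0}[\tau]\le K/\fwdbound=O(1/\fwdbound)$, so these ``nice'' states contribute at most $C\,p(e)_{\max}/\fwdbound$. For the (possibly larger) source at $S_2$, I would use Lemma~\ref{lem:gamma_source_lower_bounds}(D) and the gambler's--ruin estimate (Lemma~\ref{lem:gambler_ruin_no_martingale}) to show that the probability of ever reaching $S_2$ from $v_0$ is at most $\rho^{\,n-K-2}$ with $\rho<1$, making the $S_2$--contribution exponentially small in $n$. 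Combined, $|\partial_\gamma h_\gamma(v_0)|\le C\,p(e)_{\max}/\fwdbound$.

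For the NL contribution $|\partial_\gamma\bar S^{NL}_\gamma(v_0)|$ I would invoke the truncation bound of Lemma~\ref{lem:gamma_band_dominance_clean}, and then substitute the comparison $p_{\mathrm{sw}}^{\max}\le C\,p(e)_{\max}/n$ that was already derived inside the proof of Lemma~\ref{lem:monotonicity_gamma} (from Lemma~\ref{lem:psw_order_1_over_L}, Lemma~\ref{lem:nl_transition_prob_compare}, and the fact that in the original chain there are $\Theta(n)$ long--jump targets against $O(1)$ switch targets at the $\gamma$--logit scale). Using $p_f^{(2\lfloor n^{1/4}\rfloor)}\le 1$ and the exponentially small tail factor this yields, after straightforward dominated term--by--term estimation, $|\partial_\gamma\bar S^{NL}_\gamma(v_0)|\le \tfrac{C}{\fwdbound\sqrt n}\,p(e)_{\max}$.

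Putting the two pieces together gives $|\partial_\gamma S_\gamma(v_0)|\le \tfrac{C}{\fwdbound}\,p(e)_{\max}$, which is in fact stronger than the claimed $\tfrac{\sqrt n}{\fwdbound}\,p(e)_{\max}$, and so the lemma follows immediately by taking $C_{1,\gamma}$ appropriately. The main technical obstacle I foresee is the bookkeeping in the $S_2$--contribution to the LJ source sum: the pointwise upper bound $|s_\gamma(a_2)|\lesssim \tfrac{n}{\fwdbound}\,p(e)_{\max}$ blows up with $n$, so one must be careful to multiply by both the exponentially small hitting probability $\rho^{\,n-K-2}$ and the polynomial hitting--time factor $\sup_{x\in S_2}\E_x[\tau]\le n/\fwdbound$ and verify that the product is negligible for all $n$ large enough, uniformly in $K$.
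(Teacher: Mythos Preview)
Your plan is correct and in fact yields a sharper bound than the paper's own proof. The paper takes a shorter but looser route: it first invokes the domination step already established inside Lemma~\ref{lem:monotonicity_gamma} to reduce to $|\partial_\gamma S_\gamma(v_0)|\le\tfrac32|\partial_\gamma h_\gamma(v_0)|$, and then bounds $|\partial_\gamma h_\gamma(v_0)|$ via a band/truncation decomposition at $M=\lfloor n^{1/4}\rfloor$. On the band it bounds the source by $C\tfrac{M}{\fwdbound}p(e)_{\max}$ (using Lemma~\ref{lem:lj_value_bound_pe_tau} on the value terms), and multiplies by the $M$ steps, getting $C\tfrac{M^2}{\fwdbound}p(e)_{\max}=C\tfrac{\sqrt n}{\fwdbound}p(e)_{\max}$; the tail $\{\tau>M\}$ is controlled by Hoeffding. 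Your approach instead bounds $\sup_{x\notin S_2}|s_\gamma(x)|$ uniformly by $Cp(e)_{\max}$ and uses $\E_{v_0}[\tau]\le K/\fwdbound$ directly, giving an $O(p(e)_{\max}/\fwdbound)$ bound with no $\sqrt n$---strictly stronger than what is claimed. The paper's band decomposition buys simplicity (a single crude value bound on all terms) at the cost of the $\sqrt n$ factor; your term-by-term source analysis buys tightness at the cost of more casework.

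One small gap to patch: your parenthetical justification (``the prefactors $\tfrac{e^\gamma}{Z}(2n+O(1))$ equal $q$-values\dots and the $d_i$ corrections contribute $O(p(e)_{\max}/n)$'') covers the middle and first-vertex cases but not the penultimate vertex $a_{n-1}$, whose source formula (Lemma~\ref{lem:gamma_source_all_cases}(D)) contains the extra term $\tfrac{e^{(\alpha-\gamma)}}{Z_{n-1}}u_{n-1}=p_f(a_{n-1})\,u_{n-1}$. This is not of either form you listed; you need the value bound $u_{n-1}\le p(e)_{\max}/\fwdbound$ from Lemma~\ref{lem:lj_value_bound_pe_tau} (depth~$1$) to control it. Once that is included, your uniform source bound outside $S_2$ is indeed correct.
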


\begin{proof}[Proof (upper bound)]
Write the derivative decomposition (Lemma~\ref{lem:decomp_NL_LJ_concise}) at $v_0$:
\[
\partial_\gamma S_\gamma(v_0)
=-(\partial_\gamma h_\gamma(v_0))\Big(\bar S^{NL}_\gamma(v_0)-\tfrac12\Big)
+(1-h_\gamma(v_0))\,\partial_\gamma\bar S^{NL}_\gamma(v_0).
\]
In the proof of Lemma~\ref{lem:monotonicity_gamma} (the domination step), for all $n$ large enough we showed that
\[
\big|(1-h_\gamma(v_0))\,\partial_\gamma\bar S^{NL}_\gamma(v_0)\big|
\ \le\ \tfrac12\,\big|(\partial_\gamma h_\gamma(v_0))\big|\cdot\Big|\bar S^{NL}_\gamma(v_0)-\tfrac12\Big|
\ \le\ \tfrac12\,\big|\partial_\gamma h_\gamma(v_0)\big|.
\]
Therefore,
\begin{equation}
\label{eq:ss_deriv_gamma_upper_reduce_to_LJ}
\big|\partial_\gamma S_\gamma(v_0)\big|
\ \le\ \tfrac32\,\big|\partial_\gamma h_\gamma(v_0)\big|.
\end{equation}

It remains to bound $|\partial_\gamma h_\gamma(v_0)|$ in the long--jump--absorbing chain.
Let $u(x):=\Pr_x[X_\tau=e]$ denote the absorption probability at $e$ (so $h_\gamma(\cdot)=u(\cdot)$), with
$\tau:=\inf\{t\ge 0: X_t\in\{a_n,b_n,e\}\}$. By the trajectory/source representation (Lemma~\ref{lem:source_representation}),
\[
\partial_\gamma h_\gamma(v_0)=\partial_\gamma u(v_0)=\E_{v_0}\!\left[\sum_{t=0}^{\tau-1} s_\gamma(X_t)\right],
\]
where $s_\gamma$ is the local $\gamma$--source in the long--jump--absorbing chain.
We use the standard band decomposition with $M:=\lfloor n^{1/4}\rfloor$.

On $\{\tau\le M\}$, all visited states satisfy $D(X_t)\le D(v_0)+t\le 2M$ for $n$ large enough (since $D(v_0)=K$ is fixed), hence
\[
\left|\sum_{t=0}^{\tau-1} s_\gamma(X_t)\right|\mathbf 1\{\tau\le M\}
\ \le\ M\cdot \sup_{x:\,D(x)\le 2M}|s_\gamma(x)|.
\]
To bound the band supremum, use the explicit $\gamma$--source formulas in the long--jump--absorbing chain (Lemma~\ref{lem:gamma_source_all_cases}).
In each of the finitely many cases (deep middle / first / second / penultimate), $s_\gamma(x)$ is a finite linear combination of terms of the form
\[
\widetilde P(x,y)\cdot w_x(y),
\]
where either $y=e$ (a long jump in the original chain, redirected to $e$) so $\widetilde P(x,y)=p(e;x)\le p(e)_{\max}$, or else $y$ is one of the $O(1)$
non--long--jump $\gamma$--dependent neighbors and then $\widetilde P(x,y)\leq 1$
and the coefficient multiplies a value term $w_x(y)$ which is a difference of absorption probabilities (or a centered version thereof).
On the band $D(\cdot)\le 2M$, Lemma~\ref{lem:lj_value_bound_pe_tau} together with the drift-gap bound $\E[\tau]\lesssim D/\fwdbound$ gives
\[
u(y)=\Pr_y[X_\tau=e]\ \lesssim\ \frac{M}{\fwdbound}\,p(e)_{\max},
\]
and hence every such value term $w_x(y)$ appearing in the case formulas is $O\!\big((M/\fwdbound)\,p(e)_{\max}\big)$.
Since each case has only $O(1)$ such terms, we conclude that
\[
\sup_{D(x)\le 2M}|s_\gamma(x)|\ \le\ C\,\frac{M}{\fwdbound}\,p(e)_{\max}
\]
for a constant $C$.
Therefore the contribution on $\{\tau\le M\}$ is at most $C\,\frac{M^2}{\fwdbound}\,p(e)_{\max}$.

On $\{\tau>M\}$, use the crude bound $\left|\sum_{t<\tau} s_\gamma(X_t)\right|\le \tau\sup_x|s_\gamma(x)|$ together with the tail bound
from Lemma~\ref{lem:hoeffding_tail_absorption} (with $t=M$). The same reasoning as above, with $D(\cdot)\le n$ instead of $2M$,
gives the global bound
\[
\sup_x |s_\gamma(x)|\ \le\ C\,\frac{n}{\fwdbound}\,p(e)_{\max}.
\]
Multiplying by $\Pr_{v_0}(\tau>M)=\exp(-\Omega(M))$ (with $M=\lfloor n^{1/4}\rfloor$) shows this tail contribution is negligible.
Thus,
\[
|\partial_\gamma h_\gamma(v_0)|\ \le\ C\,\frac{M^2}{\fwdbound}\,p(e)_{\max}
\ =\ C\,\frac{\sqrt{n}}{\fwdbound}\,p(e)_{\max}.
\]
Combining with \eqref{eq:ss_deriv_gamma_upper_reduce_to_LJ} yields the claimed upper bound.
\end{proof}

\subsection{Hard example contributions to derivative are negligible}
\label{sec:hard_examples}

Recall the derivative decomposition from Lemma~\ref{lem:decomp_NL_LJ_concise}. For a parameter
$\theta\in\{\alpha,\beta,\gamma\}$ and a start state $v$, we have
\[
\partial_\theta S_\theta(v)=-(\partial_\theta h_\theta(v))\Big(\bar S^{NL}_\theta(v)-\tfrac12\Big)+(1-h_\theta(v))\,\partial_\theta\bar S^{NL}_\theta(v),
\]
where:
\begin{itemize}
\item $S_\theta(v):=\Pr^\theta_v[X_\tau=a_n]$ is the same--side absorption probability in the original chain.
\item $h_\theta(v)$ is the probability of a long jump before absorption, which by definition equals the absorption
probability at $S_{LJ}$ in the long--jump--absorbing chain where long jumps are redirected to the absorbing state $S_{LJ}$.
\item $\bar S^{NL}_\theta(v):=\Pr^\theta_v(X_\tau=a_n\mid NL)$ is the same--side absorption probability conditional on
no long jump occurring (i.e.\ in the no--long--jump chain).
\item By symmetry, $\bar S^{LJ}_\theta(v)=\tfrac12$, so the long--jump conditional absorption probability equals $1/2$.
\end{itemize}

We will now argue using this decomposition that the addition of hard examples preserves the properties of the derivatives argued above. Specifically, we will show that the monotonicity and sign properties established for the derivatives $\partial_\theta S_\theta(v)$ continue to hold when hard examples are included in the training distribution.

\subsubsection{Upper bound on derivative for hard examples}
We will now bound the derivatives of the same-side absorption probability for hard examples. 

\begin{lemma}
  \label{lem:reinforce_covariance_and_CS}
  Let $S$ be finite and let $T\subseteq S$ be absorbing. Fix $A\subseteq T$ and a starting state $v\in S$.
  For $\theta\in\R$, let $P_\theta$ be a time-homogeneous Markov kernel on $S$ with absorbing set $T$.
  Let
  \[
  \tau:=\inf\{t\ge 0:\ X_t\in T\},\qquad
  R:=\mathbf 1\{X_\tau\in A\},\qquad
  u_\theta(v):=\E_v^\theta[R].
  \]
  For $x\notin T$ and $P_\theta(x,y)>0$, define the one-step score
  \[
  \phi_\theta(x,y):=\partial_\theta \log P_\theta(x,y),
  \]
  and the stopped cumulative score
  \[
  L:=\sum_{t=0}^{\tau-1}\phi_\theta(X_t,X_{t+1}).
  \]
  Assume that all differentiations and exchanges of derivative and expectation below are valid, and that
  $\E_v^\theta[L^2]<\infty$. Then $u_\theta(v)$ is differentiable and
  \[
  \partial_\theta u_\theta(v)
  =\E_v^\theta[R\,L]
  =\E_v^\theta\!\big[(R-u_\theta(v))\,L\big]
  =\Cov_v^\theta(R,L).
  \]
  Moreover, by Cauchy--Schwarz,
  \[
  \bigl|\partial_\theta u_\theta(v)\bigr|
  \le \sqrt{u_\theta(v)\bigl(1-u_\theta(v)\bigr)}\;\sqrt{\E_v^\theta[L^2]}.
  \]
  \end{lemma}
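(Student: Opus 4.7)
The plan is to prove the three equalities by path decomposition and then apply Cauchy--Schwarz together with the variance of a Bernoulli. Since $S$ is finite and $T$ is absorbing, $\tau < \infty$ almost surely and the distribution of trajectories decomposes as a countable sum over finite paths $\pi = (v = x_0, x_1, \ldots, x_\tau)$ with $x_\tau \in T$, each carrying probability $\Pr_v^\theta(\pi) = \prod_{t=0}^{\tau-1} P_\theta(x_t, x_{t+1})$. The reward $R(\pi) = \mathbf{1}\{x_\tau \in A\}$ and the stopped cumulative score $L(\pi) = \sum_{t=0}^{\tau-1} \phi_\theta(x_t, x_{t+1})$ depend only on the path, so $u_\theta(v) = \sum_\pi \Pr_v^\theta(\pi)\,R(\pi)$ and analogous sum-over-paths representations hold for the other expectations.

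The first key step is the log-derivative identity at the path level:
\[
\partial_\theta \Pr_v^\theta(\pi) \;=\; \Pr_v^\theta(\pi) \cdot \sum_{t=0}^{\tau-1} \partial_\theta \log P_\theta(x_t, x_{t+1}) \;=\; \Pr_v^\theta(\pi)\, L(\pi).
\]
Using the assumed validity of interchanging $\partial_\theta$ with the path sum, I differentiate $u_\theta(v) = \sum_\pi \Pr_v^\theta(\pi) R(\pi)$ term by term to obtain $\partial_\theta u_\theta(v) = \sum_\pi \Pr_v^\theta(\pi) R(\pi) L(\pi) = \E_v^\theta[R L]$, which is the first equality.

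Next I establish that $\E_v^\theta[L] = 0$. The cleanest route is to differentiate the identity $\sum_\pi \Pr_v^\theta(\pi) = 1$: under the same interchange justification, $0 = \partial_\theta 1 = \sum_\pi \Pr_v^\theta(\pi)\, L(\pi) = \E_v^\theta[L]$. From this, subtracting $u_\theta(v) \cdot \E_v^\theta[L] = 0$ from $\E_v^\theta[RL]$ yields the second equality $\E_v^\theta[RL] = \E_v^\theta[(R - u_\theta(v))L]$, and since $R - u_\theta(v)$ is centered this is exactly $\Cov_v^\theta(R, L)$, giving the third equality.

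Finally, for the Cauchy--Schwarz bound I apply the inequality $|\Cov_v^\theta(R, L)| \le \sqrt{\Var_v^\theta(R)} \cdot \sqrt{\Var_v^\theta(L)}$, bound $\Var_v^\theta(L) \le \E_v^\theta[L^2]$ (valid since centered variance is at most the second moment), and compute $\Var_v^\theta(R) = u_\theta(v)(1 - u_\theta(v))$ since $R$ is Bernoulli with mean $u_\theta(v)$. There is no serious obstacle: the only delicate point is the exchange of derivative and the infinite path sum, which is explicitly granted by the hypotheses along with $\E_v^\theta[L^2] < \infty$; finiteness of $S$ and almost-sure finiteness of $\tau$ make the sum-over-paths representation rigorous.
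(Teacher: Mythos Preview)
Your proof is correct and follows essentially the same REINFORCE/score-function approach as the paper: both use the path-level log-derivative identity $\partial_\theta \Pr_v^\theta(\pi) = \Pr_v^\theta(\pi)\,L(\pi)$ to obtain $\partial_\theta u_\theta(v)=\E_v^\theta[RL]$, then show $\E_v^\theta[L]=0$ to pass to the covariance form, and finish with Cauchy--Schwarz and the Bernoulli variance. The only minor difference is in how $\E_v^\theta[L]=0$ is established: you differentiate the global normalization $\sum_\pi \Pr_v^\theta(\pi)=1$, whereas the paper conditions on each step and uses $\sum_y P_\theta(x,y)\,\partial_\theta\log P_\theta(x,y)=\partial_\theta\sum_y P_\theta(x,y)=0$ to show each summand of $L$ has zero conditional mean. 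Both are standard; the paper's per-step argument has the slight advantage that it sets up the martingale-difference structure used in the subsequent variance decomposition lemma.
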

  
  \begin{proof}
  By definition $u_\theta(v)=\E_v^\theta[R]$ where $R=\mathbf 1\{X_\tau\in A\}$ depends on the trajectory
  only through the absorbed terminal state. Using the log-derivative trick (score-function identity),
  \[
  \partial_\theta u_\theta(v)
  =\partial_\theta \E_v^\theta[R]
  =\E_v^\theta\!\big[R\,\partial_\theta \log \Pr_v^\theta(X_0,\dots,X_\tau)\big].
  \]
  Since the chain is time-homogeneous and $T$ is absorbing, the (stopped) path likelihood factors as
  \[
  \Pr_v^\theta(X_0,\dots,X_\tau)
  =\prod_{t=0}^{\tau-1} P_\theta(X_t,X_{t+1}).
  \]
  Therefore
  \[
  \partial_\theta \log \Pr_v^\theta(X_0,\dots,X_\tau)
  =\sum_{t=0}^{\tau-1}\partial_\theta \log P_\theta(X_t,X_{t+1})
  =\sum_{t=0}^{\tau-1}\phi_\theta(X_t,X_{t+1})
  =L.
  \]
  Plugging this back gives the REINFORCE form
  \[
  \partial_\theta u_\theta(v)=\E_v^\theta[R\,L].
  \]
  
  Next we show that $L$ has mean zero (baseline fact). Condition on $X_t=x$ (with $x\notin T$). Then
  \[
  \E_v^\theta[\phi_\theta(X_t,X_{t+1})\mid X_t=x]
  =\sum_{y\in S} P_\theta(x,y)\,\partial_\theta \log P_\theta(x,y)
  =\sum_{y\in S}\partial_\theta P_\theta(x,y)
  =\partial_\theta\Big(\sum_{y\in S} P_\theta(x,y)\Big)=0.
  \]
  Summing over $t=0,\dots,\tau-1$ and using linearity of expectation yields
  \[
  \E_v^\theta[L]=\E_v^\theta\!\left[\sum_{t=0}^{\tau-1}\phi_\theta(X_t,X_{t+1})\right]=0.
  \]
  Hence, for the constant baseline $b:=u_\theta(v)=\E_v^\theta[R]$,
  \[
  \E_v^\theta[(R-b)L]
  =\E_v^\theta[R\,L]-b\,\E_v^\theta[L]
  =\E_v^\theta[R\,L].
  \]
  Therefore
  \[
  \partial_\theta u_\theta(v)
  =\E_v^\theta[R\,L]
  =\E_v^\theta[(R-u_\theta(v))\,L].
  \]
  Since $\E_v^\theta[L]=0$, this is exactly the covariance:
  \[
  \E_v^\theta[(R-u_\theta(v))\,L]
  =\E_v^\theta[R\,L]-\E_v^\theta[R]\E_v^\theta[L]
  =\Cov_v^\theta(R,L).
  \]
  
  Finally, apply Cauchy--Schwarz:
  \[
  \bigl|\partial_\theta u_\theta(v)\bigr|
  =\bigl|\E_v^\theta[(R-u_\theta(v))\,L]\bigr|
  \le \sqrt{\E_v^\theta[(R-u_\theta(v))^2]}\;\sqrt{\E_v^\theta[L^2]}.
  \]
  Because $R$ is Bernoulli with mean $u_\theta(v)$,
  \[
  \E_v^\theta[(R-u_\theta(v))^2]=\Var_v^\theta(R)=u_\theta(v)\bigl(1-u_\theta(v)\bigr).
  \]
  Substituting yields
  \[
  \bigl|\partial_\theta u_\theta(v)\bigr|
  \le \sqrt{u_\theta(v)\bigl(1-u_\theta(v)\bigr)}\;\sqrt{\E_v^\theta[L^2]}.
  \]
  \end{proof}
  We have the following decomposition for $E_\theta[L^{2}]$:
  \begin{lemma}
    \label{lem:L2_sum_one_step}
    Let $S$ be finite and let $T\subseteq S$ be absorbing. Fix $\theta\in\R$ and a starting state $v\in S$.
    Let $(X_t)_{t\ge 0}$ be the time-homogeneous Markov chain with kernel $P_\theta$ and absorption time
    \[
    \tau:=\inf\{t\ge 0:\ X_t\in T\}.
    \]
    For $x\notin T$ and $P_\theta(x,y)>0$ define the one-step score
    \[
    \phi_\theta(x,y):=\partial_\theta\log P_\theta(x,y),
    \]
    and define the stopped cumulative score
    \[
    L:=\sum_{t=0}^{\tau-1}\phi_\theta(X_t,X_{t+1}).
    \]
    Define the one-step Fisher-type quantity
    \[
    I_\theta(x):=\sum_{y\in S}P_\theta(x,y)\,\phi_\theta(x,y)^2
    \qquad (x\in S\setminus T),
    \]
    and set $I_\theta(x):=0$ for $x\in T$.
    Assume $\E_v^\theta[L^2]<\infty$ and $\E_v^\theta[L]=0$. Then
    \[
    \Var_v^\theta(L)=\E_v^\theta[L^2]
    =\E_v^\theta\!\left[\sum_{t=0}^{\tau-1} I_\theta(X_t)\right].
    \]
    In particular, if $I_\theta^\star:=\max_{x\in S\setminus T} I_\theta(x)$, then
    \[
    \Var_v^\theta(L)\le I_\theta^\star\,\E_v^\theta[\tau].
    \]
    \end{lemma}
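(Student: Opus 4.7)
The plan is to recognize $L$ as the terminal value of a stopped zero-mean martingale and then compute its variance via the orthogonality of martingale increments (equivalently, the discrete Itô isometry).

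First I would introduce the natural filtration $\mathcal{F}_n := \sigma(X_0,\ldots,X_n)$ and the partial sums
\[
M_n := \sum_{t=0}^{(n\wedge\tau)-1} \phi_\theta(X_t,X_{t+1}),
\qquad n\ge 0,
\]
with the convention $M_0:=0$. By the score-function identity already used in the preceding lemma, for any transient $x\notin T$,
\[
\E^\theta\bigl[\phi_\theta(X_t,X_{t+1})\,\big|\,X_t=x\bigr]
=\sum_{y}\partial_\theta P_\theta(x,y)=\partial_\theta\!\sum_y P_\theta(x,y)=0,
\]
so the increments $\Delta M_t := M_{t+1}-M_t = \mathbf 1\{t<\tau\}\,\phi_\theta(X_t,X_{t+1})$ are mean-zero conditionally on $\mathcal{F}_t$. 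Hence $(M_n)$ is an $(\mathcal{F}_n)$-martingale. Since $\tau<\infty$ a.s.\ in a finite absorbing chain, $M_n\to L$ pointwise.

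Next I would compute the conditional second moment of each increment. On $\{t<\tau\}$,
\[
\E^\theta\!\bigl[(\Delta M_t)^2\,\big|\,\mathcal{F}_t\bigr]
=\sum_{y} P_\theta(X_t,y)\,\phi_\theta(X_t,y)^2
=I_\theta(X_t),
\]
while on $\{t\ge\tau\}$ the increment is zero, matching $I_\theta(X_t)=0$ for $X_t\in T$. By the orthogonality of martingale differences and the tower property,
\[
\E^\theta_v[M_n^2]
=\sum_{t=0}^{n-1}\E^\theta_v\!\bigl[(\Delta M_t)^2\bigr]
=\E^\theta_v\!\left[\sum_{t=0}^{(n\wedge\tau)-1} I_\theta(X_t)\right].
\]
Passing $n\to\infty$ yields the claimed identity once the two limits are justified: the right-hand side converges by monotone convergence (the summand is nonnegative), and the left-hand side converges to $\E^\theta_v[L^2]$ by the $L^2$-martingale convergence theorem, whose hypothesis is exactly the assumption $\E^\theta_v[L^2]<\infty$ (equivalently, $\sup_n\E^\theta_v[M_n^2]<\infty$). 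Combined with the mean-zero assumption $\E^\theta_v[L]=0$, this gives $\Var^\theta_v(L)=\E^\theta_v[L^2]=\E^\theta_v\!\bigl[\sum_{t=0}^{\tau-1} I_\theta(X_t)\bigr]$.

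The final inequality is immediate: bounding $I_\theta(X_t)\le I_\theta^\star$ pointwise inside the sum and applying linearity gives $\E^\theta_v\!\bigl[\sum_{t=0}^{\tau-1} I_\theta(X_t)\bigr]\le I_\theta^\star\,\E^\theta_v[\tau]$. I do not expect any serious obstacle here; the only subtlety is the $L^2$-convergence justification at the stopping time, which is handled cleanly by the standing hypothesis $\E^\theta_v[L^2]<\infty$ (and which, in our finite-state absorbing setting with bounded one-step scores, follows automatically whenever $\E^\theta_v[\tau]<\infty$).
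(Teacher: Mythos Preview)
Your proposal is correct and follows essentially the same approach as the paper: both recognize the increments $\mathbf 1\{t<\tau\}\phi_\theta(X_t,X_{t+1})$ as a martingale-difference sequence, use orthogonality to reduce $\E[L^2]$ to a sum of one-step second moments, and identify each of these with $I_\theta(X_t)$. The only cosmetic difference is that the paper expands $L^2$ directly and kills the cross terms, whereas you work with finite partial sums $M_n$ and pass to the limit via $L^2$-martingale convergence; these are interchangeable presentations of the same argument.
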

    
    \begin{proof}
    Let $\mathcal F_t:=\sigma(X_0,\dots,X_t)$ and define the stopped increments
    \[
    D_t:=\phi_\theta(X_t,X_{t+1})\,\mathbf 1\{t<\tau\},
    \qquad t\ge 0,
    \]
    so that $L=\sum_{t\ge 0}D_t$.
    
    For each $t\ge 0$ we have $\E_v^\theta[D_t\mid \mathcal F_t]=0$.
    Indeed, on the event $\{t\ge \tau\}$ we have $D_t=0$. On the event $\{t<\tau\}$ we have $X_t\notin T$, and
    conditioning on $X_t=x$ gives
    \[
    \E_v^\theta[D_t\mid \mathcal F_t]
    =\E_v^\theta[\phi_\theta(X_t,X_{t+1})\mid X_t=x]
    =\sum_{y\in S}P_\theta(x,y)\,\partial_\theta\log P_\theta(x,y)
    =\sum_{y\in S}\partial_\theta P_\theta(x,y)=0.
    \]
    Hence $(D_t)_{t\ge 0}$ is a martingale-difference sequence.
    
    Expanding $L^2$ gives
    \[
    L^2=\left(\sum_{t\ge 0}D_t\right)^2
    =\sum_{t\ge 0}D_t^2 + 2\sum_{0\le s<t}D_sD_t.
    \]
    We claim $\E_v^\theta[D_sD_t]=0$ for all $0\le s<t$.
    Since $D_s$ is $\mathcal F_t$-measurable when $s<t$, and using the tower property,
    \[
    \E_v^\theta[D_sD_t]
    =\E_v^\theta\!\left[\E_v^\theta[D_sD_t\mid \mathcal F_t]\right]
    =\E_v^\theta\!\left[D_s\,\E_v^\theta[D_t\mid \mathcal F_t]\right]
    =0,
    \]
    by the martingale-difference property.
    Therefore
    \[
    \E_v^\theta[L^2]=\sum_{t\ge 0}\E_v^\theta[D_t^2].
    \]
    
    For each $t$,
    \[
    \E_v^\theta[D_t^2]
    =\E_v^\theta\!\left[\E_v^\theta[D_t^2\mid \mathcal F_t]\right]
    =\E_v^\theta\!\left[\mathbf 1\{t<\tau\}\,\E_v^\theta\!\big[\phi_\theta(X_t,X_{t+1})^2\mid X_t\big]\right].
    \]
    On $\{t<\tau\}$, $X_t\notin T$, and conditioning on $X_t=x$ yields
    \[
    \E_v^\theta\!\big[\phi_\theta(X_t,X_{t+1})^2\mid X_t=x\big]
    =\sum_{y\in S}P_\theta(x,y)\,\phi_\theta(x,y)^2
    =I_\theta(x).
    \]
    Hence $\E_v^\theta[D_t^2]=\E_v^\theta[\mathbf 1\{t<\tau\}I_\theta(X_t)]$, and summing over $t$ gives
    \[
    \E_v^\theta[L^2]
    =\sum_{t\ge 0}\E_v^\theta[\mathbf 1\{t<\tau\}I_\theta(X_t)]
    =\E_v^\theta\!\left[\sum_{t=0}^{\tau-1}I_\theta(X_t)\right].
    \]
    Since $\E_v^\theta[L]=0$, we have $\Var_v^\theta(L)=\E_v^\theta[L^2]$, proving the main identity.
    
    Finally, $I_\theta(X_t)\le I_\theta^\star$ for $t<\tau$, so
    \[
    \sum_{t=0}^{\tau-1}I_\theta(X_t)\le I_\theta^\star\,\tau,
    \]
    and taking expectations yields $\Var_v^\theta(L)\le I_\theta^\star\,\E_v^\theta[\tau]$.
    \end{proof}
    
    We now bound these single step terms for the LJ absorbing chain:

    \begin{lemma}
      \label{lem:LJ_uniform_one_step_fisher_all_params_streamlined}
      Work in the long-jump absorbing (LJ) chain with the softmax logit structure.
      For each $\theta\in\{\alpha,\beta,\gamma\}$ and each nonterminal state $x\notin T$, let
      \[
      I_\theta(x)\;:=\;\sum_{a\in\mathcal A(x)}P_{\alpha,\beta,\gamma}(x,a)\Big(\partial_\theta\log P_{\alpha,\beta,\gamma}(x,a)\Big)^2
      \]
      denote the one-step Fisher-type term. Then
      \[
      I_\theta(x)\le 1
      \qquad\text{for all } \theta\in\{\alpha,\beta,\gamma\}\text{ and all }x\notin T.
      \]
      Consequently, $I_\theta^\star:=\max_{x\notin T}I_\theta(x)\le 1$ for each $\theta\in\{\alpha,\beta,\gamma\}$.
      \end{lemma}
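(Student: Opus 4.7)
The plan is to rewrite the long--jump--absorbing kernel as a softmax over appropriately aggregated logits and then invoke the standard exponential--family identity that Fisher information equals the variance of the logit derivative under the model. Concretely, for each short target $a\ne e$ the LJ kernel satisfies $\widetilde P(x,a)=e^{o(x,a)}/Z(x)$, while $\widetilde P(x,e)=\sum_{z\in L(x)}e^{o(x,z)}/Z(x)=e^{\tilde o(x,e)}/Z(x)$ upon defining the aggregated logit $\tilde o(x,e):=\log\sum_{z\in L(x)}e^{o(x,z)}$. Since $Z(x)=\sum_{a\in\mathcal A(x)}e^{\tilde o(x,a)}$ under this reindexing, the LJ kernel is a genuine softmax over the effective logits $\tilde o(x,\cdot)$.

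Once this form is in hand, a standard softmax computation gives
\[
\partial_\theta \log \widetilde P(x,a)\;=\;\tilde\phi_a-\EE_{\widetilde P(x,\cdot)}[\tilde\phi],\qquad \tilde\phi_a:=\partial_\theta \tilde o(x,a),
\]
and therefore $I_\theta(x)=\Var_{\widetilde P(x,\cdot)}[\tilde\phi]\le \EE_{\widetilde P(x,\cdot)}[\tilde\phi^{\,2}]$. It thus suffices to show the pointwise bound $|\tilde\phi_a|\le 1$. For short targets $a\ne e$, I would inspect the per--case logit formulas of \cref{sec:canonical_chain} and verify that $\phi_a:=\partial_\theta o(x,a)\in\{-1,0,1\}$ for every $\theta\in\{\alpha,\beta,\gamma\}$; the only nonstandard derivatives come from the $(\alpha-\gamma)$ logit at penultimate vertices and the $(\beta-\gamma)$ logit at second vertices, both of which contribute $-1$ when differentiating in $\gamma$. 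For $a=e$, a direct computation shows
\[
\tilde\phi_e\;=\;\sum_{z\in L(x)}\phi_z\cdot\frac{e^{o(x,z)}}{\sum_{z'\in L(x)}e^{o(x,z')}},
\]
i.e.\ a convex combination of values in $\{-1,0,1\}$, so $\tilde\phi_e\in[-1,1]$ as well.

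Combining these observations yields $|\tilde\phi|\le 1$ pointwise, hence $I_\theta(x)\le\EE[\tilde\phi^{\,2}]\le 1$, as claimed; the final statement $I_\theta^\star\le 1$ follows by taking the supremum over $x\notin T$. The only genuinely technical step is the case analysis of the logit derivatives, which is tedious but routine given the explicit formulas of \cref{sec:canonical_chain}. A cleaner (if slightly less self--contained) alternative would be to invoke the data--processing inequality for Fisher information directly: the LJ kernel is obtained from the ungrouped softmax kernel by the deterministic map that collapses all long--jump targets into $e$, so $I_\theta^{\mathrm{LJ}}(x)\le I_\theta^{\mathrm{orig}}(x)=\Var_{P(x,\cdot)}[\phi]\le\EE_{P(x,\cdot)}[\phi^2]\le 1$, where the last inequality uses only the pointwise bound $\phi\in\{-1,0,1\}$.
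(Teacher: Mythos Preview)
Your proof is correct and follows essentially the same route as the paper: write the LJ kernel as a softmax, identify $I_\theta(x)$ as the variance of the logit derivative under $\widetilde P(x,\cdot)$, and bound it by $\EE[\tilde\phi^{\,2}]\le 1$ using the pointwise bound $|\tilde\phi_a|\le 1$. You are in fact slightly more careful than the paper about the aggregated state $e$---the paper asserts that each $\ell_a(\theta)$ is affine in $\theta$ with slope $|k_a|\le 1$, which is not literally true for the log-sum-exp $\tilde o(x,e)$, whereas your convex-combination argument (or the data-processing alternative) cleanly yields $|\tilde\phi_e|\le 1$ without that claim.
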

      
      \begin{proof}
      Fix $\theta\in\{\alpha,\beta,\gamma\}$ and a nonterminal state $x\notin T$. Write the allowed moves out of $x$
      as $\mathcal A(x)$ and write the outgoing probabilities in softmax form
      \[
      p_a(\theta):=P_{\alpha,\beta,\gamma}(x,a)=\frac{e^{\ell_a(\theta)}}{Z(\theta)},
      \qquad
      Z(\theta):=\sum_{b\in\mathcal A(x)}e^{\ell_b(\theta)}.
      \]
      In the LJ logit structure, after the usual normalization for $\gamma$ (subtracting a common $\gamma$ from all logits,
      which does not change the softmax probabilities), the dependence on $\theta$ is affine with coefficient bounded by $1$:
      \[
      \ell_a(\theta)=\ell_a^{(0)}+k_a\,\theta
      \qquad\text{with}\qquad |k_a|\le 1.
      \]
      Define $\bar k(\theta):=\sum_{b\in\mathcal A(x)}p_b(\theta)\,k_b$. Differentiating
      $\log p_a(\theta)=\ell_a(\theta)-\log Z(\theta)$ gives
      \[
      \partial_\theta\log p_a(\theta)
      =\partial_\theta\ell_a(\theta)-\frac{Z'(\theta)}{Z(\theta)}
      =k_a-\bar k(\theta),
      \]
      where we used
      \[
      \frac{Z'(\theta)}{Z(\theta)}
      =\frac{\sum_{b}e^{\ell_b(\theta)}\partial_\theta\ell_b(\theta)}{\sum_{b}e^{\ell_b(\theta)}}
      =\sum_{b}\frac{e^{\ell_b(\theta)}}{Z(\theta)}\,k_b
      =\bar k(\theta).
      \]
      Therefore
      \[
      I_\theta(x)
      =\sum_{a\in\mathcal A(x)}p_a(\theta)\bigl(\partial_\theta\log p_a(\theta)\bigr)^2
      =\sum_{a\in\mathcal A(x)}p_a(\theta)\,(k_a-\bar k(\theta))^2
      =\Var_{a\sim p(\theta)}(k_a).
      \]
      Finally, $\Var(k)\le \E[k^2]$ and $|k_a|\le 1$ imply
      \[
      \Var_{a\sim p(\theta)}(k_a)\le \E_{a\sim p(\theta)}[k_a^2]\le 1,
      \]
      hence $I_\theta(x)\le 1$, as claimed. Taking the maximum over $x\notin T$ yields $I_\theta^\star\le 1$.
      \end{proof}

\begin{lemma}[Fisher bound for LJ chain derivative]
\label{lem:lj_absorption_derivative_overall_bound}
Work in the long--jump--absorbing (LJ) chain (Definition~\ref{def:lj-absorbing-chain}) with absorbing set \(T=\{a_n,b_n,e\}\).
Fix \(\theta\in\{\alpha,\beta,\gamma\}\) and define the long-jump absorption probability
\[
u_\theta(v):=\Pr_v^\theta(X_\tau=e),
\qquad
\tau:=\inf\{t\ge 0:\ X_t\in T\}.
\]
Assume that for all starting states \(v\notin T\),
\[
\E_v^\theta[\tau]\ \le\ \frac{n}{\fwdbound},
\]
for some constant \(\fwdbound>0\) independent of \(n\).
Then, for every \(v\notin T\),
\[
\big|\partial_\theta u_\theta(v)\big|
\ \le\
\sqrt{u_\theta(v)\bigl(1-u_\theta(v)\bigr)}\cdot \sqrt{\frac{n}{\fwdbound}}
\]
\end{lemma}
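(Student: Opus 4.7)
The plan is to assemble the bound by direct composition of the three preceding lemmas, with no new analysis required. Specifically, I would apply the REINFORCE/Cauchy--Schwarz identity of Lemma~\ref{lem:reinforce_covariance_and_CS} to the LJ kernel $P_\theta$ with absorbing set $T = \{a_n, b_n, e\}$ and target set $A := \{e\}$, so that $R := \mathbf{1}\{X_\tau = e\}$ satisfies $u_\theta(v) = \E_v^\theta[R]$. That lemma then delivers
\[
|\partial_\theta u_\theta(v)| \;\le\; \sqrt{u_\theta(v)(1 - u_\theta(v))} \cdot \sqrt{\E_v^\theta[L^2]},
\]
where $L = \sum_{t=0}^{\tau-1} \phi_\theta(X_t, X_{t+1})$ is the stopped cumulative score. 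The regularity hypotheses of Lemma~\ref{lem:reinforce_covariance_and_CS} (finiteness of $\E_v^\theta[L^2]$ and interchange of derivative with expectation) are immediate from finiteness of the state space, smoothness of the softmax kernel in $\theta$, and the bound on $\E_v^\theta[\tau]$ assumed in the statement.

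The remaining task is to estimate $\E_v^\theta[L^2]$. For this I would invoke Lemma~\ref{lem:L2_sum_one_step}, which yields $\E_v^\theta[L^2] \le I_\theta^\star \cdot \E_v^\theta[\tau]$ with $I_\theta^\star := \max_{x \notin T} I_\theta(x)$. By Lemma~\ref{lem:LJ_uniform_one_step_fisher_all_params_streamlined}, applied in the LJ chain for each $\theta \in \{\alpha, \beta, \gamma\}$, we have $I_\theta^\star \le 1$. Combining with the standing hypothesis $\E_v^\theta[\tau] \le n/\fwdbound$ gives $\E_v^\theta[L^2] \le n/\fwdbound$, and substituting into the Cauchy--Schwarz bound above produces exactly the claimed inequality. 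Since each step is a direct appeal to a previously established result, there is no substantive obstacle; the only mild care is to verify the regularity conditions for the score-function identity, which follow immediately from the finite state space together with the assumed bound on $\E_v^\theta[\tau]$.
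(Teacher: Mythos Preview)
Your proposal is correct and follows essentially the same approach as the paper: apply Lemma~\ref{lem:reinforce_covariance_and_CS} with $A=\{e\}$ to obtain the Cauchy--Schwarz bound, then chain Lemma~\ref{lem:L2_sum_one_step} and Lemma~\ref{lem:LJ_uniform_one_step_fisher_all_params_streamlined} together with the assumed bound on $\E_v^\theta[\tau]$ to control $\E_v^\theta[L^2]$. The paper's proof is in fact slightly terser than yours, omitting the remark about regularity hypotheses.
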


\begin{proof}
Apply Lemma~\ref{lem:reinforce_covariance_and_CS} with \(A=\{e\}\). This gives
\[
|\partial_\theta u_\theta(v)|\le \sqrt{u_\theta(v)\bigl(1-u_\theta(v)\bigr)}\;\sqrt{\E_v^\theta[L^2]}.
\]
By Lemma~\ref{lem:L2_sum_one_step},
\[
\E_v^\theta[L^2]=\E_v^\theta\!\left[\sum_{t=0}^{\tau-1} I_\theta(X_t)\right]\le I_\theta^\star\,\E_v^\theta[\tau].
\]
Finally, Lemma~\ref{lem:LJ_uniform_one_step_fisher_all_params_streamlined} gives \(I_\theta^\star\le 1\), and by assumption
\(\E_v^\theta[\tau]\le n/\fwdbound\). Combining yields
\[
|\partial_\theta u_\theta(v)|
\le \sqrt{u_\theta(v)\bigl(1-u_\theta(v)\bigr)}\;\sqrt{\frac{n}{\fwdbound}}.
\]
\end{proof}

\begin{lemma}[Exponential decay of derivatives for hard examples]
\label{lem:exp_small_deriv_far_from_terminal}
Fix a constant \(a\in(0,1)\) and consider the original two--chain model of length \(n\), together with its
long--jump--absorbing chain (Definition~\ref{def:lj-absorbing-chain}).
Assume there exists a constant \(q_{\min}\in(0,1)\), independent of \(n\), such that the one-step long-jump probability satisfies
\[
p(e;x):=P^{\mathrm{LJ\text{-}abs}}(x,e)\ \ge\ q_{\min}
\qquad\text{for all nonterminal }x\notin\{a_n,b_n\}.
\]
Let \(v\) be any start state with depth \(D(v)\ge an\). Then for each parameter \(\theta\in\{\alpha,\beta,\gamma\}\) there exist constants
\(C,c>0\) (depending only on fixed parameters, \(a\), and \(q_{\min}\), but not on \(n\)) such that
\[
\big|\partial_\alpha S_\alpha(v)\big|\ \le\ C\,\frac{n}{\fwdbound}\,p_f^{\max}\,e^{-cn},
\qquad
\big|\partial_\beta S_\beta(v)\big|\ \le\ C\,\frac{n}{\fwdbound}\,p_b^{\max}\,e^{-cn},
\qquad
\big|\partial_\gamma S_\gamma(v)\big|\ \le\ C\,\frac{n}{\fwdbound}\,p(e)_{\max}\,e^{-cn},
\]
where \(\fwdbound>0\) is the drift-gap constant from Assumption~\ref{assump:base-model-paper},
\begin{align*}
p_f^{\max}&:=\sup_{x\notin\{a_n,b_n\}} \Pr(X_1=F(x)\mid X_0=x),\\
p_b^{\max}&:=\sup_{x\notin\{a_n,b_n\}} \Pr(X_1=B(x)\mid X_0=x),\\
p(e)_{\max}&:=\sup_{x\notin\{a_n,b_n\}} p(e;x),
\end{align*}
and \(F(x),B(x)\) denote the same--chain forward/backward neighbors when they exist (with the convention that the corresponding probability is \(0\) if the neighbor does not exist).
\end{lemma}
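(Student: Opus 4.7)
The plan is to apply the derivative decomposition of Lemma~\ref{lem:decomp_NL_LJ_concise},
\[
\partial_\theta S_\theta(v)=-(\partial_\theta h_\theta(v))\Big(\bar S^{NL}_\theta(v)-\tfrac12\Big)+(1-h_\theta(v))\,\partial_\theta \bar S^{NL}_\theta(v),
\]
and argue that each term on the right is exponentially small in~$n$ whenever $D(v)\ge an$. The underlying intuition is that under the uniform lower bound $p(e;x)\ge q_{\min}$, a chain starting deep is absorbed at the long--jump state~$e$ before ever reaching a terminal vertex, with overwhelming probability; and by chain--swap symmetry (Proposition~\ref{prop:long_jump_conditional_half}), once a long jump occurs the same--side probability is exactly $1/2$, independent of~$\theta$. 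Hence the outcome distribution is $\theta$--insensitive on an event of exponentially high probability.

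The core estimate I would establish first is $1-h_\theta(v)\le e^{-c_1 n}$ with $c_1:=a\log\!\big(1/(1-q_{\min})\big)$: in the long--jump--absorbing chain, the event $\{X_\tau\in\{a_n,b_n\}\}$ forces at least $D(v)\ge an$ consecutive non--long--jump steps, each of probability at most $1-q_{\min}$, so a geometric tail bound yields the claim. The NL term of the decomposition is then immediate: by the source representation (Lemma~\ref{lem:source_representation}), $|\partial_\theta\bar S^{NL}_\theta(v)|\le \sup_x|s^{NL}_\theta(x)|\cdot\E_v^{NL}[\tau]\le Cn\,p_\theta^{\max}/\fwdbound^2$, where $\E_v^{NL}[\tau]\le n/\fwdbound$ follows from the drift gap, and the pointwise source bound combines Lemmas~\ref{lem:alpha-source-nolj-simple}--\ref{lem:gamma-source-nolj-mult} with the NL--to--original conversions of Lemmas~\ref{lem:nl_transition_prob_compare} and~\ref{lem:max_back_and_lj_at_penultimate} (which, in particular, dominate the NL switch mass by the original long--jump mass $p(e)_{\max}/n$). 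Multiplying by $(1-h_\theta(v))\le e^{-c_1 n}$ absorbs the polynomial prefactor into the exponential.

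The LJ term $(\partial_\theta h_\theta(v))(\bar S^{NL}_\theta-\tfrac12)$ requires a slightly finer argument. Applying the source representation inside the long--jump--absorbing chain gives $\partial_\theta h_\theta(v)=\E_v\big[\sum_{t<\tau}s^{LJ}_\theta(X_t)\big]$, and Lemmas~\ref{lem:source_alpha_longjump_absorbing}, \ref{lem:source_beta_longjump_absorbing}, and~\ref{lem:gamma_source_all_cases} bound $|s^{LJ}_\theta(X_t)|$ by $p_\theta^{\max}\cdot(1-u(y))$ for a forward/backward neighbor~$y$ of~$X_t$. By Lemma~\ref{lem:lj_value_bound_pe_tau} and the drift gap, $1-u(y)\le e^{-\lambda D(y)}$ with $\lambda:=\log(1/(1-q_{\min}))$; and since depth decreases by at most one per step, $D(X_t)\ge an-t$ on $\{t<\tau\}$, while the geometric survival bound gives $\Pr_v(t<\tau)\le e^{-\lambda t}$. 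Splitting the time sum at $t=an$, the value decay factor $e^{-\lambda(an-t-1)}$ and the survival factor $e^{-\lambda t}$ cancel exactly in the pre--$an$ range, leaving a single $n$ prefactor times $e^{-\lambda an}$; the post--$an$ tail is negligible. This yields $|\partial_\theta h_\theta(v)|\le Cn\,p_\theta^{\max}\,e^{-cn}$ for any $c<\lambda a=c_1$; combined with $|\bar S^{NL}_\theta-\tfrac12|\le\tfrac12$ and the NL bound above, this gives the stated uniform estimate. The main subtlety is that the exact cancellation in the LJ estimate requires $a<1$, which is precisely the assumption of the lemma; the constant~$c$ can then be taken as any value strictly less than~$c_1$.
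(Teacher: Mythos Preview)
Your decomposition and treatment of the NL term coincide with the paper's proof. For the LJ term $|\partial_\theta h_\theta(v)|$, however, you take a genuinely different route: you expand via the source representation and exploit the pointwise bound $|s^{LJ}_\theta(x)|\lesssim p_\theta^{\max}(1-u(y))$ together with the geometric decay $1-u(y)\le e^{-\lambda D(y)}$, then play the depth decay against the survival probability. The paper instead invokes the REINFORCE/Cauchy--Schwarz bound (Lemma~\ref{lem:lj_absorption_derivative_overall_bound}), obtaining
\[
|\partial_\theta h_\theta(v)|\le\sqrt{h_\theta(v)\bigl(1-h_\theta(v)\bigr)}\sqrt{n/\fwdbound}\le C\sqrt{n/\fwdbound}\,e^{-c'n}
\]
in one line, without ever looking at the $\theta$-source structure. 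Your argument is longer but delivers the $p_\theta^{\max}$ prefactor directly, which is what the lemma statement actually promises; the paper's shortcut loses that factor on the LJ contribution and relies on it appearing only through the NL term (or on $p_\theta^{\max}$ being bounded below). Both approaches are valid.

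Two small corrections. First, the bound $1-u(y)\le e^{-\lambda D(y)}$ does \emph{not} follow from Lemma~\ref{lem:lj_value_bound_pe_tau}, which upper-bounds $u$, not $1-u$; the right justification is the same geometric survival argument you already use for $1-h_\theta(v)$ (reaching $\{a_n,b_n\}$ from $y$ forces at least $D(y)$ non-long-jump steps, each of probability $\le 1-q_{\min}$). Second, the remark that the cancellation ``requires $a<1$'' is off: the identity $e^{-\lambda(an-t-1)}\cdot e^{-\lambda t}=e^{-\lambda(an-1)}$ holds regardless, and the condition $a<1$ is only needed so that the hypothesis $D(v)\ge an$ is nonvacuous (depth is at most $n-1$).
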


\begin{proof}
Let \(\tau:=\inf\{t\ge 0:\ X_t\in\{a_n,b_n\}\}\) be the absorption time in the original chain and let \(NL:=\{J>\tau\}\) be the event
of no long jump before absorption (with \(J\) as in Lemma~\ref{lem:decomp_NL_LJ_concise}).
In the long--jump--absorbing chain (Definition~\ref{def:lj-absorbing-chain}) with absorption time
\[
\tau_{LJ}:=\inf\{t\ge 0:\ X_t\in\{a_n,b_n,e\}\},
\]
one has \(NL=\{X_{\tau_{LJ}}\in\{a_n,b_n\}\}\) and \(LJ=\{X_{\tau_{LJ}}=e\}\).

We use the long--jump / no--long--jump \emph{derivative} decomposition (Lemma~\ref{lem:decomp_NL_LJ_concise}): for each
\(\theta\in\{\alpha,\beta,\gamma\}\),
\[
\partial_\theta S_\theta(v)
=-(\partial_\theta h_\theta(v))\Big(\bar S^{NL}_\theta(v)-\tfrac12\Big)
+(1-h_\theta(v))\,\partial_\theta\bar S^{NL}_\theta(v),
\]
where \(h_\theta(v)=\Pr_v^\theta(LJ)\) and \(1-h_\theta(v)=\Pr_v^\theta(NL)\).

Since \(D(v)\ge an\), on \(NL\) the depth can decrease by at most \(1\) per step, so \(\tau\ge an\). Under the assumption
\(p(e;x)\ge q_{\min}\) for every nonterminal \(x\), this gives
\[
\Pr_v^\theta(NL)
\le \Pr_v^\theta(\text{no hit of }e\text{ in the first }an\text{ steps})
\le (1-q_{\min})^{an}
\le e^{-cn}
\]
for some \(c>0\) depending only on \(a,q_{\min}\).

Using \(|\bar S^{NL}_\theta(v)-\tfrac12|\le \tfrac12\), we have
\[
\big|\partial_\theta S_\theta(v)\big|
\le \tfrac12\,\big|\partial_\theta h_\theta(v)\big|+\Pr_v^\theta(NL)\cdot \big|\partial_\theta\bar S^{NL}_\theta(v)\big|.
\]
Now \(\partial_\theta h_\theta(v)=\partial_\theta \Pr_v^\theta(LJ)\) is the derivative of the long-jump absorption probability in the
long--jump--absorbing chain (with absorbing set \(T=\{a_n,b_n,e\}\) and target \(A=\{e\}\)).
Therefore, applying Lemma~\ref{lem:lj_absorption_derivative_overall_bound} yields
\[
|\partial_\theta h_\theta(v)|
\le
c\,\sqrt{h_\theta(v)\bigl(1-h_\theta(v)\bigr)}\cdot \sqrt{\frac{n}{\fwdbound}}
=
c\,\sqrt{\Pr_v^\theta(LJ)\Pr_v^\theta(NL)}\cdot \sqrt{\frac{n}{\fwdbound}}
\le
C\,\sqrt{\frac{n}{\fwdbound}}\,e^{-c'n},
\]
where we used \(\Pr_v^\theta(NL)\le e^{-cn}\) and absorbed constants by possibly decreasing \(c\).

It remains to control \(|\partial_\theta \bar S^{NL}_\theta(v)|\) by the relevant one-step probability scale.
By the source identities in the NL chain (Lemmas~\ref{lem:alpha-source-nolj-simple},~\ref{lem:beta-source-nolj-simple}, and the corresponding
\(\gamma\)-source bound Lemma~\ref{lem:gamma-source-nolj-mult}), the NL sources satisfy
\(|s_{\alpha,NL}(x)|\le C\,p_f(x)\), \(|s_{\beta,NL}(x)|\le C\,p_b(x)\), and \(|s_{\gamma,NL}(x)|\le C\,p_{\mathrm{sw}}(x)\le C\,p(e)_{\max}\).
Therefore, by the trajectory/source representation (Lemma~\ref{lem:source_representation}) and the drift-gap absorption time bound
\(\E_v^{NL}[\tau]\le n/\fwdbound\), we have
\[
|\partial_\alpha \bar S^{NL}_\alpha(v)|\le C\,\frac{n}{\fwdbound}\,p_f^{\max},\qquad
|\partial_\beta \bar S^{NL}_\beta(v)|\le C\,\frac{n}{\fwdbound}\,p_b^{\max},\qquad
|\partial_\gamma \bar S^{NL}_\gamma(v)|\le C\,\frac{n}{\fwdbound}\,p(e)_{\max}.
\]
Multiplying by \(\Pr_v^\theta(NL)\le e^{-cn}\) and combining with the bound on \(|\partial_\theta h_\theta(v)|\) yields the stated result.
\end{proof}
\subsection{Easy+Hard examples derivative estimates}
\label{app:easy_hard_deriv_est}

We now apply the results of the previous subsections to derive our three main derivative bounds that will be used under the training-distribution assumptions of
Theorem~\ref{result:dyn_learn}.

\begin{lemma}[Easy+hard expected derivative bound for $\gamma$ (Theorem~\ref{result:dyn_learn})]
  \label{lem:easy_hard_expected_deriv_gamma}
  Assume the training distribution $\QQ$ satisfies the assumptions of Theorem~\ref{result:dyn_learn},
  and that there exists a constant $q_{\min}\in(0,1)$, independent of $n$, such that $P^{\mathrm{LJ\text{-}abs}}(x,e) \ge q_{\min}$ for all nonterminal $x$.
  Let $v_i:=a_i$ (equivalently, by symmetry one may take $v_i:=b_i$). Define
  \[
  p(e)_{\max}:=\max_{x\ \mathrm{nonterminal}} P^{\mathrm{LJ\text{-}abs}}(x,e).
  \]
  Then for all $n$ large enough,
  \[
  \E_{i\sim\QQ}\big[\partial_\gamma S_\gamma(v_i)\big]\le 0,
  \]
  and there exist constants $C_{1,\gamma},C_{2,\gamma}>0$ (independent of $n$ and $c$) such that
  \[ c \cdot
  C_{2,\gamma}\,p(e)_{\max}
  \ \le\
  \Big|\E_{i\sim\QQ}\big[\partial_\gamma S_\gamma(v_i)\big]\Big|
  \ \le\
  c \cdot C_{1,\gamma}\,\sqrt{n}\,p(e)_{\max}.
  \]
  \end{lemma}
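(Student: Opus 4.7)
\textbf{Proof plan for Lemma~\ref{lem:easy_hard_expected_deriv_gamma}.} The plan is to split the expectation according to the promised decomposition of the training distribution $\QQ$. Under the assumptions of Theorem~\ref{result:dyn_learn}, both the $s$-simplicity mass and the $\lceil a n \rceil$-simplicity mass equal $c$, so $\QQ$ is supported on the union $E\cup H$, where $E:=\{n-s,\ldots,n-1\}$ (the \emph{easy indices}, depth at most $s$, total mass $c$) and $H:=\{1,\ldots,n-\lceil a n\rceil\}$ (the \emph{hard indices}, depth at least $\lceil a n\rceil$, total mass $1-c$). Accordingly, I would write
\[
\E_{i\sim\QQ}\big[\partial_\gamma S_\gamma(v_i)\big]
\;=\;\underbrace{\sum_{i\in E}\QQ(i)\,\partial_\gamma S_\gamma(v_i)}_{\text{easy part}}
\;+\;\underbrace{\sum_{i\in H}\QQ(i)\,\partial_\gamma S_\gamma(v_i)}_{\text{hard part}},
\]
and control the two pieces separately.

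For the easy part, each index $i\in E$ corresponds to a start vertex $v_i=a_i$ at depth $K:=n-i\in\{1,\ldots,s\}$, independent of $n$. I would invoke Lemma~\ref{lem:monotonicity_gamma} to get the sign $\partial_\gamma S_\gamma(v_i)\le 0$ together with the quantitative bound $\partial_\gamma S_\gamma(v_i)\le -\tfrac18\,\partial_\gamma h_\gamma(v_i)$, and then combine the lower bound on $\partial_\gamma h_\gamma(v_i)$ from the proof of Lemma~\ref{lem:monotonicity_gamma} (using Lemma~\ref{lem:gamma_source_lower_bounds}(C) at the penultimate vertex, the base-model drift $\fwdbound$, and the assumption $p(e;x)\ge q_{\min}$) with the comparability estimate of Lemma~\ref{lem:max_back_and_lj_at_penultimate} to conclude $-\partial_\gamma S_\gamma(v_i)\ge c_K\,p(e)_{\max}$ for some $c_K>0$ depending only on $s,a,\fwdbound,q_{\min}$. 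Summing over $i\in E$ with total mass $c$ yields the lower bound $|{\sum_{i\in E}\QQ(i)\partial_\gamma S_\gamma(v_i)}|\ge c\,C_{2,\gamma}\,p(e)_{\max}$. For the matching upper bound, I would apply Lemma~\ref{lem:ss-deriv-gamma-near-terminal-tbd_upper} directly to each $v_i$ with $i\in E$, yielding $|\partial_\gamma S_\gamma(v_i)|\le C_{1,\gamma}\,\frac{\sqrt{n}}{\fwdbound}\,p(e)_{\max}$, and sum against $\QQ$ on $E$ to get the stated $c\cdot C_{1,\gamma}\sqrt{n}\,p(e)_{\max}$ upper bound.

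For the hard part, every $i\in H$ has $D(v_i)\ge \lceil a n\rceil$, so Lemma~\ref{lem:exp_small_deriv_far_from_terminal} (with $\theta=\gamma$) gives $|\partial_\gamma S_\gamma(v_i)|\le C\,\frac{n}{\fwdbound}\,p(e)_{\max}\,e^{-c n}$. Summing over $i\in H$ with total mass at most $1-c\le 1$ yields $\big|\sum_{i\in H}\QQ(i)\,\partial_\gamma S_\gamma(v_i)\big|\le C\,\frac{n}{\fwdbound}\,p(e)_{\max}\,e^{-c n}$, which is $o(c\,p(e)_{\max})$ for all $n$ large enough (since $s,a,\fwdbound,q_{\min},c$ are fixed constants). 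Consequently the hard contribution is dominated by, say, half the easy-part lower bound for large $n$, and it cannot flip the sign nor violate either of the two displayed bounds; adjusting $C_{1,\gamma},C_{2,\gamma}$ by absolute factors absorbs the hard-part correction.

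The main obstacle is purely bookkeeping of constants: one must verify that the constants $c_K$ arising at different easy depths $K\in\{1,\ldots,s\}$ admit a uniform positive lower bound (which they do, since $s$ is fixed and the relevant quantities depend continuously on $K$ through finite products of $\fwdbound$), and that the $e^{-cn}$ decay of the hard part indeed beats the constant--order $p(e)_{\max}$ scale of the easy lower bound. Both are routine given the base-model assumption $p(e;x)\ge q_{\min}$ and the drift gap $\fwdbound$, so no new technical ingredient beyond the lemmas already proved is needed.
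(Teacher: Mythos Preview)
Your proposal is correct and follows essentially the same approach as the paper: decompose $\QQ$ into easy indices (depth at most $s$, mass $c$) and hard indices (depth at least $\lceil an\rceil$, mass $1-c$), use Lemma~\ref{lem:monotonicity_gamma} and Lemma~\ref{lem:ss-deriv-gamma-near-terminal-tbd_upper} for the easy-part sign and two-sided bounds, Lemma~\ref{lem:exp_small_deriv_far_from_terminal} for the exponentially small hard part, and absorb the hard correction into the constants. If anything, your citations are slightly more precise than the paper's, which attributes both the lower and upper easy bounds to Lemma~\ref{lem:monotonicity_gamma} alone.
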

  \begin{proof}
    Let $v_i := a_i$. Define the index sets
    \[
    \mathcal{E} := \{n-k, \ldots, n-1\} \quad \text{(easy)}, \qquad
    \mathcal{H} := \{1, \ldots, \lfloor(1-a)n\rfloor\} \quad \text{(hard)}.
    \]
    Since $\QQ$ has s-simplicity mass $c$ we have $\QQ(\mathcal{E}) = c$, and since $\QQ$ also has $\lceil a n \rceil$-simplicity mass $c$, we can decompose the expected derivative as
    \begin{equation}
    \label{eq:deriv_decomp_easy_hard_gamma}
    \E_{i \sim \QQ}[\partial_\gamma S_\gamma(v_i)]
    = \QQ(\mathcal{E}) \cdot \E_{i \sim \QQ}[\partial_\gamma S_\gamma(v_i) \mid i \in \mathcal{E}]
    + \QQ(\mathcal{H}) \cdot \E_{i \sim \QQ}[\partial_\gamma S_\gamma(v_i) \mid i \in \mathcal{H}].
    \end{equation}

    By Lemma~\ref{lem:monotonicity_gamma}, $\partial_\gamma S_\gamma(v_i) \le 0$ for all $n$ large enough, and moreover
    \[
    C_{2,\gamma}\,p(e)_{\max}
    \ \le\ 
    |\partial_\gamma S_\gamma(v_i)| 
    \ \le\ 
    C_{1,\gamma}\,\sqrt{n}\,p(e)_{\max},
    \]
    where the constants $C_{1,\gamma}, C_{2,\gamma} > 0$ depend only on $k$ (not on $n$ or $c$).
    Therefore, the easy-example conditional expectation satisfies
    \begin{equation}
    \label{eq:easy_gamma_bounds}
    C_{2,\gamma}\,p(e)_{\max}
    \ \le\ 
    \Big|\E_{i \sim \QQ}[\partial_\gamma S_\gamma(v_i) \mid i \in \mathcal{E}]\Big|
    \ \le\ 
    C_{1,\gamma}\,\sqrt{n}\,p(e)_{\max}.
    \end{equation}
    
    For each $i \in \mathcal{H}$, the depth satisfies $D(v_i) = n - i \ge an$.
    By Lemma~\ref{lem:exp_small_deriv_far_from_terminal}, for all $n$ large enough,
    \begin{equation}
    \label{eq:hard_gamma_bound}
    \big|\QQ(\mathcal{H}) \cdot \E_{i \sim \QQ}[\partial_\gamma S_\gamma(v_i) \mid i \in \mathcal{H}]\big|
    \le C\,\frac{n}{\fwdbound}\,p(e)_{\max}\,e^{-cn}.
    \end{equation}
    
    Combining \eqref{eq:easy_gamma_bounds} and \eqref{eq:hard_gamma_bound} via the triangle inequality, and redefining the constants $C_{1,\gamma}, C_{2,\gamma}$, we obtain the stated bounds.
    \end{proof}

  \begin{lemma}[Easy+hard expected derivative lower bound for $\beta$ (Theorem~\ref{result:dyn_learn})]
  \label{lem:easy_hard_expected_deriv_beta_upper}
  Assume the training distribution $\QQ$ satisfies the assumptions of Theorem~\ref{result:dyn_learn},
  and that there exists a constant $q_{\min}\in(0,1)$, independent of $n$, such that $P^{\mathrm{LJ\text{-}abs}}(x,e) \ge q_{\min}$ for all nonterminal $x$.
  Let $v_i:=a_i$ (equivalently, by symmetry one may take $v_i:=b_i$). Define
  \[
  p(e)_{\max}:=\max_{x\ \mathrm{nonterminal}} P^{\mathrm{LJ\text{-}abs}}(x,e),
  \qquad
  p(b)_{\max}:=\max_{x\ \mathrm{nonterminal}} p_b(x),
  \]
  where $p_b(x)$ denotes the one--step same--chain backward probability from $x$ in the original chain.
  Then for all $n$ large enough,
  \[
  \E_{i\sim\QQ}\big[\partial_\beta S_\beta(v_i)\big]\le 0,
  \]
  and there exists a constant $C_{2,\beta}>0$ (independent of $n$) such that
  \[
  \Big|\E_{i\sim\QQ}\big[\partial_\beta S_\beta(v_i)\big]\Big|
  \ \ge\ c \cdot
  C_{2,\beta}\,p(e)_{\max}\,p(b)_{\max}.
  \]
  \end{lemma}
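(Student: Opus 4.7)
The plan is to mirror the structure of the proof of Lemma~\ref{lem:easy_hard_expected_deriv_gamma}, substituting each $\gamma$-ingredient with its $\beta$-counterpart. Setting $v_i := a_i$ and letting $\mathcal{E} := \{n-s, \ldots, n-1\}$ and $\mathcal{H} := \{1, \ldots, \lfloor (1-a) n \rfloor\}$, I would use the $s$-simplicity and $\lceil a n \rceil$-simplicity assumptions of Theorem~\ref{result:dyn_learn} to decompose
\[
\E_{i \sim \QQ}[\partial_\beta S_\beta(v_i)]
= c \cdot \E_{i \sim \QQ}[\partial_\beta S_\beta(v_i) \mid i \in \mathcal{E}]
+ (1-c) \cdot \E_{i \sim \QQ}[\partial_\beta S_\beta(v_i) \mid i \in \mathcal{H}],
\]
and analyze the two conditional expectations separately.

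For each $i \in \mathcal{E}$, the starting vertex $v_i = a_{n-K}$ lies at a fixed distance $K \le s$ from the terminal, so Lemma~\ref{lem:monotonicity_beta} yields $\partial_\beta S_\beta(v_i) \le -\tfrac{1}{8}\,\partial_\beta h_\beta(v_i) \le 0$ for all $n$ large enough. To obtain the claimed quantitative lower bound, I would then chase the estimate
\[
\partial_\beta h_\beta(v_i) \ \ge\ \fwdbound^{2K-1} \, p_b(v_{\mathrm{pen}}) \, p(e; v_{\mathrm{pen}})
\]
that is already established inside the proof of Lemma~\ref{lem:monotonicity_beta}, and combine it with the penultimate-vertex comparability of Lemma~\ref{lem:max_back_and_lj_at_penultimate} ($p_b(v_{\mathrm{pen}}) \ge \tfrac{1}{2}\, p(b)_{\max}$ and $p(e; v_{\mathrm{pen}}) \ge \tfrac{1}{2}\, p(e)_{\max}$) to produce a constant $C' > 0$, independent of $n$ and of the choice of $K \in \{1,\ldots,s\}$, such that $|\partial_\beta S_\beta(v_i)| \ge C' \, p(e)_{\max} \, p(b)_{\max}$ uniformly over $i \in \mathcal{E}$.

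For $i \in \mathcal{H}$, the depth satisfies $D(v_i) \ge a n$, and I would invoke the $\beta$-version of Lemma~\ref{lem:exp_small_deriv_far_from_terminal} to obtain $|\partial_\beta S_\beta(v_i)| \le C \, (n/\fwdbound) \, p(b)_{\max} \, e^{-c n}$. Since the easy block contributes a nonpositive amount of magnitude at least $c \cdot C' \, p(e)_{\max} \, p(b)_{\max}$, while the hard block's magnitude is at most $(1-c) \cdot C \, (n/\fwdbound) \, p(b)_{\max} \, e^{-c n}$, a triangle inequality then yields both the sign claim $\E_{i \sim \QQ}[\partial_\beta S_\beta(v_i)] \le 0$ and the desired lower bound
\[
\big|\E_{i \sim \QQ}[\partial_\beta S_\beta(v_i)]\big| \ \ge\ c \cdot C_{2,\beta} \, p(e)_{\max} \, p(b)_{\max},
\]
for a suitable $C_{2,\beta}$ obtained by absorbing the exponentially small hard-block correction into a slightly smaller multiplicative constant.

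The main obstacle I anticipate is ensuring that the easy block's polynomial-scale lower bound genuinely dominates the hard block. This is exactly where the hypothesis $p(e;x) \ge q_{\min}$ enters: it keeps $p(e)_{\max}$ bounded below by a positive constant independent of $n$, so that the product $p(e)_{\max} \, p(b)_{\max}$ on the easy side does not degenerate, while the $e^{-c n}$ factor on the hard side crushes its $n/\fwdbound$ prefactor. Once this comparison is secured, the remainder is routine bookkeeping using only Lemmas~\ref{lem:monotonicity_beta}, \ref{lem:max_back_and_lj_at_penultimate}, and \ref{lem:exp_small_deriv_far_from_terminal}.
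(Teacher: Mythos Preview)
Your proposal is correct and follows essentially the same approach as the paper: decompose into easy and hard examples, use Lemma~\ref{lem:monotonicity_beta} (and the estimate derived inside its proof) together with Lemma~\ref{lem:max_back_and_lj_at_penultimate} for the easy block, and Lemma~\ref{lem:exp_small_deriv_far_from_terminal} for the hard block, then combine via the triangle inequality. The paper's own proof is terser (citing Lemmas~\ref{lem:monotonicity_beta} and~\ref{lem:deriv_est_beta} without unpacking the penultimate-vertex comparison), but your more explicit use of Lemma~\ref{lem:max_back_and_lj_at_penultimate} and the role of $q_{\min}$ are exactly the details that underlie the paper's claim.
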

  \begin{proof}
    As in Lemma~\ref{lem:easy_hard_expected_deriv_gamma}, we decompose
    \[
    \E_{i \sim \QQ}[\partial_\beta S_\beta(v_i)]
    = \QQ(\mathcal{E}) \cdot \E[\partial_\beta S_\beta(v_i) \mid i \in \mathcal{E}]
    + \QQ(\mathcal{H}) \cdot \E[\partial_\beta S_\beta(v_i) \mid i \in \mathcal{H}].
    \]
    For easy examples, Lemmas~\ref{lem:monotonicity_beta} and~\ref{lem:deriv_est_beta} give $\partial_\beta S_\beta(v_i) \le 0$ and $|\partial_\beta S_\beta(v_i)| \ge C_{2,\beta}\,p(e)_{\max}\,p(b)_{\max}$.
    For hard examples, Lemma~\ref{lem:exp_small_deriv_far_from_terminal} gives $|\partial_\beta S_\beta(v_i)| \le C\,\frac{n}{\fwdbound}\,p_b^{\max}\,e^{-cn}$.
    Since this is exponentially small, combining via the triangle inequality and redefining $C_{2,\beta}$ yields the stated bounds.
  \end{proof}
  
  \begin{lemma}[Easy+hard expected derivative lower bound for $\alpha$ (Theorem~\ref{result:dyn_learn})]
  \label{lem:easy_hard_expected_deriv_alpha_lower}
  Assume the training distribution $\QQ$ satisfies the assumptions of Theorem~\ref{result:dyn_learn},
  and that there exists a constant $q_{\min}\in(0,1)$, independent of $n$, such that $P^{\mathrm{LJ\text{-}abs}}(x,e) \ge q_{\min}$ for all nonterminal $x$.
  Let $v_i:=a_i$ (equivalently, by symmetry one may take $v_i:=b_i$). Define
  \[
  p(e)_{\max}:=\max_{x\ \mathrm{nonterminal}} P^{\mathrm{LJ\text{-}abs}}(x,e),
  \qquad
  p(f)_{\min}:=\min_{x\ \mathrm{nonterminal}} p_f(x),
  \]
  where $p_f(x)$ denotes the one--step same--chain forward probability from $x$ in the original chain.
  Then for all $n$ large enough,
  \[
  \E_{i\sim\QQ}\big[\partial_\alpha S_\alpha(v_i)\big]\ge 0,
  \]
  and there exists a constant $C_{2,\alpha}>0$ (independent of $n$) such that
  \[
  \Big|\E_{i\sim\QQ}\big[\partial_\alpha S_\alpha(v_i)\big]\Big|
  \ \ge c \cdot \
  C_{2,\alpha}\,p(e)_{\max}\,p(f)_{\min}.
  \]
  \end{lemma}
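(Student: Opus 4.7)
The proof will follow the same three-step pattern already used for the $\gamma$ and $\beta$ analogues (Lemmas \ref{lem:easy_hard_expected_deriv_gamma} and \ref{lem:easy_hard_expected_deriv_beta_upper}): decompose the expectation over easy vs.\ hard indices, use monotonicity plus an explicit easy-example lower bound on the first piece, and absorb the hard-example piece into the constant via the exponential-decay lemma.

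Concretely, let $v_i := a_i$, split the index set into $\mathcal{E}:=\{n-s,\dots,n-1\}$ and $\mathcal{H}:=\{1,\dots,\lfloor(1-a)n\rfloor\}$, and write
\[
\E_{i\sim\QQ}[\partial_\alpha S_\alpha(v_i)]
=\QQ(\mathcal{E})\,\E[\partial_\alpha S_\alpha(v_i)\mid i\in\mathcal{E}]
+\QQ(\mathcal{H})\,\E[\partial_\alpha S_\alpha(v_i)\mid i\in\mathcal{H}],
\]
with $\QQ(\mathcal{E})=c$ by the $s$-simplicity assumption and $\QQ(\mathcal{H})\le c$ by the $\lceil an\rceil$-simplicity assumption.

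For the easy block, Lemma \ref{lem:monotonicity_alpha} already gives $\partial_\alpha S_\alpha(v_i)\ge \tfrac18\,|\partial_\alpha h_\alpha(v_i)|\ge 0$ for each $i\in\mathcal{E}$ (taking $K=n-i\le s$, a fixed constant). The main technical step is to turn $|\partial_\alpha h_\alpha(v_i)|$ into the advertised bound $C\cdot p(e)_{\max}\,p(f)_{\min}$. I would use Lemma \ref{lem:deriv_est_alpha} keeping only the penultimate term: since $s_\alpha(x)\le 0$,
\[
|\partial_\alpha h_\alpha(v_i)|\;\ge\;|s_\alpha(a_{n-1})|\cdot\Pr_{v_i}(0<\sigma<\tau),
\]
and by Lemma \ref{lem:source_alpha_longjump_absorbing}, together with $F(a_{n-1})=a_n$ and $u(a_n)=0$,
\[
|s_\alpha(a_{n-1})|=p_f(a_{n-1})\,u(a_{n-1})\;\ge\;p(f)_{\min}\cdot p(e;a_{n-1}),
\]
using the one-step lower bound $u(a_{n-1})\ge p(e;a_{n-1})$. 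Lemma \ref{lem:max_back_and_lj_at_penultimate} gives $p(e;a_{n-1})\ge \tfrac12 p(e)_{\max}$ for $n$ large, and the base-model forward bound yields $\Pr_{v_i}(0<\sigma<\tau)\ge\fwdbound^{\,s-1}$. Combining these,
\[
\E[\partial_\alpha S_\alpha(v_i)\mid i\in\mathcal{E}]\;\ge\;C'\,p(e)_{\max}\,p(f)_{\min},
\]
for a constant $C'$ depending only on $s,\fwdbound$. The hard block is handled directly by the $\alpha$-case of Lemma \ref{lem:exp_small_deriv_far_from_terminal}, which gives $|\partial_\alpha S_\alpha(v_i)|\le C\,(n/\fwdbound)\,p_f^{\max}\,e^{-cn}$ for every $i\in\mathcal{H}$; since $p_f^{\max}\le 1$ and $n\,e^{-cn}\to 0$, this contribution is exponentially smaller than the easy term, and in particular cannot reverse its sign. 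Multiplying the easy-block lower bound by $\QQ(\mathcal{E})=c$, subtracting the exponentially small hard-block term, and absorbing the factor $\tfrac12$ (or any similar constant) into $C_{2,\alpha}$ yields both $\E_{i\sim\QQ}[\partial_\alpha S_\alpha(v_i)]\ge 0$ and $\bigl|\E_{i\sim\QQ}[\partial_\alpha S_\alpha(v_i)]\bigr|\ge c\cdot C_{2,\alpha}\,p(e)_{\max}\,p(f)_{\min}$ for all sufficiently large $n$.

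The only delicate point is the easy-block lower bound on $|s_\alpha(a_{n-1})|$: one must avoid losing a factor of $p(e)_{\max}^2$ by being careful to use $u(a_{n-1})\ge p(e;a_{n-1})$ (a one-step probability) rather than a trajectory-level bound on $u$, and then invoke the comparability $p(e;a_{n-1})\asymp p(e)_{\max}$ from Lemma \ref{lem:max_back_and_lj_at_penultimate}. Everything else is parallel to the $\beta$ proof and requires no new ideas.
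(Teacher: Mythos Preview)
Your proposal is correct and follows essentially the same approach as the paper's proof, which is equally terse and simply cites Lemmas~\ref{lem:monotonicity_alpha}, \ref{lem:deriv_est_alpha}, and \ref{lem:exp_small_deriv_far_from_terminal} for the easy and hard blocks respectively; your extra detail on extracting the $p(e)_{\max}\,p(f)_{\min}$ factor via $|s_\alpha(a_{n-1})|=p_f(a_{n-1})\,u(a_{n-1})\ge p(f)_{\min}\cdot p(e;a_{n-1})$ and Lemma~\ref{lem:max_back_and_lj_at_penultimate} is exactly what underlies the paper's citation. One small slip: $\QQ(\mathcal{H})=1-c$, not $\le c$ (the $\lceil an\rceil$-simplicity assumption forces all remaining mass into $\mathcal{H}$), but this is harmless since you only need $\QQ(\mathcal{H})\le 1$ to absorb the exponentially small hard term.
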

  \begin{proof}
    As in Lemma~\ref{lem:easy_hard_expected_deriv_gamma}, we decompose
    \[
    \E_{i \sim \QQ}[\partial_\alpha S_\alpha(v_i)]
    = \QQ(\mathcal{E}) \cdot \E[\partial_\alpha S_\alpha(v_i) \mid i \in \mathcal{E}]
    + \QQ(\mathcal{H}) \cdot \E[\partial_\alpha S_\alpha(v_i) \mid i \in \mathcal{H}].
    \]
    For easy examples, Lemmas~\ref{lem:monotonicity_alpha} and~\ref{lem:deriv_est_alpha} give $\partial_\alpha S_\alpha(v_i) \ge 0$ and $|\partial_\alpha S_\alpha(v_i)| \ge C_{2,\alpha}\,p(e)_{\max}\,p(f)_{\min}$.
    For hard examples, Lemma~\ref{lem:exp_small_deriv_far_from_terminal} gives $|\partial_\alpha S_\alpha(v_i)| \le C\,\frac{n}{\fwdbound}\,p_f^{\max}\,e^{-cn}$.
    Since this is exponentially small, combining via the triangle inequality and redefining $C_{2,\alpha}$ yields the stated bounds.
  \end{proof}

\subsection{Convergence rate}
\label{app:dyn_learn}
Main proof
We now present the proof of \cref{result:dyn_learn}, bringing together the components developed in the previous subsections.

\begin{proof}[Proof of \cref{result:dyn_learn}]
    By \cref{lem:symmetry_reduction}, the dynamics of the attention parameters are governed by the gradients of the reduced loss $S := \tilde L(\alpha,\beta,\gamma)$.
    Since the loss $S$ is an expectation over the starting vertex location $v$ (sampled from $Q$), we write $S = \E_{v \sim Q}[S_{v}(\alpha, \beta, \gamma)]$, where $S_{v}$ is the loss term for a specific $v$.
    Substituting this into the dynamics from \cref{lem:symmetry_reduction} and taking absolute values, we have with $|V| = 2n$:
    \begin{align*}
        |\dot\alpha(t)| &= \frac{1}{2n(2n-1)}\, \left| \E_{v} [\partial_\alpha S_{v}] \right|,\\
        |\dot\beta(t)| &= \frac{1}{2n(2n-1)}\, \left| \E_{v} [\partial_\beta S_{v}] \right|,\\
        |\dot\gamma(t)| &= \frac{1}{(2n-2)\cdot 2n(2n-1)}\, \left| \E_{v} [\partial_\gamma S_{v}] \right|.
    \end{align*}
    We apply \cref{lem:easy_hard_expected_deriv_alpha_lower,lem:easy_hard_expected_deriv_beta_upper,lem:easy_hard_expected_deriv_gamma} to bound the derivatives.
    These lemmas require the assumption that $p(e)_{\max}(t) \ge q_{\min}$ for some constant $q_{\min} > 0$ independent of $n$. Under this assumption, they imply $\E_{v \sim Q}[\partial_\alpha S_{v}] \ge 0$, $\E_{v \sim Q}[\partial_\beta S_{v}] \le 0$, and $\E_{v \sim Q}[\partial_\gamma S_{v}] \le 0$.
    Since \cref{lem:symmetry_reduction} gives $\dot\theta = -c_\theta \partial_\theta S$ for positive constants $c_\theta$, we have $\dot\alpha(t) \ge 0$, $\dot\beta(t) \le 0$, and $\dot\gamma(t) \le 0$. 

    We will from now assume that $p(e)_{\max}(t) \ge q_{\min}$ while analyzing the dynamics of the parameters $\alpha, \beta, \gamma$ until some time $t_0$ (to be determined later). Later, we will show that for all $t\leq t_0$ indeed $p(e)_{\max}(t) \ge q_{\min}$.
    
    We define the extremal probabilities at time $t$:
    \[
        p(e)_{\max}(t) := \max{x\ \mathrm{nonterminal}} P^{\mathrm{LJ\text{-}abs}}(x,e;t),
        \qquad
        p(f)_{\min}(t) := \min_{x\ \mathrm{nonterminal}} p_f(x;t),
    \]
    where the dependence on $t$ comes through the parameters $\alpha(t),\beta(t),\gamma(t)$.
    
    From \cref{lem:easy_hard_expected_deriv_alpha_lower,lem:easy_hard_expected_deriv_gamma}, the expected derivatives satisfy:
    \begin{itemize}
        \item $\big|\E_{v \sim Q}[\partial_\alpha S_{v}]\big| \ge c \cdot C_{2,\alpha}\,p(e)_{\max}(t)\,p(f)_{\min}(t)$,
        \item $ c \cdot C_{2,\gamma}\,p(e)_{\max}(t) \le \big|\E_{v \sim Q}[\partial_\gamma S_{v}]\big| \le  c \cdot C_{1,\gamma}\,\sqrt{n}\,p(e)_{\max}(t)$.
    \end{itemize}
    
    Next, we derive a lower bound on $|\dot\alpha(t)|$. From the dynamics and the lower bound on the expected $\alpha$-derivative:
    \[ |\dot\alpha(t)| \ge  \frac{c \cdot C_{2,\alpha}}{2n(2n-1)} \cdot p(e)_{\max}(t)\,p(f)_{\min}(t). \]
    
    By the softmax structure of the attention logits (\cref{lem:dyn:logits_cases_canonical}), there exist constants $D>0$ and $C>0$ (independent of $n$) such that
    \[
    p(f)_{\min}(0) = \frac{e^{\alpha(0)-D}}{e^{\alpha(0)-D}+C}.
    \]
    Moreover, since $\dot{\beta}, \dot{\gamma} \le 0$, we have that 
    \[
    p(f)_{\min}(t) \geq \frac{e^{\alpha(t)-D}}{e^{\alpha(t)-D}+C} =: g(t).
    \]
    By assumption (\cref{sec:dyn:setting}), at initialization we have $p(f)_{\min}(0) \geq \fwdbound$.
    
    Define
    \[
    q_{\min} := \frac{1}{8} \cdot   \frac{\varepsilon \, \fwdbound}{(1-\varepsilon)(1-\fwdbound)}  \cdot \switchbound,
    \]
    where $\switchbound > 0$ is the lower bound on the initial switch probability from \cref{sec:dyn:setting}.
    
    Under the assumption that $p(e)_{\max}(t) \ge q_{\min}$, we have
    \[ |\dot\alpha(t)| \ge \frac{c \cdot C_{2,\alpha}\,q_{\min}}{2n(2n-1)} \cdot p(f)_{\min}(t). \]
    Since $p(f)_{\min}(t) \geq g(t)$, we may apply \cref{lem:soft_max_ode_scaled_shifted} to conclude that there exists a time 
    \begin{align}
    t_0 &<  \frac{16\, n(2n-1)}{c \cdot C_{2,\alpha}\, \switchbound} \cdot  \frac{(1-\varepsilon)(1-\fwdbound)}{\varepsilon\, \fwdbound} \cdot \log\left(\frac{1-\varepsilon}{\varepsilon}\cdot\frac{1-\fwdbound}{\fwdbound}\right)\\
    &<  O(\frac{n^{2}}{c\cdot \varepsilon^{2}})
    \end{align}
    such that $p(f)_{\min}(t_0) \geq 1-\varepsilon$ and 
    \begin{equation}
    \alpha(t_0) - \alpha(0) < \log\left(\frac{1-\varepsilon}{\varepsilon}\cdot\frac{1-\fwdbound}{\fwdbound}\right). \label{eq:alpha_increase_bound}
    \end{equation} 

    All that's left to show is that indeed for any $t\leq t_0$, $p(e)_{\max}(t) \ge q_{\min}$ for some constant $q_{\min} > 0$ independent of $n$. Assume the contrary, then there exists a minimal time $0 \leq t_1 < t_0$ such that $p(e)_{\max}(t_1) < q_{\min}$. First, we now show that for any constant $1>a_1>0$ independent of $n$, to be determined later, for sufficiently large $n$, for any $t\in[0,t_1]$, $|\dot\gamma(t)| < a_1|\dot\alpha(t)|$.
    
    Using the bounds from the lemmas:
    \[ |\dot\alpha(t)| \ge \frac{c\cdot C_{2,\alpha}}{2n(2n-1)} \cdot p(e)_{\max}(t)\,p(f)_{\min}(t), \]
    and
    \[ |\dot\gamma(t)| \le \frac{c \cdot C_{1,\gamma}\,\sqrt{n}}{(2n-2) \cdot 2n(2n-1)} \cdot p(e)_{\max}(t). \]
    Taking the ratio and using that the $p(e)_{\max}(t)$ terms cancel:
    \[
        \frac{|\dot\gamma(t)|}{|\dot\alpha(t)|} \le \frac{ C_{1,\gamma}}{C_{2,\alpha} (2n-2)} \cdot \frac{\sqrt{n}}{p(f)_{\min}(t)}.
    \]
    By assumption (\cref{sec:dyn:setting}), the model is initialized such that $p(f)_{\min}(0) \ge \fwdbound$, together with the logit structure (\cref{lem:dyn:logits_cases_canonical}) and the fact that $\dot\alpha \ge 0$ and $\dot\beta, \dot\gamma \le 0$ implies that $p(f)_{\min}(t) \ge \fwdbound$ for all $t$ if $n$ is sufficiently large.
    
    Consequently, for sufficiently large $n$, we have $|\dot\gamma(t)| < a_1|\dot\alpha(t)|$.
    
    This inequality allows us to apply \cref{lem:dominated-change-deriv} with $x(t)=\alpha(t),y(t)=\gamma(t)$ and $c=1/a_1$. Note that $\dot\alpha \ge 0$, $\dot\gamma \le 0$, and $\dot\alpha \ge -\dot\gamma/a_1$.
    This yields
    \[ \alpha(t) - \alpha(0) \ge \frac{1}{a_1}(\gamma(0) - \gamma(t)). \]
    Rearranging and exponentiating gives
    \begin{equation}
        e^{\gamma(t)} \ge e^{\gamma(0) + a_1(\alpha(0)-\alpha(t))}. \label{eq:gamma_alpha_bound}
    \end{equation}
    By \cref{eq:alpha_increase_bound}, for all $t \le t_1$ we have that
    \[
    \alpha(t) - \alpha(0) < \log\left(\frac{1-\varepsilon}{\varepsilon}\cdot\frac{1-\fwdbound}{\fwdbound}\right),
    \]
    which is a constant independent of $n$. Therefore, we can choose $a_1 > 0$ sufficiently small (independent of $n$) such that for all $t \le t_0$,
    \[
    e^{\gamma(t)} \ge e^{\gamma(0) + a_1(\alpha(0)-\alpha(t))} \ge e^{\gamma(0) - a_1\log\left(\frac{1-\varepsilon}{\varepsilon}\cdot\frac{1-\fwdbound}{\fwdbound}\right)} \ge \frac{1}{2} e^{\gamma(0)}.
    \] 
    
    By the logit structure (\cref{lem:dyn:logits_cases_canonical}), and the fact that $\alpha(t)>\beta(t),\gamma(t),0$ at init and throughout the run (by signs of the derivatives), we have that for any $t\leq t_1$:
    \begin{align*}
    p(e)_{\max}(t) 
    &\geq \frac{1}{2} \cdot \frac{n e^{\gamma(t)}}{2n e^{\gamma(t)}+ e^{\beta(t)}+ e^{\alpha(t)}} \\
    &\geq \frac{1}{4} \cdot \frac{n e^{\gamma(0)}}{2n e^{\gamma(0)}+ e^{\beta(0)}+ e^{\alpha(0)+(\alpha(t)-\alpha(0))}} \\
    \end{align*}
    By \cref{eq:alpha_increase_bound}, we have $\alpha(t)-\alpha(0) < \log\left(\frac{1-\varepsilon}{\varepsilon}\cdot\frac{1-\fwdbound}{\fwdbound}\right)$:
    \begin{align*}
        p(e)_{\max}(t) \geq & \frac{1}{4} \cdot 
         \frac{\varepsilon \, \fwdbound}{(1-\varepsilon)(1-\fwdbound)}  \cdot \frac{n e^{\gamma(0)}}{2n e^{\gamma(0)}+ e^{\beta(0)}+ e^{\alpha(0)}} \\
        \geq & \frac{1}{8} \cdot 
          \frac{\varepsilon \, \fwdbound}{(1-\varepsilon)(1-\fwdbound)}  \cdot p(e)_{\max}(0) \\
        \geq & \frac{1}{8} \cdot 
          \frac{\varepsilon \, \fwdbound}{(1-\varepsilon)(1-\fwdbound)}  \cdot \switchbound
        \;=\; q_{\min}.
    \end{align*}
    Thus $p(e)_{\max}(t) \geq q_{\min}$ for all $t \le t_1$, which contradicts the assumption that $p(e)_{\max}(t_1) < q_{\min}$, as required.
\end{proof}

\subsection{Extension to discrete-time gradient descent}
\label{app:discretization}

We briefly justify the claim in Footnote~\ref{fn:discretization} (\cref{sec:dyn:traverse_chain}) that the gradient flow guarantees of \cref{result:dyn_learn} extend to discrete-time policy gradient with a fixed step size~$\eta > 0$.

By \cref{lem:symmetry_reduction}, the full attention-matrix dynamics reduce to dynamics on the three scalar parameters $(\alpha, \beta, \gamma)$.
The analysis in \cref{sec:monotonicity,app:easy_hard_deriv_est} establishes that the expected gradients satisfy definite sign conditions---$\partial_\alpha S \geq 0$, $\partial_\beta S \leq 0$, $\partial_\gamma S \leq 0$ (\cref{lem:monotonicity_alpha,lem:monotonicity_beta,lem:monotonicity_gamma})---together with quantitative lower bounds on their magnitudes (\cref{lem:easy_hard_expected_deriv_alpha_lower,lem:easy_hard_expected_deriv_beta_upper,lem:easy_hard_expected_deriv_gamma}), provided two conditions hold: (i)~the forward bias assumption $p_{\mathrm{fwd}} > p_{\mathrm{bwd}} + \fwdbound$ (\cref{assump:base-model-paper}), and (ii)~the long-jump probability remains bounded away from zero, $p(e)_{\max}(t) \geq q_{\min}$.
Crucially, these sign conditions are properties of the \emph{loss landscape} at the current parameter values, not of the continuous-time parametrization.
Therefore they apply equally to discrete gradient descent with any step size $\eta > 0$: at each iteration, $\alpha_{k+1} = \alpha_k + \eta\,\nabla_\alpha \RR \geq \alpha_k$, and similarly $\beta_{k+1} \leq \beta_k$ and $\gamma_{k+1} \leq \gamma_k$.
The monotonicity of the parameters is thus an immediate corollary of the gradient sign conditions, with no ODE approximation required.
Moreover, the same self-reinforcing structure used in the gradient flow proof (\cref{app:dyn_learn}) carries over: since $\alpha$ is non-decreasing and $\beta, \gamma$ are non-increasing, condition~(i) is maintained (in fact strengthened) along the discrete trajectory; and the verification that condition~(ii) holds throughout $[0, t_0]$ (carried out in the final part of the proof of \cref{result:dyn_learn} in \cref{app:dyn_learn}) depends only on the monotonicity of the parameters and the bounded total change in $\alpha$, both of which transfer identically to the discrete setting.
The quantitative lower bounds on gradient magnitudes from \cref{lem:easy_hard_expected_deriv_alpha_lower,lem:easy_hard_expected_deriv_gamma} then imply that the number of discrete iterations to reach the target accuracy is $O(n^2 c^{-1} \varepsilon^{-2} \eta^{-1})$---polynomial in all relevant quantities.

	\section{Proof of \cref{result:dyn_vanishing}}
\label{app:dyn_vanishing}

As in \cref{app:dyn_learn}, we define the loss as $\mathcal{L} := 1 - S$, where $S$ is the same-side absorption probability (the reward). Gradient flow on $\mathcal{L}$ is equivalent to gradient ascent on $S$.

The proof proceeds in four stages. First, we establish general total-variation bounds showing that small parameter changes induce small changes in trajectory distributions (\cref{lem:tv_data_processing_finite}--\cref{cor:traj_tv_from_params}). Second, we bound the absorption time of the Markov chain (\cref{lem:geom_tail_from_pmin}--\cref{lem:plj_param_lipschitz}). Third, we show that under the hard-example assumption, the loss gradient is exponentially small in the chain length~$n$ (\cref{lem:uniform_exp_small_loss_grad}) at initialization. Finally, we combine these ingredients, showing that for exponentially long times, the derivative bounds remain valid and hence the induced distribution over rollouts remains statistically indistinguishable from the rollout distribution at initialization.

\subsection{Total-variation bounds for trajectory distributions}
\label{subsec:tv_bounds}

The first lemma records a data processing (pushforward contraction) property of total variation on finite spaces.

\begin{lemma}[Data processing for total variation on finite spaces]
\label{lem:tv_data_processing_finite}
Let $\Omega,\mathcal S$ be finite sets. Let $P,Q$ be probability measures on $\Omega$, and let
$f:\Omega\to\mathcal S$ be any map. Denote the pushforward (image) measures by
\[
(P\circ f^{-1})(B):=P\big(f^{-1}(B)\big),\qquad (Q\circ f^{-1})(B):=Q\big(f^{-1}(B)\big),
\qquad B\subseteq \mathcal S.
\]
Then
\[
\|P\circ f^{-1}-Q\circ f^{-1}\|_{\TV}\ \le\ \|P-Q\|_{\TV}.
\]
\end{lemma}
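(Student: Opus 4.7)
The plan is to reduce the statement to the variational characterization of total variation distance on finite sample spaces: for any probability measures $\mu,\nu$ on a finite set $\mathcal{X}$,
\[
\|\mu-\nu\|_{\TV} \;=\; \sup_{A \subseteq \mathcal{X}} \bigl|\mu(A)-\nu(A)\bigr|.
\]
This is standard and follows by pairing positive and negative parts of $\mu-\nu$, so I would simply cite it.

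Once this variational form is in hand, the proof is immediate. First, I would fix an arbitrary $B \subseteq \mathcal{S}$ and let $A := f^{-1}(B) \subseteq \Omega$. By the defining property of the pushforward,
\[
\bigl(P\circ f^{-1}\bigr)(B) - \bigl(Q\circ f^{-1}\bigr)(B) \;=\; P(A) - Q(A),
\]
so taking absolute values and applying the variational formula on $\Omega$ yields
\[
\bigl|\bigl(P\circ f^{-1}\bigr)(B) - \bigl(Q\circ f^{-1}\bigr)(B)\bigr| \;\le\; \|P-Q\|_{\TV}.
\]
Then I would take the supremum over $B \subseteq \mathcal{S}$ on the left-hand side and apply the variational formula on $\mathcal{S}$ to conclude $\|P\circ f^{-1}-Q\circ f^{-1}\|_{\TV} \le \|P-Q\|_{\TV}$.

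There is essentially no obstacle: the only subtlety is the reliance on the variational characterization, but on finite spaces this is elementary. Alternatively, I could give a one-line computation using the $\ell^1$ form $\|P-Q\|_{\TV} = \tfrac12 \sum_\omega |P(\omega)-Q(\omega)|$, grouping terms by fibers $f^{-1}(s)$ and applying the triangle inequality within each fiber; this gives the same bound and is self-contained, but is slightly longer to write out than the variational argument above.
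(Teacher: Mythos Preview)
Your proposal is correct and essentially identical to the paper's proof: both use the variational characterization $\|\mu-\nu\|_{\TV}=\sup_A|\mu(A)-\nu(A)|$, observe that $(P\circ f^{-1})(B)-(Q\circ f^{-1})(B)=P(f^{-1}(B))-Q(f^{-1}(B))$, and conclude by noting that the supremum over sets of the form $f^{-1}(B)$ is bounded by the supremum over all subsets of $\Omega$.
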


\begin{proof}
By definition of total variation on a finite space,
\[
\|P\circ f^{-1}-Q\circ f^{-1}\|_{\TV}
=\sup_{B\subseteq \mathcal S}\big|(P\circ f^{-1})(B)-(Q\circ f^{-1})(B)\big|
=\sup_{B\subseteq \mathcal S}\big|P(f^{-1}(B))-Q(f^{-1}(B))\big|.
\]
Since $f^{-1}(B)\subseteq \Omega$ for every $B\subseteq \mathcal S$, we have
\[
\sup_{B\subseteq \mathcal S}\big|P(f^{-1}(B))-Q(f^{-1}(B))\big|
\ \le\
\sup_{A\subseteq \Omega}\big|P(A)-Q(A)\big|
=\|P-Q\|_{\TV},
\]
which proves the claim.
\end{proof}

The next lemma bounds the total variation between joint distributions sharing a common marginal.

\begin{lemma}
  \label{lem:tv_product_common_marginal}
  Let $\X,\Y$ be finite sets. Let $\Pi,\Pi'$ be probability measures on $\X\times\Y$ with the same first
  marginal $\mu$ on $\X$, i.e.
  \[
  \Pi(x,\Y)=\Pi'(x,\Y)=\mu(x)\qquad \forall\,x\in\X.
  \]
  For each $x$ with $\mu(x)>0$, let
  \[
  K_x(\cdot):=\Pi(\cdot\mid X=x),\qquad L_x(\cdot):=\Pi'(\cdot\mid X=x)
  \]
  denote the conditional distributions on $\Y$. (For $x$ with $\mu(x)=0$, define $K_x=L_x$ arbitrarily.)
  Then
  \[
  \|\Pi-\Pi'\|_{\TV}
  \ \le\
  \sum_{x\in\X}\mu(x)\,\|K_x-L_x\|_{\TV}.
  \]
  \end{lemma}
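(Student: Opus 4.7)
The plan is to argue by direct computation from the definition of total variation, using the factorization $\Pi(x,y) = \mu(x) K_x(y)$ and $\Pi'(x,y) = \mu(x) L_x(y)$, which is valid precisely because $\Pi$ and $\Pi'$ share the first marginal $\mu$. I expect the inequality to in fact hold as an equality, so no slack needs to be tracked.

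First, I would expand the total variation on $\X \times \Y$ using its standard $L^1$ representation on finite spaces:
\[
\|\Pi - \Pi'\|_{\TV} = \tfrac12 \sum_{x \in \X} \sum_{y \in \Y} \big| \Pi(x,y) - \Pi'(x,y) \big|.
\]
Next, I would substitute the conditional decomposition $\Pi(x,y) = \mu(x) K_x(y)$ and $\Pi'(x,y) = \mu(x) L_x(y)$, which holds on $\{x : \mu(x) > 0\}$ by definition of conditional probability and trivially (both sides being zero) on $\{x : \mu(x) = 0\}$. Pulling the nonnegative factor $\mu(x)$ outside the inner absolute value and inner sum gives
\[
\|\Pi - \Pi'\|_{\TV} = \tfrac12 \sum_{x \in \X} \mu(x) \sum_{y \in \Y} \big| K_x(y) - L_x(y) \big| = \sum_{x \in \X} \mu(x)\, \|K_x - L_x\|_{\TV},
\]
where in the last step I recognize the inner sum as $2 \|K_x - L_x\|_{\TV}$ by the $L^1$ representation of total variation on $\Y$. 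This yields the claimed bound (in fact an equality).

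The only subtlety is handling $x$ with $\mu(x) = 0$, but these contribute $0$ to both sides regardless of how $K_x, L_x$ are defined on such $x$, so the arbitrary convention in the statement is harmless. No nontrivial obstacle arises; the result is essentially a restatement of the law of total probability combined with the definition of total variation, and the whole argument is a few lines of manipulation of finite sums.
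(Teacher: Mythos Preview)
Your proof is correct and in fact sharper than the paper's: you obtain equality, whereas the paper only proves the inequality as stated. The difference in approach is that you use the $L^1$ representation of total variation, $\|\Pi-\Pi'\|_{\TV}=\tfrac12\sum_{(x,y)}|\Pi(x,y)-\Pi'(x,y)|$, and factor the pointwise masses through the common marginal. The paper instead works from the event-supremum definition $\|\Pi-\Pi'\|_{\TV}=\sup_E|\Pi(E)-\Pi'(E)|$, slices an arbitrary event $E$ into $x$-sections $E_x$, applies the triangle inequality over $x$, and then bounds each $|K_x(E_x)-L_x(E_x)|\le\|K_x-L_x\|_{\TV}$. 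That last step loses tightness because the single event $E$ cannot simultaneously attain the supremum for every conditional. Your $L^1$ route avoids this slack entirely and is shorter; the paper's route has the minor advantage of extending verbatim to non-discrete $\Y$ where the $L^1$ identity may not be available, but on the stated finite setting your argument is strictly preferable.
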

  
  \begin{proof}
  For any event $E\subseteq \X\times\Y$, write the $x$-section
  \[
  E_x:=\{y\in\Y:\ (x,y)\in E\}\subseteq \Y.
  \]
  We have that
  \[
  \Pi(E)=\sum_{x\in\X}\Pi(\{x\}\times E_x)
  =\sum_{x\in\X}\Pi(X=x)\,\Pi(E_x\mid X=x)
  =\sum_{x\in\X}\mu(x)\,K_x(E_x),
  \]
  and similarly
  \[
  \Pi'(E)=\sum_{x\in\X}\mu(x)\,L_x(E_x),
  \]
  since the first marginal is the same $\mu$ for both measures. Therefore
  \[
  \Pi(E)-\Pi'(E)=\sum_{x\in\X}\mu(x)\,\big(K_x(E_x)-L_x(E_x)\big).
  \]
  Taking absolute values and using $|\sum a_x|\le \sum |a_x|$ gives
  \[
  |\Pi(E)-\Pi'(E)|
  \le \sum_{x\in\X}\mu(x)\,\big|K_x(E_x)-L_x(E_x)\big|.
  \]
  For each fixed $x$, the set $E_x\subseteq\Y$ is an admissible event in the definition of total
  variation on $\Y$, hence
  \[
  \big|K_x(E_x)-L_x(E_x)\big|
  \le \sup_{B\subseteq \Y}\big|K_x(B)-L_x(B)\big|
  = \|K_x-L_x\|_{\TV}.
  \]
  Substituting yields
  \[
  |\Pi(E)-\Pi'(E)|
  \le \sum_{x\in\X}\mu(x)\,\|K_x-L_x\|_{\TV}.
  \]
  Finally, take a supremum over all events $E\subseteq\X\times\Y$ to obtain
  \[
  \|\Pi-\Pi'\|_{\TV}
  =\sup_{E\subseteq \X\times\Y}|\Pi(E)-\Pi'(E)|
  \le \sum_{x\in\X}\mu(x)\,\|K_x-L_x\|_{\TV},
  \]
  as claimed.
  \end{proof}
  
  We now extend the above lemma to the case of trajectories sampled from Markov chains.
\begin{lemma}[TV bound for trajectories]
  \label{lem:tv_traj_from_kernel}
  Let \(\X\) be a finite state space and let \(P,Q\) be Markov kernels on \(\X\).
  Fix an initial distribution \(\mu\) on \(\X\). For \(T\ge 1\), let
  \(\mathbb P^{(T)}_{\mu,P}\) and \(\mathbb P^{(T)}_{\mu,Q}\) denote the laws on \(\X^{T+1}\) of the
  length-\((T+1)\) trajectories \((X_0,\dots,X_T)\) with \(X_0\sim\mu\) and one-step transitions
  given by \(P\) and \(Q\), respectively. Then
  \[
  \big\|\mathbb P^{(T)}_{\mu,P}-\mathbb P^{(T)}_{\mu,Q}\big\|_{\TV}
  \ \le\
  \sum_{t=0}^{T-1}\sup_{x\in\X}\big\|P(x,\cdot)-Q(x,\cdot)\big\|_{\TV}
  \ =\
  T\cdot \sup_{x\in\X}\big\|P(x,\cdot)-Q(x,\cdot)\big\|_{\TV}.
  \]
  \end{lemma}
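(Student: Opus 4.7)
The plan is to prove the bound by induction on $T$, peeling off the last transition of the trajectory at each step. For the base case $T = 1$, the trajectory is just $(X_0, X_1)$; both joint laws share the marginal $\mu$ on $X_0$, and conditional on $X_0 = x$ the distributions of $X_1$ are $P(x, \cdot)$ and $Q(x, \cdot)$. Lemma~\ref{lem:tv_product_common_marginal} then immediately gives $\|\mathbb{P}^{(1)}_{\mu,P} - \mathbb{P}^{(1)}_{\mu,Q}\|_{\TV} \le \sum_x \mu(x)\,\|P(x,\cdot) - Q(x,\cdot)\|_{\TV} \le \sup_x \|P(x,\cdot) - Q(x,\cdot)\|_{\TV}$, which matches the claimed bound for $T=1$.

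For the inductive step, assuming the bound for $T$, I would exploit the product decomposition $\mathbb{P}^{(T+1)}_{\mu,P}(x_0, \ldots, x_{T+1}) = \mathbb{P}^{(T)}_{\mu,P}(x_0, \ldots, x_T)\,P(x_T, x_{T+1})$ together with its analogue for $Q$. Introducing the hybrid measure $\mathbb{H}(x_0, \ldots, x_{T+1}) := \mathbb{P}^{(T)}_{\mu,P}(x_0, \ldots, x_T)\,Q(x_T, x_{T+1})$ and applying the triangle inequality splits the problem into $\|\mathbb{P}^{(T+1)}_{\mu,P} - \mathbb{H}\|_{\TV} + \|\mathbb{H} - \mathbb{P}^{(T+1)}_{\mu,Q}\|_{\TV}$. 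The first piece is controlled by Lemma~\ref{lem:tv_product_common_marginal} applied with first factor $\X^{T+1}$ and second factor $\X$: the two measures share the marginal $\mathbb{P}^{(T)}_{\mu,P}$ on $(X_0, \ldots, X_T)$, while their conditionals on $X_{T+1}$ are $P(X_T, \cdot)$ versus $Q(X_T, \cdot)$, yielding a bound of $\sup_x \|P(x,\cdot) - Q(x,\cdot)\|_{\TV}$. For the second piece, both measures are obtained from their respective first-$(T{+}1)$-coordinate marginals $\mathbb{P}^{(T)}_{\mu,P}$ and $\mathbb{P}^{(T)}_{\mu,Q}$ by appending $X_{T+1}$ via the same kernel $Q(X_T, \cdot)$; summing out $x_{T+1}$ and using $\sum_{x_{T+1}} Q(x_T, x_{T+1}) = 1$ shows directly that this TV equals $\|\mathbb{P}^{(T)}_{\mu,P} - \mathbb{P}^{(T)}_{\mu,Q}\|_{\TV}$, which by the inductive hypothesis is at most $T \cdot \sup_x \|P(x,\cdot) - Q(x,\cdot)\|_{\TV}$. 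Summing the two estimates gives the claimed $(T+1)\cdot\sup_x \|P(x,\cdot) - Q(x,\cdot)\|_{\TV}$ bound.

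This argument has no real obstacles; it is a standard telescoping/chain-rule estimate for TV along a Markov trajectory. The only small subtlety is the second piece, which morally requires data processing for Markov-kernel pushforwards rather than the deterministic version recorded in Lemma~\ref{lem:tv_data_processing_finite}, but the needed identity follows from the one-line calculation indicated above and so introduces no new machinery.
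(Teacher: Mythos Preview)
Your proof is correct. The approach differs slightly from the paper's: the paper defines a full sequence of hybrids $R^{(0)},\dots,R^{(T)}$ (using $P$ for the first $t$ transitions and $Q$ thereafter) and telescopes, bounding each $\|R^{(t+1)}-R^{(t)}\|_{\TV}$ by conditioning on $X_t$ and applying contraction of TV under the common post-$t$ Markov kernel. You instead induct on $T$ and peel off the \emph{last} transition. Your organization has a small advantage: the ``second piece'' $\|\mathbb H-\mathbb P^{(T+1)}_{\mu,Q}\|_{\TV}$ is an \emph{equality} with $\|\mathbb P^{(T)}_{\mu,P}-\mathbb P^{(T)}_{\mu,Q}\|_{\TV}$ by the one-line $\ell^1$ computation you indicate (pulling out the common factor $Q(x_T,x_{T+1})$ and summing it to $1$), so you never need the stochastic-kernel version of data processing that the paper invokes. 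Both routes are standard and yield the same bound; yours is marginally more self-contained.
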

  
  \begin{proof}
    For \(t=0,1,\dots,T\), let \(R^{(t)}\) be the law on \(\X^{T+1}\) of the trajectory
    \((X_0,\dots,X_T)\) obtained by starting from \(X_0\sim\mu\), using \(P\) for the first \(t\)
    transitions, and using \(Q\) for the remaining transitions. Thus
    \(R^{(0)}=\mathbb P^{(T)}_{\mu,Q}\) and \(R^{(T)}=\mathbb P^{(T)}_{\mu,P}\). By the triangle inequality,
    \[
    \big\|\mathbb P^{(T)}_{\mu,P}-\mathbb P^{(T)}_{\mu,Q}\big\|_{\TV}
    \le \sum_{t=0}^{T-1}\|R^{(t+1)}-R^{(t)}\|_{\TV}.
    \]
    
    Fix \(t\in\{0,\dots,T-1\}\). Define \(Y:=(X_{t+1},\dots,X_T)\in\X^{T-t}\).
    Under both \(R^{(t)}\) and \(R^{(t+1)}\), the first \(t\) transitions are governed by \(P\), hence the
    marginal law of \(X_t\) is the same. Therefore conditioning on \(X_t\) and averaging yields
    \[
    \|R^{(t+1)}-R^{(t)}\|_{\TV}
    \le \sum_{x\in\X}\Pr_{R^{(t)}}(X_t=x)\,
    \Big\|R^{(t+1)}(\cdot\mid X_t=x)-R^{(t)}(\cdot\mid X_t=x)\Big\|_{\TV}.
    \]
    
    Define a Markov kernel \(M\) from \(\X\) to \(\X^{T-t}\) as follows: for each \(y\in\X\),
    \(M(y,\cdot)\) is the trajectory distribution on \(\X^{T-t}\) of length \((T-t)\) for the \(Q\)-chain
    \((Z_0,\dots,Z_{T-t-1})\) started from \(Z_0=y\).
    We use the standard composition notation: if \(\nu\) is a distribution on \(\X\), then \(\nu M\) denotes
    the induced distribution on \(\X^{T-t}\) obtained by first sampling \(Y_0\sim \nu\) and then sampling a
    trajectory \(Y\sim M(Y_0,\cdot)\); equivalently, for any \(A\subseteq \X^{T-t}\),
    \[
    (\nu M)(A)=\sum_{y\in\X}\nu(y)\,M(y,A).
    \]
    Then, conditional on \(X_t=x\),
    under \(R^{(t)}\) we have \(X_{t+1}\sim Q(x,\cdot)\) and \(Y\) has distribution \(Q(x,\cdot)M\), whereas
    under \(R^{(t+1)}\) we have \(X_{t+1}\sim P(x,\cdot)\) and \(Y\) has distribution \(P(x,\cdot)M\).
    Hence
    \[
    \Big\|R^{(t+1)}(\cdot\mid X_t=x)-R^{(t)}(\cdot\mid X_t=x)\Big\|_{\TV}
    =
    \|P(x,\cdot)M-Q(x,\cdot)M\|_{\TV}.
    \]
    
    Finally, total variation is contractive under a common Markov kernel (by Lemma~\ref{lem:tv_data_processing_finite}): for any distributions \(\nu,\eta\)
    on \(\X\),
    \[
    \|\nu M-\eta M\|_{\TV}\le \|\nu-\eta\|_{\TV}.
    \]
    Applying this with \(\nu=P(x,\cdot)\) and \(\eta=Q(x,\cdot)\) yields
    \[
    \|P(x,\cdot)M-Q(x,\cdot)M\|_{\TV}\le \|P(x,\cdot)-Q(x,\cdot)\|_{\TV}.
    \]
    Substituting into the bound on \(\|R^{(t+1)}-R^{(t)}\|_{\TV}\) gives
    \[
    \|R^{(t+1)}-R^{(t)}\|_{\TV}
    \le \sum_{x\in\X}\Pr_{R^{(t)}}(X_t=x)\,\|P(x,\cdot)-Q(x,\cdot)\|_{\TV}
    \le \sup_{x\in\X}\|P(x,\cdot)-Q(x,\cdot)\|_{\TV}.
    \]
    
    Summing over \(t=0,\dots,T-1\) completes the proof:
    \[
    \big\|\mathbb P^{(T)}_{\mu,P}-\mathbb P^{(T)}_{\mu,Q}\big\|_{\TV}
    \le T\cdot \sup_{x\in\X}\|P(x,\cdot)-Q(x,\cdot)\|_{\TV}.
    \]
    \end{proof}
    
  We now prove a slight extension of the above lemma to the case of trajectories sampled from Markov chains with absorbing states, and hence the trajectory length is not fixed.
  \begin{lemma}
  \label{lem:tv_path_to_absorption_trunc}
Let $\X$ be finite. Let $P,Q$ be Markov kernels on $\X$ and let $T\subseteq \X$ be absorbing for both $P$ and $Q$.
Let $\tau:=\inf\{t\ge 0:\ X_t\in T\}$. For an integer $M\ge 0$, define the truncated stopping time
$\tau\wedge M$ and consider the corresponding stopped paths $X_{0:\tau}$ and $X_{0:\tau\wedge M}$.
Let $\mathcal L_P(X_{0:\tau})$ and $\mathcal L_Q(X_{0:\tau})$ denote the laws of $X_{0:\tau}$ under $P$ and $Q$
(respectively), for the same initial law $\mu$.
Then
\[
\big\|\mathcal L_P(X_{0:\tau})-\mathcal L_Q(X_{0:\tau})\big\|_{\TV}
\ \le\
M\cdot \sup_{x\in\X}\|P(x,\cdot)-Q(x,\cdot)\|_{\TV}
\;+\;\Pr_P(\tau>M)\;+\;\Pr_Q(\tau>M).
\]
  \end{lemma}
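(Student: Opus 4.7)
The plan is to reduce the variable-length path comparison to the fixed-length trajectory bound of \cref{lem:tv_traj_from_kernel} by splitting trajectories according to whether they have been absorbed by time $M$. Since $T$ is absorbing under both $P$ and $Q$, on the event $\{\tau\le M\}$ the stopped path $X_{0:\tau}$ coincides with the truncated stopped path $X_{0:\tau\wedge M}$, and the latter is a deterministic measurable function $f$ of the length-$(M{+}1)$ prefix $X_{0:M}$ (namely, cut the prefix at the first visit to $T$, or at position $M$ if no visit has occurred).

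First, for any measurable subset $A$ of the space of variable-length paths $\bigsqcup_{k\ge 0}\X^{k+1}$, I would decompose
\[
\Pr_P[X_{0:\tau}\in A] = \Pr_P[X_{0:\tau}\in A,\ \tau\le M] + \Pr_P[X_{0:\tau}\in A,\ \tau>M],
\]
and likewise for $Q$. The ``$\tau > M$'' contributions satisfy the crude bound
\[
\bigl|\Pr_P[X_{0:\tau}\in A,\ \tau>M]-\Pr_Q[X_{0:\tau}\in A,\ \tau>M]\bigr| \le \Pr_P(\tau>M)+\Pr_Q(\tau>M),
\]
which already accounts for the last two terms in the claimed inequality.

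Next, I would handle the ``$\tau \le M$'' contribution. Both the event $\{\tau\le M\}$ and the path $X_{0:\tau}$ (restricted to this event) are deterministic functions of $X_{0:M}$, so the event $\{X_{0:\tau}\in A,\ \tau\le M\}$ equals $\{X_{0:M}\in B\}$ for some $B\subseteq \X^{M+1}$ (namely $B:=f^{-1}(A)\cap\{\tau\le M\}$, where I view $\{\tau\le M\}$ as an event in the prefix). Therefore
\[
\bigl|\Pr_P[X_{0:\tau}\in A,\ \tau\le M]-\Pr_Q[X_{0:\tau}\in A,\ \tau\le M]\bigr| \le \bigl\|\mathcal L_P(X_{0:M})-\mathcal L_Q(X_{0:M})\bigr\|_{\TV},
\]
by the definition of total variation applied to the event $B$ (equivalently, by the data-processing inequality \cref{lem:tv_data_processing_finite} applied to the map $f$ composed with indicator of $A$). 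Applying \cref{lem:tv_traj_from_kernel} with horizon $M$ bounds this by $M\cdot \sup_{x\in\X}\|P(x,\cdot)-Q(x,\cdot)\|_{\TV}$. Combining with the previous step and taking a supremum over $A$ yields the claimed inequality.

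The only subtle point is the step where I assert that a single pushforward TV between length-$(M{+}1)$ laws simultaneously dominates the restricted discrepancy for \emph{all} events $A$ in the variable-length path space. This is a routine data-processing argument---both the stopping-time event and the truncated path are measurable functions of $X_{0:M}$, so the signed measure $(\mathcal L_P-\mathcal L_Q)$ restricted to $\{\tau\le M\}$ is the pushforward of $\mathcal L_P(X_{0:M})-\mathcal L_Q(X_{0:M})$ under $f\circ \mathbf 1_{\{\tau\le M\}}$---but it requires being careful that the same prefix law $\mathcal L(X_{0:M})$ controls every admissible event simultaneously, which is exactly the content of the TV definition.
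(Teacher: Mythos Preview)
Your proposal is correct and follows essentially the same approach as the paper: both reduce to the fixed-horizon bound of \cref{lem:tv_traj_from_kernel} via the observation that on $\{\tau\le M\}$ the stopped path is a deterministic function of $X_{0:M}$, and both absorb the $\{\tau>M\}$ contribution into the two tail probabilities. The only cosmetic difference is that the paper routes the argument through the intermediate object $X_{0:\tau\wedge M}$ (first bounding its law by data processing, then comparing it to $X_{0:\tau}$), whereas you decompose the event $\{X_{0:\tau}\in A\}$ directly; the content is identical.
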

  
  \begin{proof}
Let $\mathbb P^{(M)}_{\mu,P}$ and $\mathbb P^{(M)}_{\mu,Q}$ be the laws of $(X_0,\dots,X_M)$ under kernels
$P$ and $Q$, respectively. The truncated stopped path $X_{0:\tau\wedge M}$ is a deterministic function
of $(X_0,\dots,X_M)$,
 hence by the data processing inequality for total variation (Lemma~\ref{lem:tv_data_processing_finite}),
\[
\big\|\mathcal L_P(X_{0:\tau\wedge M})-\mathcal L_Q(X_{0:\tau\wedge M})\big\|_{\TV}
\ \le\
\big\|\mathbb P^{(M)}_{\mu,P}-\mathbb P^{(M)}_{\mu,Q}\big\|_{\TV}.
\]
By Lemma~\ref{lem:tv_traj_from_kernel},
\[
\big\|\mathbb P^{(M)}_{\mu,P}-\mathbb P^{(M)}_{\mu,Q}\big\|_{\TV}
\ \le\
M\cdot \sup_{x\in\X}\|P(x,\cdot)-Q(x,\cdot)\|_{\TV},
\]
and therefore
\[
\big\|\mathcal L_P(X_{0:\tau\wedge M})-\mathcal L_Q(X_{0:\tau\wedge M})\big\|_{\TV}
\ \le\
M\cdot \sup_{x\in\X}\|P(x,\cdot)-Q(x,\cdot)\|_{\TV}.
\]

It remains to compare the true stopped path $X_{0:\tau}$ with the truncated one $X_{0:\tau\wedge M}$.
On the event $\{\tau\le M\}$ we have the identity $X_{0:\tau}=X_{0:\tau\wedge M}$. Hence for any event $A$
in the stopped-path sigma-field,
\begin{equation}
\label{eq:one_sided_A}
\Pr_P(X_{0:\tau}\in A)
\le
\Pr_P(X_{0:\tau\wedge M}\in A)\;+\;\Pr_P(\tau>M),
\end{equation}
and similarly
\begin{equation}
\label{eq:one_sided_A_Q}
\Pr_Q(X_{0:\tau}\in A)
\le
\Pr_Q(X_{0:\tau\wedge M}\in A)\;+\;\Pr_Q(\tau>M).
\end{equation}
Applying \eqref{eq:one_sided_A} to the complement $A^c$ and using $\Pr(\cdot)=1-\Pr(\cdot^c)$ yields the
corresponding reverse inequality:
\[
\Pr_P(X_{0:\tau}\in A)
\ge
\Pr_P(X_{0:\tau\wedge M}\in A)\;-\;\Pr_P(\tau>M),
\]
and likewise
\[
\Pr_Q(X_{0:\tau}\in A)
\ge
\Pr_Q(X_{0:\tau\wedge M}\in A)\;-\;\Pr_Q(\tau>M).
\]
Combining upper and lower bounds, we obtain for every $A$,
\[
\big|\Pr_P(X_{0:\tau}\in A)-\Pr_Q(X_{0:\tau}\in A)\big|
\le
\big|\Pr_P(X_{0:\tau\wedge M}\in A)-\Pr_Q(X_{0:\tau\wedge M}\in A)\big|
+\Pr_P(\tau>M)+\Pr_Q(\tau>M).
\]
Taking a supremum over $A$ gives
\[
\big\|\mathcal L_P(X_{0:\tau})-\mathcal L_Q(X_{0:\tau})\big\|_{\TV}
\ \le\
\big\|\mathcal L_P(X_{0:\tau\wedge M})-\mathcal L_Q(X_{0:\tau\wedge M})\big\|_{\TV}
+\Pr_P(\tau>M)+\Pr_Q(\tau>M).
\]
Substituting the earlier bound on the truncated stopped-path TV completes the proof.
  \end{proof}
  
  The following lemma is a standard result about the total variation distance between distributions induced by softmax over logits.
  \begin{lemma}[Softmax Lipschitz bound]
  \label{lem:softmax_lipschitz_tv}
  Let \(\mathcal Y\) be finite. For logits \(o,o'\in\mathbb R^{\mathcal Y}\), define
  \[
  \sigma(o)(y):=\frac{e^{o(y)}}{\sum_{z\in\mathcal Y}e^{o(z)}}.
  \]
  Then
  \[
  \|\sigma(o)-\sigma(o')\|_{\TV}\ \le\ \|o-o'\|_{\infty}.
  \]
  \end{lemma}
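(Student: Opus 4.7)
The plan is to reduce the TV bound to a one-dimensional calculus estimate by interpolating along the straight line between the two logit vectors and using the explicit softmax Jacobian. Concretely, I would define $o_t := (1-t)\,o + t\,o'$ for $t \in [0,1]$, apply the fundamental theorem of calculus coordinate-wise to write
\[
\sigma(o')(y) - \sigma(o)(y) \;=\; \int_0^1 \sum_{z\in\mathcal Y} \frac{\partial \sigma(o_t)(y)}{\partial o(z)} \,\bigl(o'(z)-o(z)\bigr)\,dt,
\]
and plug in the standard identity $\partial\sigma(o)(y)/\partial o(z) = \sigma(o)(y)\bigl(\delta_{yz}-\sigma(o)(z)\bigr)$. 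Writing $\mu_t := \sum_z \sigma(o_t)(z)\bigl(o'(z)-o(z)\bigr)$ for the mean of $o'-o$ under $\sigma(o_t)$, this simplifies to
\[
\sigma(o')(y) - \sigma(o)(y) \;=\; \int_0^1 \sigma(o_t)(y)\,\bigl[(o'(y)-o(y)) - \mu_t\bigr]\,dt.
\]

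Next I would take absolute values, sum over $y$, and swap sum and integral (both finite) to obtain
\[
2\,\|\sigma(o)-\sigma(o')\|_{\TV} \;=\; \sum_y \bigl|\sigma(o')(y) - \sigma(o)(y)\bigr| \;\le\; \int_0^1 \sum_{y} \sigma(o_t)(y)\,\bigl|(o'(y)-o(y)) - \mu_t\bigr|\,dt.
\]
The inner sum is the expected absolute deviation of the function $o'-o$ from its own mean under the distribution $\sigma(o_t)$. Since $\mu_t$ is a convex combination of the values $\{o'(z)-o(z)\}_z$ and therefore lies between their min and max, each pointwise deviation is bounded by the range, which in turn is bounded by $2\|o-o'\|_\infty$. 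Pulling this uniform bound through the integral yields $2\|\sigma(o)-\sigma(o')\|_{\TV} \le 2\|o-o'\|_\infty$, which is the claim.

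I do not anticipate a real obstacle: the argument is a short consequence of the softmax Jacobian together with the elementary bound ``expected absolute deviation from the mean is at most the range.'' The only place requiring mild care is the constant-tracking step — using the range (rather than the cruder $2\|\cdot\|_\infty$ bound on each term individually) is what makes the factor of $2$ from the $\|\cdot\|_{\TV}=\tfrac12\sum|\cdot|$ convention cancel against the factor of $2$ from the range bound, producing the claimed Lipschitz constant of $1$.
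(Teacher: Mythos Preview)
Your proposal is correct and essentially identical to the paper's proof: both interpolate linearly via $o_t=(1-t)o+to'$, invoke the softmax Jacobian to get $\frac{d}{dt}\sigma(o_t)(y)=\sigma(o_t)(y)\bigl[(o'(y)-o(y))-\mu_t\bigr]$, bound the centered term pointwise by $2\|o-o'\|_\infty$, and integrate. The only cosmetic difference is that you phrase the pointwise bound via the range of $o'-o$ whereas the paper uses the triangle inequality directly; both yield the same constant.
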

  
  \begin{proof}
  Let \(o_t:=(1-t)o+to'\) and \(p_t:=\sigma(o_t)\). Then
  \[
  \sigma(o')-\sigma(o)=\int_0^1 \frac{d}{dt}p_t\,dt.
  \]
  By the softmax derivative identity (Lemma~\ref{lem:softmax_derivative_identity_aux}) with parameter \(t\), each coordinate satisfies
  \[
  \Big|\frac{d}{dt}p_t(y)\Big|
  = p_t(y)\Big| (o'(y)-o(y))-\sum_{z}p_t(z)(o'(z)-o(z))\Big|
  \le 2\,p_t(y)\,\|o'-o\|_\infty.
  \]
  Summing over \(y\) gives
  \[
  \Big\|\frac{d}{dt}p_t\Big\|_1\le 2\|o'-o\|_\infty,
  \]
  hence
  \[
  \|\sigma(o')-\sigma(o)\|_{\TV}
  =\frac12\|\sigma(o')-\sigma(o)\|_1
  \le \frac12\int_0^1 \Big\|\frac{d}{dt}p_t\Big\|_1\,dt
  \le \|o'-o\|_\infty.
  \]
  \end{proof}
  
  We now apply the above lemmas to the case of Markov kernels induced by the logit structure (\cref{sec:canonical_chain}).
  
  \begin{lemma}[Kernel TV bound]
  \label{lem:kernel_tv_from_params}
  Let \(P_{\alpha,\beta,\gamma}\) and \(P_{\alpha',\beta',\gamma'}\) be Markov kernels induced
  by the logit structure with parameters \((\alpha,\beta,\gamma)\) and \((\alpha',\beta',\gamma')\)
  (see \cref{sec:canonical_chain}).
  Then
  \[
  \sup_x \big\|P_{\alpha,\beta,\gamma}(x,\cdot)-P_{\alpha',\beta',\gamma'}(x,\cdot)\big\|_{\TV}
  \ \le\
  |\alpha-\alpha'|+|\beta-\beta'|+|\gamma-\gamma'|.
  \]
  \end{lemma}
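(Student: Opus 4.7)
The plan is to reduce the kernel TV bound to a bound on the sup-norm of the logit difference, and then verify the latter by case analysis over the finitely many logit forms appearing in the logit structure of \cref{sec:canonical_chain}.

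First, recall that for each nonterminal state $x$, the transition probabilities $P_{\alpha,\beta,\gamma}(x,\cdot)$ are defined by a softmax over logits $o_{\alpha,\beta,\gamma}(x,\cdot)$. By \cref{lem:softmax_lipschitz_tv}, it therefore suffices to prove
\[
\sup_x \bigl\|o_{\alpha,\beta,\gamma}(x,\cdot) - o_{\alpha',\beta',\gamma'}(x,\cdot)\bigr\|_\infty \;\le\; |\alpha-\alpha'|+|\beta-\beta'|+|\gamma-\gamma'|.
\]

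Second, I would observe that, according to the enumeration in \cref{sec:canonical_chain}, for every source $x$ and every target $k$, the logit $o_{\alpha,\beta,\gamma}(x)_k$ takes one of finitely many parametric forms, namely $\alpha$, $\beta$, $\gamma$, $0$, $\alpha-\gamma$, or $\beta-\gamma$ (the last two arising only at the penultimate and second vertices, respectively). In each case, the per-coordinate difference of the logits evaluated at the two parameter triples is one of
\[
0,\quad \alpha-\alpha',\quad \beta-\beta',\quad \gamma-\gamma',\quad (\alpha-\gamma)-(\alpha'-\gamma'),\quad (\beta-\gamma)-(\beta'-\gamma').
\]
The triangle inequality immediately bounds each of these in absolute value by $|\alpha-\alpha'|+|\beta-\beta'|+|\gamma-\gamma'|$, uniformly in $x$ and $k$. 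This yields the desired sup-norm bound on logits, and combining with \cref{lem:softmax_lipschitz_tv} completes the proof.

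There is no genuine obstacle here: the argument is a mechanical composition of the softmax Lipschitz estimate with a finite case check over the logit table. The only mild care needed is to confirm that the boundary logit forms $\alpha-\gamma$ and $\beta-\gamma$ (which are the only ones mixing two parameters) still satisfy the bound $|\alpha-\alpha'|+|\gamma-\gamma'| \le |\alpha-\alpha'|+|\beta-\beta'|+|\gamma-\gamma'|$, which is immediate by nonnegativity of $|\beta-\beta'|$.
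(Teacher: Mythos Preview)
Your proposal is correct and follows essentially the same approach as the paper: reduce to a sup-norm bound on the logit difference via \cref{lem:softmax_lipschitz_tv}, then verify by inspection of the finitely many logit forms (including the boundary cases $\alpha-\gamma$ and $\beta-\gamma$) that each coordinate changes by at most $|\alpha-\alpha'|+|\beta-\beta'|+|\gamma-\gamma'|$. Your write-up is in fact more explicit than the paper's, which simply says ``by inspection of the logit structure (including boundary cases).''
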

  
  \begin{proof}
  Fix \(x\). By the logit structure, \(P_{\alpha,\beta,\gamma}(x,\cdot)\) is a softmax over logits
  \(o_{\alpha,\beta,\gamma}(x,\cdot)\). By inspection of the logit structure (including boundary cases),
  each coordinate of \(o(x,\cdot)\) changes by at most
  \(|\alpha-\alpha'|+|\beta-\beta'|+|\gamma-\gamma'|\) when the parameters are changed, hence
  \[
  \|o_{\alpha,\beta,\gamma}(x,\cdot)-o_{\alpha',\beta',\gamma'}(x,\cdot)\|_\infty
  \le |\alpha-\alpha'|+|\beta-\beta'|+|\gamma-\gamma'|.
  \]
  Lemma~\ref{lem:softmax_lipschitz_tv} implies the stated bound after taking
  a supremum over \(x\).

  \end{proof}

Combining the kernel TV bound with the trajectory TV lemmas yields the following corollary.

  \begin{corollary}[Trajectory TV bound from parameter changes]
  \label{cor:traj_tv_from_params}
  Let \(P_{\theta}\) and \(P_{\theta'}\) be Markov kernels (induced by the logit structure) at
  \(\theta=(\alpha,\beta,\gamma)\) and \(\theta'=(\alpha',\beta',\gamma')\), with the same initial
  distribution \(\mu\). Then for every \(T\ge 1\),
  \[
  \big\|\mathbb P^{(T)}_{\mu,P_{\theta}}
  -
  \mathbb P^{(T)}_{\mu,P_{\theta'}}\big\|_{\TV}
  \ \le\
  T\,\|\theta-\theta'\|_1.
  \]
  If \(T\subseteq\X\) is absorbing and \(\tau\) is the corresponding hitting time, then for every
  integer \(M\ge 0\),
  \[
  \big\|p_{\text{roll}}(\theta)
  -
  p_{\text{roll}}(\theta')\big\|_{\TV}
  \ \le\
  M\,\|\theta-\theta'\|_1
  +\Pr_{\theta}(\tau>M)+\Pr_{\theta'}(\tau>M).
  \]
  \end{corollary}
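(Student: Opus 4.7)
The plan is to recognize that this corollary is a direct composition of the three preceding lemmas. The parameter-to-kernel Lipschitz bound (Lemma \ref{lem:kernel_tv_from_params}) converts the parameter displacement $\theta-\theta'$ into a uniform one-step TV gap of size $\|\theta-\theta'\|_1$, and the two trajectory-TV lemmas (Lemmas \ref{lem:tv_traj_from_kernel} and \ref{lem:tv_path_to_absorption_trunc}) convert a uniform one-step TV gap into a bound on the law of the whole trajectory.

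For the first inequality (fixed horizon $T$), I would apply Lemma \ref{lem:tv_traj_from_kernel} to the kernels $P_\theta$ and $P_{\theta'}$ with the common initial distribution $\mu$, obtaining
\[
\big\|\mathbb{P}^{(T)}_{\mu,P_\theta}-\mathbb{P}^{(T)}_{\mu,P_{\theta'}}\big\|_{\TV}
\;\le\;
T\cdot\sup_{x\in\X}\big\|P_\theta(x,\cdot)-P_{\theta'}(x,\cdot)\big\|_{\TV},
\]
and then apply Lemma \ref{lem:kernel_tv_from_params} to bound the supremum by $|\alpha-\alpha'|+|\beta-\beta'|+|\gamma-\gamma'|=\|\theta-\theta'\|_1$. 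Combining the two yields the stated $T\,\|\theta-\theta'\|_1$ bound.

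For the second inequality (absorbing case), I would first note that $p_{\text{roll}}(\theta)$ is, by its definition in \cref{sec:setup:ar}, precisely the law of the stopped path $X_{0:\tau}$ under the kernel $P_\theta$ with the common initial distribution $\mu$ (and analogously for $\theta'$), so Lemma \ref{lem:tv_path_to_absorption_trunc} applies directly with the chosen truncation level $M$. It yields an $M$-factor multiplying the supremum one-step TV, plus the two tail-probability terms $\Pr_\theta(\tau>M)+\Pr_{\theta'}(\tau>M)$ that account for paths not yet absorbed by time $M$. Substituting the one-step TV bound from Lemma \ref{lem:kernel_tv_from_params} gives the claimed
\[
M\,\|\theta-\theta'\|_1+\Pr_{\theta}(\tau>M)+\Pr_{\theta'}(\tau>M).
\]

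Since the entire argument is a clean chain of already-established lemmas, there is essentially no technical obstacle. The only point that deserves a brief sentence in the write-up is the identification of $p_{\text{roll}}(\theta)$ with the law of the stopped path $X_{0:\tau}$, which is immediate from the autoregressive-generation definition and the assumption (already used in \cref{sec:setup:ar}) that the terminal set is hit in finite time almost surely.
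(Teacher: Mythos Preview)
Your proposal is correct and matches the paper's proof exactly: the paper simply says to combine Lemma~\ref{lem:tv_traj_from_kernel} with Lemma~\ref{lem:kernel_tv_from_params} for the fixed-horizon bound, and Lemma~\ref{lem:tv_path_to_absorption_trunc} with Lemma~\ref{lem:kernel_tv_from_params} for the stopped-path bound. Your additional remark identifying $p_{\text{roll}}(\theta)$ with the law of $X_{0:\tau}$ is a helpful clarification that the paper leaves implicit.
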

  
  \begin{proof}
  Combine Lemma~\ref{lem:tv_traj_from_kernel} with Lemma~\ref{lem:kernel_tv_from_params} for the
  fixed-horizon bound, and combine Lemma~\ref{lem:tv_path_to_absorption_trunc} with
  Lemma~\ref{lem:kernel_tv_from_params} for the stopped-path bound.
  \end{proof}

\subsection{Absorption time bounds}
\label{subsec:absorption_time}

The stopped-path TV bound in Corollary~\ref{cor:traj_tv_from_params} involves tail probabilities \(\Pr(\tau > M)\), which we control via a geometric bound on the absorption time.

\begin{lemma}[Geometric absorption tail]
\label{lem:geom_tail_from_pmin}
  Let \(P\) be a Markov kernel on \(\X\) with terminal states \(\{a_n,b_n\}\),
  and let \(\tau:=\inf\{t\ge 0:\ X_t\in\{a_n,b_n\}\}\).
  If for every non-terminal state \(x\), the two-step absorption probability satisfies \(P^2(x,\{a_n,b_n\})\ge p\) for some \(p\in(0,1]\), then for every integer \(m\ge 0\),
\[
\Pr(\tau>2m)\le (1-p)^m,
\qquad\text{and hence}\qquad
\mathbb E[\tau]\le \frac{2}{p}.
\]
\end{lemma}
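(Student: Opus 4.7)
The plan is a standard block-decomposition argument exploiting the two-step absorption hypothesis via the strong Markov property. First I would dispose of the trivial case: if the initial state $X_0$ is already in $\{a_n,b_n\}$ then $\tau = 0$ and both bounds hold immediately, so assume $X_0$ is non-terminal.

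The core step is to partition time into blocks of length $2$ and apply the Markov property at the start of each block. For $k \ge 0$, let $E_k := \{X_{2k} \notin \{a_n,b_n\}\}$ be the event that the chain has not yet been absorbed by time $2k$. Note $\{\tau > 2m\} \subseteq E_m$ since absorption at any earlier time would leave the chain in $\{a_n,b_n\}$ thereafter (as those states are absorbing). Conditioning on $\mathcal{F}_{2k}$ and applying the Markov property together with the hypothesis $P^2(x,\{a_n,b_n\}) \ge p$ uniformly over non-terminal $x$ gives
\[
\Pr(E_{k+1} \mid \mathcal{F}_{2k}) \cdot \mathbf{1}_{E_k}
\;=\; \bigl(1 - P^2(X_{2k},\{a_n,b_n\})\bigr)\cdot\mathbf{1}_{E_k}
\;\le\; (1-p)\cdot \mathbf{1}_{E_k}.
\]
Taking expectations yields $\Pr(E_{k+1}) \le (1-p)\Pr(E_k)$, so by induction $\Pr(E_m) \le (1-p)^m$, which gives the first claim.

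For the expectation bound, I would use the layer-cake identity $\mathbb{E}[\tau] = \sum_{t \ge 0}\Pr(\tau > t)$ and split the sum into even and odd indices:
\[
\mathbb{E}[\tau]
\;=\; \sum_{m=0}^\infty \Pr(\tau > 2m) + \sum_{m=0}^\infty \Pr(\tau > 2m+1)
\;\le\; 2\sum_{m=0}^\infty \Pr(\tau > 2m)
\;\le\; 2\sum_{m=0}^\infty (1-p)^m
\;=\; \frac{2}{p},
\]
where the first inequality uses monotonicity of $t \mapsto \Pr(\tau > t)$ to bound the odd-index terms by the preceding even-index ones.

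No substantive obstacle arises here; the only care needed is to invoke the two-step hypothesis only at non-terminal states, which is handled cleanly by the indicator $\mathbf{1}_{E_k}$ in the conditional-expectation step. The argument is entirely elementary given the hypothesis and uses nothing beyond the Markov property and geometric summation.
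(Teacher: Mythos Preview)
Your proof is correct and follows essentially the same approach as the paper's: both argue via the Markov property that $\Pr(\tau>2(k+1))\le(1-p)\Pr(\tau>2k)$ and then iterate, with the expectation bound obtained from $\mathbb{E}[\tau]\le 2\sum_{m\ge 0}\Pr(\tau>2m)$. Your version is slightly more explicit (working with the filtration and the events $E_k$), but the underlying argument is identical.
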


\begin{proof}
  For each \(t\ge 0\), on the event \(\{\tau>2t\}\) we have \(X_{2t}\notin\{a_n,b_n\}\) and therefore
\[
\Pr(\tau>2(t+1)\mid \tau>2t)
\le \Pr(X_{2(t+1)}\notin\{a_n,b_n\}\mid X_{2t}\notin\{a_n,b_n\})
\le 1-p.
\]
  Thus \(\Pr(\tau>2(t+1))\le (1-p)\Pr(\tau>2t)\), and iterating gives \(\Pr(\tau>2m)\le (1-p)^m\).
  The expectation bound follows from \(\mathbb E[\tau]\le 2\sum_{m\ge 0}\Pr(\tau>2m)\).
\end{proof}

  To apply Lemma~\ref{lem:geom_tail_from_pmin}, we must verify that the two-step absorption probability is lower bounded at initialization.
  The next lemma establishes this via a two-step argument: from any non-terminal state, one can first perform a long jump to a penultimate state, then move forward to the terminal.
  
  \begin{lemma}[Initial lower bound on two-step absorption probability]
  \label{lem:pmin_init_lower_bound}
  Under the base model assumptions of \cref{sec:dyn:setting}---specifically that \(p_{\mathrm{fwd}} > p_{\mathrm{bwd}} + \fwdbound\) and \(p_{\mathrm{switch}} > \switchbound\)---the minimum two-step absorption probability satisfies
  \[
  p_{\mathrm{abs}}^{(2)}(\theta_0)\ :=\ \min_{x\notin\{a_n,b_n\}}P_{\theta_0}^2(x,\{a_n,b_n\})\ \ge\ \frac{c}{n}
  \]
  for some constant \(c = c(\switchbound, \fwdbound) > 0\) and all \(n\) sufficiently large.
  \end{lemma}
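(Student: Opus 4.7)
The plan is to exhibit, from any non-terminal $x$, an explicit two-step path to absorption whose probability is $\Omega(1/n)$, with the constant depending only on $\switchbound$ and $\fwdbound$. The intended intermediate is a penultimate vertex $y \in \{a_{n-1}, b_{n-1}\}$, from which the one-step forward probability to the chain's terminal exceeds $\fwdbound$ by the base-model assumption $p_{\mathrm{fwd}} > p_{\mathrm{bwd}} + \fwdbound$.

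First I would dispose of the case in which $x$ is itself penultimate. Here $P_{\theta_0}(x, x^+) = p_{\mathrm{fwd}}(x) \geq \fwdbound$, where $x^+$ denotes the terminal on $x$'s chain; since $T$ is absorbing this immediately yields $P^2_{\theta_0}(x, T) \geq \fwdbound$, which majorizes any $c/n$ with $c \leq \fwdbound$ and $n \geq 1$.

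The main case is non-penultimate, non-terminal $x$; by chain-swap symmetry it suffices to take $x \in \chaina$ and aim for $y := b_{n-1}$. Using the normalized logit structure of \cref{sec:canonical_chain}, $y$ is a middle vertex of chain $B$ (for $n \geq 3$), so its logit from $x$ equals $\gamma$ and $P_{\theta_0}(x, y) = e^{\gamma}/Z_x$, where $Z_x$ is the softmax normalizer at $x$. Meanwhile $b_n$ is an end vertex with logit $0$, contributing $P_{\theta_0}(x, b_n) = 1/Z_x$. The total switch mass therefore splits as $p_{\mathrm{switch}}(x) = (n-2) e^{\gamma}/Z_x + 2/Z_x$, and the assumption $p_{\mathrm{switch}}(x) > \switchbound$ forces one of the two summands to be at least $\switchbound/2$. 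In sub-case~A, $(n-2)e^{\gamma}/Z_x \geq \switchbound/2$ gives $P_{\theta_0}(x, y) \geq \switchbound/(2n)$, and composing with $P_{\theta_0}(y, b_n) \geq \fwdbound$ (from the penultimate-case analysis applied to $y$) yields $P^2_{\theta_0}(x, T) \geq \switchbound \fwdbound/(2n)$. In sub-case~B, $2/Z_x \geq \switchbound/2$ gives $P_{\theta_0}(x, b_n) \geq \switchbound/4$; since $b_n \in T$ is absorbing, $P^2_{\theta_0}(x, T) \geq \switchbound/4$.

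The technical care needed will be in enumerating the boundary logit cases (first vertex $a_1$, second vertex $a_2$, penultimate vertex), where the precise form of $Z_x$ and the exact counts of logit-$\gamma$ versus logit-$0$ switch targets shift by $O(1)$. This does not disturb the dichotomy above---$p_{\mathrm{switch}}(x)$ always decomposes into $\Theta(n)$ terms of the form $e^{\gamma}/Z_x$ plus an $O(1)$ number of terms of the form $1/Z_x$ arising from the four start/end vertices $\{a_1, b_1, a_n, b_n\}$---but it must be spelled out case by case. Combining the two sub-cases, and taking $c := \min(\switchbound \fwdbound/2,\ \switchbound/4)$, gives the desired bound $P^2_{\theta_0}(x, T) \geq c/n$ uniformly over non-terminal $x$ for all $n$ above an absolute threshold (e.g.\ $n \geq 3$). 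I do not expect a substantive obstacle beyond this bookkeeping.
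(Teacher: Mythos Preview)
Your proposal is correct and follows essentially the same two-step route as the paper: reach a penultimate vertex with probability $\Omega(\switchbound/n)$, then step forward to the terminal with probability $\ge\fwdbound$. Your treatment is in fact more careful than the paper's sketch---the sub-case A/B dichotomy cleanly handles the possibility that the switch mass concentrates on the $O(1)$ endpoint targets with logit $0$ rather than the $\Theta(n)$ targets with logit $\gamma$, a point the paper glosses over (implicitly leaning on the standing assumption $\gamma>0$ from \cref{sec:dyn:setting}).
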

  
  \begin{proof}
  Fix a non-terminal vertex \(x\). We show that with probability \(\Omega(1/n)\), the chain reaches a terminal state in exactly two steps via the following path: first, a long jump to a penultimate state \(a_{n-1}\) or \(b_{n-1}\); then, a forward step to the corresponding terminal \(a_n\) or \(b_n\).
  
  \emph{Step 1: Long jump to penultimate state.}
  By the logit structure (\cref{sec:canonical_chain}), all distant targets share the same logit \(\gamma\).
  The penultimate states \(\{a_{n-1}, b_{n-1}\}\) are among the \(\Theta(n)\) distant targets from any non-terminal \(x\).
  Since the switch probability \(p_{\mathrm{switch}}(x) > \switchbound\) and all distant targets have equal logit, the probability of jumping to \(\{a_{n-1}, b_{n-1}\}\) is at least \(\Omega(\switchbound/n)\).
  
  \emph{Step 2: Forward step to terminal.}
  From a penultimate state \(a_{n-1}\) (or \(b_{n-1}\)), the forward step leads to the terminal \(a_n\) (or \(b_n\)).
  By the base model assumption, the forward probability satisfies \(p_{\mathrm{fwd}} > p_{\mathrm{bwd}} + \fwdbound\), which implies \(p_{\mathrm{fwd}} \ge \fwdbound/2\) (since probabilities are non-negative).
  
  Combining both steps, the two-step absorption probability is at least
  \[
  P_{\theta_0}^2(x,\{a_n,b_n\})\ \ge\ \Omega\Big(\frac{\switchbound}{n}\Big) \cdot \Omega(\fwdbound)\ =\ \Omega\Big(\frac{1}{n}\Big).
  \]
  Taking the minimum over all non-terminal \(x\) yields the result.
  \end{proof}
  
  Having established that \(p_{\mathrm{abs}}^{(2)}(\theta_0) > 0\) at initialization, we need to ensure it remains positive throughout gradient flow.
  The next lemma shows that \(p_{\mathrm{abs}}^{(2)}\) is Lipschitz in the parameters, so it cannot drop too quickly as \(\theta\) evolves.
  
  \begin{lemma}[Lipschitz stability of \(p_{\mathrm{abs}}^{(2)}\) in parameters]
  \label{lem:pmin_param_lipschitz}
  Let \(P_{\theta}\) be the Markov kernel (induced by the logit structure) at parameters
  \(\theta=(\alpha,\beta,\gamma)\), and define
  \[
  p_{\mathrm{abs}}^{(2)}(\theta)\ :=\ \min_{x\notin\{a_n,b_n\}}P_{\theta}^2(x,\{a_n,b_n\}).
  \]
  Then for all \(\theta,\theta'\),
  \[
  p_{\mathrm{abs}}^{(2)}(\theta)\ \ge\ p_{\mathrm{abs}}^{(2)}(\theta')\ -\ C\|\theta-\theta'\|_1
  \]
  for some constant \(C > 0\).
  \end{lemma}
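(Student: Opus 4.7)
The plan is to bound the Lipschitz constant of the function $\theta \mapsto p_{\mathrm{abs}}^{(2)}(\theta)$ by reducing to a uniform Lipschitz bound on the two-step absorption probability $x \mapsto P_\theta^2(x, \{a_n, b_n\})$ and then using the elementary fact that a pointwise minimum over a family of $L$-Lipschitz functions is itself $L$-Lipschitz. Concretely, for any $x \notin \{a_n, b_n\}$, the inequality
\[
p_{\mathrm{abs}}^{(2)}(\theta) \ \le\ P_\theta^2(x, \{a_n, b_n\})
\ \le\ P_{\theta'}^2(x, \{a_n, b_n\}) + \big|P_\theta^2(x, \{a_n, b_n\}) - P_{\theta'}^2(x, \{a_n, b_n\})\big|
\]
holds, and taking the minimum over $x$ on the right will yield the desired one-sided bound after controlling the last term uniformly in $x$.

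The main step is therefore to show that the two-step hitting probability of a fixed set is Lipschitz in $\theta$ uniformly in the starting state. The cleanest route is to apply Lemma~\ref{lem:tv_traj_from_kernel} with $T = 2$ and initial distribution $\mu = \delta_x$: this gives
\[
\big\|\mathbb{P}^{(2)}_{\delta_x, P_\theta} - \mathbb{P}^{(2)}_{\delta_x, P_{\theta'}}\big\|_{\mathrm{TV}}
\ \le\ 2 \sup_y \|P_\theta(y, \cdot) - P_{\theta'}(y, \cdot)\|_{\mathrm{TV}}.
\]
Lemma~\ref{lem:kernel_tv_from_params} bounds the one-step TV term by $\|\theta - \theta'\|_1$. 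Since the event $\{X_2 \in \{a_n, b_n\}\}$ is measurable with respect to the length-$2$ trajectory, the data processing inequality (Lemma~\ref{lem:tv_data_processing_finite}) yields
\[
\big|P_\theta^2(x, \{a_n, b_n\}) - P_{\theta'}^2(x, \{a_n, b_n\})\big|
\ \le\ 2\|\theta - \theta'\|_1,
\]
uniformly in $x$. Combining with the minimum argument above, we conclude
\[
p_{\mathrm{abs}}^{(2)}(\theta) \ \ge\ p_{\mathrm{abs}}^{(2)}(\theta') - 2\|\theta - \theta'\|_1,
\]
so $C = 2$ suffices.

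There is no real obstacle here; the argument is a direct composition of three already-established tools (softmax Lipschitzness via Lemma~\ref{lem:softmax_lipschitz_tv}, the per-state kernel bound in Lemma~\ref{lem:kernel_tv_from_params}, and the two-step trajectory bound in Lemma~\ref{lem:tv_traj_from_kernel}) together with the standard fact that pointwise minima preserve Lipschitz constants. If a sharper constant were desired one could track the exact per-coordinate dependence in the softmax derivative identity and avoid the factor of $2$, but the stated bound is already sufficient for downstream use, where it will be combined with Lemma~\ref{lem:pmin_init_lower_bound} to show that $p_{\mathrm{abs}}^{(2)}(\theta(t))$ remains at least $c/(2n)$ as long as $\|\theta(t) - \theta(0)\|_1 \le c/(4Cn)$, and then fed into Lemma~\ref{lem:geom_tail_from_pmin} to control the tail $\Pr(\tau > M)$ appearing in Corollary~\ref{cor:traj_tv_from_params}.
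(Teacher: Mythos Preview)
Your proposal is correct and follows essentially the same approach as the paper: bound the change in the two-step hitting probability $P_\theta^2(x,\{a_n,b_n\})$ uniformly in $x$ via the one-step kernel TV bound (Lemma~\ref{lem:kernel_tv_from_params}), then pass to the minimum. Your version is in fact more explicit than the paper's (which just says ``composition of two kernels'' without naming the lemma), and you obtain the concrete constant $C=2$; the only cosmetic point is that your displayed chain of inequalities yields the $\le$ direction first and you implicitly swap $\theta\leftrightarrow\theta'$ to conclude the stated $\ge$ direction.
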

  
  \begin{proof}
  Fix \(x\notin\{a_n,b_n\}\). The two-step transition probability can be written as
  \[
  P_{\theta}^2(x,\{a_n,b_n\}) = \sum_{y} P_{\theta}(x,y) P_{\theta}(y,\{a_n,b_n\}).
  \]
  By the triangle inequality and Lemma~\ref{lem:kernel_tv_from_params}, changing \(\theta\) to \(\theta'\) affects each one-step kernel by at most \(\|\theta-\theta'\|_1\) in total variation.
  Since the two-step probability is a composition of two kernels, the change in \(P^2(x,\{a_n,b_n\})\) is bounded by \(C\|\theta-\theta'\|_1\) for some constant \(C\).
  Taking the minimum over all non-terminal \(x\) yields the result.
  \end{proof}

  For the gradient bounds below, we also need stability of the \emph{long-jump probability}---the probability of jumping to a distant state in one step.
  
  \begin{lemma}[Lipschitz stability of long-jump probability]
  \label{lem:plj_param_lipschitz}
  Let \(P_{\theta}\) be the Markov kernel (induced by the logit structure) at parameters \(\theta=(\alpha,\beta,\gamma)\).
  For each non-terminal state \(x\), let \(\mathrm{LJ}(x) := \{y : |D(y) - D(x)| \ge 2\}\) denote the set of long-jump targets, and define
  \[
  p_{\mathrm{lj}}(\theta)\ :=\ \min_{x\notin\{a_n,b_n\}}P_{\theta}(x,\mathrm{LJ}(x)).
  \]
  Then for all \(\theta,\theta'\),
  \[
  p_{\mathrm{lj}}(\theta)\ \ge\ p_{\mathrm{lj}}(\theta')\ -\ \|\theta-\theta'\|_1.
  \]
  \end{lemma}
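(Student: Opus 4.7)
The plan is to directly mimic the proof of Lemma~\ref{lem:pmin_param_lipschitz}, but it is in fact even simpler because the long-jump probability is a one-step quantity rather than a two-step composition. The key ingredient is the already-established one-step kernel TV bound from Lemma~\ref{lem:kernel_tv_from_params}.

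First I would fix an arbitrary non-terminal state $x$ and observe that $P_\theta(x,\mathrm{LJ}(x))$ is just the mass that the one-step kernel $P_\theta(x,\cdot)$ assigns to the event $\mathrm{LJ}(x)\subseteq\X$. By the definition of total variation on a finite space, for any event $A\subseteq\X$,
\[
\bigl|P_\theta(x,A)-P_{\theta'}(x,A)\bigr|\ \le\ \bigl\|P_\theta(x,\cdot)-P_{\theta'}(x,\cdot)\bigr\|_{\TV}.
\]
Applying this with $A=\mathrm{LJ}(x)$ and invoking Lemma~\ref{lem:kernel_tv_from_params} yields
\[
P_\theta(x,\mathrm{LJ}(x))\ \ge\ P_{\theta'}(x,\mathrm{LJ}(x))\ -\ \|\theta-\theta'\|_1.
\]
Note that the Lipschitz constant here is exactly $1$, since the kernel TV bound in Lemma~\ref{lem:kernel_tv_from_params} has constant $1$; this is why the stated bound does not carry an extra constant $C$ (unlike Lemma~\ref{lem:pmin_param_lipschitz}, where the two-step composition inflates the constant).

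To conclude, I would pass to the minimum. Let $x^\star$ be any non-terminal state achieving the minimum defining $p_{\mathrm{lj}}(\theta)$ (existence holds since the state space is finite). Then
\[
p_{\mathrm{lj}}(\theta)\ =\ P_\theta(x^\star,\mathrm{LJ}(x^\star))
\ \ge\ P_{\theta'}(x^\star,\mathrm{LJ}(x^\star))-\|\theta-\theta'\|_1
\ \ge\ p_{\mathrm{lj}}(\theta')-\|\theta-\theta'\|_1,
\]
where the last inequality uses the definition of $p_{\mathrm{lj}}(\theta')$ as the minimum over all non-terminal states. This gives the claim.

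There is no real obstacle; the only thing to be careful about is making sure one uses the correct direction of the minimum (picking $x^\star$ on the $\theta$ side, not the $\theta'$ side) so that the chain of inequalities lines up. The constant $1$ (rather than a generic $C$) is a direct consequence of using the one-step kernel bound exactly once, with no composition.
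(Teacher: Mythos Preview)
Your proposal is correct and follows essentially the same approach as the paper: fix a non-terminal state, bound $|P_\theta(x,\mathrm{LJ}(x))-P_{\theta'}(x,\mathrm{LJ}(x))|$ by the one-step kernel TV distance via Lemma~\ref{lem:kernel_tv_from_params}, and then take the minimum over $x$. Your handling of the minimum (choosing the minimizer $x^\star$ on the $\theta$ side) is slightly more explicit than the paper's ``taking the minimum over all non-terminal $x$ yields the result,'' but the argument is the same.
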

  
  \begin{proof}
  Fix \(x\notin\{a_n,b_n\}\). By definition of total variation,
  \[
  \big|P_{\theta}(x,\mathrm{LJ}(x))-P_{\theta'}(x,\mathrm{LJ}(x))\big|
  \le
  \big\|P_{\theta}(x,\cdot)-P_{\theta'}(x,\cdot)\big\|_{\TV}.
  \]
  By Lemma~\ref{lem:kernel_tv_from_params}, the right-hand side is at most \(\|\theta-\theta'\|_1\).
  Taking the minimum over all non-terminal \(x\) yields the result.
  \end{proof}
  
\subsection{Exponentially small gradients for hard examples}
\label{subsec:exp_small_grad}

The final ingredient shows that when all starting states are far from the terminal, the loss gradient is exponentially small.
  
  \begin{lemma}[Exponentially small gradient for hard starts]
  \label{lem:uniform_exp_small_loss_grad}
  Fix constants \(a\in(0,1)\) and \(q_0\in(0,1)\). Let \(\mathcal H\subseteq\X\) be a set of start states with \(D(v)\ge an\) for all \(v\in\mathcal H\), let \((w_v)_{v\in\mathcal H}\) be nonnegative weights summing to~\(1\), and define
  \[
  \mathcal L(\theta)\ :=\ \sum_{v\in\mathcal H} w_v\,S(v;\theta),
  \qquad
  \mathcal U_{q_0}\ :=\ \Big\{\theta:\ p_{\mathrm{lj}}(\theta)\ge q_0\Big\}.
  \]
  Then there exist constants \(C,c>0\) (independent of \(n\)) such that for every \(\theta\in\mathcal U_{q_0}\),
  \[
  \|\nabla \mathcal L(\theta)\|_1 \ \le\ C\,\frac{n}{\fwdbound}\,e^{-cn}.
  \]
  \end{lemma}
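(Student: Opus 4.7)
The plan is to reduce this uniformly to the pointwise bound already established in Lemma~\ref{lem:exp_small_deriv_far_from_terminal}. That lemma says: if every nonterminal state has long-jump probability at least $q_{\min}>0$, then for any start $v$ with $D(v)\ge an$ and any $\theta\in\{\alpha,\beta,\gamma\}$, $|\partial_\theta S_\theta(v)|\le C(n/\fwdbound)\,e^{-cn}$ (after bounding the $p_\cdot^{\max}$ prefactors by~$1$). Our hypothesis $\theta\in\mathcal U_{q_0}$ provides exactly the needed uniform lower bound on long jumps, so the only remaining work is to weight-average and collect terms.

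First, I will verify the identification $p_{\mathrm{lj}}(\theta)=\min_{x\text{ nonterm}} p(e;x)$, where $p(e;x)=P^{\mathrm{LJ\text{-}abs}}(x,e)$ is the one-step long-jump probability used in Lemma~\ref{lem:exp_small_deriv_far_from_terminal}. This is immediate from the definition of the long-jump-absorbing chain (Definition~\ref{def:lj-absorbing-chain}): redirecting long jumps to~$e$ does not change their total one-step mass out of $x$, which equals $P_\theta(x,\mathrm{LJ}(x))$. Hence $\theta\in\mathcal U_{q_0}$ implies $p(e;x)\ge q_0$ for every nonterminal~$x$, exactly the hypothesis of Lemma~\ref{lem:exp_small_deriv_far_from_terminal} with $q_{\min}=q_0$.

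Next, for each fixed $v\in\mathcal H$, the assumption $D(v)\ge an$ lets me apply Lemma~\ref{lem:exp_small_deriv_far_from_terminal} at the current parameter $\theta$, yielding
\[
|\partial_\alpha S(v;\theta)|,\ |\partial_\beta S(v;\theta)|,\ |\partial_\gamma S(v;\theta)|\ \le\ C\,\frac{n}{\fwdbound}\,e^{-cn},
\]
with constants $C,c>0$ depending only on $a$ and $q_0$ (absorbing the $p_f^{\max},p_b^{\max},p(e)_{\max}\le 1$ prefactors). Linearity of the gradient in $v$, the triangle inequality, and $\sum_{v\in\mathcal H}w_v=1$ then give, for each coordinate $\theta\in\{\alpha,\beta,\gamma\}$,
\[
|\partial_\theta \mathcal L(\theta)|\ \le\ \sum_{v\in\mathcal H}w_v\,|\partial_\theta S(v;\theta)|\ \le\ C\,\frac{n}{\fwdbound}\,e^{-cn}.
\]
Summing the three coordinate bounds (and absorbing the factor~$3$ into $C$) produces the claimed $\ell_1$ estimate $\|\nabla \mathcal L(\theta)\|_1\le C(n/\fwdbound)\,e^{-cn}$.

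I do not anticipate a real obstacle: the lemma is essentially a linearity-and-bookkeeping corollary of Lemma~\ref{lem:exp_small_deriv_far_from_terminal}, with the only nontrivial check being the identification of $p_{\mathrm{lj}}(\theta)$ with the long-jump-absorbing rate $p(e;\cdot)$. The main care-point is ensuring that the constants $C,c$ depend only on $(a,q_0)$ (and fixed logit-structure constants), not on $n$, which follows because Lemma~\ref{lem:exp_small_deriv_far_from_terminal}'s $c$ already depends only on $a$ and $q_{\min}$, and taking $q_{\min}=q_0$ preserves $n$-independence.
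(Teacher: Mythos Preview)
Your proposal is correct and follows essentially the same approach as the paper: apply Lemma~\ref{lem:exp_small_deriv_far_from_terminal} pointwise with $q_{\min}=q_0$, absorb the $p_f^{\max},p_b^{\max},p(e)_{\max}\le 1$ prefactors, then use linearity and the triangle inequality to pass from the per-$v$ bounds to the weighted sum. The only extra step you include is the (useful) explicit identification of $p_{\mathrm{lj}}(\theta)$ with the one-step absorption rate $p(e;\cdot)$ in the LJ-absorbing chain, which the paper leaves implicit.
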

  
  \begin{proof}
  Fix \(\theta\in\mathcal U_{q_0}\), so \(p_{\mathrm{lj}}(\theta)\ge q_0\). By Lemma~\ref{lem:exp_small_deriv_far_from_terminal} (with the long-jump probability lower bounded by \(q_0\)) and the trivial bounds \(p_f^{\max}, p_b^{\max}, p_{\mathrm{lj}}^{\max}\le 1\), for each \(\vartheta\in\{\alpha,\beta,\gamma\}\),
  \[
  \sup_{v\in\mathcal H}\big|\partial_{\vartheta}S(v;\theta)\big|
  \ \le\
  C\,\frac{n}{\fwdbound}\,e^{-cn}.
  \]
  Since \(\mathcal L\) is a convex combination, the triangle inequality gives \(|\partial_\vartheta \mathcal L| \le \sup_v |\partial_\vartheta S(v;\theta)|\). Summing over \(\vartheta\) yields the stated bound.
  \end{proof}
  
\subsection{Proof of \cref{result:dyn_vanishing}}
\label{subsec:bootstrap_main_proof}

We now assemble the above lemmas to prove the main result.
  
  \begin{proof}[Proof of \cref{result:dyn_vanishing}]
    By Lemma~\ref{lem:pmin_init_lower_bound}, the base model assumptions ensure that
    \(q_0 := p_{\mathrm{abs}}^{(2)}(\theta_0) \ge c/n > 0\) for some constant \(c = c(\switchbound, \fwdbound) > 0\) and \(n\) sufficiently large.
    By an analogous argument (with the same \(\switchbound\) bound on switch probability), the long-jump probability satisfies \(p_{\mathrm{lj}}(\theta_0) \ge c'/n\) for some constant \(c' > 0\); without loss of generality, assume \(c' \ge c\) so that \(p_{\mathrm{lj}}(\theta_0) \ge q_0\).
    
    Define the safe region
    \[
    \mathcal U\ :=\ \{\theta:\ p_{\mathrm{abs}}^{(2)}(\theta)\ge q_0/2 \text{ and } p_{\mathrm{lj}}(\theta)\ge q_0/2\},
    \]
    and the exit time
    \[
    t^\star\ :=\ \inf\{t\ge 0:\ \theta(t)\notin \mathcal U\},
    \]
    with the convention \(t^\star=\infty\) if the set is empty. By definition, for every \(s\in[0,t^\star)\)
    we have \(\theta(s)\in\mathcal U\), and hence Lemma~\ref{lem:uniform_exp_small_loss_grad} (applied with \(q_0/2\)
    in place of \(q_0\)) yields
    \[
    \|\dot\theta(s)\|_1=\|\nabla\mathcal L(\theta(s))\|_1 \ \le\ C\,\frac{n}{\fwdbound}\,e^{-cn}
    \qquad (0\le s<t^\star).
    \]
    Consequently, for every \(s\ge 0\),
    \[
    \|\theta(s\wedge t^\star)-\theta(0)\|_1
    \le \int_0^{s\wedge t^\star} \|\dot\theta(u)\|_1\,du
    \le (s\wedge t^\star)\cdot \Big(C\,\frac{n}{\fwdbound}\,e^{-cn}\Big).
    \]
    Applying Lemmas~\ref{lem:pmin_param_lipschitz} and~\ref{lem:plj_param_lipschitz} with \(\theta=\theta(s\wedge t^\star)\) and \(\theta'=\theta(0)\) gives
    \begin{align*}
    p_{\mathrm{abs}}^{(2)}(\theta(s\wedge t^\star))
    &\ge
    q_0-(s\wedge t^\star)\cdot \Big(C'\cdot C\,\frac{n}{\fwdbound}\,e^{-cn}\Big),\\
    p_{\mathrm{lj}}(\theta(s\wedge t^\star))
    &\ge
    q_0-(s\wedge t^\star)\cdot \Big(C\,\frac{n}{\fwdbound}\,e^{-cn}\Big).
    \end{align*}
    In particular, if \(s \le \frac{q_0}{2}\cdot \frac{\fwdbound}{C''n}\,e^{cn}\) for \(C'' := \max(C'C, C)\), then both bounds remain at least \(q_0/2\),
    so \(\theta(s\wedge t^\star)\in\mathcal U\). If \(t^\star\le s\), then by definition of \(t^\star\) we would have
    \(\theta(t^\star)\notin\mathcal U\), a contradiction. Therefore \(t^\star>s\).
    
    Setting
    \[
    t_{\mathrm{boot}}(n)\ :=\ \frac{q_0}{4}\cdot \frac{\fwdbound}{C''n}\,e^{cn}\ =\ \frac{c\,\fwdbound}{4C''n^2}\,e^{cn},
    \]
    we conclude that for every \(t\in[0,t_{\mathrm{boot}}(n)]\) one has \(t<t^\star\), and hence
    \[
    p_{\mathrm{abs}}^{(2)}(\theta(u))\ \ge\ \frac{q_0}{2}, \quad p_{\mathrm{lj}}(\theta(u))\ \ge\ \frac{q_0}{2}\qquad \forall\,u\in[0,t].
    \]

  Now fix \(t\in[0,t_{\mathrm{boot}}(n)]\) and let \(\tau:=\inf\{t\ge 0:\ X_t\in \{a_n,b_n\}\}\).
  Because \(p_{\mathrm{abs}}^{(2)}(\theta(0))=q_0\) and \(p_{\mathrm{abs}}^{(2)}(\theta(t))\ge q_0/2\), Lemma~\ref{lem:geom_tail_from_pmin} yields
  \[
  \Pr_{\theta(0)}(\tau>2M)\le (1-q_0/2)^M,
  \qquad
  \Pr_{\theta(t)}(\tau>2M)\le (1-q_0/2)^M
  \qquad\forall\,M\ge 0.
  \]
  
  Finally, apply the stopped-path bound in Corollary~\ref{cor:traj_tv_from_params} with
  \(\theta=\theta(0)\) and \(\theta'=\theta(t)\):
  \[
  \big\|p_{\text{roll}}(\theta(t))-p_{\text{roll}}(\theta(0))\big\|_{\TV}
  \ \le\
  M\,\|\theta(t)-\theta(0)\|_1 \;+\; \Pr_{\theta(t)}(\tau>M)+\Pr_{\theta(0)}(\tau>M).
  \]
  Using the tail estimates (noting that \(\Pr(\tau>M)\le \Pr(\tau>2\lfloor M/2\rfloor)\le (1-q_0/2)^{\lfloor M/2\rfloor}\)) and the drift bound
  \(\|\theta(t)-\theta(0)\|_1\le t\cdot \big(C\frac{n}{\fwdbound}e^{-cn}\big)\) gives, for all \(M\ge 0\),
  \[
  \big\|p_{\text{roll}}(\theta(t))-p_{\text{roll}}(\theta(0))\big\|_{\TV}
  \ \le\
  M\,t\cdot \Big(C\,\frac{n}{\fwdbound}\,e^{-cn}\Big)\;+\;2(1-q_0/2)^{M/2}.
  \]
  
  We choose \(M\) to (approximately) minimize the right-hand side. Treat \(M\ge 0\) as a real variable and write
  \[
  f(M)\ :=\ A\,M + 2e^{-\lambda M},
  \]
  where \(A:=t\big(C\frac{n}{\fwdbound}e^{-cn}\big)\) and \(\lambda:=-\frac{1}{2}\log(1-q_0/2)>0\).
  Differentiating gives
  \[
  f'(M)\ =\ A - 2\lambda e^{-\lambda M},\qquad
  f''(M)\ =\ 2\lambda^2 e^{-\lambda M}\ >\ 0,
  \]
  so \(f\) is convex and has a unique minimizer whenever it exists in \([0,\infty)\).
  If \(A<2\lambda\), the minimizer satisfies \(f'(M^\star)=0\), i.e.
  \[
  e^{-\lambda M^\star}\ =\ \frac{A}{2\lambda}
  \qquad\Longleftrightarrow\qquad
  M^\star\ =\ \frac{1}{\lambda}\log\Big(\frac{2\lambda}{A}\Big).
  \]
  Substituting \(e^{-\lambda M^\star}=A/(2\lambda)\) yields the optimized value
  \[
  f(M^\star)
  \ =\ A\cdot \frac{1}{\lambda}\log\Big(\frac{2\lambda}{A}\Big)\ +\ 2\cdot \frac{A}{2\lambda}
  \ =\ \frac{A}{\lambda}\Big(1+\log\Big(\frac{2\lambda}{A}\Big)\Big).
  \]
  Returning to integer \(M\), set \(M_{\mathrm{opt}}:=\lceil M^\star\rceil\). Since \(f\) is convex and
  \(f'\) is \(A\)-Lipschitz, one has \(f(M_{\mathrm{opt}})\le f(M^\star)+A\). In particular, for all
  \(A<2\lambda\),
  \[
  \big\|p_{\text{roll}}(\theta(t))-p_{\text{roll}}(\theta(0))\big\|_{\TV}
  \ \le\ f(M_{\mathrm{opt}})
  \ \le\ \frac{A}{\lambda}\Big(1+\log\Big(\frac{2\lambda}{A}\Big)\Big)\ +\ A.
  \]
  Equivalently, writing \(x:=A/(2\lambda)\in(0,1)\), this becomes
  \[
  f(M_{\mathrm{opt}})\ \le\ 2x\big(1+\log(1/x)\big)\ +\ 2\lambda x.
  \]
  
  Consequently, to guarantee the total-variation bound is at most \(\varepsilon\), it suffices to enforce
  \[
  \frac{A}{\lambda}\Big(1+\log\Big(\frac{2\lambda}{A}\Big)\Big)\ +\ A\ \le\ \varepsilon,
  \]
  where \(A=t\big(C\frac{n}{\fwdbound}e^{-cn}\big)\) and \(\lambda=-\frac{1}{2}\log(1-q_0/2)\).
  
  Since \(q_0 = c/n\), we have \(\lambda = -\frac{1}{2}\log(1-c/(2n)) \approx c/(4n)\) for large \(n\), i.e., \(\lambda = \Theta(1/n)\).
  
  To match the bound in \cref{result:dyn_vanishing}, we verify that \(t < \varepsilon^2 \cdot e^{\kappa n}\) implies the above bound.
  For \(t = \varepsilon^2 \cdot e^{\kappa n}\), we have
  \[
  A\ =\ \varepsilon^2 \cdot e^{\kappa n} \cdot \frac{Cn}{\fwdbound}\,e^{-cn}
  \ =\ \varepsilon^2 \cdot \frac{Cn}{\fwdbound}\,e^{-(c-\kappa)n}.
  \]
  The logarithmic factor satisfies
  \[
  \log\Big(\frac{2\lambda}{A}\Big)
  \ =\ \log(2\lambda) - \log(\varepsilon^2) - \log\Big(\frac{Cn}{\fwdbound}\Big) + (c-\kappa)n
  \ =\ (c-\kappa)n + O\big(\log n + \log(1/\varepsilon)\big).
  \]
  Substituting into the bound gives
  \[
  \frac{A}{\lambda}\Big(1+\log\Big(\frac{2\lambda}{A}\Big)\Big) + A
  \ \le\ \frac{A}{\lambda}\cdot (c-\kappa)n \;+\; O\Big(\frac{A}{\lambda}\big(\log n + \log(1/\varepsilon)\big)\Big) \;+\; A.
  \]
  Since \(\lambda = \Theta(1/n)\), we have \(A/\lambda = \Theta(n) \cdot A\). The dominant term is thus
  \[\frac{A}{\lambda}(c-\kappa)n = O(1) \cdot \varepsilon^2 \cdot \frac{C(c-\kappa)n^3}{\fwdbound}\,e^{-(c-\kappa)n}.\]
  Since \(n^3 e^{-(c-\kappa)n}\to 0\) as \(n\to\infty\) for any \(\kappa < c\), all terms vanish exponentially.
  Specifically, for any \(\kappa \in (0,c)\), there exists \(n_0 = n_0(a, \switchbound, \varepsilon)\) such that for all \(n \ge n_0\), the left-hand side is at most \(\varepsilon\).
  Setting \(\kappa := c/2\) yields the claimed result.
  
  Finally, we verify that the time scale \(t < \varepsilon^2 e^{\kappa n}\) lies within the bootstrap regime \(t \le t_{\mathrm{boot}}(n)\). Since \(t_{\mathrm{boot}}(n) = \frac{q_0 \fwdbound}{4C'Cn}\,e^{cn}\) and \(q_0 = c/n\), we have \(t_{\mathrm{boot}}(n) = \frac{c\,\fwdbound}{4C'Cn^2}\,e^{cn}\). With \(\kappa = c/2\),
  \[
  \frac{\varepsilon^2 e^{\kappa n}}{t_{\mathrm{boot}}(n)}\ =\ \frac{4C'C\varepsilon^2 n^2}{c\,\fwdbound}\,e^{-(c-\kappa)n}\ =\ \frac{4C'C\varepsilon^2 n^2}{c\,\fwdbound}\,e^{-(c/2)n}\ \to\ 0
  \]
  as \(n\to\infty\). Thus, for \(n\ge n_0\) (possibly enlarging \(n_0\)), we have \(\varepsilon^2 e^{\kappa n} < t_{\mathrm{boot}}(n)\), so the analysis above is valid for all \(t < \varepsilon^2 e^{\kappa n}\).
\end{proof}

\section{Supporting Lemmas}
\label{app:lemmas}

\begin{lemma}
\label{lem:symmetry_reduction}
Under the assumptions and notation of \cref{sec:dyn,sec:task}. In particular, the distribution $D^Q_m$ is invariant under vertex permutations, and the value matrix $A(t)$ evolves by gradient flow
\[
\frac{d}{dt}A(t) = -\nabla_A L(D^Q_m, A(t)),
\]

For $\alpha,\beta,\gamma \in \mathbb{R}$, define $A(\alpha,\beta,\gamma)$ on the vertex--edge block by
\[
A(\alpha,\beta,\gamma)_{v,(u,w)} =
\begin{cases}
\alpha & \text{if } v = u,\\
\beta  & \text{if } v = w,\\
\gamma & \text{if } v \notin \{u,w\},
\end{cases}
\quad v,u,w \in V,\ u\neq w,
\]
with all other entries defined as in \cref{sec:dyn}.

Let $A(t)$ solve gradient flow with $A(0) = A(\alpha_0,\beta_0,\gamma_0)$ for some $\alpha_0,\beta_0,\gamma_0$. Then for all $t \ge 0$ there exist scalars $\alpha(t),\beta(t),\gamma(t)$ such that
\[
A_{v,(u,w)}(t) =
\begin{cases}
\alpha(t) & \text{if } v = u,\\
\beta(t)  & \text{if } v = w,\\
\gamma(t) & \text{if } v \notin \{u,w\},
\end{cases}
\quad \forall\, v\in V,\ (u,w)\in E,\ u\neq w.
\]

Define the reduced loss
\[
\tilde L(\alpha,\beta,\gamma)
:= L\big(D^Q_m, A(\alpha,\beta,\gamma)\big).
\]
The scalars $\alpha(t),\beta(t),\gamma(t)$ satisfy
\begin{align*}
\dot\alpha(t) &= -\frac{1}{2n(2n-1)}\,\partial_\alpha \tilde L\big(\alpha(t),\beta(t),\gamma(t)\big),\\
\dot\beta(t) &= -\frac{1}{2n(2n-1)}\,\partial_\beta \tilde L\big(\alpha(t),\beta(t),\gamma(t)\big),\\
\dot\gamma(t) &= -\frac{1}{(2n-2)\cdot 2n(2n-1)}\,\partial_\gamma \tilde L\big(\alpha(t),\beta(t),\gamma(t)\big).
\end{align*}
\end{lemma}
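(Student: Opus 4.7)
The plan is to prove the lemma by a standard equivariance-under-symmetry argument, followed by a counting computation for the chain rule. Let $G$ denote the symmetric group $S_V$ acting on the vertex set $V$, extended to the edge vocabulary $E$ by $\pi \cdot (u,w) := (\pi(u),\pi(w))$, and thereby to the full token vocabulary $\vocab = V \cup E$. This induces an action on the parameter space by $(\pi \cdot A)_{s,s'} := A_{\pi^{-1}(s),\pi^{-1}(s')}$. First I would verify that the Transformer readout and the sampling procedure are equivariant under this action in the standard sense, i.e., relabeling all input tokens by $\pi$ and simultaneously transforming $A \mapsto \pi \cdot A$ leaves the induced distribution over \emph{relabeled} trajectories unchanged. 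Then, since $\D^{\QQ}$ is built from a uniform random partition of $V$ into two equal subsets together with a uniform ordering within each subset, $\D^{\QQ}$ is $G$-invariant as a distribution over input sequences. Combined with equivariance of the reward (which depends only on matching chain-endpoint labels), this yields $L(\D^{\QQ}, \pi \cdot A) = L(\D^{\QQ}, A)$ for all $\pi \in G$.

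Once the loss is $G$-invariant, its gradient $\nabla_A L$ is $G$-equivariant, i.e., $\nabla_A L(\D^{\QQ}, \pi \cdot A) = \pi \cdot \nabla_A L(\D^{\QQ}, A)$. Consequently the gradient-flow vector field preserves the fixed-point set $\mathrm{Fix}(G) \subseteq \R^{|\vocab| \times |\vocab|}$, so if $A(0) \in \mathrm{Fix}(G)$ then $A(t) \in \mathrm{Fix}(G)$ for all $t \geq 0$. Next I would identify $\mathrm{Fix}(G)$ restricted to the vertex--edge block: since $G$ acts transitively on the set of pairs $(v,(u,w))$ sharing any fixed incidence pattern (the three patterns being $v=u$, $v=w$, and $v \notin \{u,w\}$), any $G$-invariant $A$ on this block is constant on each of these three orbits. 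Writing those three constants as $\alpha(t),\beta(t),\gamma(t)$ gives the claimed form of $A(t)$.

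Finally I would derive the scalar ODEs. Under the reparameterization $A = A(\alpha,\beta,\gamma)$, the chain rule gives
\[
\partial_\alpha \tilde L(\alpha,\beta,\gamma) \;=\; \sum_{(v,(u,w)):\,v=u,\,u\neq w} \partial_{A_{v,(u,w)}} L(\D^{\QQ}, A(\alpha,\beta,\gamma)),
\]
and analogous formulas for $\partial_\beta$ and $\partial_\gamma$. By $G$-equivariance of $\nabla_A L$ evaluated at a $G$-invariant point, all summands within a single orbit are equal. Counting orbit sizes: there are $|V|(|V|-1) = 2n(2n-1)$ pairs with $v=u$ and $u \neq w$, the same number with $v=w$, and $|V|(|V|-1)(|V|-2) = (2n-2) \cdot 2n(2n-1)$ pairs with $v \notin \{u,w\}$. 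Dividing the $\tilde L$-derivative by the orbit size gives the common value of $\partial_{A_{v,(u,w)}} L$ on that orbit, which equals $-\dot{\alpha}(t)$, $-\dot{\beta}(t)$, or $-\dot{\gamma}(t)$ by gradient flow on $A$. Rearranging produces the three displayed ODEs with precisely the stated normalization constants.

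\paragraph*{Main obstacle.}
The routine steps (chain rule and orbit counting) are mechanical once the equivariance framework is in place; the main delicate point will be step one: carefully checking that, for both the softmax and the linear variants of the Transformer with causal masking and one-hot embeddings, the readout $\mathrm{TF}(s_{1:L}; A)$ satisfies $\mathrm{TF}(\pi \cdot s_{1:L}; \pi \cdot A) = \pi \cdot \mathrm{TF}(s_{1:L}; A)$, and then lifting this to equivariance of the full rollout distribution $p_{\text{roll}}$, so that $\EE_{s \sim \D^{\QQ}} \EE_{\tau \sim p_{\text{roll}}}[\reward(\tau)]$ is genuinely $G$-invariant in $A$. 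The key observation making this work is that $V$ is set to $\mathbbm{1}\{v \in \{u,w\}\}$, which is itself $G$-equivariant in its row/column token indices, so $V$ lives in $\mathrm{Fix}(G)$ and does not break the symmetry.
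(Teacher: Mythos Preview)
Your proposal is correct and follows essentially the same argument as the paper: establish $G$-invariance of the loss, deduce $G$-equivariance of the gradient so that gradient flow preserves $\mathrm{Fix}(G)$, identify $\mathrm{Fix}(G)$ on the vertex--edge block via the three incidence orbits, and then use the chain rule together with the orbit-size counts $2n(2n-1)$, $2n(2n-1)$, and $(2n-2)\cdot 2n(2n-1)$ to obtain the scalar ODEs. The paper phrases the invariance step slightly differently (showing $T_\pi A(t)$ solves the same ODE and invoking uniqueness rather than arguing the vector field is tangent to $\mathrm{Fix}(G)$), but this is a cosmetic difference.
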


\begin{proof}
For a permutation $\pi$ of the vertex set $V$, define an operator $T_\pi$ on matrices $A$ by
\[
\big(T_\pi A\big)_{v,(u,w)}
:= A_{\pi^{-1}(v),\,(\pi^{-1}(u),\pi^{-1}(w))},
\quad v,u,w\in V.
\]
Equivalently, $(T_\pi A)_{\pi(v),(\pi(u),\pi(w))} = A_{v,(u,w)}$. This corresponds to consistently relabelling all vertices by $\pi$.

By construction of $D^Q_m$ and the RL loss, relabelling vertices does not change the distribution over inputs or the loss, hence
\[
L(D^Q_m, T_\pi A) = L(D^Q_m, A)
\quad\text{for all }A,\pi.
\]
Differentiating gives gradient equivariance
\[
\nabla_A L(D^Q_m, T_\pi A) = T_\pi\,\nabla_A L(D^Q_m, A).
\]

Now let $A(t)$ solve gradient flow. Then
\[
\frac{d}{dt}\big(T_\pi A(t)\big)
= T_\pi \frac{d}{dt}A(t)
= -T_\pi \nabla_A L(D^Q_m, A(t))
= -\nabla_A L(D^Q_m, T_\pi A(t)).
\]
So $T_\pi A(t)$ is also a solution of the same ODE.

At $t=0$, the symmetric initialization from \cref{sec:dyn:setting} gives
\[
A_{v,(u,w)}(0) =
\begin{cases}
\alpha_0 & v=u,\\
\beta_0  & v=w,\\
\gamma_0 & v\notin\{u,w\},
\end{cases}
\]
which is clearly invariant under any permutation of vertex labels, hence $T_\pi A(0) = A(0)$ for all $\pi$. By uniqueness of solutions to the ODE, we must have
\[
T_\pi A(t) = A(t)\quad\text{for all }t\ge 0,\ \pi.
\]
Thus $A(t)$ always lies in the fixed-point subspace
\[
\mathcal{S} := \{A : T_\pi A = A\ \text{for all }\pi\}.
\]

To describe $\mathcal{S}$, consider triples $(v,u,w)\in V^3$ with $u\neq w$. Two such triples lie in the same orbit under the action of permutations if and only if they have the same equality pattern among $\{v,u,w\}$. There are exactly three possibilities:
\begin{enumerate}
    \item $v = u$, $v \neq w$;
    \item $v = w$, $v \neq u$;
    \item $v,u,w$ are all distinct.
\end{enumerate}

Given any two triples with the same pattern, we can construct a permutation $\pi$ mapping the first triple to the second (e.g., map $u\mapsto u'$, $w\mapsto w'$, and extend arbitrarily), so these are indeed the three orbits. If $A\in\mathcal{S}$, then for any two entries $A_{v,(u,w)}$ and $A_{v',(u',w')}$ from the same orbit we must have
\[
A_{v,(u,w)} = A_{v',(u',w')},
\]
because some $\pi$ sends one triple to the other and $A = T_\pi A$. Therefore there exist scalars $\alpha,\beta,\gamma$ such that
\[
A_{v,(u,w)} =
\begin{cases}
\alpha & v = u,\\
\beta  & v = w,\\
\gamma & v\notin\{u,w\},
\end{cases}
\]
which is exactly the form $A(\alpha,\beta,\gamma)$. This gives the representation of $A(t)$ in terms of $\alpha(t),\beta(t),\gamma(t)$.

For the dynamics, write $A(t) = A(\alpha(t),\beta(t),\gamma(t))$ and define $\tilde L$ as above. By the chain rule,
\[
\partial_\alpha \tilde L(\alpha,\beta,\gamma)
= \sum_{v=u,\,u\neq w}
\frac{\partial}{\partial A_{v,(u,w)}} L(D^Q_m, A(\alpha,\beta,\gamma)),
\]
and similarly for $\beta,\gamma$. Gradient flow gives
\[
\frac{d}{dt}A_{v,(u,w)}(t)
= -\,\frac{\partial}{\partial A_{v,(u,w)}} L(D^Q_m, A(t)).
\]

By symmetry, all entries with $v=u$ share the same value $\alpha(t)$ and the same time derivative $\dot\alpha(t)$. Summing the previous identity over all such entries and comparing with the expression for $\partial_\alpha \tilde L$ yields
\[
N_\alpha\,\dot\alpha(t)
= -\partial_\alpha \tilde L\big(\alpha(t),\beta(t),\gamma(t)\big),
\]
where $N_\alpha$ is the number of pairs $(v,(u,w))$ with $v=u$ and $u\neq w$. For each ordered pair $(u,w)$ with $u\neq w$, there is exactly one $v$ with $v=u$, so
\[
N_\alpha = |E_{\text{dir}}|
= |V|(|V|-1),
\]
where $E_{\text{dir}} = \{(u,w)\in V^2 : u\neq w\}$. The same reasoning gives $N_\beta = |V|(|V|-1)$, and for $\gamma$, for each $(u,w)$ there are $|V|-2$ vertices $v\notin\{u,w\}$, so
\[
N_\gamma = (|V|-2)|V|(|V|-1).
\]

Dividing by these constants gives the claimed ODEs for $\alpha,\beta,\gamma$.
\end{proof}

\begin{lemma}[Softmax derivative]
\label{lem:softmax_derivative_identity_aux}
Let \(\mathcal Y\) be a finite set. For each \(\theta\in\R\), let \(o_\theta:\mathcal Y\to\R\) be
differentiable in \(\theta\), and define the softmax distribution \(P_\theta\) on \(\mathcal Y\) by
\[
P_\theta(y):=\frac{\exp(o_\theta(y))}{\sum_{z\in\mathcal Y}\exp(o_\theta(z))}.
\]
Write \(d_\theta(y):=\partial_\theta o_\theta(y)\). Then for every \(y\in\mathcal Y\),
\[
\partial_\theta P_\theta(y)
:=
P_\theta(y)\Big(d_\theta(y)-\sum_{z\in\mathcal Y}P_\theta(z)\,d_\theta(z)\Big).
\]
Equivalently, for any function \(w:\mathcal Y\to\R\),
\[
\sum_{y\in\mathcal Y}\big(\partial_\theta P_\theta(y)\big)\,w(y)
:=
\sum_{y\in\mathcal Y}P_\theta(y)\,d_\theta(y)\,w(y)
-
\Big(\sum_{y\in\mathcal Y}P_\theta(y)\,d_\theta(y)\Big)
\Big(\sum_{y\in\mathcal Y}P_\theta(y)\,w(y)\Big),
\]
i.e.
\[
\sum_{y\in\mathcal Y}\big(\partial_\theta P_\theta(y)\big)\,w(y)
:=
\mathrm{Cov}_{Y\sim P_\theta}\!\big(d_\theta(Y),w(Y)\big).
\]
\end{lemma}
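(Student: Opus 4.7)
The plan is to prove the formula by a direct computation with the log-derivative trick, then derive the covariance reformulation as a simple algebraic consequence. This is standard material (the ``score identity'' for softmax / exponential families), so I expect no conceptual obstacles, only bookkeeping.

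First I would write $\log P_\theta(y) = o_\theta(y) - \log Z_\theta$, where $Z_\theta := \sum_{z\in\mathcal Y}\exp(o_\theta(z))$. Differentiating under the sum (permissible because $\mathcal Y$ is finite and $o_\theta$ is differentiable) yields
\[
\partial_\theta \log Z_\theta
= \frac{1}{Z_\theta}\sum_{z\in\mathcal Y}\exp(o_\theta(z))\,d_\theta(z)
= \sum_{z\in\mathcal Y} P_\theta(z)\,d_\theta(z),
\]
so that $\partial_\theta \log P_\theta(y) = d_\theta(y) - \sum_{z} P_\theta(z) d_\theta(z)$. Multiplying both sides by $P_\theta(y)$ and using $\partial_\theta P_\theta(y) = P_\theta(y)\,\partial_\theta \log P_\theta(y)$ gives exactly the first displayed identity of the lemma.

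For the covariance reformulation, I would simply multiply the first identity by an arbitrary function $w:\mathcal Y\to\R$ and sum over $y$. Splitting the right-hand side into two sums yields
\[
\sum_{y} (\partial_\theta P_\theta(y))\,w(y)
= \sum_{y} P_\theta(y)\,d_\theta(y)\,w(y)
- \Big(\sum_{z} P_\theta(z) d_\theta(z)\Big)\Big(\sum_{y} P_\theta(y)\,w(y)\Big),
\]
which is precisely $\E_{Y\sim P_\theta}[d_\theta(Y) w(Y)] - \E_{Y\sim P_\theta}[d_\theta(Y)]\,\E_{Y\sim P_\theta}[w(Y)] = \mathrm{Cov}_{Y\sim P_\theta}(d_\theta(Y), w(Y))$.

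There is no hard step in this argument; the only item that requires a word of justification is the exchange of derivative and finite sum in computing $\partial_\theta Z_\theta$, which is immediate since $\mathcal Y$ is finite. I would present the proof as a single short paragraph of at most a dozen lines.
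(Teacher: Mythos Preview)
Your proposal is correct and takes essentially the same approach as the paper: a direct computation of $\partial_\theta P_\theta(y)$ followed by multiplying by $w(y)$ and summing. The only cosmetic difference is that you use the log-derivative trick ($\partial_\theta P = P\,\partial_\theta\log P$) whereas the paper applies the quotient rule to $e^{o(y)}/Z$ directly; these are equivalent and both proofs are one short paragraph.
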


\begin{proof}
Fix \(\theta\) and abbreviate \(o(y):=o_\theta(y)\), \(P(y):=P_\theta(y)\), and
\(d(y):=\partial_\theta o_\theta(y)\). Let
\[
Z:=\sum_{z\in\mathcal Y} e^{o(z)},
\qquad\text{so}\qquad
P(y)=\frac{e^{o(y)}}{Z}.
\]
Differentiate using the quotient rule:
\[
\partial_\theta P(y)
:=
\frac{\partial_\theta e^{o(y)}}{Z}
-\frac{e^{o(y)}}{Z^2}\,\partial_\theta Z.
\]
Since \(\partial_\theta e^{o(y)}=e^{o(y)}d(y)\) and
\[
\partial_\theta Z=\sum_{z\in\mathcal Y}\partial_\theta e^{o(z)}
:=\sum_{z\in\mathcal Y}e^{o(z)}d(z),
\]
we obtain
\[
\partial_\theta P(y)
:=
\frac{e^{o(y)}}{Z}d(y)
-\frac{e^{o(y)}}{Z}\cdot\frac{1}{Z}\sum_{z\in\mathcal Y}e^{o(z)}d(z).
\]
Recognizing \(e^{o(y)}/Z=P(y)\) and \(e^{o(z)}/Z=P(z)\) gives
\[
\partial_\theta P(y)
:=
P(y)\Big(d(y)-\sum_{z\in\mathcal Y}P(z)d(z)\Big),
\]
which proves the first identity.

Multiplying this by \(w(y)\) and summing over \(y\in\mathcal Y\) yields
\[
\sum_{y\in\mathcal Y}(\partial_\theta P(y))w(y)
:=
\sum_{y\in\mathcal Y}P(y)d(y)w(y)
-\Big(\sum_{z\in\mathcal Y}P(z)d(z)\Big)\Big(\sum_{y\in\mathcal Y}P(y)w(y)\Big),
\]
which is the stated centered form. The covariance reformulation follows from
\(\mathrm{Cov}(A,B)=\E[AB]-\E[A]\E[B]\) with \(Y\sim P_\theta\).
\end{proof}

\begin{lemma}[Elementary gambler's--ruin bound (no martingales)]
\label{lem:gambler_ruin_no_martingale}
Let $(D_t)_{t\ge0}$ be a Markov chain on $\{0,1,\dots,K\}$ with absorbing boundaries $\{0,K\}$.
For each $i\in\{1,\dots,K-1\}$ assume
\[
\Pr(D_{t+1}=i-1\mid D_t=i)=p_f(i),\qquad
\Pr(D_{t+1}=i+1\mid D_t=i)=p_b(i),
\]
and $\Pr(D_{t+1}=i\mid D_t=i)=1-p_f(i)-p_b(i)$ (holds allowed).
Assume $p_f(i),p_b(i)>0$ and that there exists $\rho\in(0,1)$ such that for all $i\in\{1,\dots,K-1\}$,
\[
\frac{p_b(i)}{p_f(i)}\ \le\ \rho.
\]
Let $\tau_0:=\inf\{t\ge0:D_t=0\}$ and $\tau_K:=\inf\{t\ge0:D_t=K\}$.
Then for every $k\in\{0,1,\dots,K\}$,
\[
\Pr_k(\tau_K<\tau_0)\ \le\ \rho^{\,K-k}.
\]
\end{lemma}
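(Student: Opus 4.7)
The plan is to reduce the claim to the standard discrete harmonic analysis for birth--death chains and then exploit the geometric growth of increments forced by the drift hypothesis. First I would set $h(k):=\Pr_k(\tau_K<\tau_0)$ with boundary values $h(0)=0$ and $h(K)=1$, and condition on the first step to derive, for interior $k\in\{1,\dots,K-1\}$, the harmonic relation
\[
p_f(k)\,(h(k)-h(k-1)) \;=\; p_b(k)\,(h(k+1)-h(k)).
\]
Crucially the self-loop term $1-p_f(k)-p_b(k)$ cancels out, so the analysis is insensitive to the holding probabilities.

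Next, writing $a_k:=h(k)-h(k-1)$, the harmonic relation yields the multiplicative recursion $a_{k+1}=(p_f(k)/p_b(k))\,a_k$. The drift assumption $p_b(k)/p_f(k)\le\rho$ then gives $a_{k+1}\ge \rho^{-1} a_k$, i.e.\ the increments grow at least geometrically. Nonnegativity $a_k\ge 0$ follows by induction from the base case $a_1=h(1)\ge 0$ (since $h(0)=0$ and $h$ is a probability).

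The final step is to exploit this geometric growth together with the normalization $\sum_{j=1}^K a_j = h(K)-h(0)=1$. Iterating $a_{j+1}\ge \rho^{-1} a_j$ backward gives $a_j\le \rho^{K-j}a_K$; summing and using the normalization extracts the explicit upper bound $a_K \le (1-\rho)/(1-\rho^K)$. Plugging both estimates into the telescoping representation $h(k)=\sum_{j=1}^k a_j$ yields
\[
h(k) \;\le\; a_K\,\rho^{K-k}\,\frac{1-\rho^k}{1-\rho} \;\le\; \rho^{K-k}\cdot\frac{1-\rho^k}{1-\rho^K} \;\le\; \rho^{K-k},
\]
which is exactly the claim; the boundary cases $k\in\{0,K\}$ are immediate from $h(0)=0\le\rho^K$ and $h(K)=1=\rho^0$.

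I do not anticipate any substantial obstacle: the proof is a one-dimensional calculation on the line, and the hypothesis $p_b/p_f\le\rho$ translates directly into the geometric-growth recursion for the increments $a_k$. The only points requiring genuine care are verifying that the holding self-loops cancel exactly in the harmonic equation (so that the recursion depends only on the ratio $p_b/p_f$, not on the individual magnitudes) and that nonnegativity of the increments propagates through the multiplicative recursion from the trivial base case $a_1\ge 0$.
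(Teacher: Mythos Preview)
Your overall route---harmonic equation for $h(k)$, increment recursion $a_{k+1}=(p_f(k)/p_b(k))\,a_k$, geometric growth from the drift hypothesis---is exactly the paper's. But the step where you ``extract the explicit upper bound $a_K\le(1-\rho)/(1-\rho^K)$'' has the inequality reversed. Summing $a_j\le\rho^{K-j}a_K$ and using $\sum_{j=1}^K a_j=1$ gives
\[
1\ \le\ a_K\sum_{j=1}^K\rho^{K-j}\ =\ a_K\cdot\frac{1-\rho^K}{1-\rho},
\]
which is a \emph{lower} bound $a_K\ge(1-\rho)/(1-\rho^K)$. With only that, your chain of inequalities $h(k)\le a_K\,\rho^{K-k}\frac{1-\rho^k}{1-\rho}\le\rho^{K-k}\frac{1-\rho^k}{1-\rho^K}$ breaks at the second step, and the first step alone (using the trivial $a_K\le 1$) leaves you with $\rho^{K-k}\frac{1-\rho^k}{1-\rho}$, which exceeds $\rho^{K-k}$.

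The fix is to avoid isolating $a_K$ and instead compare the partial sum to the full sum directly. Since $a_j=a_1\prod_{m=1}^{j-1}\frac{p_f(m)}{p_b(m)}$ exactly, and $\sum_{j=1}^K a_j=1$, you have
\[
h(k)=\frac{\sum_{j=1}^k\prod_{m=1}^{j-1}\frac{p_f(m)}{p_b(m)}}{\sum_{j=1}^K\prod_{m=1}^{j-1}\frac{p_f(m)}{p_b(m)}}.
\]
The ratio bound $h(k)\le\frac{\sum_{j=1}^k\rho^{-(j-1)}}{\sum_{j=1}^K\rho^{-(j-1)}}=\rho^{K-k}\frac{1-\rho^k}{1-\rho^K}\le\rho^{K-k}$ then follows from the monotone-likelihood-ratio fact that $\frac{P_{j+1}}{P_j}\ge\rho^{-1}$ for all $j$ (equivalently: for $i\le k<j$ one has $P_i\rho^{-(j-1)}\le P_j\rho^{-(i-1)}$, and summing over such $(i,j)$ gives the cross-product inequality). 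This is the paper's argument.
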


\begin{proof}
Let $h(i):=\Pr_i(\tau_K<\tau_0)$. Then $h(0)=0$, $h(K)=1$, and for $1\le i\le K-1$,
conditioning on one step gives
\[
h(i)=p_f(i)h(i-1)+p_b(i)h(i+1)+\bigl(1-p_f(i)-p_b(i)\bigr)h(i),
\]
hence
\begin{equation}\label{eq:flux_ruin_aux}
p_f(i)\bigl(h(i)-h(i-1)\bigr)=p_b(i)\bigl(h(i+1)-h(i)\bigr).
\end{equation}
With $\Delta_i:=h(i)-h(i-1)$, \eqref{eq:flux_ruin_aux} implies
$\Delta_{i+1}=\frac{p_f(i)}{p_b(i)}\Delta_i$, hence
\[
\Delta_i=\Delta_1\prod_{m=1}^{i-1}\frac{p_f(m)}{p_b(m)}.
\]
Since $h(k)=\sum_{i=1}^k\Delta_i$ and $h(K)=\sum_{i=1}^K\Delta_i=1$, we get
\[
h(k)=
\frac{\sum_{i=1}^k \prod_{m=1}^{i-1}\frac{p_f(m)}{p_b(m)}}
{\sum_{i=1}^K \prod_{m=1}^{i-1}\frac{p_f(m)}{p_b(m)}}.
\]
Using $p_f(m)/p_b(m)\ge \rho^{-1}$ in both numerator and denominator yields
\[
h(k)\le
\frac{\sum_{i=1}^k \rho^{-(i-1)}}{\sum_{i=1}^K \rho^{-(i-1)}}
:=
\rho^{K-k}\cdot
\frac{\sum_{j=0}^{k-1}\rho^j}{\sum_{j=0}^{K-1}\rho^j}
\le \rho^{K-k}.
\]
\end{proof}

\begin{lemma}[Expected absorption time]
\label{lem:exact-and-linear-upper}
Let $(X_t)_{t\ge 0}$ be a Markov chain on $\{0,1,\dots,N\}$ such that $N$ is
absorbing, from each $i\in\{1,\dots,N-1\}$ the chain moves to $i+1$ with
probability $p$ and to $i-1$ with probability $q$, where $p,q\ge 0$, $p+q=1$ and
$p>q$, and from $0$ the chain moves to $1$ with probability $1$.
Let $T_N:=\inf\{t\ge 0: X_t=N\}$. Then for every $k\in\{0,1,\dots,N\}$,
\[
\mathbb E_k[T_N]
:=\frac{N-k}{p-q}
+\frac{2pq}{(p-q)^2}\left[\left(\frac{q}{p}\right)^N-\left(\frac{q}{p}\right)^k\right],
\]
and in particular,
\[
\mathbb E_k[T_N]\le \frac{N-k}{p-q}.
\]
\end{lemma}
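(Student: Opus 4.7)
\textbf{Proof proposal for \cref{lem:exact-and-linear-upper}.}

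The plan is a standard first-step analysis: set $h(k):=\mathbb E_k[T_N]$, derive a linear recurrence with one non-homogeneous term, solve it explicitly, and pin down the two free constants from the two boundary conditions (absorbing at $N$, reflecting at $0$). Finally the upper bound will follow from $q/p<1$ and simple monotonicity.

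First I would set up the equations. For $k\in\{1,\dots,N-1\}$, conditioning on one step gives
\[
h(k)\;=\;1\;+\;p\,h(k+1)\;+\;q\,h(k-1),
\]
with the absorbing condition $h(N)=0$ at the right end, and the reflecting condition $h(0)=1+h(1)$ at the left (since from $0$ the chain moves deterministically to $1$). The recurrence is linear of order two with constant coefficients and non-homogeneous term $1$, so its general solution has the form $h(k)=A+B(q/p)^k+h_p(k)$, where $A+B(q/p)^k$ is the homogeneous solution (characteristic roots $1$ and $q/p$ since $p+q=1$) and $h_p$ is any particular solution. Trying $h_p(k)=ak+b$, a short calculation using $p+q=1$ forces $a=-1/(p-q)$, so I would take $h_p(k)=-k/(p-q)$.

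Next I would use the two boundary conditions to solve for $A,B$. The condition $h(0)=1+h(1)$ becomes
\[
A+B\;=\;1+A+B(q/p)-\tfrac{1}{p-q},
\]
which (using $p+q=1$, so $p-q-1=-2q$) yields $B=-2pq/(p-q)^2$. Then $h(N)=0$ gives $A=N/(p-q)-B(q/p)^N$. Substituting back,
\[
h(k)\;=\;\frac{N-k}{p-q}\;+\;\frac{2pq}{(p-q)^2}\Bigl[(q/p)^N-(q/p)^k\Bigr],
\]
which is the stated identity. The linear upper bound then follows immediately: since $p>q$, we have $q/p\in(0,1)$, so $k\le N$ implies $(q/p)^N\le(q/p)^k$, the bracket is non-positive, and the prefactor $2pq/(p-q)^2$ is non-negative, so the second term is $\le 0$.

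There is not really a hard step here; the main thing to get right is the reflecting boundary condition $h(0)=1+h(1)$ (not the usual Dirichlet-type one used for gambler's ruin), since a careless treatment would yield the wrong constant $B$ and thus the wrong geometric correction. I would also briefly verify that $h$ is finite, which is automatic because the chain has strictly positive drift $p-q>0$ toward the absorbing state $N$, so $T_N$ is almost surely finite with finite mean (and in fact, the linear upper bound itself certifies finiteness a posteriori).
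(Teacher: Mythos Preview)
Your proposal is correct and follows essentially the same approach as the paper: first-step analysis to derive the recurrence with boundary conditions $h(N)=0$ and $h(0)=1+h(1)$, solve the linear recurrence via characteristic roots $1,q/p$ plus the particular solution $-k/(p-q)$, and deduce the upper bound from $(q/p)^N\le(q/p)^k$. The only minor difference is that the paper pins down $A$ from $h(N)=0$ before using the reflecting condition for $B$ (and adds a short uniqueness check), whereas you solve for $B$ first; this is purely cosmetic.
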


\begin{proof}
For $i\in\{0,1,\dots,N\}$ define $m_i:=\mathbb E_i[T_N]$.
If $1\le i\le N-1$, conditioning on the first step and using the Markov property
gives
\begin{equation}
\label{eq:recurrence2}
m_i = 1 + p\,m_{i+1} + q\,m_{i-1}.
\end{equation}
At $N$ we have $m_N=0$ since $N$ is absorbing, and at $0$ we have
\begin{equation}
\label{eq:boundary2}
m_0 = 1 + m_1
\end{equation}
since the chain moves deterministically from $0$ to $1$ in one step.

Set $\delta:=p-q>0$ and rewrite \eqref{eq:recurrence2} as
\begin{equation}
\label{eq:inhom2}
p\,m_{i+1} - m_i + q\,m_{i-1} = -1,\qquad 1\le i\le N-1.
\end{equation}
A particular solution is given by $m_i^{\mathrm{part}}=-i/\delta$, because
\[
p\left(-\frac{i+1}{\delta}\right)-\left(-\frac{i}{\delta}\right)
+q\left(-\frac{i-1}{\delta}\right)
:=\frac{-p+q}{\delta}=-1
\]
using $p+q=1$.
For the associated homogeneous equation
\[
p\,u_{i+1}-u_i+q\,u_{i-1}=0,
\]
a trial solution $u_i=r^i$ leads to the characteristic equation
$p r^2-r+q=0$, whose two roots are $r=1$ and $r=q/p$.
Writing $\rho:=q/p\in(0,1)$, the general homogeneous solution is
$u_i=A+B\rho^i$, hence every solution to \eqref{eq:inhom2} has the form
\begin{equation}
\label{eq:general2}
m_i = A + B\rho^i - \frac{i}{\delta}.
\end{equation}

Imposing $m_N=0$ in \eqref{eq:general2} yields
\[
0 = A + B\rho^N - \frac{N}{\delta}
\quad\Longrightarrow\quad
A = \frac{N}{\delta} - B\rho^N.
\]
Next, \eqref{eq:boundary2} gives $m_0=1+m_1$. From \eqref{eq:general2},
\[
m_0=A+B,\qquad m_1=A+B\rho-\frac{1}{\delta},
\]
so
\[
A+B = 1 + \left(A+B\rho-\frac{1}{\delta}\right)
\quad\Longrightarrow\quad
B(1-\rho)=1-\frac{1}{\delta}.
\]
Now $1-\rho=1-q/p=(p-q)/p=\delta/p$ and, since $p+q=1$, we have
$1-\frac{1}{\delta}=\frac{\delta-1}{\delta}=\frac{(p-q)-(p+q)}{\delta}
=-\frac{2q}{\delta}$. Therefore
\[
B\cdot \frac{\delta}{p} = -\frac{2q}{\delta}
\quad\Longrightarrow\quad
B=-\frac{2pq}{\delta^2}.
\]
Substituting back gives
\[
A=\frac{N}{\delta}+\frac{2pq}{\delta^2}\rho^N,
\]
and hence for $k\in\{0,1,\dots,N\}$,
\[
m_k
:=\frac{N-k}{\delta}+\frac{2pq}{\delta^2}\bigl(\rho^N-\rho^k\bigr).
\]
This is the stated closed form after recalling $\delta=p-q$ and $\rho=q/p$.

To see that this solution is the unique solution to
\eqref{eq:recurrence2}--\eqref{eq:boundary2}, suppose $\tilde m$ is another
solution and let $d_i:=m_i-\tilde m_i$. Then $d_N=0$, $d_0=d_1$, and $d$ solves
the homogeneous recurrence $p d_{i+1}-d_i+q d_{i-1}=0$, so $d_i=A'+B'\rho^i$.
The condition $d_0=d_1$ implies $A'+B'=A'+B'\rho$, hence $B'(1-\rho)=0$ and
therefore $B'=0$, and then $d_N=0$ forces $A'=0$, so $d\equiv 0$.

Finally, since $\rho\in(0,1)$ and $k\le N$, we have $\rho^N\le \rho^k$ and thus
$\rho^N-\rho^k\le 0$. Because $2pq/\delta^2\ge 0$, the correction term
$\frac{2pq}{\delta^2}(\rho^N-\rho^k)$ is nonpositive, and therefore
\[
\mathbb E_k[T_N]=m_k \le \frac{N-k}{\delta}=\frac{N-k}{p-q}.
\]
\end{proof}

\begin{lemma}[Absorption time tail bound]
\label{lem:hoeffding_tail_absorption}
Let $(X_t)_{t\ge 0}$ be a nearest--neighbor random walk with
\[
\Pr(X_{t+1}=X_t+1)=p,\qquad \Pr(X_{t+1}=X_t-1)=q=1-p,
\qquad p-q\ge c>0,
\]
run on a chain of length $L$ with the \emph{right endpoint absorbing}.  Assume the walk starts at distance
$k$ from the absorbing endpoint (i.e.\ $X_0=L-k$), where $k$ is a fixed constant independent of $L$.
Let
\[
\tau:=\inf\{t\ge 0:\ X_t=L\}
\]
be the absorption time at the right edge. Then for any $t\ge 1$,
\[
\Pr(\tau>t)
\;\le\;
\Pr\!\left(\sum_{i=1}^t \xi_i \le k-1\right),
\qquad
\xi_i\in\{+1,-1\},\ \ \E[\xi_i]=p-q=:\mu\ (\ge c),
\]
and by Hoeffding's inequality,
\[
\Pr(\tau>t)
\;\le\;
\exp\!\left(-\frac{(\mu t-(k-1))^2}{2t}\right)
\;\le\;
\exp\!\left(-\frac{(c t-(k-1))^2}{2t}\right).
\]
In particular, taking $t=\lfloor L^{1/4}\rfloor$ and using that $k$ is constant, for all $L$ large enough
(e.g.\ $L^{1/4}\ge 2(k-1)/c$),
\[
\Pr\!\big(\tau>\lfloor L^{1/4}\rfloor\big)
\;\le\;
\exp\!\left(-\frac{c^2}{8}\,L^{1/4}\right).
\]
\end{lemma}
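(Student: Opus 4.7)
\textbf{Proof plan for \cref{lem:hoeffding_tail_absorption}.} The plan is to first remove the absorbing boundary by a coupling to the unconstrained $\pm 1$ random walk, then apply Hoeffding's inequality to the i.i.d.\ increments, and finally substitute $t = \lfloor L^{1/4} \rfloor$.

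First, I would introduce the unconstrained walk $S_t := X_0 + \sum_{i=1}^{t}\xi_i$ with i.i.d.\ steps $\xi_i \in \{+1,-1\}$, $\Pr(\xi_i=+1)=p$, $\Pr(\xi_i=-1)=q$, coupled to $(X_t)$ so that $X_s = S_s$ for all $s \le \tau$. On the event $\{\tau > t\}$ we have $X_s < L$ for every $s \le t$, hence $S_t = X_t < L$. Rearranging and using integrality,
\[
\sum_{i=1}^{t}\xi_i \;=\; S_t - X_0 \;<\; L - (L-k) \;=\; k,
\qquad\text{so}\qquad \sum_{i=1}^{t}\xi_i \le k-1.
\]
This yields the first displayed inequality $\Pr(\tau > t) \le \Pr\!\big(\sum_{i=1}^t \xi_i \le k-1\big)$.

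Next, I would apply Hoeffding's inequality to the centered variables $Y_i := \xi_i - \mu$, which are i.i.d., bounded in an interval of length $2$, and have mean zero. The event $\sum \xi_i \le k-1$ equals $\sum Y_i \le -(\mu t - (k-1))$. Provided $\mu t \ge k-1$, Hoeffding gives
\[
\Pr\!\left(\sum_{i=1}^{t} Y_i \le -(\mu t - (k-1))\right) \;\le\; \exp\!\left(-\frac{2(\mu t - (k-1))^2}{t \cdot 2^2}\right) \;=\; \exp\!\left(-\frac{(\mu t - (k-1))^2}{2t}\right).
\]
Since $\mu \ge c > 0$ and the map $x \mapsto (x-(k-1))^2/(2t)$ is increasing for $x \ge k-1$, replacing $\mu t$ by its lower bound $ct$ yields the second displayed bound $\exp\!\big(-(ct-(k-1))^2/(2t)\big)$.

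Finally, I would specialize to $t = \lfloor L^{1/4} \rfloor$. The hypothesis $L^{1/4} \ge 2(k-1)/c$ (enlarging $L$ if needed so that the floor still satisfies $\lfloor L^{1/4}\rfloor \ge 2(k-1)/c$) ensures $ct \ge 2(k-1)$, hence $ct - (k-1) \ge ct/2$. Therefore
\[
\frac{(ct-(k-1))^2}{2t} \;\ge\; \frac{(ct/2)^2}{2t} \;=\; \frac{c^2 t}{8} \;=\; \frac{c^2}{8}\,\lfloor L^{1/4}\rfloor,
\]
giving the final bound $\Pr(\tau > \lfloor L^{1/4}\rfloor) \le \exp\!\big(-\tfrac{c^2}{8}L^{1/4}\big)$. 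The only mildly delicate point is tracking the Hoeffding range constant ($|\xi_i - \mu| \le 2$ rather than $1$), and verifying that the drift-term condition $\mu t \ge k-1$ is implied by the stated largeness assumption on $L$; neither is a real obstacle.
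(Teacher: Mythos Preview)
Your proposal is correct and follows essentially the same approach as the paper: reduce the event $\{\tau>t\}$ to a deviation event for the i.i.d.\ increment sum, apply Hoeffding, and substitute $t=\lfloor L^{1/4}\rfloor$. Your version is slightly more explicit about the coupling to the unconstrained walk and the Hoeffding range constant, but the argument is otherwise identical.
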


\begin{proof}
Write \(X_t=X_0+\sum_{i=1}^t \xi_i\) with i.i.d.\ increments \(\xi_i\in\{+1,-1\}\) of mean \(\mu=p-q\ge c\).
If \(\tau>t\) then \(X_t\le L-1\), hence \(\sum_{i=1}^t \xi_i=X_t-X_0\le (L-1)-(L-k)=k-1\), proving the first inequality.
Hoeffding's inequality for bounded i.i.d.\ variables yields
\[
\Pr\!\left(\sum_{i=1}^t \xi_i - \mu t \le -( \mu t-(k-1))\right)
\le \exp\!\left(-\frac{(\mu t-(k-1))^2}{2t}\right),
\]
and \(\mu\ge c\) gives the stated simplification. For \(t=\lfloor L^{1/4}\rfloor\) and \(L^{1/4}\ge 2(k-1)/c\),
we have \(\mu t-(k-1)\ge ct/2\), giving the final display.
\end{proof}

\begin{lemma}[Dominated change under one--sided derivative domination]
\label{lem:dominated-change-deriv}
Let $x,y:[0,T]\to\R$ be differentiable functions with initial values
$x(0)=a$ and $y(0)=b$. Assume that for every $t\in[0,T]$,
\[
x'(t)\ge 0,\qquad y'(t)\le 0,
\]
and there exists a constant $c>0$ such that
\[
x'(t)\ \ge\ c\bigl(-y'(t)\bigr).
\]
Then for every $t\in[0,T]$,
\[
x(t)-a \ \ge\ c\bigl(b-y(t)\bigr),
\]
equivalently,
\[
x(t)+c\,y(t)\ \ge\ a+c\,b.
\]
\end{lemma}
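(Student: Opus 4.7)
The plan is to establish the inequality by introducing the auxiliary function $h(t) := x(t) + c\,y(t)$ and showing it is non-decreasing on $[0,T]$. Since both sides of the desired inequality $x(t)+c\,y(t) \ge a+c\,b$ are just $h(t)$ and $h(0)$, monotonicity of $h$ immediately gives the conclusion.

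First I would note that $h$ is differentiable on $[0,T]$ as a linear combination of differentiable functions, with derivative
\[
h'(t) \;=\; x'(t) + c\,y'(t).
\]
The hypothesis $x'(t)\ge c(-y'(t)) = -c\,y'(t)$ immediately rearranges to $x'(t) + c\,y'(t) \ge 0$, so $h'(t)\ge 0$ for every $t\in[0,T]$. Hence $h$ is non-decreasing, so
\[
h(t) \;\ge\; h(0) \;=\; a + c\,b
\]
for all $t\in[0,T]$, which is the second (equivalent) form of the conclusion. Rearranging $x(t) + c\,y(t) \ge a + c\,b$ gives $x(t) - a \ge c(b - y(t))$, which is the first form.

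Strictly speaking, the argument above uses only $x'(t)+c\,y'(t)\ge 0$; the individual sign assumptions $x'(t)\ge 0$ and $y'(t)\le 0$ are not needed for this calculus step (they are contextual, matching the intended application to gradient flow on the parameters $\alpha,\gamma$). There is no substantive obstacle: the only care required is citing a standard mean value / monotonicity fact that a differentiable function with non-negative derivative on an interval is non-decreasing there, which justifies passing from $h'\ge 0$ on $[0,T]$ to $h(t)\ge h(0)$.
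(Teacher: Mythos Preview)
Your proof is correct and essentially identical to the paper's: both define the auxiliary function $F(t)=x(t)+c\,y(t)$, note $F'(t)=x'(t)+c\,y'(t)\ge 0$ by hypothesis, conclude $F$ is non-decreasing, and rearrange $F(t)\ge F(0)=a+cb$. Your observation that the individual sign conditions on $x'$ and $y'$ are not needed for the argument is also accurate.
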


\begin{proof}
Define $F(t):=x(t)+c\,y(t)$. Then $F$ is differentiable and
\[
F'(t)=x'(t)+c\,y'(t)\ \ge\ 0
\qquad\text{for all }t\in[0,T],
\]
by the assumed inequality $x'(t)\ge c(-y'(t))$. Hence $F$ is nondecreasing on $[0,T]$,
so
\[
x(t)+c\,y(t)=F(t)\ \ge\ F(0)=x(0)+c\,y(0)=a+c\,b.
\]
Rearranging gives $x(t)-a\ge c(b-y(t))$, as claimed.
\end{proof}
\begin{lemma}
    \label{lem:soft_max_ode_scaled_shifted}
    Fix $C>0$, $K>0$, $\varepsilon\in(0,1)$, $d>0$, and $D\in\mathbb{R}$.  
    Let $\alpha\in C^{1}([0,\infty))$ satisfy
    \[
    \dot{\alpha}(t) > K \qquad \text{for all } t\ge 0,
    \]
    and define
    \[
    g(t) := \frac{e^{\alpha(t)-D}}{e^{\alpha(t)-D}+C}\in(0,1).
    \]
    Let $f:[0,\infty)\to(0,1)$ be continuous and assume
    \[
    f(0)=g(0)
    \qquad\text{and}\qquad
    f(t)>g(t)\ \ \text{for all } t>0.
    \]
    If $f(0)<1-\varepsilon$, then there exists $T_\varepsilon\in(0,\bar T_\varepsilon)$ such that
    \[
    f(T_\varepsilon)=1-\varepsilon,
    \]
    where
    \[
    \bar T_\varepsilon := \frac{1}{K}\log\!\left(\frac{1-\varepsilon}{\varepsilon}\cdot\frac{1-f(0)}{f(0)}\right).
    \]
    Moreover,
    \[
    \alpha(T_\varepsilon)-\alpha(0)\;<\;\log\!\left(\frac{1-\varepsilon}{\varepsilon}\cdot\frac{1-f(0)}{f(0)}\right).
    \]
    \end{lemma}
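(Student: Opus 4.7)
My plan is to invert the logistic relation to express $g(t) = 1-\varepsilon$ as an explicit affine condition on $\alpha(t)$, and then use strict lower-bounded growth of $\alpha$ to force $g$ (and hence $f$) past the threshold by time $\bar T_\varepsilon$, at which point the intermediate value theorem yields the crossing time $T_\varepsilon$. The upper bound on $\alpha(T_\varepsilon)-\alpha(0)$ will follow by reading the logistic relation in the opposite direction: at the crossing time, $g$ is still strictly below $f=1-\varepsilon$, which caps how far $\alpha$ can have grown.

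Concretely, set $\Delta := \log\!\left(\tfrac{1-\varepsilon}{\varepsilon}\cdot\tfrac{1-f(0)}{f(0)}\right)$, so that $\bar T_\varepsilon = \Delta/K$. Since $g(0)=f(0)\in(0,1)$ and $g$ is the logistic of $\alpha-D$, I would first observe the two algebraic identities
\[
\alpha(0)-D=\log\!\frac{f(0)}{1-f(0)},\qquad g(t)=1-\varepsilon\ \Longleftrightarrow\ \alpha(t)-D=\log\!\frac{1-\varepsilon}{\varepsilon},
\]
so $g(t)\ge 1-\varepsilon$ is equivalent to $\alpha(t)-\alpha(0)\ge \Delta$. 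Integrating the hypothesis $\dot\alpha>K$ from $0$ to $\bar T_\varepsilon$ yields $\alpha(\bar T_\varepsilon)-\alpha(0)>K\bar T_\varepsilon=\Delta$, hence $g(\bar T_\varepsilon)>1-\varepsilon$, and by the strict dominance $f(\bar T_\varepsilon)>g(\bar T_\varepsilon)>1-\varepsilon$. Since $f$ is continuous on $[0,\bar T_\varepsilon]$ with $f(0)<1-\varepsilon<f(\bar T_\varepsilon)$, the intermediate value theorem gives some $T_\varepsilon\in(0,\bar T_\varepsilon)$ with $f(T_\varepsilon)=1-\varepsilon$.

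For the second claim, I would use that $T_\varepsilon>0$, so the strict hypothesis gives $g(T_\varepsilon)<f(T_\varepsilon)=1-\varepsilon$. Translating back through the logistic identity, $\alpha(T_\varepsilon)-D<\log\!\tfrac{1-\varepsilon}{\varepsilon}$, and subtracting $\alpha(0)-D=\log\!\tfrac{f(0)}{1-f(0)}$ yields exactly $\alpha(T_\varepsilon)-\alpha(0)<\Delta$, which is the stated bound.

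There is no serious obstacle here: the lemma is essentially a sandwich between a driving logistic $g$ (whose growth is explicitly controlled by the linear lower bound on $\alpha$) and a dominating curve $f$. The only care needed is to ensure (i) strict inequality at $t=\bar T_\varepsilon$ so that $T_\varepsilon<\bar T_\varepsilon$ (guaranteed by $\dot\alpha>K$ rather than $\ge K$ and by $f>g$ for $t>0$), and (ii) that the crossing time is positive so that the strict dominance $f>g$ applies when deriving the upper bound on $\alpha(T_\varepsilon)-\alpha(0)$ (guaranteed by $f(0)<1-\varepsilon$).
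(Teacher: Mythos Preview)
Your proposal is correct and follows essentially the same approach as the paper's proof: invert the logistic to obtain an affine threshold on $\alpha$, use the strict linear growth lower bound together with the intermediate value theorem to produce $T_\varepsilon\in(0,\bar T_\varepsilon)$, then read off the bound on $\alpha(T_\varepsilon)-\alpha(0)$ from $g(T_\varepsilon)<f(T_\varepsilon)=1-\varepsilon$. One small slip: your identities $\alpha(0)-D=\log\frac{f(0)}{1-f(0)}$ and $\alpha(t)-D=\log\frac{1-\varepsilon}{\varepsilon}$ each omit an additive $\log C$ (the correct expressions are $\log\!\big(C\frac{f(0)}{1-f(0)}\big)$ and $\log\!\big(C\frac{1-\varepsilon}{\varepsilon}\big)$), but since this term cancels in the difference $\alpha(t)-\alpha(0)$ the argument is unaffected.
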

    
    \begin{proof}
    Set
    \[
    \Delta_\varepsilon:=\log\!\left(\frac{1-\varepsilon}{\varepsilon}\cdot\frac{1-f(0)}{f(0)}\right),
    \qquad
    \bar T_\varepsilon:=\frac{\Delta_\varepsilon}{dK}.
    \]
    From $f(0)=g(0)=\frac{e^{\alpha(0)-D}}{e^{\alpha(0)-D}+C}$ we obtain
    \[
    e^{\alpha(0)-D}=C\frac{f(0)}{1-f(0)},
    \qquad\text{hence}\qquad
    \alpha(0)-D=\log\!\left(C\frac{f(0)}{1-f(0)}\right).
    \]
    Consequently,
    \[
    (\alpha(0)-D)+\Delta_\varepsilon
    =\log\!\left(C\frac{f(0)}{1-f(0)}\right)
    +\log\!\left(\frac{1-\varepsilon}{\varepsilon}\cdot\frac{1-f(0)}{f(0)}\right)
    =\log\!\left(C\frac{1-\varepsilon}{\varepsilon}\right).
    \]
    
    Since $\dot\alpha(t)>K$ for all $t\ge 0$, integration yields
    \[
    \alpha(t)-\alpha(0)=\int_0^t \dot\alpha(s)\,ds > Kt \qquad (t>0).
    \]
    Subtracting $D$ gives
    \[
    \alpha(t)-D > (\alpha(0)-D)+Kt \qquad (t>0).
    \]
    In particular,
    \[
    \alpha(\bar T_\varepsilon)-D
    >(\alpha(0)-D)+K\bar T_\varepsilon
    =(\alpha(0)-D)+\Delta_\varepsilon
    =\log\!\left(C\frac{1-\varepsilon}{\varepsilon}\right).
    \]
    The map $x\mapsto \frac{e^{x}}{e^{x}+C}$ is strictly increasing, hence
    \[
    g(\bar T_\varepsilon)
    >\frac{e^{\log(C(1-\varepsilon)/\varepsilon)}}{e^{\log(C(1-\varepsilon)/\varepsilon)}+C}
    =1-\varepsilon.
    \]
    By assumption $f(t)>g(t)$ for $t>0$, thus $f(\bar T_\varepsilon)>g(\bar T_\varepsilon)>1-\varepsilon$, while $f(0)<1-\varepsilon$.
    By continuity of $f$, the intermediate value theorem implies the existence of
    $T_\varepsilon\in(0,\bar T_\varepsilon)$ such that $f(T_\varepsilon)=1-\varepsilon$.
    
    Finally, since $T_\varepsilon>0$ and $f(T_\varepsilon)>g(T_\varepsilon)$, we have $g(T_\varepsilon)<1-\varepsilon$.
    By monotonicity of $x\mapsto \frac{e^{x}}{e^{x}+C}$ it follows that
    \[
    \alpha(T_\varepsilon)-D < \log\!\left(C\frac{1-\varepsilon}{\varepsilon}\right).
    \]
    Subtracting $\alpha(0)-D=\log\!\left(C\frac{f(0)}{1-f(0)}\right)$ yields
    \[
    \alpha(T_\varepsilon)-\alpha(0)
    <
    \log\!\left(C\frac{1-\varepsilon}{\varepsilon}\right)
    -\log\!\left(C\frac{f(0)}{1-f(0)}\right)
    =
    \log\!\left(\frac{1-\varepsilon}{\varepsilon}\cdot\frac{1-f(0)}{f(0)}\right).
    \]
    \end{proof}

\begin{lemma}[Logit computation]
\label{lem:dyn:logits_cases_canonical}
Fix $n \ge 3$. Consider an input sequence $s_{1:L}$ sampled autoregressively as described in \cref{sec:task}, such that the prefix $s_{1:2n-2}$ contains the multiset of edges $G$ of two disjoint directed chains $\chaina=(a_1,\ldots,a_n)$ and $\chainb=(b_1,\ldots,b_n)$, and all subsequent tokens $s_{2n-1}, \ldots, s_L$ are vertices in $\chaina \cup \chainb$ (in particular, the last token $v = s_L$ is a vertex).

Assume the transformer's parameters satisfy:
\begin{enumerate}
    \item The attention matrix has the symmetric form $A=A(\alpha,\beta,\gamma)$ on the vertex--edge block as in \cref{lem:symmetry_reduction} and $\gamma$ on all vertex--vertex blocks.
    \item The value matrix has the following form: there exist $\alpha',\beta',\gamma'\in\R$ such that for every edge token $(u,w)\in\edgevocab$ and any output token $k\in\vocab$,
\[
V_{(u,w),k}=
\begin{cases}
\alpha' & \text{if } k=u,\\
\beta' & \text{if } k=w,\\
\gamma' & \text{otherwise},
\end{cases}
\qquad\text{and}\qquad
V_{v,:}=0\ \text{for all vertex tokens }v\in\vset.
\]
\end{enumerate}

For Softmax Attention, let $Z_v = \sum_{j=1}^L \exp(A_{s_L,s_j})$ be the partition function.
We define the weight function $w(\mu)$ as:
\[
w(\mu) = \begin{cases}
\mu & \text{for Linear Attention}, \\
e^\mu/Z_v & \text{for Softmax Attention}.
\end{cases}
\]
Note that for any token $x$, the attention weight from $v$ to $x$ is given by $w(A_{v,x})$.

Then the logits $H^{(1)}_{L, :}$ depend on $v$, $G$, and in the softmax case, the partition function $Z_v$ (we denote them by $o(v, G, Z_v)$). In the case of Linear Attention, the dependence on $Z_v$ is absent, and we simply write $o(v, G)$. Furthermore, $o(v,G, Z_v)_k = \Psi_k + C$, where $C$ is a constant independent of $k$, we define $\Delta_\alpha \coloneqq \alpha'-\gamma'$, $\Delta_\beta \coloneqq \beta'-\gamma'$, and $\Psi_k$ is given by the following cases (and symmetrically when swapping $\chaina$ and $\chainb$).

\paragraph*{Case 1: $v$ is a middle vertex.}
If $v=a_i$ for some $i\in\{3,\ldots,n-2\}$ (and symmetrically if $v=b_i$), then:
\[
\Psi_k =
\begin{cases}
\Delta_\alpha w(\gamma) + \Delta_\beta w(\alpha), & k=a_{i+1},\\
\Delta_\alpha w(\beta) + \Delta_\beta w(\gamma), & k=a_{i-1},\\
\Delta_\alpha w(\alpha) + \Delta_\beta w(\beta), & k=a_i,\\
\Delta_\alpha w(\gamma), & k\in\{a_1,b_1\},\\
\Delta_\beta w(\gamma), & k\in\{a_n,b_n\},\\
(\Delta_\alpha + \Delta_\beta)w(\gamma), & k\in(\chaina\cup\chainb)\setminus\{a_{i-1},a_i,a_{i+1},a_1,a_n,b_1,b_n\},\\
0, & k\in\edgevocab.
\end{cases}
\]

\paragraph*{Case 2: $v$ is the penultimate vertex.}
If $v=a_{n-1}$ (and symmetrically if $v=b_{n-1}$), then:
\[
\Psi_k =
\begin{cases}
\Delta_\beta w(\alpha), & k=a_n,\\
\Delta_\alpha w(\beta) + \Delta_\beta w(\gamma), & k=a_{n-2},\\
\Delta_\alpha w(\alpha) + \Delta_\beta w(\beta), & k=a_{n-1},\\
\Delta_\alpha w(\gamma), & k\in\{a_1,b_1\},\\
\Delta_\beta w(\gamma), & k\in\{b_n\},\\
(\Delta_\alpha + \Delta_\beta)w(\gamma), & k\in(\chaina\cup\chainb)\setminus\{a_{n-2},a_{n-1},a_n,a_1,b_1,b_n\},\\
0, & k\in\edgevocab.
\end{cases}
\]

\paragraph*{Case 3: $v$ is the second vertex (and not penultimate).}
If $m\ge 4$ and $v=a_2$ (and symmetrically if $v=b_2$), then:
\[
\Psi_k =
\begin{cases}
\Delta_\alpha w(\gamma) + \Delta_\beta w(\alpha), & k=a_3,\\
\Delta_\alpha w(\beta), & k=a_1,\\
\Delta_\alpha w(\alpha) + \Delta_\beta w(\beta), & k=a_2,\\
\Delta_\beta w(\gamma), & k\in\{a_n,b_n\},\\
\Delta_\alpha w(\gamma), & k\in\{b_1\},\\
(\Delta_\alpha + \Delta_\beta)w(\gamma), & k\in(\chaina\cup\chainb)\setminus\{a_{1},a_{2},a_{3},a_n,b_1,b_n\},\\
0, & k\in\edgevocab.
\end{cases}
\]

\paragraph*{Case 4: $v$ is the first vertex (and not penultimate).}
If $m\ge 3$ and $v=a_1$ (and symmetrically if $v=b_1$), then:
\[
\Psi_k =
\begin{cases}
\Delta_\alpha w(\gamma) + \Delta_\beta w(\alpha), & k=a_2,\\
\Delta_\alpha w(\alpha), & k=a_1,\\
\Delta_\beta w(\gamma), & k\in\{a_n,b_n\},\\
\Delta_\alpha w(\gamma), & k\in\{b_1\},\\
(\Delta_\alpha + \Delta_\beta)w(\gamma), & k\in(\chaina\cup\chainb)\setminus\{a_{1},a_{2},a_n,b_1,b_n\},\\
0, & k\in\edgevocab.
\end{cases}
\]

\paragraph*{Case 5: $v$ is a terminal vertex.}
If $v=a_n$ (and symmetrically if $v=b_n$), then:
\[
\Psi_k =
\begin{cases}
\Delta_\beta w(\beta), & k=a_n,\\
\Delta_\alpha w(\beta) + \Delta_\beta w(\gamma), & k=a_{n-1},\\
\Delta_\alpha w(\gamma), & k\in\{a_1,b_1\},\\
\Delta_\beta w(\gamma), & k\in\{b_n\},\\
(\Delta_\alpha + \Delta_\beta)w(\gamma), & k\in(\chaina\cup\chainb)\setminus\{a_{n-1},a_n,a_1,b_1,b_n\},\\
0, & k\in\edgevocab.
\end{cases}
\]
\end{lemma}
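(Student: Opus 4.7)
The plan is to directly expand the single-layer transformer output formula using the one-hot structure of embeddings, then reduce the computation to a bookkeeping argument over the edges of $G$ and the structure of the value matrix.

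First I would expand the layer equation \eqref{eq:transformer_layer}. Since $H^{(0)}$ has one-hot rows, the attention score from position $L$ to position $j$ reduces to the single entry $A_{s_L, s_j} = A_{v, s_j}$, and after applying the (masked) $\sigma$ the effective weight on position $j$ becomes exactly $w(A_{v, s_j})$ with the $w$ defined in the statement (in the softmax case the denominator is $Z_v$; in the linear case $w$ is the identity). Composing with $H^{(0)} V$, the logit at output token $k$ becomes $o(v, G, Z_v)_k = \sum_{j=1}^{L} w(A_{v, s_j}) \cdot V_{s_j, k}$. Because $V_{s,:} = 0$ for every vertex token $s \in \vset$, the sum collapses to a sum over the $2n-2$ edge positions, i.e., over the edge multiset $G$.

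Next I would perform the key algebraic reduction: for every edge token $(u,w)$ and output $k$, write $V_{(u,w),k} = \gamma' + \Delta_\alpha \mathbbm{1}\{k = u\} + \Delta_\beta \mathbbm{1}\{k = w\}$, where $\Delta_\alpha = \alpha' - \gamma'$, $\Delta_\beta = \beta' - \gamma'$. Substituting and pulling out the $k$-independent quantity $C := \gamma' \sum_{(u,w) \in G} w(A_{v,(u,w)})$ (which is the constant absorbed into the statement), we obtain
\[
\Psi_k \;=\; \Delta_\alpha \!\!\!\sum_{(u,w) \in G:\, u=k}\!\!\! w(A_{v,(u,w)}) \;+\; \Delta_\beta \!\!\!\sum_{(u,w) \in G:\, w=k}\!\!\! w(A_{v,(u,w)}).
\]
Finally, invoke the symmetric form of $A$ to replace $A_{v,(u,w)}$ by $\alpha$, $\beta$, or $\gamma$ depending on whether $v = u$, $v = w$, or $v \notin \{u, w\}$.

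The remaining work is purely structural. Each vertex $k \in \chaina \cup \chainb$ has at most one outgoing edge in $G$ (contributing the $\Delta_\alpha$ term) and at most one incoming edge (contributing the $\Delta_\beta$ term); a vertex that is the terminal of its chain has no outgoing edge, and a vertex that is the start of its chain has no incoming edge. For output tokens $k \in \edgevocab$ both indicators vanish (since $u, w$ are vertices while $k$ is an edge), so $\Psi_k = 0$. This reduces the entire lemma to enumerating, for each of the five cases of $v$ listed in the statement (middle, penultimate, second, first, terminal), the handful of vertices $k$ for which the outgoing or incoming edge of $k$ is incident to $v$ itself (hence contributing $w(\alpha)$ or $w(\beta)$ rather than $w(\gamma)$), together with the boundary vertices $\{a_1, b_1, a_n, b_n\}$ that are missing either an incoming or outgoing edge. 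I would handle the cases in the order presented, using Case 1 as the template and treating the remaining four as perturbations where one of $v$'s neighbors is missing or coincides with a global boundary.

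The main obstacle is not conceptual but combinatorial: verifying that the boundary cases (Cases 2--5) are handled without double-counting the exceptional vertices $\{a_1, b_1, a_n, b_n\}$. Specifically, when $v$ is itself near a boundary (e.g., $v = a_2$ or $v = a_{n-1}$), the exceptional set in Case 1 overlaps with $v$'s neighborhood, and one must be careful that the ``generic middle'' rule $\Psi_k = (\Delta_\alpha + \Delta_\beta) w(\gamma)$ is correctly overridden for those overlapping indices. I would enforce this by explicitly listing, in each case, the set of vertices $k$ with non-generic contributions and checking that the displayed formulas cover exactly those indices.
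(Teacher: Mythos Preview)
Your proposal is correct and follows essentially the same approach as the paper: both expand the attention output as a weighted sum of value rows, drop vertex-token contributions, decompose $V_{(u,w),k}$ as $\gamma' + \Delta_\alpha \mathbbm{1}\{k=u\} + \Delta_\beta \mathbbm{1}\{k=w\}$ to extract the constant $C$, and then reduce the computation of $\Psi_k$ to identifying the (at most one) outgoing and incoming edge of $k$ in $G$ and looking up the corresponding attention entry $\alpha$, $\beta$, or $\gamma$. The paper's proof packages the two sums as $W_{out}(v,k)$ and $W_{in}(v,k)$ but the algebra and the subsequent case analysis are identical to what you outline.
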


\begin{proof}
Let $s_{1:L}$ denote the input sequence with $s_L = v$.
From the transformer architecture (\cref{sec:setup:trans}), the output logits at position $L$ are given by the attention-weighted sum of value vectors. Specifically:
\[
    H^{(1)}_{L,:} = \sum_{j=1}^L \text{weight}(s_L, s_j) V_{s_j, :},
\]
where $\text{weight}(s_L, s_j)$ is the attention weight of token $s_j$ when attending from $s_L$.
Using our definition of $w(\mu)$ and the attention matrix entries $A_{s_L, s_j}$, this weight is exactly $w(A_{s_L, s_j})$.

By our assumption on the value matrix, $V_{s_j, :} = 0$ whenever $s_j$ is a vertex token.
Since the sequence $s_{1:L}$ consists of edge tokens (the set $G$) and vertex tokens, the sum over $j=1,\ldots,L$ restricts to the indices corresponding to edges in $G$:
\[
    H^{(1)}_{L,:} = \sum_{e \in G} w(A_{v,e}) V_{e,:}.
\]
This expression confirms that the logits depend on $v$, $G$, and implicitly on the context via $Z_v$ (in the Softmax case), justifying the notation $o(v, G, Z_v)$.

For a specific output token $k \in \vocab$, we have:
\[
    o(v, G, Z_v)_k = \sum_{e \in G} w(A_{v,e}) V_{e,k}.
\]
Using the symmetric structure of $V$ where $V_{e,k} = \alpha'$ if $k$ is the source of $e$, $\beta'$ if $k$ is the target of $e$, and $\gamma'$ otherwise (including when $k$ is an edge token), we substitute:
\begin{align*}
    o(v,G, Z_v)_k &= \sum_{e=(u,w) \in G} w(A_{v,e}) \left( \alpha' \mathbbm{1}\{k=u\} + \beta' \mathbbm{1}\{k=w\} + \gamma' \mathbbm{1}\{k \notin \{u,w\}\} \right) \\
    &= \alpha' \sum_{e \in G, \text{src}(e)=k} w(A_{v,e}) + \beta' \sum_{e \in G, \text{tgt}(e)=k} w(A_{v,e}) + \gamma' \sum_{e \in G, k \notin e} w(A_{v,e}).
\end{align*}
Let $W_{out}(v,k) = \sum_{e \in G, \text{src}(e)=k} w(A_{v,e})$ be the total attention weight on edges outgoing from $k$, and $W_{in}(v,k) = \sum_{e \in G, \text{tgt}(e)=k} w(A_{v,e})$ be the total weight on edges incoming to $k$.
The total weight is $W_{total}(v) = \sum_{e \in G} w(A_{v,e})$.
We can rewrite the third term using the identity $\mathbbm{1}\{k \notin e\} = 1 - \mathbbm{1}\{k=\text{src}(e)\} - \mathbbm{1}\{k=\text{tgt}(e)\}$:
\begin{align*}
    \gamma' \sum_{e \in G, k \notin e} w(A_{v,e}) &= \gamma' \left( W_{total}(v) - W_{out}(v,k) - W_{in}(v,k) \right).
\end{align*}
Substituting this back:
\begin{align*}
    o(v,G, Z_v)_k &= \alpha' W_{out}(v,k) + \beta' W_{in}(v,k) + \gamma' \left( W_{total}(v) - W_{out}(v,k) - W_{in}(v,k) \right) \\
    &= (\alpha' - \gamma') W_{out}(v,k) + (\beta' - \gamma') W_{in}(v,k) + \gamma' W_{total}(v).
\end{align*}
Defining $\Delta_\alpha = \alpha' - \gamma'$, $\Delta_\beta = \beta' - \gamma'$, and $C = \gamma' W_{total}(v)$ (which is independent of $k$), we get:
\[
    o(v,G, Z_v)_k = \Delta_\alpha W_{out}(v,k) + \Delta_\beta W_{in}(v,k) + C.
\]
We define $\Psi_k = \Delta_\alpha W_{out}(v,k) + \Delta_\beta W_{in}(v,k)$.
It remains to compute $W_{out}(v,k)$ and $W_{in}(v,k)$ in each of the configurations listed in the lemma statement. Since $G$ is the disjoint union of two directed chains, each $k\in(\chaina\cup\chainb)$ has at most one outgoing edge $(k,\mathrm{succ}(k))$ (absent iff $k\in\{a_n,b_n\}$) and at most one incoming edge $(\mathrm{pred}(k),k)$ (absent iff $k\in\{a_1,b_1\}$). For such edges, by the assumed form of $A(\alpha,\beta,\gamma)$ on the vertex--edge block,
\[
A_{v,(k,\mathrm{succ}(k))} =
\begin{cases}
\alpha, & v=k,\\
\beta, & v=\mathrm{succ}(k),\\
\gamma, & \text{otherwise,}
\end{cases}
\qquad
A_{v,(\mathrm{pred}(k),k)} =
\begin{cases}
\alpha, & v=\mathrm{pred}(k),\\
\beta, & v=k,\\
\gamma, & \text{otherwise.}
\end{cases}
\]
Applying $w(\cdot)$ and substituting into $\Psi_k=\Delta_\alpha W_{out}(v,k)+\Delta_\beta W_{in}(v,k)$ gives exactly the case-by-case expressions stated in the lemma (with the $\chainb$ cases obtained by symmetry).
\end{proof}

	\section{Additional Experiments and Experimental Details}
\label{app:exp_details}

\subsection{Theoretically Inspired Setting}
\label{sec:exp:theo}

We corroborate our theoretical findings from \cref{sec:how_learn,sec:dyn} in the chain identification task of \cref{sec:task}.
Each example is specified by a dependency graph consisting of two directed chains, together with a starting vertex and a queried terminal vertex.
The model must output a chain of intermediate variables that identifies the correct dependency path.
We train single-layer linear Transformers, as defined in \cref{sec:setup}, without the simplifying assumptions from \cref{sec:dyn:setting}: the attention and value matrices are trained jointly with policy gradient, implemented using REINFORCE~\cite{Williams2004SimpleSG}, from a small Gaussian initialization. Full experimental details and hyperparameters are provided in \cref{app:exp_theo_details}.

\begin{table}[t]
\caption{In line with our theory (\cref{sec:dyn}), on the chain identification task (\cref{sec:task}), a Transformer trained with simple examples achieves high accuracy through the efficient chain traversal (reasoning) algorithm, thereby demonstrating an implicit bias towards efficiency (\cref{sec:how_learn}).  
In this table, ``Test Acc.'' and ``Chain Trav.'' refer to the fraction of test examples that are solved accurately and using the chain traversal algorithm, respectively.  
For further details on this experiment see \cref{sec:exp:theo:main,app:exp_theo_details}.}
\label{tab:main_results}
\centering
\small
\begin{tabular}{lcc}
\toprule
Chain size \(n\) & Test Acc. & Chain Trav. \\
\midrule
4 & 100\% & 100\% \\
8 & 100\% & 99.3\% \\
12 & 99.3\% & 94.2\% \\
\bottomrule
\end{tabular}
\end{table}

\subsubsection{Emergence of Efficient Reasoning}
\label{sec:exp:theo:main}

\cref{tab:main_results} reports the results of an experiment where:
\emph{(i)}~the chain length $n$ takes a value in $\{ 4 , 8 , 12 \}$;
\emph{(ii)}~the training data distribution is drawn from a uniform distribution over $[ n - 1 ]$ (\cref{sec:setup:dist}), meaning in particular that it has simplicity mass greater than zero (\cref{def:tail_mass});
and
\emph{(iii)}~the test data distribution is identical to the training data distribution.
As can be seen in the table, in line with our theory (\cref{sec:dyn:traverse_chain}), the learned Transformers achieve extremely high test accuracy through the efficient chain traversal algorithm (\cref{sec:setup:traversal}), thereby demonstrating an implicit bias towards efficiency (recall that, as proven in \cref{sec:how_learn:reasoning_sufficient}, algorithms far less efficient than chain traversal also solve the chain identification task). 

\subsubsection{Out-of-Distribution Generalization}
\label{sec:exp:theo:gen}

To demonstrate our theoretical prediction of out-of-distribution generalization (\cref{sec:dyn:traverse_chain}), we conduct an experiment where:
\emph{(i)}~the chain length $n$ equals~$12$;
\emph{(ii)}~the training data distribution is drawn from a uniform distribution over $\{ n - 4 , n - 3 , n - 2 , n - 1 \}$ (\cref{sec:setup:dist}), meaning in particular that it has $4$-simplicity mass of one (\cref{def:tail_mass}), i.e., all of its examples require at most four reasoning steps;
and
\emph{(iii)}~the test data distribution is drawn from a uniform distribution over $\{ n - s , n - s + 1 , \ldots , n - 1 \}$, where $s \in \{ 4 , 8 , 11 \}$.
In this experiment, when $s = 4$ the test data distribution is identical to the training data distribution, whereas when $s$ is larger the test data distribution entails examples more complex than those seen in training.
The results of the experiment were perfect: under all test data distributions, the learned Transformer solved all test examples using the efficient chain traversal algorithm (\cref{sec:setup:traversal}).~\footnote{Interestingly, these results surpass those in \cref{sec:exp:theo:main}. This is another example (on top of others in \cref{sec:exp,sec:exp:theo}) where replacing in-distribution complex examples with simple examples may be more effective than training on the target distribution.}
This confirms that the Transformer indeed generalizes out of distribution, including to complex examples unseen during training.
For further details on the experiment see \cref{app:exp_theo_details}.

\subsubsection{Solving Complex Tasks Requires Training on Simple Tasks}
\label{sec:exp:theo:vanishing}

\cref{fig:vanishing} reports the results of an experiment where:
\emph{(i)}~the chain length $n$ equals~$5$;
\emph{(ii)}~the training data distribution is drawn from a uniform distribution over $[ k ]$ (\cref{sec:setup:dist}), where $k \in [ n - 1 ]$;
and
\emph{(iii)}~the test data distribution is drawn from a distribution that places all probability mass on~$1$.
In this experiment the test data distribution entails only the most complex examples (those that require the maximal number of reasoning steps), whereas the training data distribution entails only complex examples when $k$ is small, and a mix of complex and simple examples when $k$ is large.
As can be seen in the figure, in line with our theory (\cref{sec:dyn:traverse_chain,sec:dyn:vanishing_gradients}), the accuracy of the Transformer on complex test examples is higher when the training data includes more simple examples, despite such simple examples inducing a distribution shift.

\begin{figure}[t]
    \centering
    \includegraphics[width=0.9\columnwidth]{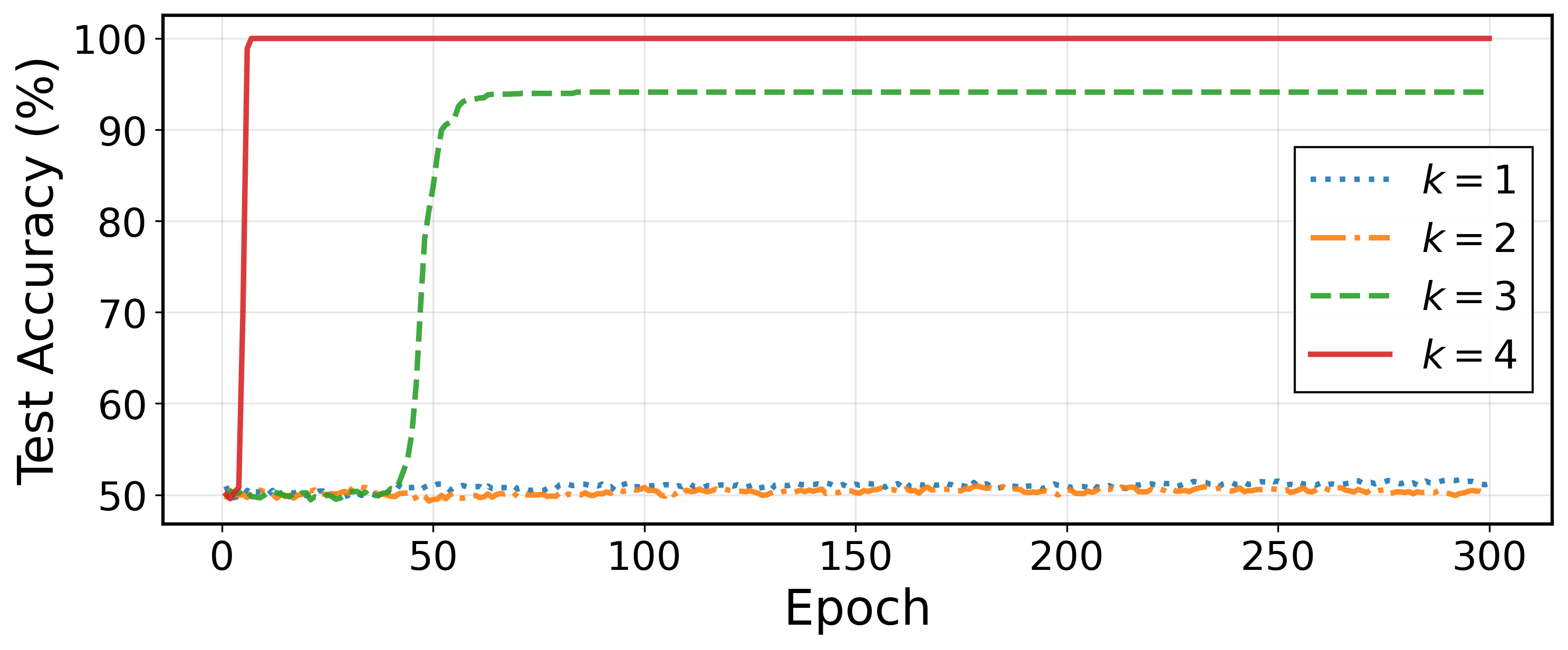}
    \caption{In line with our theory (\cref{sec:dyn}), on the chain identification task (\cref{sec:task}), the accuracy of a Transformer on complex test examples is higher when its training data includes simple examples, despite such simple examples inducing a distribution shift.
    This figure shows the accuracy on complex test examples throughout training, for different training data distributions: ones entailing only complex examples (small $k$), and ones entailing a mix of complex and simple examples (large $k$).
    For further details on this experiment see \cref{sec:exp:theo:vanishing,app:exp_theo_details}.}
    \label{fig:vanishing}
\end{figure}

\subsubsection{Experimental Details}
\label{app:exp_theo_details}
We use the following implementation throughout the theoretically inspired experiments.
We use a single-layer transformer with attention matrix \(A \in \R^{|\vocab| \times |\vocab|}\) and value matrix \(V \in \R^{|\vocab| \times |\vocab|}\), as defined in \cref{sec:setup:trans}.
We initialize both matrices with small random values sampled from a normal distribution with standard deviation \(10^{-8}\).
We optimize \(\L(\dist, \theta)\) using the REINFORCE algorithm~\cite{Williams2004SimpleSG} with the advantage function computed as the difference between the reward and the average reward across the batch.
We use the Adam optimizer with learning rate \(0.01\).
For each chain size, we train on \(1{,}000{,}000\) samples over \(300\) epochs with batch size \(50{,}000\), and validate on \(1{,}000\) samples.
During evaluation, we measure test accuracy and chain traversal correctness using greedy decoding (temperature \(0\)).
For experiments with a single chain size (see \cref{sec:exp:theo:main,sec:exp:theo:vanishing}), we set max new tokens to \(n + 1\) where \(n\) is the chain size; for generalization experiments across multiple chain sizes (see \cref{sec:exp:theo:gen}), we use \(\max(\text{test chain sizes}) + 1\).
All experiments were implemented using a single NVIDIA RTX A6000 GPU.

\subsection{Real-World Experiments}

All real-world experiments were implemented using eight NVIDIA RTX A6000 GPUs.

\label{app:exp_real_details}
\label{sec:exp:real}

\subsubsection{Synthetic Pre-Training Setting}
\label{app:exp_synthetic_pretraining}

This section provides full experimental context for the \emph{synthetic pre-training} setting of \cref{sec:exp} (Table~\ref{tab:exp_results}).
This setting follows the controlled reasoning framework of \citet{zhang2025interplaypretrainingmidtrainingrl}, and differs from the real-world pre-training setting in \cref{app:exp_realistic_pretraining} in one central respect: the base model has already been pre-trained on chain-of-thought traces from the same synthetic reasoning family.
It therefore tests our prediction in a setting where the model has task-specific reasoning primitives before RL, but must still generalize to examples requiring more reasoning steps than those encountered during pre-training.

\medskip
\noindent\textbf{Task and Complexity.}
\par\smallskip
We use the synthetic arithmetic reasoning task introduced by \citet{zhang2025interplaypretrainingmidtrainingrl}.
Each problem is generated from a directed dependency graph \(\mathcal{G}=(\mathcal{V},\mathcal{E})\), whose nodes represent intermediate quantities and whose directed edges represent arithmetic dependencies.
The graph is rendered into a natural-language math problem, and the model must produce a parseable chain-of-thought solution that computes the designated answer node.
Following \citet{zhang2025interplaypretrainingmidtrainingrl}, we measure the complexity of an instance by the number of arithmetic operations,
\[
\mathrm{op}(\mathcal{G}) = |\mathcal{E}|,
\]
which controls the amount of compositional reasoning required to solve the problem.

\medskip
\noindent\textbf{Base Model and Pre-Training.}
\par\smallskip
The base model is the same \(100\)M-parameter decoder-only Qwen2.5-style Transformer used by \citet{zhang2025interplaypretrainingmidtrainingrl}.
It is pre-trained for \(10\)B tokens on synthetic chain-of-thought data from the task family above, using their pre-training recipe: \(20\%\) examples with two to four arithmetic operations, \(30\%\) examples with five to seven arithmetic operations, and \(50\%\) examples with eight to ten arithmetic operations.
Thus, the model is exposed during pre-training to reasoning traces up to roughly ten operations.

\medskip
\noindent\textbf{Post-Training Distributions.}
\par\smallskip
We keep the architecture, optimizer, GRPO setup, prompt format, reward, and other hyperparameters identical to \citet{zhang2025interplaypretrainingmidtrainingrl}; the only aspect varied for our comparison is the post-training data distribution.
For a target complexity \(n\), \emph{Hard-\(n\)} denotes post-training only on examples with \(\mathrm{op}(\mathcal{G})=n\).
By contrast, \emph{Mixed-\(n\)} denotes a distribution that samples uniformly from operation counts between the edge of the pre-training range and the target complexity, i.e.\ \(\mathrm{op}(\mathcal{G}) \in \{10,11,\ldots,n\}\).

\medskip
\noindent\textbf{Evaluation.}
\par\smallskip
All reported synthetic pre-training results use the same  process-verified evaluation protocol as \citet{zhang2025interplaypretrainingmidtrainingrl}.
An output is counted as correct only if the parsed reasoning trace and the final answer are both correct.
We report \texttt{pass@128} accuracy on the corresponding \emph{Hard-\(n\)} test distribution; in Table~\ref{tab:exp_results}, this is \emph{Hard-\(20\)}.

\subsubsection{Real-World Pre-Training Setting}
\label{app:exp_realistic_pretraining}
\label{app:h2-real-world}

This section provides full experimental details for the \emph{real-world pre-training} setting of \cref{sec:exp}.
Unlike the synthetic pre-training setting (\cref{app:exp_synthetic_pretraining}), in which the base model was pre-trained on chain-of-thought traces of the specific reasoning task, here RL is applied directly to standard open-weight language models with broad language pre-training but \emph{no prior task-specific exposure}.

We fine-tune two such base models, Qwen2.5-3B-Instruct
\citep{qwen2025qwen25technicalreport} (\href{https://huggingface.co/Qwen/Qwen2.5-3B-Instruct}{Qwen/Qwen2.5-3B-Instruct})
and Llama-3.2-3B-Instruct
(\href{https://huggingface.co/meta-llama/Llama-3.2-3B-Instruct}{meta-llama/Llama-3.2-3B-Instruct}),
using GRPO \citep{shao2024deepseekmathpushinglimitsmathematical}.
The code builds on
\href{https://github.com/philschmid/deep-learning-pytorch-huggingface/blob/main/training/mini-deepseek-r1-aha-grpo.ipynb}{philschmid/deep-learning-pytorch-huggingface}
and is adapted to TRL~1.2 / Transformers~5.5.

This appendix provides comprehensive details for reproducing the real-world experiments. 
Section~\ref{app:h2-task} describes the mathematical reasoning task, including the equation 
system construction, graph generation procedure, complexity measures, and the mixed/hard data 
distributions used for training. Section~\ref{app:h2-training-setting} details the training 
procedure, including the architecture and reasoning framework, hyperparameter settings 
(Table~\ref{tab:h2-hparams}), and the prompt template used for both fine-tuning and evaluation. 
Finally, Section~\ref{app:h2-results} presents the results for both base models (Qwen and Llama), 
showing learning curves across different data distributions and providing concrete examples of 
model completions from the evaluation set.

\medskip
\noindent\textbf{Training Data.}\phantomsection\label{app:h2-task}
\par\smallskip
\textbf{Task description.}
We construct a mathematical reasoning task in which the input is presented as
natural-language text containing a \emph{shuffled} list of affine equations
over the binary field $\mathrm{GF}(2)$, of the form
\[
x_v \;=\; \Big(\!\!\sum_{u \in \mathrm{Pa}(v)}\! x_u \;+\; c_v\Big) \bmod 2,
\qquad c_v \in \{-1, 0, +1\},
\]
where \(\mathrm{Pa}(v)\) denotes the set of parent variables whose values are needed to compute \(x_v\).
Each input also contains one or more \emph{constant assignments} $x_r = c_r$
($c_r \in \{0,1\}$) for the root variables, and a query for a specified
\emph{target variable} $x_t$
(see \cref{fig:h2-example,fig:h2-llama-example} for examples of inputs and their
corresponding labels).
The dependency structure of the equation system forms a directed acyclic
graph (DAG) with up to two parents per non-root node ($\mathrm{in\text{-}deg} \le 2$).
The model is trained to compute the value of $x_t$, which is located at the
sink of the DAG.

\textbf{Complexity measure.}
For a given example, let the ancestor graph of the target be the subgraph containing all variables that can influence \(x_t\).
We define the example's complexity as the depth of this ancestor graph, i.e., the length of the longest directed path from a constant-assigned root variable to the target.

\textbf{Graph generation.}
We sample DAGs conditioned on this depth.
For target depth \(D-1\), we partition the variables into \(D\) ordered buckets and force one ``spine'' path to pass through one variable in each bucket, ending at the target.
This guarantees an ancestor path of the desired length.
We then add additional edges only from earlier buckets to later buckets, preserving acyclicity and the in-degree cap of two, so the graph may contain branching dependencies while retaining controlled depth.
After sampling the graph, we randomize variable labels, sample root values uniformly from \(\{0,1\}\), assign offsets \(c_v\in\{-1,0,+1\}\), and shuffle the resulting equations before concatenating them into the prompt.

\textbf{Data distributions (mixed/hard).}
For a maximum dependency depth $n$, we denote by $n$-Mixed the mixed data
distribution in which the ancestor-graph depth is sampled uniformly between
$1$ and $n-1$, and by $n$-Hard the hard data distribution that contains only
examples whose ancestor-graph depth is exactly $n-1$.
Throughout we use $15$ variables.

\medskip
\noindent\textbf{Training Setting.}\phantomsection\label{app:h2-training-setting}
\par\smallskip
\textbf{Architecture and training.}
For each base model, we fine-tune with a method inspired by DeepSeek-R1
\citep{deepseekai2025deepseekr1incentivizingreasoningcapability}.
The LLM is prompted to output its reasoning process within
\texttt{<think>} and \texttt{</think>} tags, followed by the final answer
within \texttt{<answer>} and \texttt{</answer>} tags.
Crucially, unlike training in DeepSeek-R1, we provide \emph{no explicit
instructions on how to reason} (e.g., no prompts to ``think step-by-step'');
the LLM must discover effective reasoning strategies on its own.
We employ GRPO \citep{shao2024deepseekmathpushinglimitsmathematical}, where the reward depends solely
on the correctness of the final answer and adherence to the specified output
format, with no supervision provided for the intermediate reasoning steps.

\textbf{Hyperparameters.}
We fine-tune Qwen2.5-3B-Instruct and Llama-3.2-3B-Instruct using GRPO with
the following configuration. We use a KL coefficient $\beta = 10^{-4}$,
generate $8$ completions per prompt with sampling temperature
$0.9$ during training, and set the max completion length to $1024$ tokens.
We optimize with AdamW using a learning rate of $5\!\times\!10^{-7}$
with a cosine learning-rate scheduler and a warmup ratio of $0.03$.
Training is performed in bfloat16 with FlashAttention~2 across
$8$ GPUs (DeepSpeed ZeRO-3, vLLM colocated on every training rank).
We use a per-device batch size of $8$ generations and $8$ gradient
accumulation steps, training for $800$ optimization steps in total
(Table~\ref{tab:h2-hparams} summarizes the hyperparameters). Evaluation is performed over $1{,}000$ samples drawn from the
$15$-Hard distribution (i.e.\ ancestor-graph depth fixed at $14$).
All test evaluations use greedy decoding (temperature~$0$) and a max
completion length of $2{,}048$ tokens.

\begin{table}[t]
\centering
\small
\caption{Hyperparameters used for fine-tuning Qwen2.5-3B-Instruct and
Llama-3.2-3B-Instruct on the equations-reasoning task.}
\label{tab:h2-hparams}
\begin{tabular}{ll}
\toprule
\textbf{Hyperparameter} & \textbf{Value} \\
\midrule
RL algorithm                       & GRPO \\
KL coefficient $\beta$             & $10^{-4}$ \\
Generations per prompt             & $8$ \\
Sampling temperature (training)    & $0.9$ \\
Max completion length (training)   & $1{,}024$ tokens \\
Max prompt length (training)       & $512$ tokens \\
Optimizer                          & AdamW \\
Learning rate                      & $5\!\times\!10^{-7}$ \\
LR schedule                        & cosine, warmup ratio $0.03$ \\
Per-device batch size              & $8$ generations \\
Gradient accumulation steps        & $8$ \\
Optimization steps                 & $800$ \\
Number of GPUs                     & $8$ \\
Precision                          & bfloat16 + FlashAttention~2 \\
Distributed strategy               & DeepSpeed ZeRO-3, vLLM colocated \\
\midrule
Variables per example $n$          & $15$ \\
Max in-degree                      & $2$ \\
Edge probability                   & $0.2$ \\
Modulus                            & $2$ \\
Constant range $c_v$               & $\{-1, 0, +1\}$, $\Pr(c_v\!=\!0)=1{-}0.6\overline{6}$ \\
\midrule
Eval samples                       & $1{,}000$ \\
Eval distribution                  & $15$-Hard (depth $=14$) \\
Decoding (eval)                    & greedy (temperature $0$) \\
Max completion length (eval)       & $2{,}048$ tokens \\
\bottomrule
\end{tabular}
\end{table}

\textbf{Prompt template.}
The exact prompt template used for fine-tuning and evaluation is shown in
Figure~\ref{fig:h2-prompt}. The \texttt{\{equation\_system\}} placeholder is
replaced at runtime with a serialized, shuffled equation system as described
in Appendix~\ref{app:h2-task}.

\begin{figure}[t]
\centering
\begin{quote}
\small
\hrule
\medskip
\begin{verbatim}
[
{
     "role": "system",
     "content": "You are a helpful assistant that can solve systems of equations.",
},
{
     "role": "user",
     "content": (
     f"Given this equation system: {dag_input}\n\nFind the value of the "
     "target variable. Show your reasoning in <think> </think> tags, and "
     "provide your final answer in <answer> </answer> tags, for example "
     "<answer>12</answer>."
     ),
},
{
     "role": "assistant",
     "content": "I'll think about the equation system and solve it.\n<think>",
},
]
\end{verbatim}
\medskip
\hrule
\end{quote}
\caption{Prompt template used for fine-tuning and evaluation in the
real-world experiments. The \texttt{\{equation\_system\}} placeholder is
replaced with the (shuffled) equation system at runtime; the assistant
turn is forced to begin with the marker
\texttt{"I'll think about the equation system and solve it.{\textbackslash}n<think>"}
so that completions deterministically start inside the \texttt{<think>}
block.}
\label{fig:h2-prompt}
\end{figure}

\medskip
\noindent\textbf{Results.}\phantomsection\label{app:h2-results}
\par\smallskip
\textbf{Training accuracy.}
\Cref{fig:h2-training-curves} reports the training curves for Qwen2.5-3B-Instruct and Llama-3.2-3B-Instruct.
For both base models, post-training on \(15\)-Mixed improves accuracy on the \(15\)-Hard evaluation distribution relative to post-training only on \(15\)-Hard, matching the qualitative behavior predicted by our theory. \footnote{The plots are produced by sliding a window of the most recent \(300\) training prompts over the chronological log of completions, computing the average correctness inside each window, and smoothing the resulting curve with a \(300\)-sample moving average. Each \(x\)-axis tick corresponds to a single optimization step (\(512\) generations / \(64\) unique prompts per step). The semi-transparent line is the unsmoothed window accuracy; the solid line is the smoothed trend.}

\Cref{tab:h2-llama-results} additionally reports the final Llama test accuracies in the same format as the real-world pre-training rows of \cref{tab:exp_results}.

\begin{figure}[t]
\centering
\begin{minipage}{0.48\linewidth}
\centering
\includegraphics[width=\linewidth]{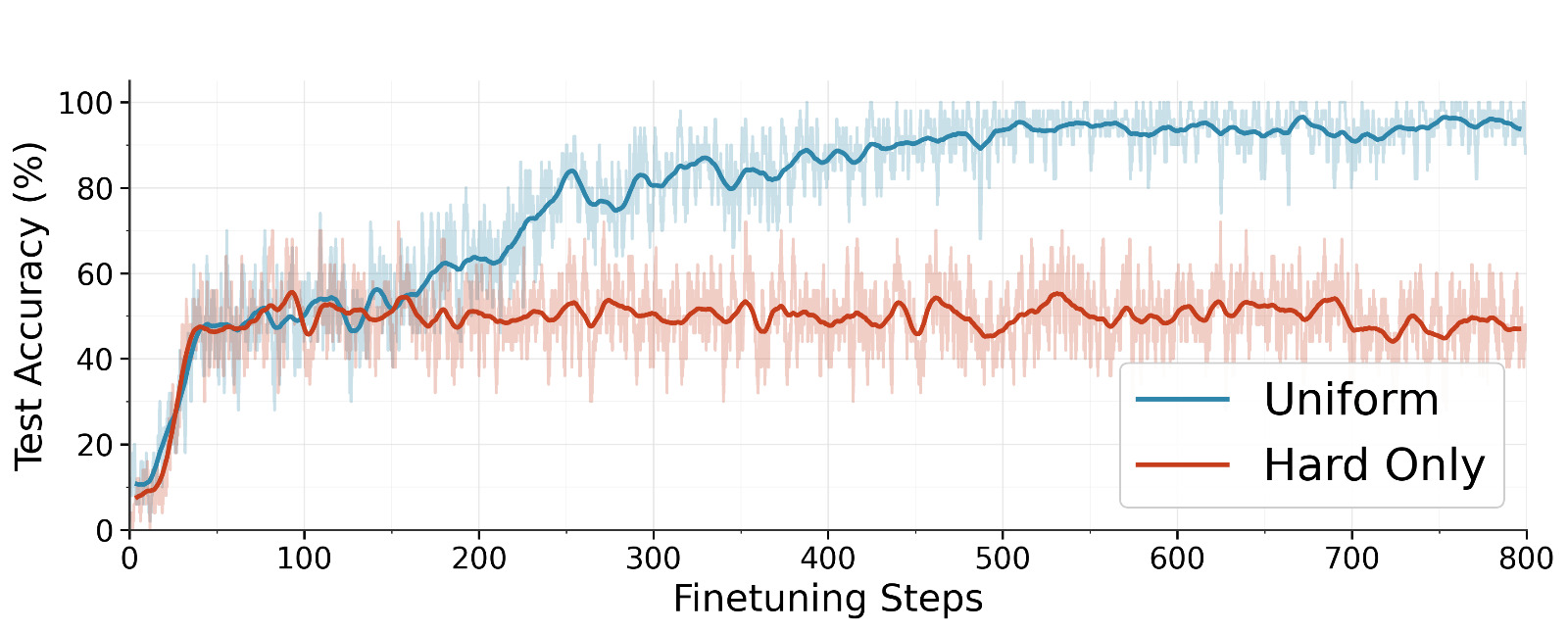}
\textbf{(a) Qwen2.5-3B-Instruct}
\end{minipage}
\hfill
\begin{minipage}{0.48\linewidth}
\centering
\includegraphics[width=\linewidth]{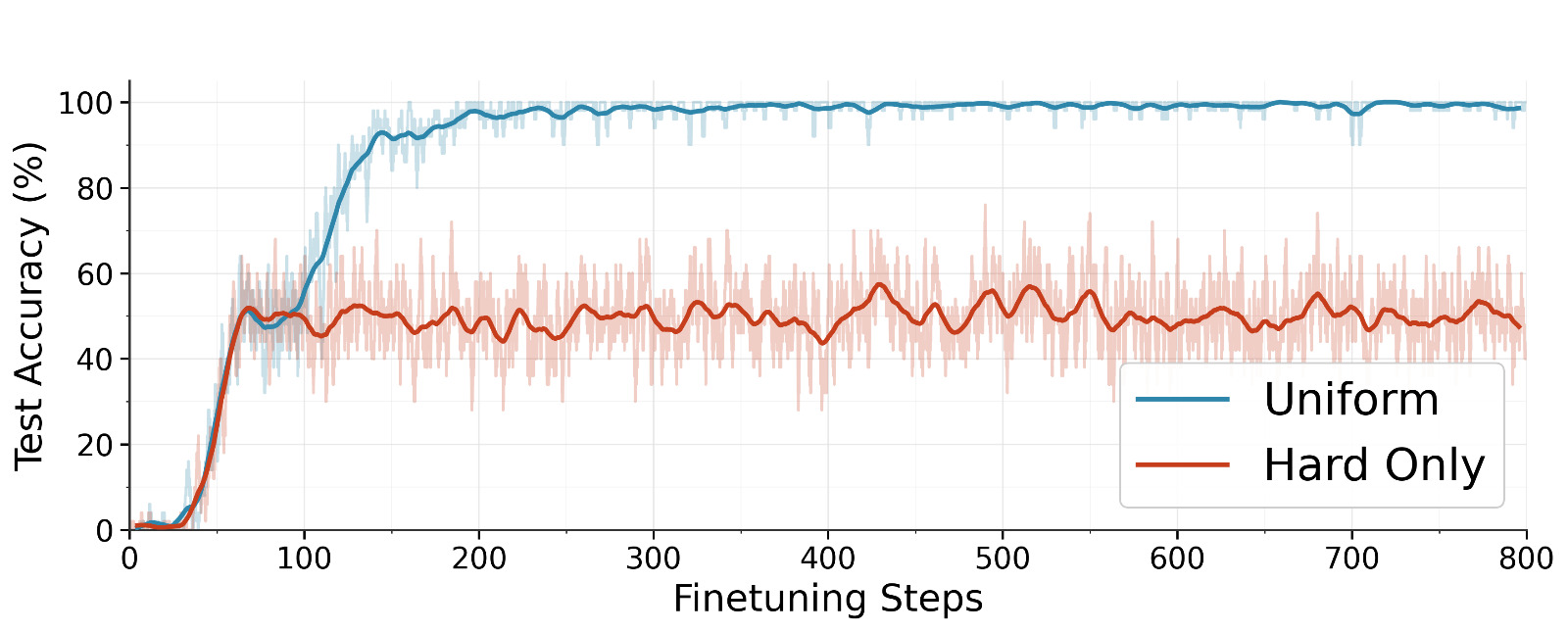}
\textbf{(b) Llama-3.2-3B-Instruct}
\end{minipage}
\caption{Training curves in the real-world pre-training setting. In both panels, the model is evaluated on \(15\)-Hard while being post-trained either on \(15\)-Mixed (blue) or \(15\)-Hard (red).}
\label{fig:h2-training-curves}
\end{figure}

\begin{table}[t]
\centering
\small
\caption{Same setting as the real-world pre-training experiment in \cref{tab:exp_results}, but using Llama-3.2-3B-Instruct as the real-world pre-trained base model. ``Test Acc.'' is the fraction of \(15\)-Hard test examples solved correctly.}
\label{tab:h2-llama-results}
\begin{tabular}{lc}
\toprule
Training Distribution & Test Acc. (\emph{Hard-\(15\)}) \\
\midrule
\emph{Hard-\(15\)}  & 50.2\% \\
\emph{Mixed-\(15\)} & 99.3\% \\
\emph{Mixed-\(10\)} & 98.6\% \\
\bottomrule
\end{tabular}
\end{table}

\textbf{Example completions.}
\Cref{fig:h2-example,fig:h2-llama-example} show randomly drawn \(15\)-Hard evaluation examples together with actual greedy-decoded completions from Qwen2.5-3B-Instruct and Llama-3.2-3B-Instruct respectively.
Both examples illustrate the learned strategy: identify the relevant dependencies and then compute the target value forward.

\begin{figure}[t]
\centering
\begin{quote}
\small
\hrule
\medskip
\textbf{Input.}
\begin{verbatim}
x_12 = (x_2 + x_15 + 1) mod 2.   x_11 = (x_10) mod 2.
x_6  = (x_1 + x_15 - 1) mod 2.   x_3  = (x_4 + 1) mod 2.
x_8  = (x_12 + x_14 - 1) mod 2.  x_13 = (x_9 + x_10 + 1) mod 2.
x_2  = (x_3 + x_10) mod 2.       x_1  = (x_8 + x_10 + 1) mod 2.
x_9  = (x_6 + x_15) mod 2.       x_4  = (x_5 + x_7 - 1) mod 2.
x_5  = (x_7 + x_14) mod 2.       x_15 = (x_11 + 1) mod 2.
@x_10 = 1.@                      x_14 = (x_7 + x_11) mod 2.
x_7  = (x_15 + 1) mod 2.         !Find x_13 (mod 2).!
\end{verbatim}
\textbf{Solution.}
\begin{verbatim}
<think>Starting from the known values:
x_10 = 1.
x_15 = (x_11 + 1) mod 2.
Since x_11 = (x_10) mod 2 and x_10 = 1, then x_11 = 1.
So, x_15 = (1 + 1) mod 2 = 0.
Now we know:
x_7  = (x_15 + 1) mod 2 = (0 + 1) mod 2 = 1.
x_14 = (x_7 + x_11) mod 2 = (1 + 1) mod 2 = 0.
x_5  = (x_7 + x_14) mod 2 = (1 + 0) mod 2 = 1.
x_4  = (x_5 + x_7 - 1) mod 2 = (1 + 1 - 1) mod 2 = 1.
x_3  = (x_4 + 1) mod 2 = (1 + 1) mod 2 = 0.
x_2  = (x_3 + x_10) mod 2 = (0 + 1) mod 2 = 1.
x_8  = (x_12 + x_14 - 1) mod 2.
To find x_12, we use: x_12 = (x_2 + x_15 + 1) mod 2
                           = (1 + 0 + 1) mod 2 = 0.
So, x_8 = (0 + 0 - 1) mod 2 = 1.
Finally, x_1 = (x_8 + x_10 + 1) mod 2
             = (1 + 1 + 1) mod 2 = 3 mod 2 = 1.
Now we can find x_9: x_9 = (x_6 + x_15) mod 2.
We need to find x_6 first: x_6 = (x_1 + x_15 - 1) mod 2.
Using x_1 = 1 and x_15 = 0, x_6 = (1 + 0 - 1) mod 2 = 0.
So, x_9 = (0 + 0) mod 2 = 0.
And finally, x_13 = (x_9 + x_10 + 1) mod 2
                  = (0 + 1 + 1) mod 2 = 2 mod 2 = 0.</think>
<answer>0</answer>
\end{verbatim}
\medskip
\hrule
\end{quote}
\caption{A \(15\)-Hard evaluation example and the actual greedy-decoded completion of Qwen2.5-3B-Instruct fine-tuned on \(15\)-Mixed.}
\label{fig:h2-example}
\end{figure}

\begin{figure}[t]
\centering
\begin{quote}
\small
\hrule
\medskip
\textbf{Input.}
\begin{verbatim}
x_3  = (x_1 + x_9 + 1) mod 2.    x_6  = (x_5) mod 2.
x_4  = (x_10 + x_12 + 1) mod 2.  x_2  = (x_1 + x_13 - 1) mod 2.
x_8  = (x_1 + 1) mod 2.          x_5  = (x_12 + 1) mod 2.
x_1  = (x_6) mod 2.              x_14 = (x_7) mod 2.
@x_12 = 1.@                      x_13 = (x_12 + x_15 + 1) mod 2.
x_15 = (x_8 + 1) mod 2.          x_10 = (x_8 + x_14) mod 2.
x_9  = (x_11 + x_15 + 1) mod 2.  x_7  = (x_2 - 1) mod 2.
x_11 = (x_4 - 1) mod 2.          !Find x_3 (mod 2).!
\end{verbatim}
\textbf{Solution.}
\begin{verbatim}
<think>
To solve for x_3, we need to start from the bottom and work our way up.
We are given x_12 = 1.
x_15 = (x_8 + 1) mod 2.
x_8 = (x_1 + 1) mod 2.
x_1 = (x_6) mod 2.
x_6 = (x_5) mod 2.
x_5 = (x_12 + 1) mod 2 = (1 + 1) mod 2 = 0.
x_6 = (x_5) mod 2 = (0) mod 2 = 0.
x_1 = (x_6) mod 2 = (0) mod 2 = 0.
x_8 = (x_1 + 1) mod 2 = (0 + 1) mod 2 = 1.
x_15 = (x_8 + 1) mod 2 = (1 + 1) mod 2 = 0.
x_13 = (x_12 + x_15 + 1) mod 2 = (1 + 0 + 1) mod 2 = 0.
x_11 = (x_4 - 1) mod 2.
x_4 = (x_10 + x_12 + 1) mod 2.
x_10 = (x_8 + x_14) mod 2.
x_14 = (x_7) mod 2.
x_7 = (x_2 - 1) mod 2.
x_2 = (x_1 + x_13 - 1) mod 2.
x_1 = 0.
x_13 = 0.
x_2 = (x_1 + x_13 - 1) mod 2 = (0 + 0 - 1) mod 2 = 1.
x_7 = (x_2 - 1) mod 2 = (1 - 1) mod 2 = 0.
x_14 = (x_7) mod 2 = (0) mod 2 = 0.
x_10 = (x_8 + x_14) mod 2 = (1 + 0) mod 2 = 1.
x_4 = (x_10 + x_12 + 1) mod 2 = (1 + 1 + 1) mod 2 = 1.
x_11 = (x_4 - 1) mod 2 = (1 - 1) mod 2 = 0.
x_9 = (x_11 + x_15 + 1) mod 2 = (0 + 0 + 1) mod 2 = 1.
x_3 = (x_1 + x_9 + 1) mod 2 = (0 + 1 + 1) mod 2 = 0.
</think>
<answer>0</answer>
\end{verbatim}
\medskip
\hrule
\end{quote}
\caption{A \(15\)-Hard evaluation example and the actual greedy-decoded completion of Llama-3.2-3B-Instruct fine-tuned on \(15\)-Mixed.}
\label{fig:h2-llama-example}
\end{figure}

\medskip
\noindent\textbf{LLM Annotation for Efficient Reasoning.}\phantomsection\label{app:llm_annotation}
\par\smallskip
To verify that the LLM learns the efficient step-by-step reasoning algorithm described in \cref{sec:exp:real:efficient}, we employ automated LLM annotation.
We use the real-world pre-training setting for this analysis, since in the synthetic pre-training setting the model is already exposed during pre-training to chain-of-thought traces that demonstrate the efficient algorithm.
We use GPT-5.1 with reasoning effort set to ``low'' to classify whether each solution follows the target algorithm.
We sample \(100\) correct solutions from the \(15\)-Uniform evaluation set and apply binary classification to each.
The judge LLM is provided with the system prompt shown in \cref{fig:judge_prompt}.

\begin{figure}[t]
\centering
\small
\fbox{\begin{minipage}{0.95\textwidth}
\texttt{You are a strict binary classifier.}\\[0.5em]
\texttt{Given:}\\[0.5em]
\texttt{(A) a system of equations and a request to find a target variable, and}\\[0.5em]
\texttt{(B) a solution writeup produced by a model,}\\[0.5em]
\texttt{Output True iff the writeup's solving strategy is:}\\[0.5em]
\texttt{1) backtrace from the target variable through dependencies until}\\
\texttt{\ \ \ reaching an explicit constant assignment,}\\[0.5em]
\texttt{2) then compute forward along that dependency chain to obtain the target.}\\[0.5em]
\texttt{Otherwise output False. Judge the described procedure (not correctness).}\\[0.5em]
\texttt{Output exactly one word: True or False.}
\end{minipage}}
\caption{System prompt provided to GPT-5.1 for classifying whether solutions follow the efficient step-by-step reasoning algorithm.}
\label{fig:judge_prompt}
\end{figure}

	\ifdefined\NEURIPS
		\clearpage
	\fi

\end{document}